%% file: main.tex
\documentclass{article} 
\usepackage{iclr2027_conference,times}

\input{math_commands.tex}

\usepackage{amsmath,amssymb,amsthm,mathtools}
\usepackage{graphicx}
\usepackage{subcaption}
\usepackage{hyperref}
\usepackage{url}
\usepackage[capitalize,noabbrev]{cleveref}

\allowdisplaybreaks
\newtheorem{theorem}{Theorem}[section]
\newtheorem{lemma}[theorem]{Lemma}
\newtheorem{proposition}[theorem]{Proposition}
\newtheorem{corollary}[theorem]{Corollary}
\theoremstyle{definition}
\newtheorem{assumption}[theorem]{Assumption}

\theoremstyle{remark}

\makeatletter
\renewenvironment{proof}[1][\proofname]{\par
  \pushQED{\qed}%
  \normalfont \topsep6\p@\@plus6\p@\relax
  \trivlist
  \item[\hskip\labelsep
        \bfseries
    #1\@addpunct{.}]\ignorespaces
}{%
  \popQED\endtrivlist\@endpefalse
}
\makeatother

\newcounter{stepcount}
\newcounter{proofstepgroup}
\crefname{stepcount}{step}{steps}
\Crefname{stepcount}{Step}{Steps}
\AddToHook{env/proof/begin}{%
  \stepcounter{proofstepgroup}%
  \setcounter{stepcount}{0}%
}

\newcounter{substepcount}[stepcount]
\renewcommand{\thesubstepcount}{\thestepcount-\arabic{substepcount}} 

\crefname{substepcount}{step}{steps}
\Crefname{substepcount}{Step}{Steps}
\crefname{ineq}{Inequality}{Inequalities}
\creflabelformat{ineq}{(#2\textup{#1}#3)}
\crefname{assumption}{assumption}{assumptions}
\Crefname{assumption}{Assumption}{Assumptions}

\newcommand{\step}[1]{%
  \leavevmode
  \refstepcounter{stepcount}%
  \paragraph{Step \thestepcount: #1}%
}

\newcommand{\substep}[1]{%
  \refstepcounter{substepcount}%
  \paragraph{Step \thesubstepcount: #1}%
}

\hypersetup{unicode=true,colorlinks=true,linkcolor=blue,urlcolor=blue}

\title{Depth Improves Training-Time Scaling for Difficult Tasks in Deep Linear Networks}

\author{Taishi Watanabe \\
  Kyoto University\\
  \texttt{taishi.watanabe.63w@st.kyoto-u.ac.jp} \\
  \And
  Ryo Karakida \\
   AIST\\
   Institute of Science Tokyo\\
   \texttt{karakida.ryo@aist.go.jp} \\
   \And
 Jun-nosuke Teramae \\
  Kyoto University\\
  \texttt{teramae@acs.i.kyoto-u.ac.jp} \\
}

\iclrfinalcopy 
\begin{document}

\maketitle
\lhead{Preprint}

\begin{abstract}
  Neural scaling laws describe the predictable power-law dependence of model performance on training time, dataset size, and model scale, with the exponents quantifying how performance changes with resources. Recent theory has shown how scaling exponents are shaped by the data distribution and the structure of the learning task, but whether network depth can improve training-time scaling has remained unsettled. Through a refined analysis of deep linear networks, we clarify that the task spectrum determines when increasing network depth improves training-time scaling. For spectrally difficult tasks, increasing depth improves the asymptotic training-time scaling exponent, whereas for easy tasks, the exponent remains independent of depth, with depth affecting only a logarithmic correction at the boundary between these regimes. Remarkably, we find that a reparameterization of time governed by the predictor norm provides a unified description of the dynamics across task difficulties. Under this reparameterization, the loss follows a common scaling law whose exponent is independent of depth. Numerical experiments on a task derived from benchmark image data and with nonlinear networks exhibit qualitatively consistent task-dependent effects of depth on training-time scaling. Our findings advance the theoretical understanding of how task structure shapes the effects of network architecture on learning and offer a basis for studying scaling behavior across broader classes of neural networks.
\end{abstract}

\section{Introduction}

Performance in modern deep learning often improves predictably as training resources increase. Across a wide range of models and tasks, learning curves have often been observed to follow power laws with respect to training time, dataset size, and model scale \citep{hestness2017deep,kaplan2020scaling,hoffmann2022training}. These empirical power-law relationships, commonly referred to as neural scaling laws, provide useful guidance for extrapolating performance to larger training scales and for allocating limited computational resources. Identifying what determines the exponents of these scaling laws is therefore an important step toward a theoretical understanding of learning in large neural networks.

Recent theory has clarified how scaling exponents are shaped by the structure of the data and the learning task \citep{berthier2020tight,bordelon2022learning,maloney2022solvable,bordelon2024dynamical}. For example, in kernel methods and random-feature models, scaling exponents depend on the spectral structure of the data and the target function \citep{spigler2020asymptotic,bordelon2020spectrum,canatar2021spectral,defilippis2024dimension}. More recently, \citet{bordelon2025feature} analyzed a two-layer model on power-law tasks and showed that training-time scaling depends on task difficulty. By comparison, the role of network architecture in determining these exponents remains less well characterized.

Depth provides a particularly direct way to study architectural effects on scaling. Deep linear networks are a fundamental model for studying the effects of depth \citep{baldi1989neural,Fukumizu1998Effect,saxe2013exact}. Although depth does not change the class of functions represented by the network, factorization across multiple layers induces nonlinear optimization dynamics, and this setting has enabled detailed analyses of convergence, optimization, and implicit bias \citep{arora2018convergence,du2019width,gidel2019implicit,gissin2020the,tarmoun2021understanding}. In particular, \citet{arora2018optimization} showed that network depth modifies the preconditioner governing the end-to-end predictor dynamics and can thereby accelerate optimization. Their explicit acceleration result, however, is based on $\ell_p$ regression with $p>2$, rather than the standard squared loss. More directly for scaling laws, \citet{bordelon2025deep} derived scaling exponents for deep linear networks using a self-consistency argument inherited from the two-layer analysis of \citet{bordelon2025feature},  predicting no depth-induced acceleration in training-time scaling, with the same asymptotic exponent across depths. However, this argument directly applies the scaling relation obtained in the two-layer setting to general depth, without establishing its validity beyond two layers. Thus, even in deep linear networks, whether and when depth improves the training-time scaling exponent under standard squared loss has remained open.

In this work, we resolve this question through a refined analysis of deep linear networks on tasks with power-law spectral structure. By explicitly tracking the end-to-end predictor dynamics induced by the multilayer parameterization, we characterize the role of depth in training-time scaling. Our contributions are as follows:

\begin{itemize}
  \item\textbf{Depth improves training-time scaling for spectrally difficult tasks.}
  We show that increasing depth can improve asymptotic training-time scaling under squared loss. For spectrally difficult tasks, where the target has relatively large components along low-eigenvalue directions of the data covariance, the decay exponent of the loss strictly increases with depth under sufficiently small random initialization (\cref{thm:training-time-loss-scaling}). We further show that the same asymptotic scaling extends to standard random initialization and characterize finite-dimensional effects that can delay its onset (\cref{sec:standard-init-finite-size}).
  \item \textbf{Predictor-norm time reparameterization separates the effects of task spectrum and depth.}
        We introduce a reparameterized time defined through the predictor norm (\cref{phased:cor:rescaled-predictor-tau-dynamics}). In this time coordinate, the loss has the same asymptotic scaling exponent across depths, determined solely by the task spectrum (\cref{prop:common-spectral-law-loss}).
  \item \textbf{Predictor-norm growth distinguishes three task regimes.}
        We prove that the predictor norm grows polynomially, logarithmically, or remains bounded depending on the task spectrum (\cref{prop:predictor-norm-scaling}). This characterization shows how predictor growth changes with task difficulty and, through the relation between reparameterized time and training time, determines the depth dependence of the training-time scaling law (\cref{thm:training-time-loss-scaling}).
\end{itemize}
Finally, experiments on a regression task derived from benchmark image data and on nonlinear ReLU networks exhibit qualitatively similar depth dependence across task regimes. These results provide a theoretical basis for studying how architecture and task structure jointly determine scaling laws in more complex neural networks.

\section{Related Work}

\paragraph{Theory of Scaling Laws and Spectral Learning Curves.}

Scaling laws have been derived analytically in a range of tractable learning models with structured spectra. Convergence rates have been characterized through spectral and regularity conditions on the data and target \citep{caponnetto2007optimal,pillaud2018statistical,cui2021generalization}. Sample-size learning curves have also been characterized in terms of the kernel eigenspectrum and the decomposition or alignment of the target in the associated eigenbasis \citep{spigler2020asymptotic,bordelon2020spectrum,canatar2021spectral}.

Beyond sample-size scaling, theoretical work has characterized training-time scaling \citep{berthier2020tight,bordelon2022learning,kramp2026dynamics}. Scaling with dataset and model size \citep{maloney2022solvable,bahri2024explaining,lin2024scaling,defilippis2024dimension,atanasov2026scaling} and compute-optimal scaling \citep{bordelon2024dynamical,paquette20244+} have also been studied.

Recent theoretical work has also begun to characterize scaling laws in nonlinear models with structured data or targets. These analyses include problems with structured or anisotropic data spectra \citep{worschech2025analyzing,braun2026fast}, as well as single- and multi-index models in which scaling laws emerge from the sequential acquisition of target components \citep{ren2025emergence,ben2025learning,defilippis2026optimal}. More directly related to our setting, \citet{bordelon2025feature} analyzed a two-layer linear model on power-law tasks and showed that the training-time scaling exponent depends on task difficulty.

\paragraph{Deep Linear Networks and the Role of Depth.}

Deep linear networks provide a tractable setting for studying the role of depth in learning dynamics. Although the predictor remains linear in the input, its factorization across multiple layers induces nonlinear parameter dynamics that depend on depth. Early work characterized the loss landscapes and learning dynamics \citep{baldi1989neural,Fukumizu1998Effect,saxe2013exact}. Subsequent work studied convergence \citep{arora2018convergence,du2019width}, generalization \citep{lampinen2018analytic,advani2020high}, and implicit bias \citep{gidel2019implicit,gissin2020the}. Later work further characterized how initialization scale and layer balance shape the learning regime of deep linear networks \citep{woodworth2020kernel,tarmoun2021understanding,min2021explicit,jacot2021saddle,braun2022exact,atanasov2022neural,kunin2024get,domine2025lazy}.

An explicit mechanism by which depth can accelerate optimization was identified by \citet{arora2018optimization}, who showed that additional layers modify the end-to-end predictor dynamics through a preconditioner. Their acceleration result, however, crucially exploits the structure of the $\ell_p$ loss for $p>2$. For $\ell_p$ regression, the prediction-error dynamics involve a multiplicative interaction between the predictor-dependent preconditioning induced by depth and an error-dependent weighting arising from the $\ell_p$ loss. In their ill-conditioned two-coordinate example, the dominant coordinate constrains the global step size in the shallow parameterization, causing the other coordinate to converge much more slowly. In the deep parameterization, progress along the dominant coordinate increases the predictor norm, which raises the effective step size for the remaining coordinate and accelerates its convergence. At the same time, the shrinking error-dependent factor on the dominant coordinate suppresses its own update through the $\ell_p$-dependent factor, preserving stability. For squared loss, this error-dependent factor is absent, so the mechanism underlying their acceleration result does not directly extend.

Most closely related to our setting, \citet{bordelon2025deep} study learning on power-law structured tasks in deep linear networks and derive an asymptotic training-time scaling exponent that varies with task structure but not with network depth. Their scaling analysis builds on the self-consistency framework developed for the two-layer model in \citet{bordelon2025feature}. One component of this framework, the relation between the decay of the spectral error modes and the accumulated kernel scale, also applies beyond two layers. The depth dependence enters through the way layer growth determines the kernel scale. In the two-layer model, the kernel scale is determined by contributions from the hidden and readout layers, so its overall scale depends linearly on their growth. For deeper factorizations, however, each contribution to the kernel involves products across multiple layers, so the resulting kernel-growth relation depends on the number of layers. Their general-depth analysis nevertheless determined the scaling exponent using the two-layer kernel-growth relation without deriving its multilayer counterpart. Although the two analyses concern different settings and are therefore not directly comparable, they leave open whether and when depth changes training-time scaling under standard squared loss.

\section{Problem Setup}

\vspace{-8pt}
\paragraph{Notation.}

For nonnegative quantities $f$ and $g$, we write $f\lesssim g$ if $f\le Cg$, and $f\asymp g$ if $cg\le f\le Cg$, for positive constants $c,C$ uniform in the parameters under consideration. We define $f\gtrsim g$ analogously. We take the maximum of an empty family of nonnegative quantities to be zero.

\vspace{-8pt}
\paragraph{Data distribution.}

We consider a supervised regression problem with training data $\{(\boldsymbol x_i, y_i)\}_{i=1}^n$ where $\boldsymbol{x}_i\in\mathbb{R}^d$ and $y_i\in\mathbb{R}$. The inputs $\boldsymbol{x}_i$ are drawn i.i.d. from a distribution with zero mean and covariance
\begin{equation}
  \boldsymbol{\Sigma}=\mathrm{diag}(\lambda_1,\ldots,\lambda_d),\qquad \lambda_i=c_\lambda i^{-a}.
  \label{eq:power-law-spectrum}
\end{equation}
Here, $c_\lambda>0$ and $a>0$. The labels are generated by the linear model $y_i=\boldsymbol{\beta}^\top \boldsymbol{x}_i$, with
\begin{equation}
  \boldsymbol{\beta}=(\beta_1,\ldots,\beta_d)^\top,\qquad \beta_i=c_\beta i^{-b},
\end{equation}
where $c_\beta>0$ and $b\in\mathbb R$.

We define the task exponent $p=(a+2b-1)/a$ and assume $p>0$. This condition ensures that $\boldsymbol\beta^\top\boldsymbol\Sigma\boldsymbol\beta=c_\lambda c_\beta^2\sum_{i=1}^d i^{-ap-1}$ remains bounded as $d\to\infty$. The exponent $p$ characterizes the task spectrum and distinguishes the task regimes analyzed below. The case $p=1$ (equivalently, $b=1/2$) separates the polynomially divergent and bounded regimes of the target norm, with $\|\boldsymbol\beta\|^2$ growing logarithmically at the boundary.

\vspace{-8pt}
\paragraph{Model.}

We parameterize the predictor using a linear network with $L\ge1$ layers:
\begin{equation}
  f_{\boldsymbol \theta}(\boldsymbol x)
  = \boldsymbol\theta^\top \boldsymbol x,
  \qquad
  \boldsymbol\theta
  = (\boldsymbol W_L \cdots \boldsymbol W_1)^\top.
\end{equation}
Here $\boldsymbol W_k\in\mathbb R^{n_k\times n_{k-1}}$, where $n_k$ denotes the width of the $k$-th layer. We set $n_0=d$ and $n_L=1$ and assume $n_k\asymp d$ for $1\le k<L$. To track the scale associated with the depth-$L$ factorization, we write $\sigma=\|\boldsymbol\theta\|^{1/L}$. Depth does not change the represented class of linear predictors. For $L\ge2$, the factorized parameterization induces nonlinear dynamics.

\vspace{-8pt}
\paragraph{Training algorithm and initialization.}

We train all layers jointly by population gradient flow for the squared loss $\mathcal L(\boldsymbol\theta)=\frac12\|\boldsymbol\Sigma^{1/2}(\boldsymbol\beta-\boldsymbol\theta)\|^2$. For $j\ge k$, write $\boldsymbol W_{j:k}=\boldsymbol W_j\cdots\boldsymbol W_k$ with the conventions $\boldsymbol W_{0:1}=\boldsymbol I$ and $\boldsymbol W_{L:L+1}=1$. For $1\le k\le L$, the layer dynamics are
\begin{equation}
  \frac{d}{dt}\boldsymbol{W}_k
  =-\nabla_{\boldsymbol W_k}\mathcal L=\boldsymbol{W}_{L:k+1}^\top(\boldsymbol{\Sigma}(\boldsymbol{\beta}-\boldsymbol{\theta}))^\top\boldsymbol{W}_{k-1:1}^\top.
  \label{phased:eq:gradient-flow-dln}
\end{equation}
All entries are initialized independently according to $(\boldsymbol W_k(0))_{ij}\sim \mathcal N(0,s^2/n_{k-1})$ for $1\le k\le L$, where $s>0$ is the initialization scale.

\vspace{-8pt}
\section{Depth-Dependent Training-Time Scaling under Small Initialization}

In this section, we derive the training-time loss scaling under sufficiently small random initialization and characterize when network depth improves the asymptotic decay exponent. The precise assumptions on the initialization scale $s$ used in the analysis are stated in \cref{phased:ass:small-initialization}. To focus on the subsequent power-law scaling phase while accounting for the initial dynamics near the origin, we introduce a time $t_{\mathrm e}$, defined precisely below, and use the shifted training time $\bar t=t-t_{\mathrm e}$.

\vspace{-8pt}
\paragraph{Numerical observation.}
\Cref{fig:vanishing-loss-shifted-time} shows the loss as a function of $\bar t$ for representative values of $p$ below, at, and above $1$. For $p<1$, increasing depth produces a steeper power-law decay of the loss. At $p=1$, the effect of depth is weaker, and for $p>1$, the power-law exponent does not change.

\begin{figure}[t]
  \centering

  \begin{subfigure}[t]{0.32\textwidth}
    \centering
    \includegraphics[width=\linewidth]
    {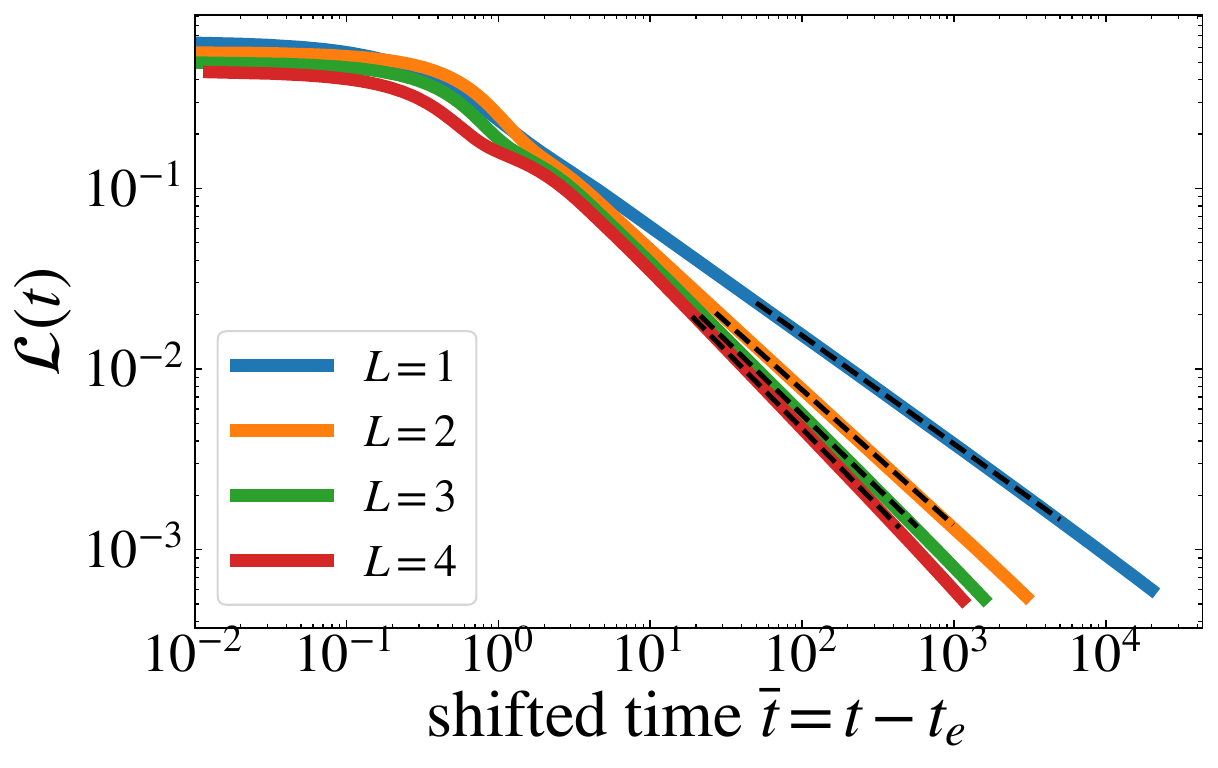}
    \caption{$p=0.6$}
    \label{fig:vanishing-loss-p06}
  \end{subfigure}
  \hfill
  \begin{subfigure}[t]{0.32\textwidth}
    \centering
    \includegraphics[width=\linewidth]
    {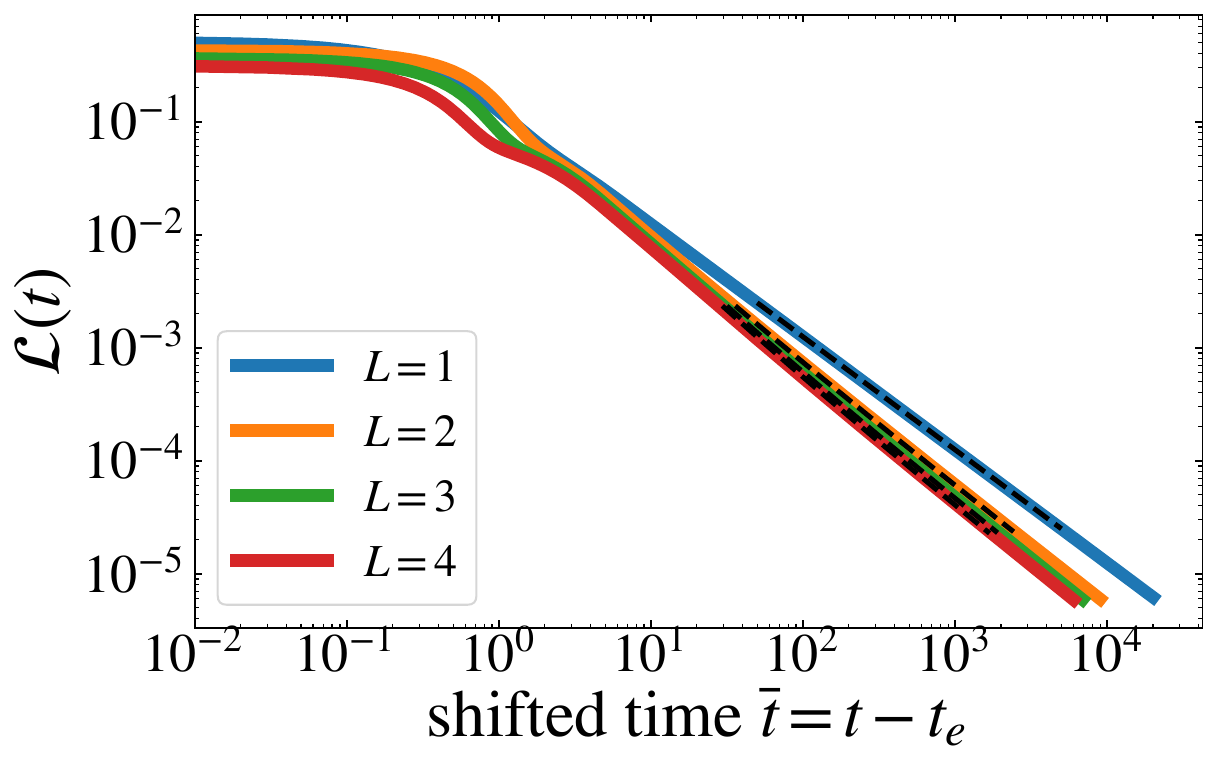}
    \caption{$p=1.0$}
    \label{fig:vanishing-loss-p10}
  \end{subfigure}
  \hfill
  \begin{subfigure}[t]{0.32\textwidth}
    \centering
    \includegraphics[width=\linewidth]
    {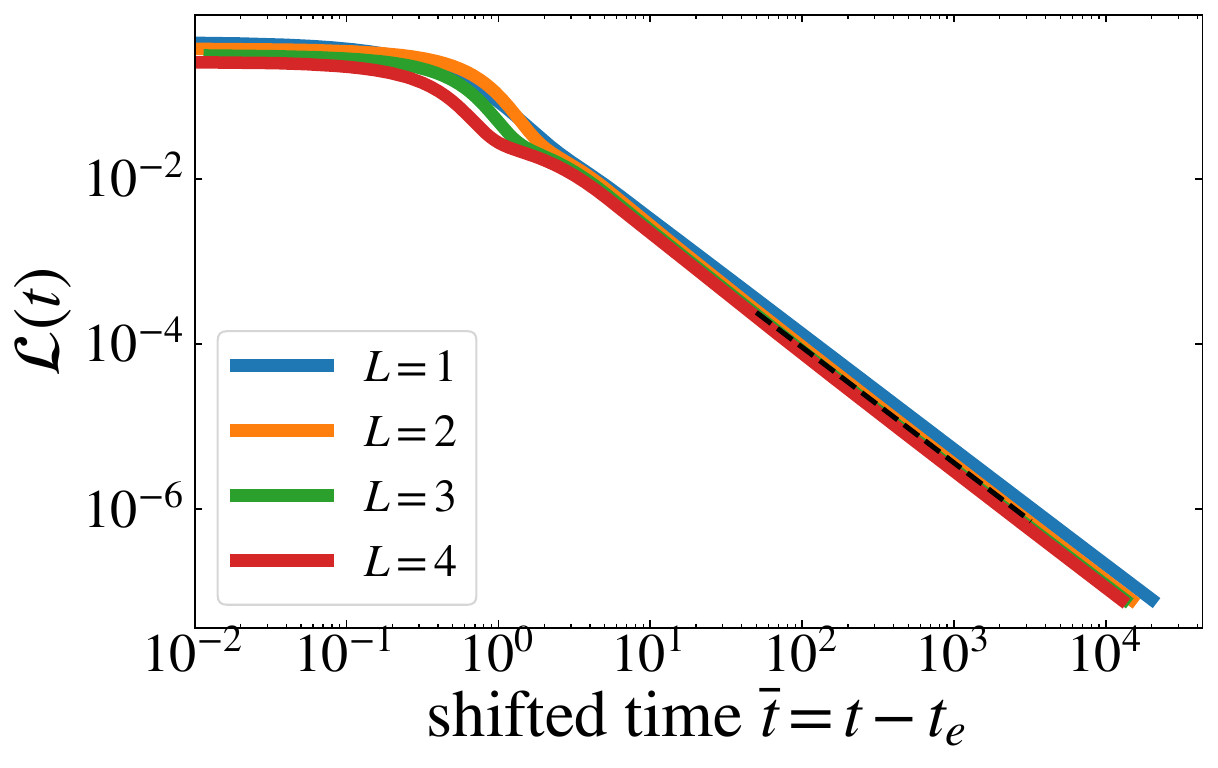}
    \caption{$p=1.4$}
    \label{fig:vanishing-loss-p14}
  \end{subfigure}

  \caption{
    Increasing depth improves the training-time scaling exponent for difficult tasks
    with $p<1$, affects only the logarithmic correction at the boundary $p=1$, and leaves the
    power-law exponent unchanged for easy tasks with $p>1$.
    Dashed lines indicate the asymptotic training-time scaling laws given by
    \cref{thm:training-time-loss-scaling}.
  }
  \vspace{-8pt}
  \label{fig:vanishing-loss-shifted-time}
\end{figure}

\vspace{-8pt}
\paragraph{Overview of the analysis.}
We proceed in several steps. We first analyze the end-to-end predictor dynamics and introduce the rescaled predictor $\boldsymbol{z}=\sigma^{-(L-1)}\boldsymbol{\theta}$, which cancels the leading depth-dependent anisotropy of the preconditioner (\cref{phased:ss:depth-effect-predictor-dynamics}). We next analyze the initial escape phase and characterize the dependence of the time $t_{\mathrm e}$ on the initialization scale (\cref{phased:ss:initial-dynamics-near-origin}). In the subsequent power-law scaling phase, we introduce the reparameterized time $\tau$ and analyze the dynamics in three steps. We first characterize the growth of the predictor norm as a function of $\tau$ (\cref{phased:ss:predictor-norm-scaling}), then derive the loss scaling in $\tau$ and show that its exponent is independent of depth (\cref{sec:common-spectral-law-loss}), and finally translate this law back to the shifted training time $\bar t$ (\cref{sec:depth-dependence-loss}).

\subsection{Depth-Dependent Preconditioning and Rescaled Predictor Dynamics}
\label{phased:ss:depth-effect-predictor-dynamics}

In this subsection, we introduce the rescaled predictor $\boldsymbol{z}=\sigma^{-(L-1)}\boldsymbol{\theta}$ and derive its dynamics, for which the leading depth-dependent anisotropy of the end-to-end preconditioner is canceled.
Since the loss depends on the network parameters only through the end-to-end predictor $\boldsymbol\theta$, we begin with its dynamics, in which the effect of depth appears through a preconditioner. Specifically,
\begin{equation}
  \frac{d\boldsymbol\theta}{dt}
  =
  \boldsymbol K_L
  \boldsymbol\Sigma
  (\boldsymbol\beta-\boldsymbol\theta).
\end{equation}
The preconditioner induced by the factorization is $\boldsymbol K_L=\sum_{k=1}^L\|\boldsymbol W_{L:k+1}\|^2\boldsymbol W_{k-1:1}^{\top}\boldsymbol W_{k-1:1}$.
From \cref{phased:eq:gradient-flow-dln}, the adjacent-layer Gram differences are conserved: $\frac{d}{dt}(\boldsymbol W_k\boldsymbol W_k^\top)=\frac{d}{dt}(\boldsymbol W_{k+1}^\top\boldsymbol W_{k+1})$. We quantify this imbalance by $\varepsilon=\max_{1\le k<L}\|\boldsymbol{W}_k\boldsymbol{W}_k^\top-\boldsymbol{W}_{k+1}^\top \boldsymbol{W}_{k+1}\|_{\mathrm{op}}$. When $\varepsilon/\sigma^2$ is small, this near-balance keeps all layers at the common scale $\sigma$, so each term in $\boldsymbol K_L$ has overall scale $\sigma^{2(L-1)}$. As shown in \cref{commonwork:prop:perturbative-kernel-expansion}, we get
\begin{equation}
  \boldsymbol{K}_L = \sigma^{2(L-1)}\left(\boldsymbol I + (L-1)\boldsymbol P_{\boldsymbol\theta} + \Delta\overline{\boldsymbol K}_L\right).
  \label{eq:depth-preconditioner}
\end{equation}
Here, for $\boldsymbol\theta\neq\boldsymbol 0$, we define $\boldsymbol P_{\boldsymbol\theta}=\boldsymbol\theta\boldsymbol\theta^\top/\|\boldsymbol\theta\|^2$, the orthogonal projector onto $\operatorname{span}\{\boldsymbol\theta\}$, and $\|\Delta\overline{\boldsymbol K}_L\|_{\mathrm{op}}=O(\varepsilon/\sigma^2)$.
\cref{eq:depth-preconditioner} shows that, for $L\ge2$, the leading part of the preconditioner is anisotropic, with the depth dependence appearing through the rank-one term $(L-1)\boldsymbol P_{\boldsymbol\theta}$. This motivates the rescaling $\boldsymbol z=\sigma^{-(L-1)}\boldsymbol\theta$. Defining $\boldsymbol J_L=\sigma^{L-1}\partial_{\boldsymbol\theta}\boldsymbol z=\boldsymbol I-\frac{L-1}{L}\boldsymbol P_{\boldsymbol\theta}$, we have $\boldsymbol{J}_L(\boldsymbol I+(L-1)\boldsymbol P_{\boldsymbol\theta})=\boldsymbol{I}$, so the leading anisotropic factor is exactly canceled in the dynamics of $\boldsymbol z$.
\begin{lemma}[Rescaled predictor dynamics]
  \label{phased:lem:rescaled-predictor-dynamics}
  For $\boldsymbol z=\sigma^{-(L-1)}\boldsymbol\theta$, the dynamics satisfy
  \begin{equation}
    \frac{d\boldsymbol{z}}{dt}=(\boldsymbol{I}+\boldsymbol{J}_L\Delta\overline{\boldsymbol{K}}_L)\boldsymbol{\Sigma}(\sigma^{L-1}\boldsymbol{\beta}-\sigma^{2(L-1)}\boldsymbol{z}).
    \label{eq:dzdt-depth-preconditioned}
  \end{equation}
\end{lemma}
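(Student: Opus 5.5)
The plan is a chain-rule computation. Since $\boldsymbol z$ depends on the layers only through $\boldsymbol\theta$, I start from the end-to-end dynamics $\dot{\boldsymbol\theta}=\boldsymbol K_L\boldsymbol\Sigma(\boldsymbol\beta-\boldsymbol\theta)$ and use the decomposition \eqref{eq:depth-preconditioner} of $\boldsymbol K_L$ from \cref{commonwork:prop:perturbative-kernel-expansion}. Throughout I assume $\boldsymbol\theta\neq\boldsymbol 0$, so that $\sigma>0$ and $\boldsymbol P_{\boldsymbol\theta}$ are defined.

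The first step is to compute the Jacobian of $\boldsymbol z=\|\boldsymbol\theta\|^{-(L-1)/L}\boldsymbol\theta$. Using $\partial_{\boldsymbol\theta}\|\boldsymbol\theta\|=\boldsymbol\theta^\top/\|\boldsymbol\theta\|$, we get
\begin{equation*}
\partial_{\boldsymbol\theta}\boldsymbol z=\sigma^{-(L-1)}\Bigl(\boldsymbol I-\tfrac{L-1}{L}\boldsymbol P_{\boldsymbol\theta}\Bigr)=\sigma^{-(L-1)}\boldsymbol J_L .
\end{equation*}
This confirms the definition of $\boldsymbol J_L$ stated before the lemma. Next I verify the claimed identity $\boldsymbol J_L(\boldsymbol I+(L-1)\boldsymbol P_{\boldsymbol\theta})=\boldsymbol I$. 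Because $\boldsymbol P_{\boldsymbol\theta}^2=\boldsymbol P_{\boldsymbol\theta}$, the coefficient of $\boldsymbol P_{\boldsymbol\theta}$ in the product is
\begin{equation*}
(L-1)-\tfrac{L-1}{L}-\tfrac{(L-1)^2}{L}=0 .
\end{equation*}

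The second step combines these via $\dot{\boldsymbol z}=\sigma^{-(L-1)}\boldsymbol J_L\dot{\boldsymbol\theta}$. Substituting $\boldsymbol K_L$ gives
\begin{equation*}
\dot{\boldsymbol z}=\sigma^{L-1}\boldsymbol J_L\bigl(\boldsymbol I+(L-1)\boldsymbol P_{\boldsymbol\theta}+\Delta\overline{\boldsymbol K}_L\bigr)\boldsymbol\Sigma(\boldsymbol\beta-\boldsymbol\theta)=\sigma^{L-1}\bigl(\boldsymbol I+\boldsymbol J_L\Delta\overline{\boldsymbol K}_L\bigr)\boldsymbol\Sigma(\boldsymbol\beta-\boldsymbol\theta).
\end{equation*}
Finally, writing $\boldsymbol\theta=\sigma^{L-1}\boldsymbol z$ and absorbing the prefactor yields $\sigma^{L-1}(\boldsymbol\beta-\boldsymbol\theta)=\sigma^{L-1}\boldsymbol\beta-\sigma^{2(L-1)}\boldsymbol z$, which is \eqref{eq:dzdt-depth-preconditioned}.

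There is no real obstacle, since the statement is an algebraic identity once $\Delta\overline{\boldsymbol K}_L$ is defined as the exact remainder $\sigma^{-2(L-1)}\boldsymbol K_L-\boldsymbol I-(L-1)\boldsymbol P_{\boldsymbol\theta}$. The only point to check is that this is indeed how \cref{commonwork:prop:perturbative-kernel-expansion} defines it, so the identity holds exactly and not merely to leading order. The $O(\varepsilon/\sigma^2)$ bound on $\Delta\overline{\boldsymbol K}_L$ is not needed here.
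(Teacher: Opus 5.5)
Your proposal is correct and follows the paper's proof essentially line by line. You compute $\partial_{\boldsymbol\theta}\boldsymbol z=\sigma^{-(L-1)}\boldsymbol J_L$, use $\boldsymbol J_L(\boldsymbol I+(L-1)\boldsymbol P_{\boldsymbol\theta})=\boldsymbol I$, and substitute $\boldsymbol K_L=\sigma^{2(L-1)}\overline{\boldsymbol K}_L$ with $\overline{\boldsymbol K}_L=\boldsymbol I+(L-1)\boldsymbol P_{\boldsymbol\theta}+\Delta\overline{\boldsymbol K}_L$. On the point you flag: the paper defines $\Delta\overline{\boldsymbol K}_L$ as this exact remainder in \cref{commonwork:eq:normalized-kernel-defect}, and the proposition only supplies its bound, so the identity is exact and the $O(\varepsilon/\sigma^2)$ estimate is indeed not needed.
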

See \cref{commonwork:ss:rescaled-predictor-dynamics} for the derivation.

\subsection{Escape from the origin}
\label{phased:ss:initial-dynamics-near-origin}

Before analyzing the power-law scaling phase, we characterize the initial escape from the origin. In particular, we determine how the time $t_{\mathrm e}$ at which the predictor scale $\sigma$ has reached an $O(1)$ scale depends on the initialization scale $s$. From \cref{eq:dzdt-depth-preconditioned}, when $\sigma \ll 1$, the relaxation term is lower order and the forcing term has scale $\sigma^{L-1}$, suggesting $d\sigma/dt\approx \sigma^{L-1}$. Hence the time $t_{\mathrm e}$ satisfies
\begin{equation}
  t_{\mathrm e}\asymp
  \begin{cases}
    1,          & L=1,   \\
    \log(1/s),  & L=2,   \\
    s^{-(L-2)}, & L\ge3.
  \end{cases}
\end{equation}
A rigorous derivation is given in \cref{phased:cor:time-entrance-delay}. We therefore measure the subsequent dynamics using the shifted time $\bar t=t-t_{\mathrm e}$.

\subsection{Predictor Norm and Loss Scaling}
\label{phased:sec:predictor-norm-loss-scaling}

To characterize the loss scaling in the power-law scaling phase, we introduce a reparameterized time in which the leading spectral dynamics take a common form across depths. This separates the common spectral scaling in reparameterized time from the depth-dependent conversion back to training time.
\begin{corollary}[Dynamics in reparameterized time]
  \label{phased:cor:rescaled-predictor-tau-dynamics}
  We define the time reparameterization
  \begin{equation}
    \tau(t)=\int_0^t\sigma(v)^{2(L-1)}\,dv.
    \label{phased:eq:time-reparametrization}
  \end{equation}
  Then the rescaled predictor satisfies:
  \begin{equation}
    \frac{d\boldsymbol{z}}{d\tau}=-\boldsymbol{\Sigma}\boldsymbol{z}+\sigma^{-(L-1)}\boldsymbol{\Sigma}\boldsymbol{\beta}+\boldsymbol{f}_K,
    \label{phased:eq:rescaled-predictor-dynamics}
  \end{equation}
  where $\boldsymbol{f}_K=\boldsymbol J_L\Delta\overline{\boldsymbol K}_L\boldsymbol\Sigma\left(\sigma^{-(L-1)}\boldsymbol\beta-\boldsymbol z\right)$.
\end{corollary}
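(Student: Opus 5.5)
The corollary follows from \cref{phased:lem:rescaled-predictor-dynamics} by a chain-rule change of time variable. I will verify that the map $t\mapsto\tau(t)$ is a legitimate reparameterization, divide the $t$-dynamics by $d\tau/dt$, and regroup the terms.

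\textbf{Step 1: the time change is valid.} By the fundamental theorem of calculus applied to \cref{phased:eq:time-reparametrization},
\begin{equation*}
d\tau/dt=\sigma(t)^{2(L-1)},
\end{equation*}
and $\sigma$ is continuous along the flow. I need this derivative to be strictly positive, so that $\tau$ is strictly increasing and $C^1$ and admits a $C^1$ inverse $t(\tau)$. For $L=1$ this is immediate, since $\sigma^{0}=1$ and $\tau=t$. For $L\ge2$ I must show that $\boldsymbol\theta(t)\neq\boldsymbol 0$ on the interval considered. This is also required for $\boldsymbol P_{\boldsymbol\theta}$, $\boldsymbol J_L$ and $\boldsymbol z$ to be defined at all, so it is the standing assumption under which \cref{phased:lem:rescaled-predictor-dynamics} holds. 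The justification is that under small random initialization $\boldsymbol\theta(0)\neq\boldsymbol0$ almost surely. Along the flow, $\|\boldsymbol\theta\|$ cannot reach $0$ in finite time away from the target, because the ODE for $\boldsymbol\theta$ is locally Lipschitz and its $\boldsymbol\theta$-dependence vanishes at least linearly near the origin. I expect this nondegeneracy check to be the only point needing care, and I will simply invoke the standing assumption already implicit in defining $\boldsymbol z$.

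\textbf{Step 2: chain rule.} With $\tau$ invertible,
\begin{equation*}
\frac{d\boldsymbol z}{d\tau}=\sigma^{-2(L-1)}\frac{d\boldsymbol z}{dt}.
\end{equation*}
I substitute \cref{eq:dzdt-depth-preconditioned} and pull the scalar $\sigma^{-2(L-1)}$ inside the vector $\sigma^{L-1}\boldsymbol\beta-\sigma^{2(L-1)}\boldsymbol z$. This is allowed because it commutes with the matrices $\boldsymbol I+\boldsymbol J_L\Delta\overline{\boldsymbol K}_L$ and $\boldsymbol\Sigma$. The result is
\begin{equation*}
\frac{d\boldsymbol z}{d\tau}=(\boldsymbol I+\boldsymbol J_L\Delta\overline{\boldsymbol K}_L)\boldsymbol\Sigma(\sigma^{-(L-1)}\boldsymbol\beta-\boldsymbol z).
\end{equation*}

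\textbf{Step 3: regroup.} Distributing the identity term gives $-\boldsymbol\Sigma\boldsymbol z+\sigma^{-(L-1)}\boldsymbol\Sigma\boldsymbol\beta$. The remaining term is exactly $\boldsymbol J_L\Delta\overline{\boldsymbol K}_L\boldsymbol\Sigma(\sigma^{-(L-1)}\boldsymbol\beta-\boldsymbol z)=\boldsymbol f_K$, which is \cref{phased:eq:rescaled-predictor-dynamics}.
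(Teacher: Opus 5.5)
Your proof is correct and is the same as the paper's: on an interval where $r=\|\boldsymbol\theta\|>0$, use $d\tau/dt=\sigma^{2(L-1)}$ to divide the $t$-dynamics of \cref{phased:lem:rescaled-predictor-dynamics}, then distribute the identity part to isolate $\boldsymbol f_K$. Drop your heuristic that $\|\boldsymbol\theta\|$ cannot reach $0$ because the $\boldsymbol\theta$-velocity vanishes near the origin, since it fails with imbalanced layers ($\varepsilon>0$), where $\boldsymbol K_L$ need not vanish at $\boldsymbol\theta=\boldsymbol 0$ (for $L=2$, $\boldsymbol K_2=\|\boldsymbol W_2\|^2\boldsymbol I+\boldsymbol W_1^\top\boldsymbol W_1$ stays nonzero when $\boldsymbol W_2\boldsymbol W_1=0$, $\boldsymbol W_2\neq0$); working under the standing assumption $r>0$, as the paper does, is the right choice, and the paper establishes that positivity later by separate estimates (e.g.\ via the nondecreasing gap $\Phi=\frac12\|\boldsymbol\Sigma^{1/2}\boldsymbol\beta\|^2-\mathcal L\le Cr$).
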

See \cref{commonwork:ss:rescaled-predictor-dynamics} for the derivation.
The relaxation term $-\boldsymbol\Sigma\boldsymbol z$ is independent of depth. Depth enters only through the predictor scale $\sigma$ and the perturbation $\boldsymbol f_K$. The subsequent analysis therefore reduces to characterizing the growth of the predictor norm and the resulting spectral learning dynamics in $\tau$.

Since $\lambda_d \asymp d^{-a}$ is the smallest eigenvalue of $\boldsymbol{\Sigma}$, the slowest spectral time scale is of order $d^a$. We therefore fix a sufficiently small constant $c_0>0$ and set $\tau_+=c_0d^a$. Let $t_+>t_{\mathrm e}$ be the corresponding time defined by $\tau(t_+)=\tau_+$, and set $\bar t_+=t_+-t_{\mathrm e}$. We restrict the analysis in this subsection to $t_{\mathrm e}\le t\le t_+$, equivalently $\tau_{\mathrm e}\le\tau\le\tau_+$ with $\tau_{\mathrm e} = \tau(t_{\mathrm e})$, before effects associated with the smallest eigenvalue become relevant. Beyond this window, the loss decays exponentially in reparameterized time (see \cref{phased:cor:exponential-loss-decay}).

\subsubsection{Scaling of the predictor norm}
\label{phased:ss:predictor-norm-scaling}

The relation between reparameterized time $\tau$ and training time is controlled by the predictor scale, since $d\tau/dt=\sigma^{2(L-1)}$. We therefore characterize $\sigma(\tau)=\|\boldsymbol{\theta}(\tau)\|^{1/L}$ through the spectral dynamics of the predictor.
Applying the variation of constants formula to the dynamics above and using $\boldsymbol{\theta}=\sigma^{L-1}\boldsymbol{z}$, we obtain
\begin{equation}
  \boldsymbol\theta(\tau) = \sigma(\tau)^{L-1} \int_0^\tau e^{-(\tau-v)\boldsymbol\Sigma} \boldsymbol\Sigma\boldsymbol\beta\, \sigma(v)^{-(L-1)}\,dv + \boldsymbol R(\tau),
  \label{eq:predictor-representation}
\end{equation}
where $\boldsymbol R(\tau)$ collects the effects of initialization and the conserved layer imbalance (see \cref{phased:cor:post-entry-representation} for its exact representation).
Under sufficiently small initialization, $\boldsymbol R(\tau)$ does not change the predictor-norm scaling derived below, as shown in \cref{phased:sec:radius-law}.

In the $i$-th eigendirection, the exponential factor $e^{-(\tau-v)\lambda_i}$ suppresses contributions from times $v$ much earlier than $\tau$, over a time scale $1/\lambda_i$. The behavior therefore changes around $\tau\lambda_i\asymp1$.
For $\tau\lambda_i\gg1$, the integral is dominated by $v$ near $\tau$, giving $\theta_i(\tau)\approx\beta_i$. For $\tau\lambda_i\ll1$, we have $e^{-(\tau-v)\lambda_i}\approx 1$, so $\theta_i(\tau)\approx \beta_i\lambda_i\,\sigma(\tau)^{L-1} \int_0^\tau \sigma(v)^{-(L-1)}\,dv$. To obtain the scaling behavior, we make the self-consistent ansatz $\sigma(\tau)^{L-1}\int_0^\tau\sigma(v)^{-(L-1)}\,dv\asymp\tau$. This gives $\theta_i(\tau)\asymp\beta_i\tau\lambda_i$. Combining the two cases and summing over the eigendirections gives $\|\boldsymbol{\theta}(\tau)\|^2\asymp\sum_{i=1}^d\beta_i^2\min\{\tau\lambda_i,1\}^2$.
Substituting $\lambda_i=c_\lambda i^{-a}$ and $\beta_i=c_\beta i^{-b}$ into this expression, we obtain
\begin{proposition}[Predictor-norm scaling, informal]
  \label{prop:predictor-norm-scaling}
  Under sufficiently small initialization, there exists a constant $\tau_0>0$, independent of $d$ and $s$, such that, for all sufficiently large $d$, with high probability, uniformly for $\tau_0\le\tau\le\tau_+$,
  \begin{equation}
    \label{eq:predictor-norm-scaling}
    \|\boldsymbol{\theta}(\tau)\|^2\asymp
    \begin{cases}
      \tau^{1-p}, & p < 1, \\
      \log\tau,   & p = 1, \\
      1,          & p > 1.
    \end{cases}
  \end{equation}
\end{proposition}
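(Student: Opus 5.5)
The plan is to make the self-consistent ansatz in the heuristic derivation rigorous by a bootstrap (continuity) argument on the ratio
\[
Q(\tau)=\sigma(\tau)^{L-1}\int_0^\tau\sigma(v)^{-(L-1)}\,dv .
\]
We start from the representation \cref{eq:predictor-representation}, taking the remainder $\boldsymbol R(\tau)$ as controlled by the small-initialization estimates of \cref{phased:cor:post-entry-representation}. It suffices to show $Q(\tau)\asymp\tau$ on $[\tau_0,\tau_+]$, since then each coordinate satisfies $\theta_i(\tau)\asymp\beta_i\min\{\tau\lambda_i,1\}$ up to lower-order terms. Summing over $i$ gives
\[
\sum_i\beta_i^2\min\{\tau\lambda_i,1\}^2\asymp\sum_{i\le i_\tau}i^{-2b}+\tau^2\sum_{i>i_\tau}i^{-2a-2b},\qquad i_\tau\asymp\tau^{1/a}.
\]
Using $2b=ap+1-a$, the first sum is $\asymp\tau^{(1-2b)/a}=\tau^{1-p}$ for $p<1$, $\log\tau$ for $p=1$, and $O(1)$ for $p>1$. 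The tail sum contributes the same order, because $2a+2b>1$ by $p>0$. The restriction $\tau\le c_0d^a$ keeps $i_\tau\le d$, so the sums are not truncated.

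\textbf{Step 1: a priori bounds on $Q$.}
The coordinate formula needs only a two-sided bound on the kernel $\sigma(\tau)^{L-1}\sigma(v)^{-(L-1)}$, not exact self-similarity. First I will show that $\sigma$ is nondecreasing in $\tau$ up to constants on the window, i.e. $\sigma(v)\lesssim\sigma(\tau)$ for $v\le\tau$. This gives $Q\gtrsim\tau$ after restricting the integral to $v\in[\tau/2,\tau]$, once we also know $\sigma(v)\asymp\sigma(\tau)$ there.

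For the doubling property, I will use the bound $\|\boldsymbol\theta(\tau)\|^2\asymp\tau^{\gamma}$ (or $\log\tau$) being assumed on $[\tau_0,T]$ in a bootstrap. This forces $\sigma(\tau)^{L-1}\asymp\tau^{\gamma(L-1)/(2L)}$. The integral of $\sigma^{-(L-1)}$ is then computable, because the exponent $\gamma(L-1)/(2L)$ is strictly less than $1$, giving $Q\asymp\tau$. Feeding this back into the coordinate formula reproduces the norm estimate with constants not exceeding those assumed, and continuity in $T$ closes the bootstrap.

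\textbf{Step 2: the perturbation $\boldsymbol f_K$.}
Here $\|\boldsymbol J_L\Delta\overline{\boldsymbol K}_L\|_{\mathrm{op}}=O(\varepsilon/\sigma^2)$. Since $\varepsilon$ is conserved and $O(s^2)$-small, while $\sigma\gtrsim1$ after $t_{\mathrm e}$, this term is a small relative perturbation of the preconditioner. I will fold it into a Grönwall-type comparison. Because it multiplies the same forcing vector $\boldsymbol\Sigma(\sigma^{-(L-1)}\boldsymbol\beta-\boldsymbol z)$, it only changes constants.

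\textbf{Main obstacle.}
I expect the main obstacle to be the initial segment $v\in[0,\tau_{\mathrm e}]$ together with the lower end $\tau_0$. There $\sigma$ is tiny, so $\sigma(v)^{-(L-1)}$ is huge, and the integral defining $Q$ could be dominated by the escape phase. I would handle this by splitting the integral at $\tau_{\mathrm e}$. The early-phase contribution gets absorbed into $\boldsymbol R(\tau)$ via the escape analysis of \cref{phased:cor:time-entrance-delay}: it is essentially the state at $\tau_{\mathrm e}$ propagated forward, which has norm $O(1)$ and aligned direction. We then show it is dominated by the main term for $\tau\ge\tau_0$ in the $p\le1$ regimes, and is merely comparable (still $\asymp1$) when $p>1$.

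The high-probability statement will come from concentration of the Gaussian initialization, which controls $\varepsilon$ and $\boldsymbol R$.
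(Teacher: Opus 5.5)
Your bootstrap cannot close as set up, and this is the main gap. Since $\sigma(\tau)=\|\boldsymbol z(\tau)\|$, and ignoring $\boldsymbol R$ and $\boldsymbol f_K$, we have $\boldsymbol z(\tau)=\int_0^\tau e^{-(\tau-v)\boldsymbol\Sigma}\boldsymbol\Sigma\boldsymbol\beta\,\sigma(v)^{-(L-1)}\,dv$, whose integrand is coordinatewise positive. Your hypothesis $c_1S_d\le\|\boldsymbol\theta\|^2=\sigma^{2L}\le C_1S_d$ on $[\tau_0,T]$ then returns only $C_1^{-(L-1)}k\,S_d\le\sigma^{2L}\le c_1^{-(L-1)}K\,S_d$. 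Here $k\le K$ are the constants for the exact profile $\sigma=S_d^{1/(2L)}$. Reproducing both constants (let alone improving them strictly, as a continuity argument needs) requires $C_1c_1^{L-1}\ge K\ge k\ge c_1C_1^{L-1}$. This forces $c_1\ge C_1$ for $L\ge3$ and $k=K$ for $L=2$. The feedback $\sigma\mapsto\sigma^{-(L-1)}$ multiplies the ratio $C_1/c_1$ by $(K/k)(C_1/c_1)^{L-2}\ge1$, so two-sided bounds are never contracted. Also, $Q\asymp\tau$ alone does not control modes with $\tau\lambda_i\gg1$; these need $\sigma(v)\asymp\sigma(\tau)$ on $[\tau-C/\lambda_i,\tau]$.

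The paper avoids a two-sided bootstrap. For the upper bound it runs a one-sided barrier argument in the $\boldsymbol z$ variable. Let $\tau^\dagger$ be the first time $\sigma\ge2AS_d^{1/(2L)}$ and $\tau_-$ the last earlier time $\sigma\le AS_d^{1/(2L)}$. On $[\tau_-,\tau^\dagger]$ the forcing is damped by $A^{-(L-1)}$, because $\sigma$ stays above the lower barrier there. The whole earlier history sits in $\boldsymbol z(\tau_-)$, of norm exactly $\sigma(\tau_-)$, which $e^{-(\tau^\dagger-\tau_-)\boldsymbol\Sigma}$ cannot enlarge. This gives $\sigma(\tau^\dagger)\le(1+CA^{-L}+C\bar\varepsilon A^{-2})AS_d^{1/(2L)}$, a contradiction for large $A$. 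The lower bound is not read off the coordinate formula at all. The upper bound and $\overline{\boldsymbol K}_L\succeq c_K\boldsymbol I$ give $\mathcal L(\tau)\lesssim S_d(\tau)/(1+\tau)$ via the spectral-cutoff Gr\"onwall inequality. Then for $\mu=B/(1+\tau)$ one has $\|\boldsymbol P_{>\mu}(\boldsymbol\beta-\boldsymbol\theta)\|^2\le2\mathcal L/\mu\lesssim S_d/B$, which is small compared with $\|\boldsymbol P_{>\mu}\boldsymbol\beta\|^2$ for large $B$. Hence $\|\boldsymbol\theta\|\gtrsim S_d^{1/2}$ follows with no sign or cancellation analysis.

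Your Step 2 is a second gap. $\boldsymbol J_L\Delta\overline{\boldsymbol K}_L$ is an $O(\varepsilon/\sigma^2)$ matrix, not a scalar, so $\boldsymbol f_K$ redistributes mass across eigendirections. The perturbed relaxation $-(\boldsymbol I+\boldsymbol J_L\Delta\overline{\boldsymbol K}_L)\boldsymbol\Sigma$ need not be contractive on low-eigenvalue directions. A Gr\"onwall comparison over a window of length $\tau_+=c_0d^a$ therefore loses factors of order $e^{C\varepsilon\tau}$, which is useless unless $\varepsilon\lesssim d^{-a}$; for $p<1$ only $s^2\le\delta$ is assumed. What is needed is control of the accumulated forcing. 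The paper combines $\|\boldsymbol f_K\|\le C\varepsilon\sigma^{-(L+1)}\|\boldsymbol g\|$ with Cauchy--Schwarz on dyadic blocks, the dissipation bound $d\mathcal L/d\tau\le-c_K\|\boldsymbol g\|^2$, and the loss bound. This gives $\int\|\boldsymbol f_K\|\lesssim(\varepsilon/A)\psi_p(\tau)S_d(\tau)^{1/(2L)}$. The dyadic sum is geometric only for $p<1$; for $p=1$ and $p>1$ it grows like $(\log\tau)^{(L-1)/L}$ and $\log\tau$. That growth is exactly where the condition $\varepsilon\psi_p(\tau_+)\le\bar\varepsilon$ enters, and hence the $d$-dependent smallness of $s$ for $p\ge1$. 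Your claim that $\boldsymbol f_K$ ``only changes constants'' misses this. By contrast, the early segment you single out is harmless: $\int_0^{\tau_{\mathrm e}}\sigma^{-(L-1)}dv\lesssim\int_0^{\tau_{\mathrm e}}(s^L+v)^{-(L-1)/L}dv=O(1)$ uniformly in $s$, and in the barrier argument it is absorbed into $\boldsymbol z(\tau_-)$ anyway.
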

The resulting scaling of $\sigma(\tau)=\|\boldsymbol{\theta}(\tau)\|^{1/L}$  satisfies the assumed relation, confirming the self-consistency of the ansatz. A rigorous derivation is given in \cref{cor:small-init-predictor-norm-scaling} and does not rely on the self-consistency ansatz. \cref{fig:vanishing-predictor-norm-tau} illustrates the three regimes for the squared predictor norm: polynomial growth, logarithmic growth, and bounded behavior, respectively.

\begin{figure}[t]
  \centering
  \begin{subfigure}[t]{0.32\textwidth}
    \centering
    \includegraphics[width=\linewidth]{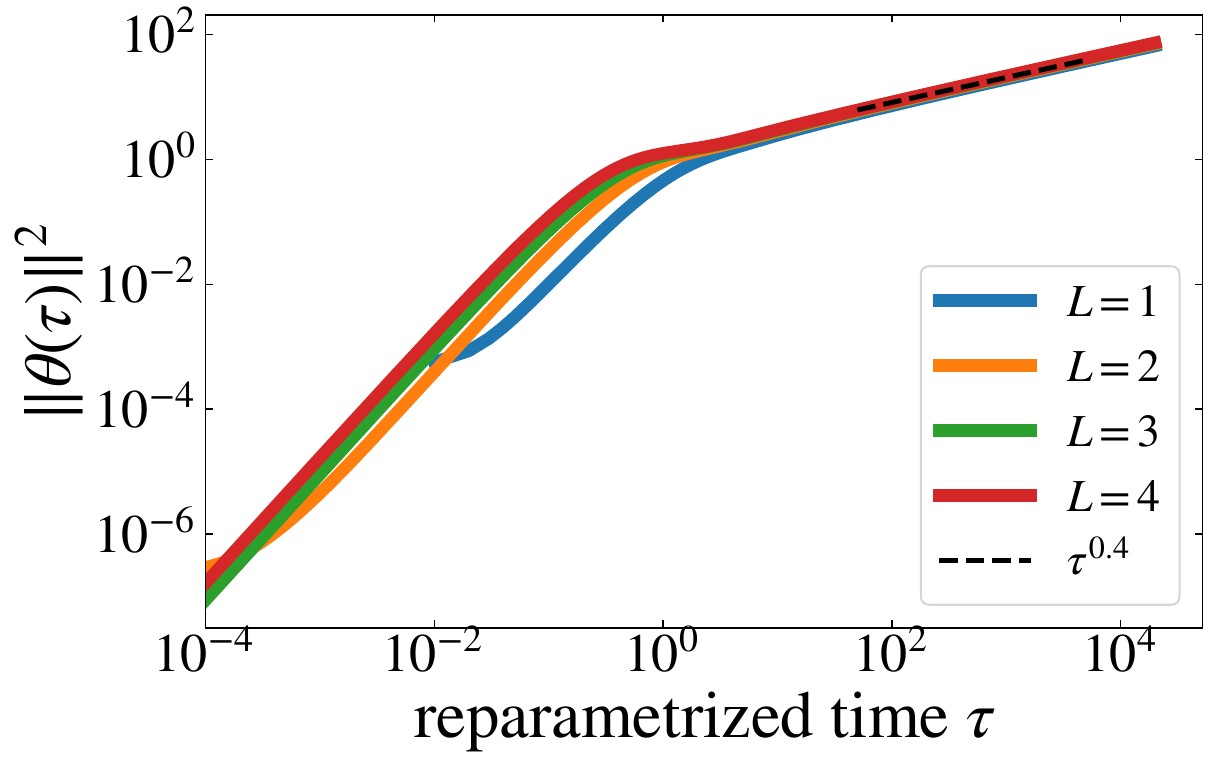}
    \caption{$p=0.6$}
  \end{subfigure}
  \hfill
  \begin{subfigure}[t]{0.32\textwidth}
    \centering
    \includegraphics[width=\linewidth]{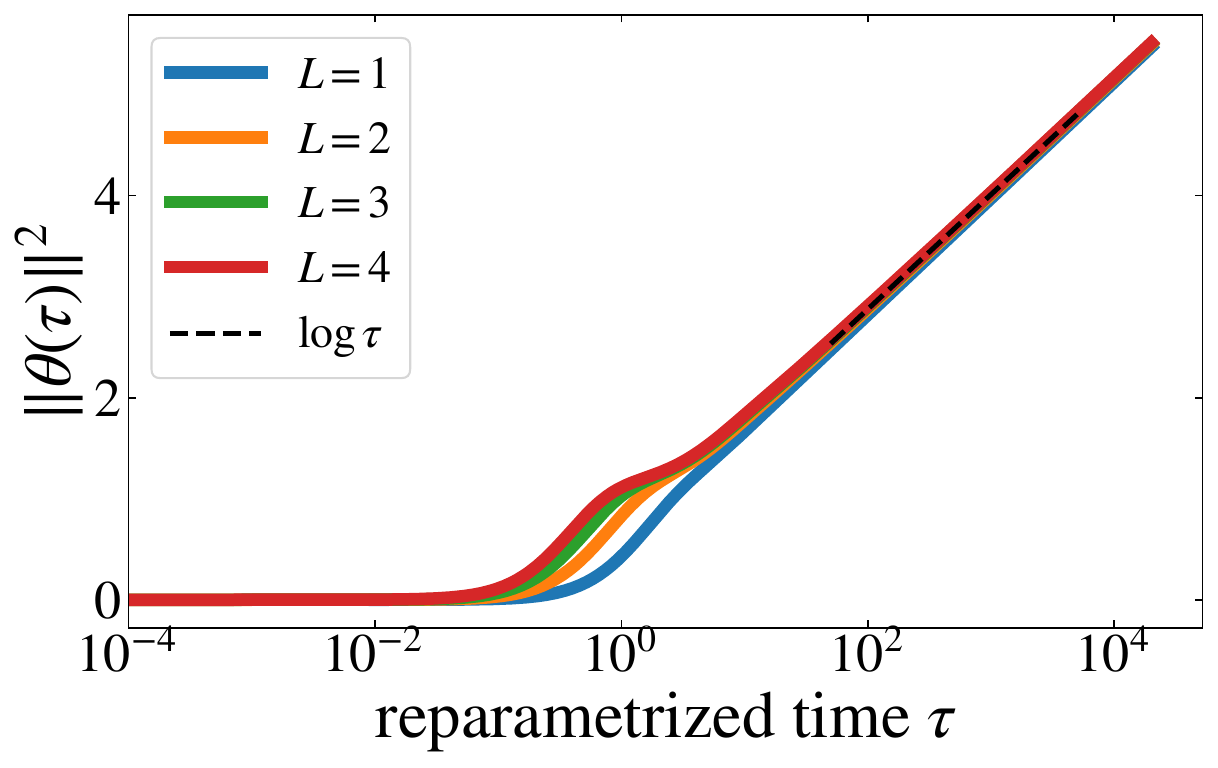}
    \caption{$p=1.0$}
  \end{subfigure}
  \hfill
  \begin{subfigure}[t]{0.32\textwidth}
    \centering
    \includegraphics[width=\linewidth]{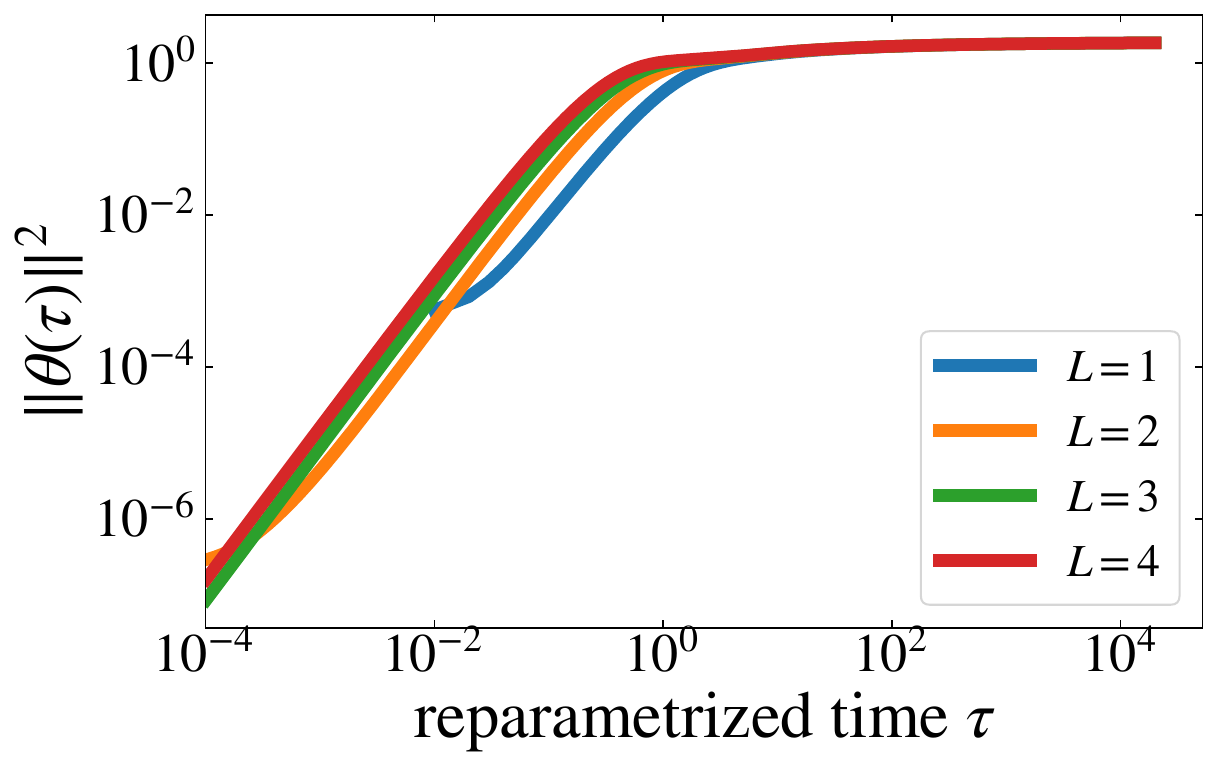}
    \caption{$p=1.4$}
  \end{subfigure}
  \caption{
    Growth of the squared predictor norm distinguishes three task regimes: polynomial growth for difficult tasks with $p<1$, logarithmic growth at the boundary $p=1$, and bounded behavior for easy tasks with $p>1$. Dashed lines indicate the asymptotic scaling laws given by \cref{prop:predictor-norm-scaling}. Panel (b) uses a linear vertical axis to make the logarithmic growth visible.
  }
  \label{fig:vanishing-predictor-norm-tau}
\end{figure}

\subsubsection{Common spectral law for the loss}
\label{sec:common-spectral-law-loss}

We next derive the loss scaling in reparameterized time $\tau$ and show that its scaling exponent depends on the task spectrum but is independent of network depth.
The same spectral threshold $\tau\lambda_i\asymp1$ that controls the predictor norm also determines which eigendirections contribute to the loss. Recall that $\mathcal L(\tau)=\frac12\sum_{i=1}^d\lambda_i(\beta_i-\theta_i(\tau))^2$. For $\tau\lambda_i\gg1$, the corresponding mode is essentially learned, with $\theta_i(\tau)\approx\beta_i$, whereas for $\tau\lambda_i\ll1$, it remains largely unlearned, with $\beta_i-\theta_i(\tau)\asymp\beta_i$. Thus, to leading order, the loss is governed by the unresolved spectral tail $\sum_{\lambda_i\lesssim1/\tau}\lambda_i\beta_i^2$. Under $\lambda_i=c_\lambda i^{-a}$ and $\beta_i=c_\beta i^{-b}$, this tail is governed by the exponent $p=(a+2b-1)/a$.
\begin{proposition}[Common spectral law for the loss, informal]
  \label{prop:common-spectral-law-loss}
  Under sufficiently small initialization, there exists a constant $\tau_0>0$, independent of $d$ and $s$, such that, for sufficiently large $d$, with high probability, uniformly for $\tau_0\le\tau\le\tau_+$,
  \begin{equation}
    \mathcal L(\tau)\asymp\tau^{-p}.
  \end{equation}
\end{proposition}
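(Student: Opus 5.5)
We argue mode by mode from the representation \cref{eq:predictor-representation}, taking the predictor-norm scaling of \cref{prop:predictor-norm-scaling} as given. The goal is to show that, uniformly for $\tau_0\le\tau\le\tau_+$, each error component $e_i(\tau)=\beta_i-\theta_i(\tau)$ satisfies $|e_i(\tau)|\lesssim\beta_i$ for every $i$. We also want the lower bound $|e_i(\tau)|\gtrsim\beta_i$ for modes with $\tau\lambda_i\le c$ for a small constant $c$, and the upper bound $|e_i(\tau)|\lesssim\beta_i/(\tau\lambda_i)$ for modes with $\tau\lambda_i\ge1$.

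Given these bounds, summing $\lambda_i e_i^2$ with $\lambda_i=c_\lambda i^{-a}$ and $\beta_i=c_\beta i^{-b}$ gives the result. The exponent of $\lambda_i\beta_i^2$ is $ap+1$. The unresolved tail $i\gtrsim\tau^{1/a}$ contributes
\begin{equation*}
\sum_{i\gtrsim\tau^{1/a}}i^{-ap-1}\asymp\tau^{-p}.
\end{equation*}
This uses $p>0$, and it uses $\tau\le\tau_+=c_0d^a$ so that the tail range up to $d$ contains a constant fraction of the mass. The resolved modes contribute at most
\begin{equation*}
\sum_{i\lesssim\tau^{1/a}}\lambda_i\beta_i^2(\tau\lambda_i)^{-2}\asymp\tau^{-2}\sum_{i\lesssim\tau^{1/a}}i^{a-ap-1}\lesssim\tau^{-p}+\tau^{-2}.
\end{equation*}
Here a logarithmic factor appears only in the borderline case $p=1$, where it reads $\tau^{-2}\log\tau$. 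This is still $\lesssim\tau^{-1}$, since the bound is only needed at order $\tau^{-p}$ and at $p=1$ we have $\tau^{-2}\log\tau\ll\tau^{-1}$. For $p>2$ the $\tau^{-2}$ term would dominate, so the resolved-mode bound has to be sharpened to $|e_i|\lesssim\beta_i e^{-c\tau\lambda_i}$ plus perturbative terms. The main-term calculation below gives this decay directly, so we aim for that sharper bound throughout.

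For the main term, fix $g(v)=\sigma(v)^{-(L-1)}$. The ratio $g(v)/g(\tau)=(\sigma(\tau)/\sigma(v))^{L-1}$ is a power of $\|\theta(\tau)\|/\|\theta(v)\|$, so by \cref{prop:predictor-norm-scaling} it is regularly varying in $v$ with an exponent in $[0,(1-p)_+(L-1)/(2L)]$ (or slowly varying when $p\ge1$). In particular $g$ is nondecreasing up to constants. Write
\begin{equation*}
e_i(\tau)=\beta_i\Bigl(1-\lambda_i\int_0^\tau e^{-(\tau-v)\lambda_i}\tfrac{g(v)}{g(\tau)}\,dv\Bigr)-R_i(\tau).
\end{equation*}
For $\tau\lambda_i\le c$, the integral is at most $\lambda_i\tau\sup_{v\le\tau}g(v)/g(\tau)\lesssim c$, which gives $e_i\ge\beta_i/2$. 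Here we need $\sup_{v\le\tau}g(v)/g(\tau)=O(1)$; this follows from $g$ being comparable to a monotone function on $[\tau_0,\tau]$, while the stretch $[0,\tau_0]$ contributes $O(\tau_0\lambda_i)$ after using small initialization to control $\sigma$ there.

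For $\tau\lambda_i\ge1$, integrate by parts:
\begin{equation*}
1-\lambda_i\int_0^\tau e^{-(\tau-v)\lambda_i}\tfrac{g(v)}{g(\tau)}dv=e^{-\tau\lambda_i}\tfrac{g(0)}{g(\tau)}+\int_0^\tau e^{-(\tau-v)\lambda_i}\,d\Bigl(\tfrac{g(v)}{g(\tau)}\Bigr).
\end{equation*}
Regular variation of $g$ bounds the last integral by $\lesssim(\tau\lambda_i)^{-1}$. This is the weak $1/(\tau\lambda_i)$ bound, not exponential decay, and it arises because $\sigma$ keeps growing when $p<1$. In that case, however, $p<1<2$ and the weak bound suffices. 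When $p\ge1$, $\sigma$ is asymptotically constant, and one must show $d\log g/dv$ decays fast enough, for instance $\lesssim v^{-1}\cdot v^{1-p}$ for $p>1$, to make the resolved-mode sum $O(\tau^{-p})$. We can extract this from differentiating $\|\theta\|^2$ using the same representation.

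The main obstacle is controlling the remainder $\boldsymbol R(\tau)$ and the kernel perturbation $\boldsymbol f_K$ \emph{mode by mode}, with weights $\lambda_i$, rather than only in norm as in \cref{prop:predictor-norm-scaling}. We need $\sum_i\lambda_iR_i(\tau)^2\ll\tau^{-p}$. Our plan is to use the exact representation in \cref{phased:cor:post-entry-representation}. Initialization terms carry a factor of a positive power of $s$ together with the damping $e^{-\tau\lambda_i}$, and the $\boldsymbol f_K$ term is $O(\varepsilon/\sigma^2)$, where $\varepsilon=O(s^2)$ is conserved and $\sigma\gtrsim1$ for $\tau\ge\tau_0$. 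Taking $s$ small in the sense of \cref{phased:ass:small-initialization} then makes these contributions a small multiple of the main term. A Grönwall-type argument on the weighted error $\sum\lambda_ie_i^2$ closes the loop, since $\boldsymbol f_K$ itself depends on $\boldsymbol z$. The high-probability qualifier enters only through the random initialization bounds on $\boldsymbol R$ and $\varepsilon$.
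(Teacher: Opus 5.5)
Your proposal has a genuine gap in the upper bound, in the treatment of the resolved modes $\tau\lambda_i\ge1$.

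\textbf{Resolved modes.} After integrating by parts you must control $\int_0^\tau e^{-(\tau-v)\lambda_i}\,d\bigl(g(v)/g(\tau)\bigr)$. That requires control of the time derivative of $\log\sigma$ on windows of length $1/\lambda_i$. \Cref{prop:predictor-norm-scaling} gives only two-sided $\asymp$ bounds on $\|\boldsymbol\theta\|$, which say nothing about the modulus of continuity of $\sigma$. If $\sigma$ moved by a constant factor on time scales shorter than $1/\lambda_i$, the coefficient $1-\lambda_i\sigma(\tau)^{L-1}\int_0^\tau e^{-(\tau-v)\lambda_i}\sigma(v)^{-(L-1)}dv$ would be of order one. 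So ``regular variation'' is not available, and the claimed $(\tau\lambda_i)^{-1}$ bound is unproved.

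The sharper bound $|d\log\sigma/d\tau|\lesssim\tau^{-p}$ you want for large $p$ would come from $d\|\boldsymbol\theta\|^2/d\tau=2L\boldsymbol\theta^\top\boldsymbol g+2\boldsymbol\theta^\top\Delta\overline{\boldsymbol K}_L\boldsymbol g$. Both terms require the gradient and loss decay you are trying to prove, and this circular argument is never set up.

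There is also an arithmetic slip. Since $\lambda_i\beta_i^2(\tau\lambda_i)^{-2}=\tau^{-2}\beta_i^2/\lambda_i\propto\tau^{-2}i^{a(2-p)-1}$, the weak bound yields $\tau^{-p}$ only for $p<2$. The logarithmic borderline is $p=2$, where $\tau^{-2}\log\tau$ is not $O(\tau^{-p})$, not $p=1$.

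The missing idea is that resolved modes need no mode-by-mode analysis. After escape, $\overline{\boldsymbol K}_L\succeq c_K\boldsymbol I$, and $\|\boldsymbol g\|^2\ge2\mu(\mathcal L-\mathcal L_{\le\mu})$. These turn the dissipation identity into the Gr\"onwall inequality of \cref{phased:lem:loss-propagation} with moving cutoff $\mu(v)\asymp1/(1+v)$. The upper bound then reduces to bounding the unresolved part $\mathcal L_{\le\mu}$.
\begin{itemize}
\item For $p<1$ this is immediate from $\mathcal L_{\le\mu}\le\|\boldsymbol P_{\le\mu}\boldsymbol\Sigma^{1/2}\boldsymbol\beta\|^2+\mu\|\boldsymbol\theta\|^2$ and the norm bound (\cref{phased:lem:loss-under-radius-bound}).
\item For $p\ge1$ the paper bounds $\mathcal L_{\le\mu}$ through the residual decomposition restricted to $\boldsymbol P_{\le\mu}$. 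It then closes a bootstrap on $\sup_\tau(1+\tau)^p\mathcal L(\tau)$, which is where the $d$-dependent smallness of $s$ in \cref{phased:ass:small-initialization} is needed.
\end{itemize}
Your closing Gr\"onwall remark points at the right tool, but it should be the core of the upper bound, not a patch.

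\textbf{Lower bound.} The lower-bound step is incorrect as written. Since $\sigma$ grows, $g=\sigma^{-(L-1)}$ is nonincreasing up to constants, not nondecreasing. For $L\ge2$, $\sup_{v\le\tau}g(v)/g(\tau)$ is at least of order $(\sigma(\tau)/s)^{L-1}$. Even after splitting off the early stretch, the supremum over $[\tau_0,\tau]$ still grows like $(\tau/\tau_0)^{(1-p)(L-1)/(2L)}$ for $p<1$, and logarithmically for $p=1$.

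What is true and sufficient is the integrated bound $\sigma(\tau)^{L-1}\int_0^\tau\sigma(v)^{-(L-1)}dv\lesssim1+\tau$, as in \cref{phased:lem:low-frequency-teacher-response}. It uses $\sigma(v)\gtrsim(s^L+v)^{1/L}$ on $[0,\tau_{\mathrm e}]$ and the fact that $\sigma(v)^{-(L-1)}$ decays more slowly than $1/v$ afterward.

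Per-coordinate claims such as $|e_i|\lesssim\beta_i$ or $|e_i|\gtrsim\beta_i$ are also out of reach. The term $\boldsymbol f_K$ mixes coordinates through $\Delta\overline{\boldsymbol K}_L$, and only norm bounds on it are available. Work instead with $\|\boldsymbol P_{\le\mu}\boldsymbol\Sigma^{1/2}(\cdot)\|$ and bound remainders by $\mu^{1/2}\|\cdot\|$.

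For $p<1$ you can bypass the remainder entirely. On one side, $\|\boldsymbol P_{\le\mu}\boldsymbol\Sigma^{1/2}\boldsymbol\theta\|\le\mu^{1/2}\|\boldsymbol\theta\|\lesssim\mu_0^{1/2}(1+\tau)^{-p/2}$. On the other, the teacher tail is $\asymp\mu_0^{p/2}(1+\tau)^{-p/2}$, so the tail wins for small $\mu_0$ because $p<1$.
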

A rigorous proof is given in \cref{phased:prop:common-spectral-loss-law}.
Thus, in reparameterized time, the loss decays with the same power-law exponent $p$ across all depths.
\Cref{fig:reparameterized-time-loss} illustrates this common scaling numerically.

\begin{figure}[t]
  \centering
  \begin{subfigure}[t]{0.32\textwidth}
    \centering
    \includegraphics[width=\linewidth]{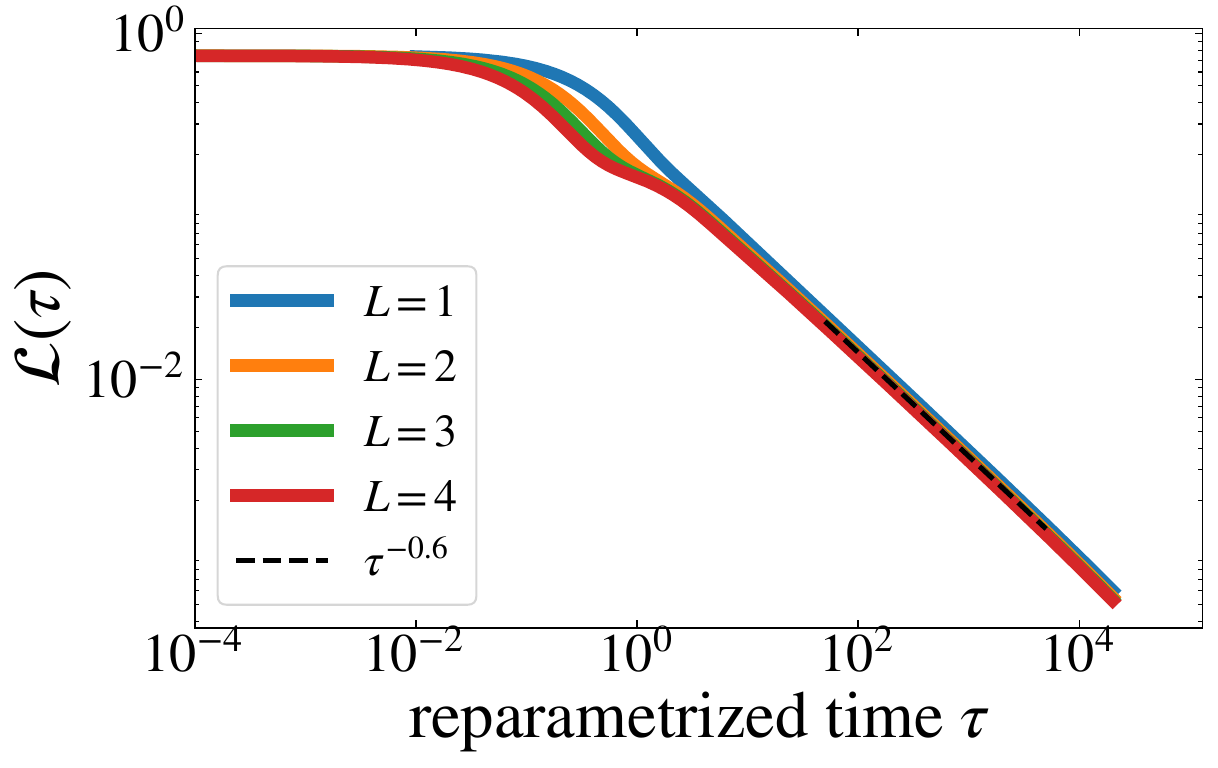}
    \caption{$p=0.6$}
  \end{subfigure}
  \hfill
  \begin{subfigure}[t]{0.32\textwidth}
    \centering
    \includegraphics[width=\linewidth]{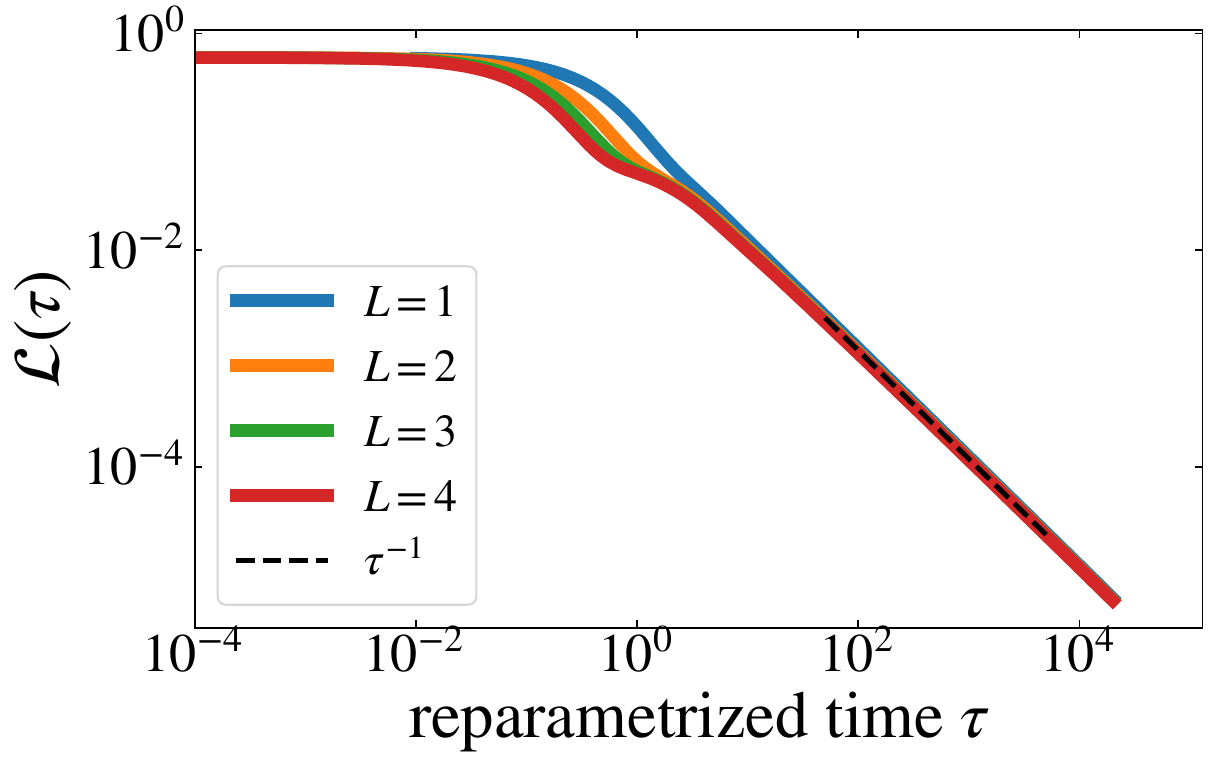}
    \caption{$p=1.0$}
  \end{subfigure}
  \hfill
  \begin{subfigure}[t]{0.32\textwidth}
    \centering
    \includegraphics[width=\linewidth]{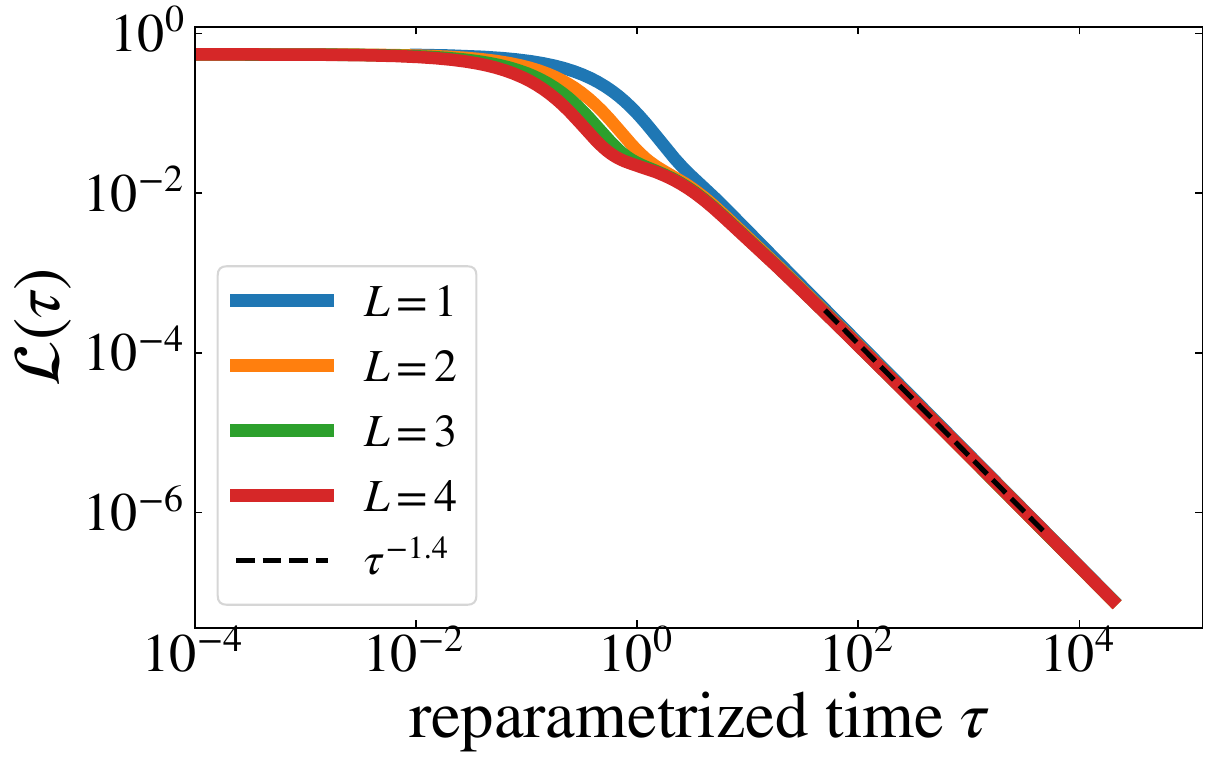}
    \caption{$p=1.4$}
  \end{subfigure}
  \caption{
    In reparameterized time, the loss follows a common depth-independent spectral scaling law, with exponent determined solely by the task spectrum.
    Dashed lines indicate the asymptotic reparameterized-time scaling laws given by
    \cref{prop:common-spectral-law-loss}.
  }
  \label{fig:reparameterized-time-loss}
\end{figure}

\subsubsection{Training-time loss scaling}
\label{sec:depth-dependence-loss}

We finally translate the loss scaling in reparameterized time $\tau$ back to the shifted training time $\bar t=t-t_{\mathrm e}$ and determine how the resulting training-time scaling depends on network depth. Since $d\tau/d\bar t=\sigma^{2(L-1)}$, the predictor-norm scaling in \cref{prop:predictor-norm-scaling} determines the relation between $\tau$ and $\bar t$. Combining this relation with the loss scaling in \cref{prop:common-spectral-law-loss} gives the following result.
\begin{theorem}[Training-time loss scaling, informal]\label{thm:training-time-loss-scaling}
  Under sufficiently small initialization, there exists a constant $\bar t_0>0$, independent of $d$ and $s$, such that, for sufficiently large $d$, with high probability, uniformly for $\bar t_0\le\bar t\le\bar t_+$,
  \begin{equation}
    \mathcal{L}(t_{\mathrm e}+\bar{t})\asymp
    \begin{cases}
      \bar{t}^{-pL/[1+p(L-1)]},              & p < 1, \\
      \bar{t}^{-1}(\log \bar{t})^{-(L-1)/L}, & p = 1, \\
      \bar{t}^{-p},                          & p > 1.
    \end{cases}
  \end{equation}
\end{theorem}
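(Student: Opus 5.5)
We combine \cref{prop:predictor-norm-scaling} and \cref{prop:common-spectral-law-loss} through the change of variables $d\tau/d\bar t=\sigma^{2(L-1)}=\|\boldsymbol\theta\|^{2(L-1)/L}$. By \cref{prop:predictor-norm-scaling}, uniformly for $\tau_0\le\tau\le\tau_+$ we have $\sigma(\tau)^{2(L-1)}\asymp g(\tau)^{(L-1)/L}$, where $g(\tau)=\tau^{1-p}$, $\log\tau$, or $1$ in the three regimes. Inverting the relation, we write
\[
\bar t(\tau)-\bar t(\tau_0)=\int_{\tau_0}^{\tau}\sigma(v)^{-2(L-1)}\,dv \asymp \int_{\tau_0}^{\tau} g(v)^{-(L-1)/L}\,dv .
\]
The two-sided bounds on the integrand pass directly to two-sided bounds on the integral, because the integrand is positive.

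We then evaluate this integral in each regime, enlarging $\tau_0$ if needed so that $\log\tau\ge1$.
\begin{itemize}
\item For $p<1$, the integrand is $v^{-(1-p)(L-1)/L}$ and the exponent is less than $1$, so the integral is $\asymp\tau^{\gamma}$ with $\gamma=1-(1-p)(L-1)/L=[1+p(L-1)]/L$.
\item For $p=1$, the integral of $(\log v)^{-(L-1)/L}$ is $\asymp\tau(\log\tau)^{-(L-1)/L}$ for $\tau\ge C\tau_0$. This follows by splitting at $\tau/2$ and using monotonicity of $\log$.
\item For $p>1$, the integral is $\asymp\tau$.
\end{itemize}
We also need the additive constant $\bar t(\tau_0)$, the shifted time to reach $\tau_0$, to be $O(1)$ uniformly in $d$ and $s$. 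We would take this from the escape-phase analysis of \cref{phased:ss:initial-dynamics-near-origin}, since after $t_{\mathrm e}$ the scale $\sigma$ is $O(1)$ and bounded below. We then choose $\bar t_0$ large, so that for $\bar t\ge\bar t_0$ the constant is absorbed and $\bar t\asymp h(\tau)$, where $h$ denotes the evaluated integral.

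Next we invert $h$. For $p<1$ this gives $\tau\asymp\bar t^{L/[1+p(L-1)]}$, and substituting into $\mathcal L\asymp\tau^{-p}$ yields $\bar t^{-pL/[1+p(L-1)]}$. For $p>1$ we get $\tau\asymp\bar t$ and hence $\mathcal L\asymp\bar t^{-p}$. For $p=1$ we have $\bar t\asymp\tau(\log\tau)^{-(L-1)/L}$. Then $\log\bar t\asymp\log\tau$, so $\tau\asymp\bar t(\log\bar t)^{(L-1)/L}$ and $\mathcal L\asymp\tau^{-1}=\bar t^{-1}(\log\bar t)^{-(L-1)/L}$. The inversion preserves $\asymp$ because each $h$ is increasing and regularly varying, so $h(\tau)\asymp h(\tau')$ implies $\tau\asymp\tau'$ with constants depending only on $p$ and $L$.

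Finally, the window is handled by the definition of $\bar t_+$: the range $\bar t\in[\bar t_0,\bar t_+]$ corresponds monotonically to a range of $\tau$ lying within $[\tau_0,\tau_+]$. The high-probability events are inherited from the two propositions.

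The main obstacle is uniformity of the additive offset $\bar t(\tau_0)$ and the lower end of the window. We must show that $\tau_{\mathrm e}$ and the transit from $\tau_{\mathrm e}$ to $\tau_0$ take $O(1)$ shifted time independently of $s$ and $d$. This requires $\sigma\gtrsim1$ just after $t_{\mathrm e}$, and we would extract this bound from the rigorous escape result \cref{phased:cor:time-entrance-delay} together with \cref{prop:predictor-norm-scaling} extended down to $\tau_{\mathrm e}$. Everything else is elementary asymptotics.
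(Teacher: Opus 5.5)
Your proposal is correct and follows essentially the same route as the paper's proof: write $\bar t=\int_{\tau_{\mathrm e}}^{\tau}\sigma(v)^{-2(L-1)}\,dv$, absorb the $[\tau_{\mathrm e},\tau_0]$ piece as an $O(1)$ offset, evaluate and invert the integral in the three regimes, and substitute into $\mathcal L\asymp\tau^{-p}$.

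Two small adjustments are needed. First, the offset is most directly controlled by \cref{phased:cor:post-entry-radius-kernel}, which gives $\sigma(\tau)^2\ge c'$ for all $\tau\ge\tau_{\mathrm e}$ from loss monotonicity and the entrance loss gap; the $t_{\mathrm e}$ asymptotics play no role here. Second, for $p=1$ a single split at $\tau/2$ only bounds $\int_{\tau_0}^{\tau/2}(\log v)^{-(L-1)/L}\,dv$ by $O(\tau)$, which is too weak for the upper bound. Split at $\sqrt{\tau}$ instead, as in \cref{phased:eq:logarithmic-integral-estimate}: then $\log v\ge\frac12\log\tau$ on $[\sqrt{\tau},\tau]$, and the remaining piece is $O(\sqrt{\tau})$.
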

A rigorous proof is given in \cref{phased:thm:training-time-loss-scaling}. For $p<1$, the decay exponent $pL/[1+p(L-1)]$ increases with depth. For $p>1$, the power-law exponent is independent of depth, while at $p=1$, depth affects only the logarithmic correction. \Cref{fig:loss-decay-phase-diagram} summarizes the resulting dependence of the decay exponent on the task exponent $p$ and network depth.

\begin{figure}[t]
  \centering
  \begin{minipage}[c]{0.36\textwidth}
    \includegraphics[width=\linewidth]{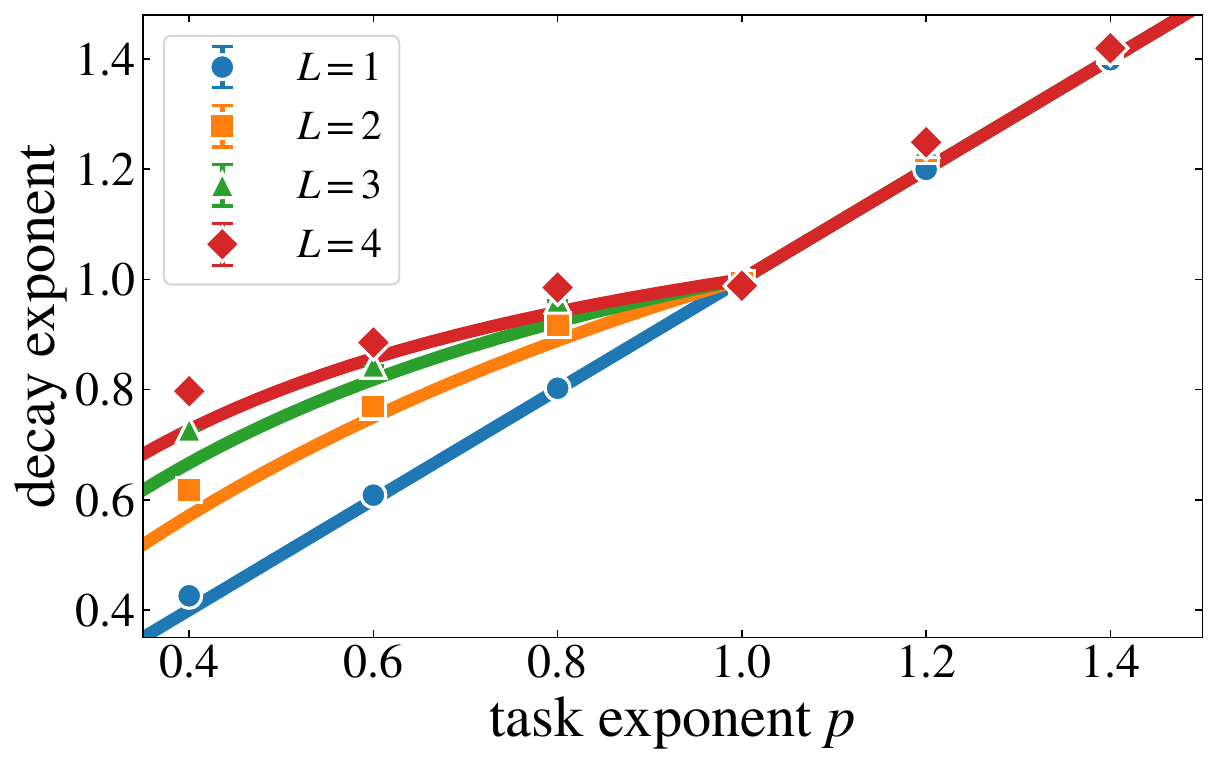}
  \end{minipage}\hfill
  \begin{minipage}[c]{0.62\textwidth}
    \caption{Task difficulty determines the magnitude of the depth-induced improvement in the training-time power-law exponent. Solid curves show the asymptotic exponents given by \cref{thm:training-time-loss-scaling}. Markers show numerical estimates averaged over five random seeds at dimension $d=2048$, and error bars show one standard deviation across seeds. At $p=1$, we account for the logarithmic correction by fitting $\mathcal{L}(\bar t)(\log \bar t)^{(L-1)/L}$ to a power law.}
    \label{fig:loss-decay-phase-diagram}
  \end{minipage}
\end{figure}

\section{Standard Initialization and Finite-Size Effects for Difficult Tasks}
\label{sec:standard-init-finite-size}

Having shown that depth improves the training-time scaling exponent for difficult tasks under sufficiently small initialization, we next ask whether this improvement persists under standard initialization. We focus on the difficult-task regime $p<1$, where the predictor norm grows polynomially during training. This growth makes the effect of the $O(1)$ initialization scale asymptotically negligible.
\begin{corollary}[Training-time loss scaling under standard initialization, informal]
  \label{cor:standard-init-training-time-loss-scaling}
  Suppose $p<1$ and consider standard initialization with $s=1$. There exists a constant $t_0>0$, independent of $d$, such that, for sufficiently large $d$, with high probability, uniformly for $t_0\le t\le t_+$,
  \begin{equation}
    \mathcal{L}(t)\asymp t^{-pL/[1+p(L-1)]}.
  \end{equation}
\end{corollary}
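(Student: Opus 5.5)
The plan is to reuse the small-initialization pipeline, namely \cref{prop:predictor-norm-scaling}, \cref{prop:common-spectral-law-loss} and the $\tau$-to-$t$ conversion behind \cref{thm:training-time-loss-scaling}. The only work is to check which steps survive when $s=1$. With $s=1$ there is no escape phase: at $t=0$ the predictor already has $\|\boldsymbol\theta(0)\|\asymp 1$ with high probability, since it is a product of $L$ Gaussian matrices with variance $1/n_{k-1}$ and $n_k\asymp d$. Hence $\sigma(0)\asymp1$ and $t_{\mathrm e}\asymp 1$, and the statement in $t$ is equivalent to the one in $\bar t$ up to an $O(1)$ shift absorbed into $t_0$. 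The representation \cref{eq:predictor-representation} still holds, but two things are no longer small. The remainder $\boldsymbol R(\tau)$, which carries the initial condition, is $O(1)$ in norm. The imbalance $\varepsilon$ is also $O(1)$: the Gram differences $\boldsymbol W_k\boldsymbol W_k^\top-\boldsymbol W_{k+1}^\top\boldsymbol W_{k+1}$ of independent Gaussian layers have operator norm $O(1)$ with high probability, and this quantity is conserved.

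The key observation is that for $p<1$ \cref{prop:predictor-norm-scaling} predicts $\sigma^2=\|\boldsymbol\theta\|^{2/L}\asymp \tau^{(1-p)/L}\to\infty$, so both defects are washed out. First, $\varepsilon/\sigma^2\to0$, so $\|\Delta\overline{\boldsymbol K}_L\|_{\mathrm{op}}=O(\varepsilon/\sigma^2)=O(\tau^{-(1-p)/L})$ and \cref{eq:depth-preconditioner} and \cref{phased:lem:rescaled-predictor-dynamics} become accurate once $\tau\ge\tau_0$ for a $d$-independent $\tau_0$. Second, the initial contribution is $O(1)$ while the driven part of \cref{eq:predictor-representation} has norm $\asymp\tau^{(1-p)/2}$. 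I would therefore prove the predictor-norm lower and upper bounds by the same bootstrap as in \cref{cor:small-init-predictor-norm-scaling}, with one change: the smallness of $\boldsymbol R$ is replaced by the bound $\|\boldsymbol R(\tau)\|\lesssim 1+(\text{perturbative error})$, and this is dominated by $\tau^{(1-p)/2}$ for $\tau\ge\tau_0$. A continuity (first-exit) argument on $[\tau_0,\tau_+]$ then closes the bounds $\|\boldsymbol\theta\|^2\asymp\tau^{1-p}$.

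For the loss, I would split each mode into a driven part and an initial part. The driven part gives $\tau^{-p}$ as in \cref{phased:prop:common-spectral-loss-law}. The initial part must be shown to contribute at most $o(\tau^{-p})$. The transported initial condition is damped by $e^{-\tau\lambda_i}$ on resolved modes, and it is random and roughly isotropic, with $\theta_i(0)^2\asymp 1/d$. Its loss contribution on unresolved modes is therefore about $\frac1d\sum_{\lambda_i\lesssim1/\tau}\lambda_i$. For $\tau\le c_0d^a$ I need to compare this with the tail $\sum_{\lambda_i\lesssim 1/\tau}\lambda_i\beta_i^2\asymp\tau^{-p}$. This comparison is the main obstacle. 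The isotropic $1/d$ mass against $\beta_i^2=c_\beta^2 i^{-2b}$ at index $i\asymp\tau^{1/a}\le c_0^{1/a}d$ looks favorable only when $b$ is small. In general I expect to need the rescaling factor $\sigma(\tau)^{L-1}/\sigma(0)^{L-1}$, which multiplies the driven term, to grow relative to the initial term. This growth appears exactly as the amplification in \cref{eq:predictor-representation}, and it suppresses the relative size of the initial term by $\tau^{-(1-p)(L-1)/L}$ times a $d$-dependent constant. This suppression is also where the finite-size delay mentioned in the section comes from. It is the reason the statement only claims $t\ge t_0$ for sufficiently large $d$.

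Finally, I would integrate $dt/d\tau=\sigma^{-2(L-1)}\asymp\tau^{-(1-p)(L-1)/L}$ from $\tau_0$, which gives $t\asymp\tau^{[1+p(L-1)]/L}$ once $t$ dominates the $O(1)$ time spent before $\tau_0$. Substituting into $\mathcal L\asymp\tau^{-p}$ yields $\mathcal L(t)\asymp t^{-pL/[1+p(L-1)]}$ uniformly on $[t_0,t_+]$, with probability controlled by the concentration events for $\|\boldsymbol\theta(0)\|$ and $\varepsilon$.
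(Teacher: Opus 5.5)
Your overall architecture matches the paper's: a predictor-norm bootstrap on $[\tau_0,\tau_+]$, then a loss law in $\tau$, then integrating $dt/d\tau=\sigma^{-2(L-1)}$. Your final conversion step is exactly theirs. The gap is in the loss step. Splitting $\boldsymbol\theta$ into a driven part and an initial part creates an obstacle, and your resolution of it is wrong.

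In $\boldsymbol\theta=\sigma(\tau)^{L-1}\boldsymbol z$, the factor $\sigma(\tau)^{L-1}$ multiplies the transported initial condition as well, so it amplifies that term rather than suppressing it. This is also why $\|\boldsymbol R(\tau)\|$ can be of order $\sigma(\tau)^{L-1}\asymp\tau^{(1-p)(L-1)/(2L)}$, not $O(1)$. The domination you need in the norm bootstrap still survives, because $(L-1)/L<1$. A suppression ``times a $d$-dependent constant'' cannot give the statement, whose $t_0$ and comparison constants must be independent of $d$. The finite-size discussion in \cref{sec:standard-init-finite-size} is a heuristic about observability and plays no role in the proof. Likewise, ``the driven part gives $\tau^{-p}$'' cannot be read off mode by mode on resolved directions when $\sigma$ is known only up to constants. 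The coefficient $1-\sigma(\tau)^{L-1}\lambda_i\int_0^\tau e^{-(\tau-v)\lambda_i}\sigma(v)^{-(L-1)}\,dv$ is then only known to be $O(1)$, not small.

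The missing idea is that for $p<1$ the initial condition never has to be tracked in the loss. Only $\|\boldsymbol\theta(\tau)\|^2\lesssim\tau^{1-p}$ and $\overline{\boldsymbol K}_L\succeq c_K\boldsymbol I$ are used.
\begin{itemize}
\item \textbf{Lower bound} (\cref{phased:prop:hard-loss-law}). Take $\mu=\mu_0/(1+\tau)$. Then $\|\boldsymbol P_{\le\mu}\boldsymbol\Sigma^{1/2}\boldsymbol\theta\|\le\mu^{1/2}\|\boldsymbol\theta\|\lesssim\mu_0^{1/2}(1+\tau)^{-p/2}$, while $\|\boldsymbol P_{\le\mu}\boldsymbol\Sigma^{1/2}\boldsymbol\beta\|\gtrsim\mu_0^{p/2}(1+\tau)^{-p/2}$. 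Since $p<1$, a small fixed $\mu_0$ makes the former at most half the latter.
\item \textbf{Upper bound} (\cref{phased:lem:loss-under-radius-bound}). Combine $\frac{d\mathcal L}{d\tau}\le-2c_K\mu(\mathcal L-\mathcal L_{\le\mu})$ with $\mathcal L_{\le\mu}\le\|\boldsymbol P_{\le\mu}\boldsymbol\Sigma^{1/2}\boldsymbol\beta\|^2+\mu\|\boldsymbol\theta\|^2$. Gr\"onwall then gives $\mathcal L\lesssim(1+\tau)^{-p}$ with no mode-wise analysis.
\end{itemize}
Even inside your decomposition, the crude bound $\mu\,\sigma(\tau)^{2(L-1)}\|\boldsymbol z(0)\|^2\lesssim\tau^{-p-(1-p)/L}$ disposes of the initial term with $d$-independent constants, and no isotropy is needed.

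Your bootstrap also lacks two ingredients.
\begin{itemize}
\item \textbf{A priori lower bound on $\sigma$.} With $\varepsilon$ of order one, you need $\sigma\gtrsim1$, hence $\overline{\boldsymbol K}_L\succeq c\boldsymbol I$, along the whole trajectory including $[0,\tau_0]$. This cannot be taken from the predicted growth. The paper gets it from a fixed loss drop on a short interval plus monotonicity, which forces $\langle\boldsymbol\Sigma\boldsymbol\beta,\boldsymbol\theta\rangle\ge c$ thereafter (\cref{phased:lem:fixed-scale-radius-kernel}).
\item \textbf{Integrated forcing.} Pointwise decay of $\|\Delta\overline{\boldsymbol K}_L\|_{\mathrm{op}}$ does not make $\int\|\boldsymbol f_K\|$ small. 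You need Cauchy--Schwarz against loss dissipation on dyadic blocks, $\int_{x_j}^{x_{j+1}}\|\boldsymbol g\|\lesssim S_d(x_j)^{1/2}$. This gives $\int\|\boldsymbol f_K\|\lesssim\varepsilon\sum_jS_d(x_j)^{-1/(2L)}$, a convergent geometric sum precisely when $p<1$. That is why an order-one $\varepsilon$ is harmless in the difficult regime (\cref{phased:prop:common-radius-law} with $\psi_p\equiv1$).
\end{itemize}
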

A rigorous proof is given in \cref{phased:cor:standard-init-training-time-loss-scaling}. \Cref{fig:standard-init-loss:loss-dyn} shows that the loss decays faster with increasing depth, consistent with the predicted depth dependence.

At finite input dimension, however, the predicted asymptotic scaling may not be clearly observable. \cref{fig:standard-init-loss:fitted-exponent} shows the fitted decay exponent as a function of the input dimension $d$. As $d$ increases, the fitted exponents approach their asymptotic predictions, but this convergence becomes slower with increasing depth. To understand this finite-size effect, recall that the power-law scaling window is restricted to $\tau\le\tau_+=c_0d^a$. Within this window, the largest asymptotic scale available to the squared predictor norm is $\tau_+^{1-p}\asymp d^{a(1-p)}=d^{1-2b}$. Although this quantity diverges as $d\to\infty$ for $p<1$, the divergence becomes slower as $p\uparrow1$. Moreover, the corresponding per-layer scale grows only as $\sigma(\tau_+)\asymp d^{a(1-p)/(2L)}=d^{(1-2b)/(2L)}$. This suggests that, as depth $L$ increases, the per-layer scale separates more slowly from its $O(1)$ initialization scale. Consequently, especially for large $L$ or for $p$ close to $1$, increasingly large dimensions may be required for the asymptotic power-law scaling to emerge clearly, making the ideal depth-dependent exponent difficult to observe at finite input dimension.

\begin{figure}[t]
  \centering
  \begin{subfigure}[t]{0.49\textwidth}
    \centering
    \includegraphics[width=0.71\linewidth]{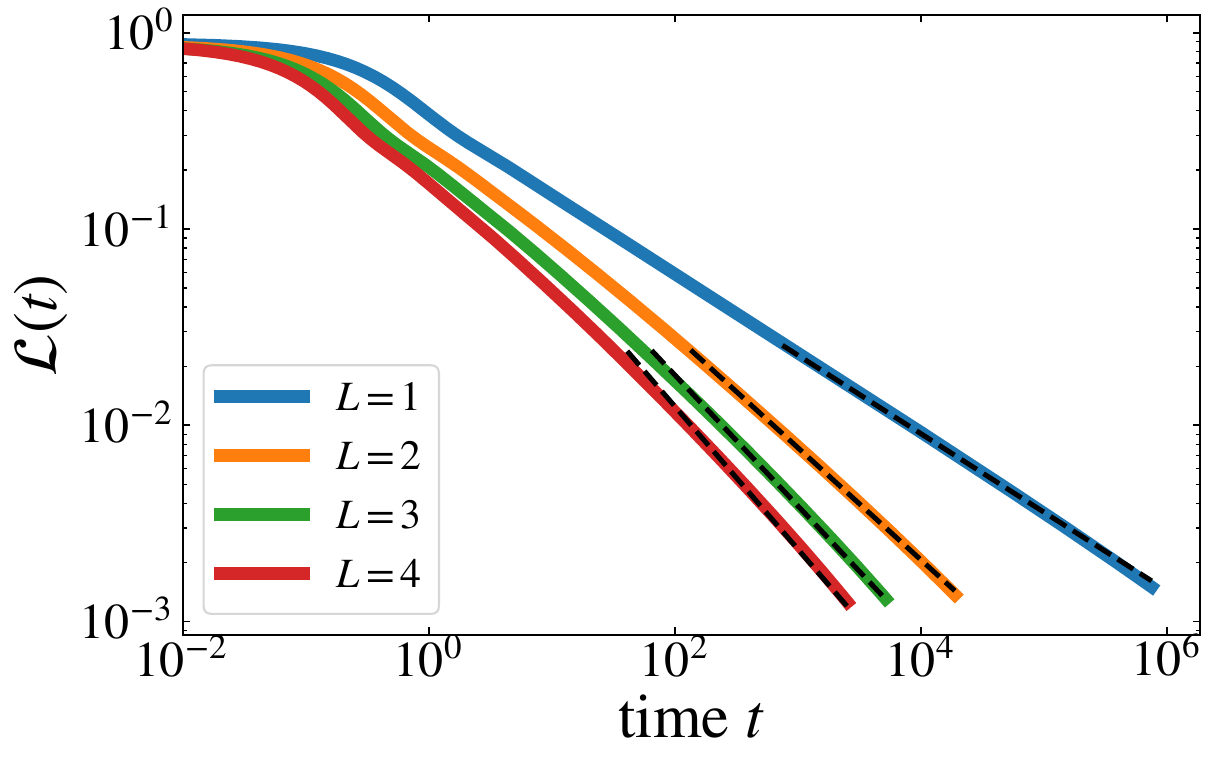}
    \caption{Loss dynamics}
    \label{fig:standard-init-loss:loss-dyn}
  \end{subfigure}\hfill
  \begin{subfigure}[t]{0.49\textwidth}
    \centering
    \includegraphics[width=0.71\linewidth]{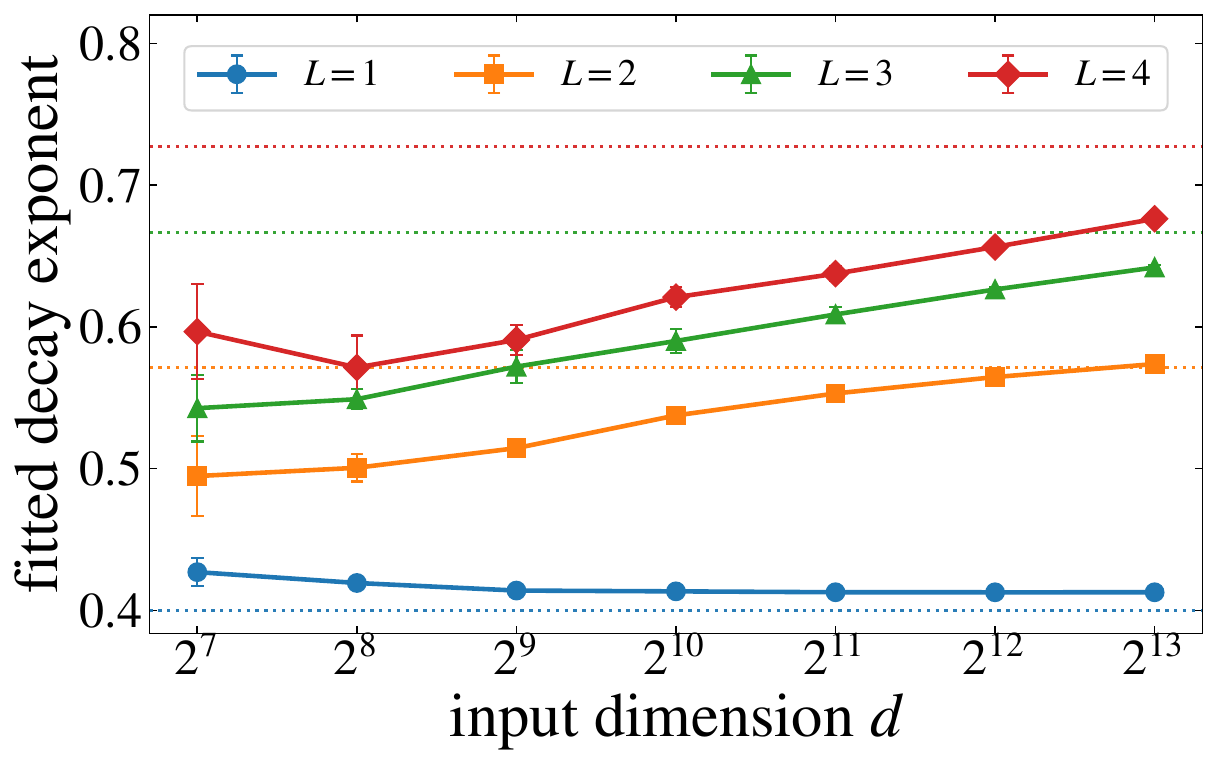}
    \caption{Finite-size effects on the fitted decay exponent}
    \label{fig:standard-init-loss:fitted-exponent}
  \end{subfigure}
  \caption{
    The depth-dependent asymptotic scaling under standard initialization ($s=1$) emerges with increasing dimension, while finite-size effects make this scaling harder to observe in deeper networks. (a) Loss dynamics for the $p=0.4$ task at dimension $d=8192$.
    Dashed lines show the asymptotic power-law decay predicted by \cref{cor:standard-init-training-time-loss-scaling}, with exponents corresponding to the asymptotic values shown by the dotted horizontal lines in panel (b).
    (b) As dimension increases, the fitted decay exponents approach these asymptotic values. Markers show the mean over five random seeds, and error bars show one standard deviation across seeds.}
  \label{fig:standard-initialization-loss}
\end{figure}

\vspace{-8pt}
\section{Experiments on Real Data and Nonlinear Networks}

Finally, we numerically examine whether the task-dependent effect of depth extends beyond the analytically tractable setting to real data and nonlinear networks.

First, we train deep linear networks with standard initialization ($s=1$) on a regression task constructed from CIFAR-5M \citep{nakkiran2020deep}, whose empirical task spectrum is approximately power-law with $p_{\mathrm{eff}}\approx0.478$, placing it in the $p<1$ regime. As shown in \cref{fig:cifar5m-real-label-dynamics}, the loss decay becomes steeper with increasing depth in training time. The loss trajectories nearly collapse under the predictor-norm time reparameterization, while the squared predictor norm exhibits approximately the predicted polynomial growth.

We also test ReLU MLPs on the synthetic power-law tasks used in our theoretical analysis. As shown in \cref{fig:relu-mlp-loss}, for the difficult $p=0.6$ task, the loss trajectories separate at late training times, with deeper networks attaining lower loss, whereas for the easy $p=1.4$ task the trajectories remain much closer and show no systematic advantage with increasing depth. These observations are qualitatively consistent with the task-dependent trend predicted by our theory, although the distinction is less pronounced than for the deep linear networks.

\begin{figure}[!t]
  \centering
  \begin{subfigure}[t]{0.32\textwidth}
    \centering
    \includegraphics[width=\linewidth]{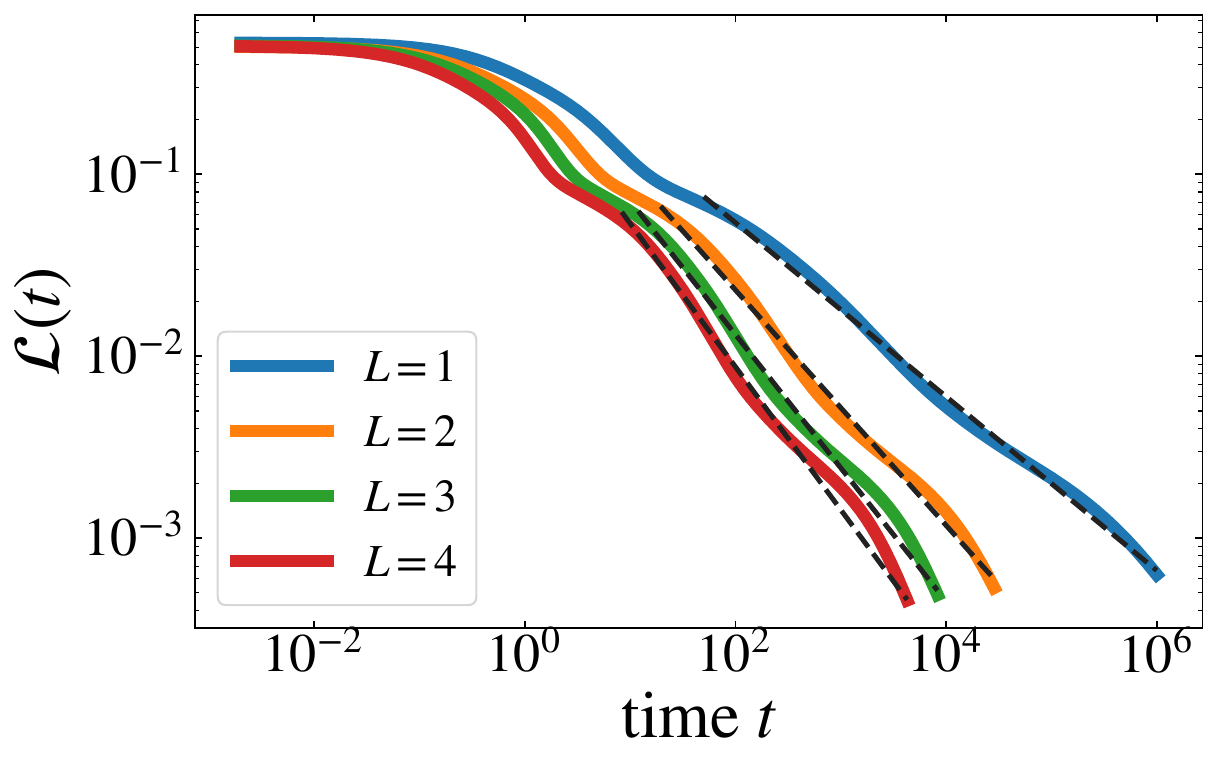}
    \caption{Loss in training time $t$}
  \end{subfigure}\hfill
  \begin{subfigure}[t]{0.32\textwidth}
    \centering
    \includegraphics[width=\linewidth]{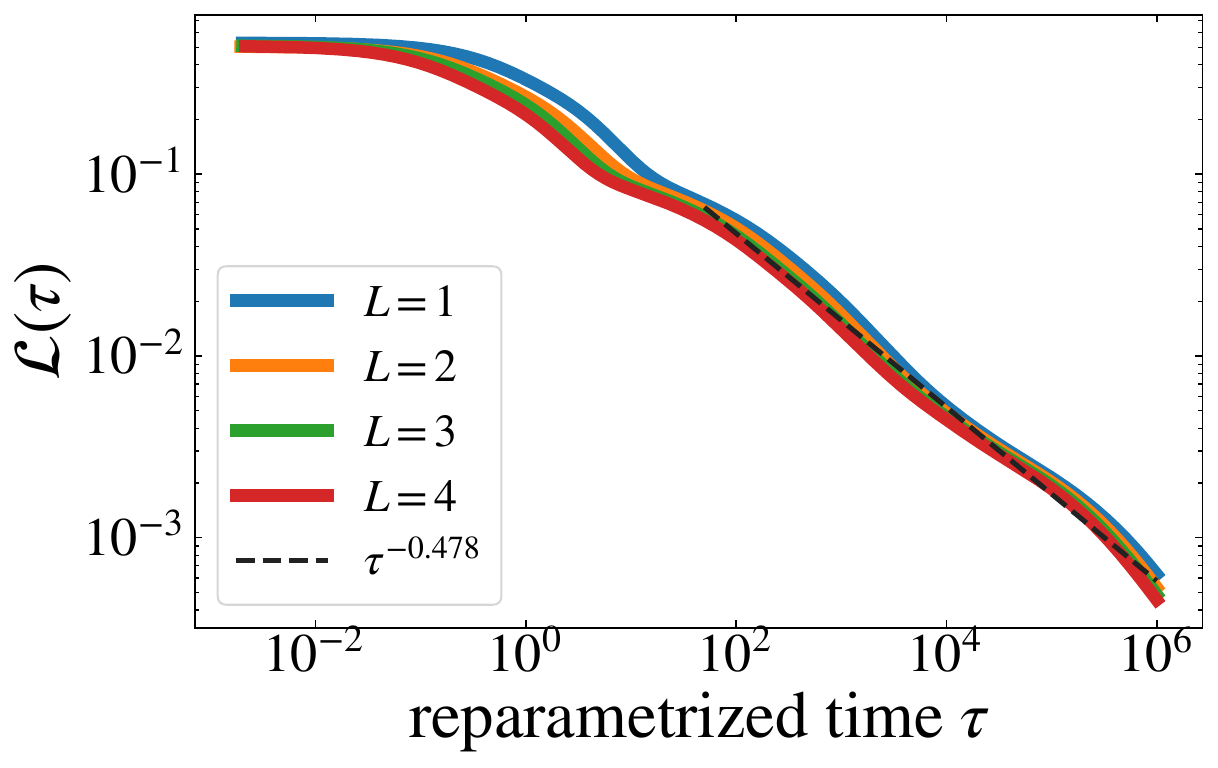}
    \caption{Loss in reparameterized time $\tau$}
  \end{subfigure}\hfill
  \begin{subfigure}[t]{0.32\textwidth}
    \centering
    \includegraphics[width=\linewidth]{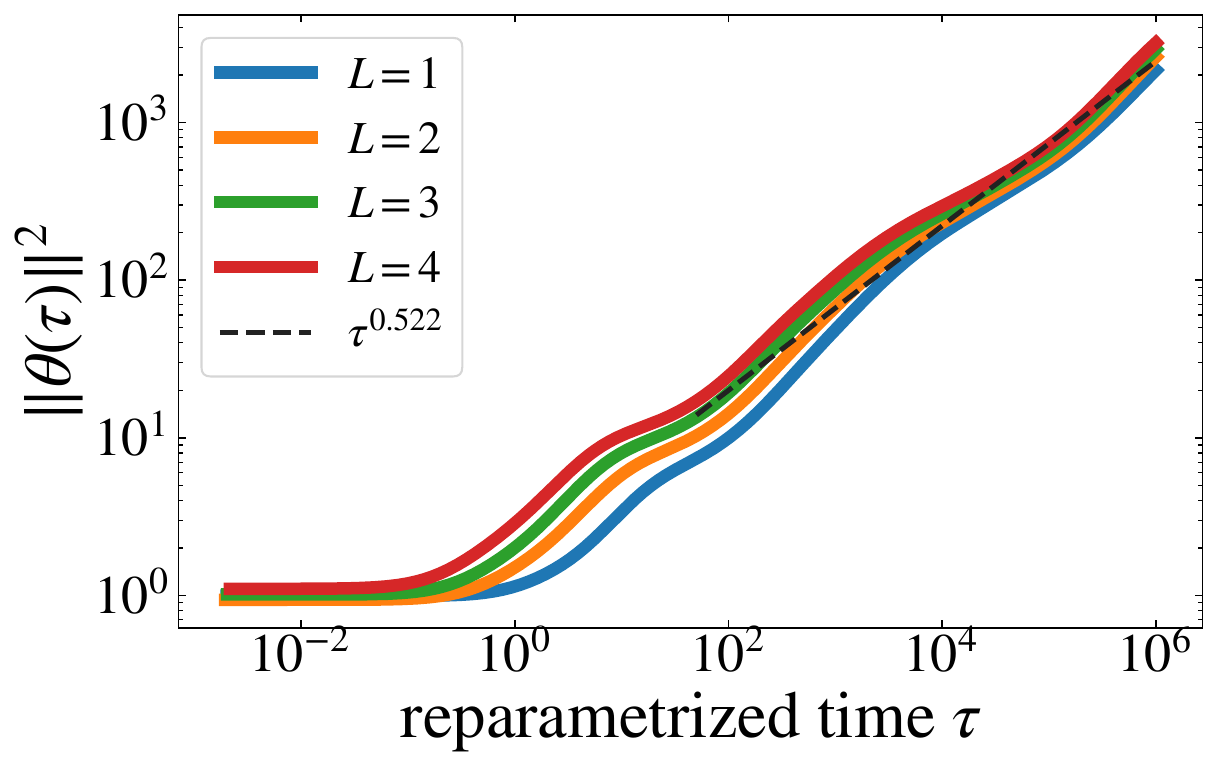}
    \caption{Growth of squared predictor norm}
  \end{subfigure}
  \caption{The depth-dependent acceleration persists on a task derived from benchmark image data.
  (a) In training time, the loss decays faster with increasing depth.
  (b) In reparameterized time, the loss trajectories nearly collapse onto a common spectral scaling law.
  (c) The squared predictor norm exhibits the predicted polynomial growth.
  Dashed lines show the theoretical slopes obtained from $p_{\mathrm{eff}}\approx0.478$:
  $p_{\mathrm{eff}}L/[1+p_{\mathrm{eff}}(L-1)]$ in (a), $p_{\mathrm{eff}}$ in (b), and $1-p_{\mathrm{eff}}$ in (c).
  }
  \label{fig:cifar5m-real-label-dynamics}
\end{figure}

\begin{figure}[t]
  \centering
  \begin{subfigure}[t]{0.49\textwidth}
    \centering
    \includegraphics[width=0.71\linewidth]{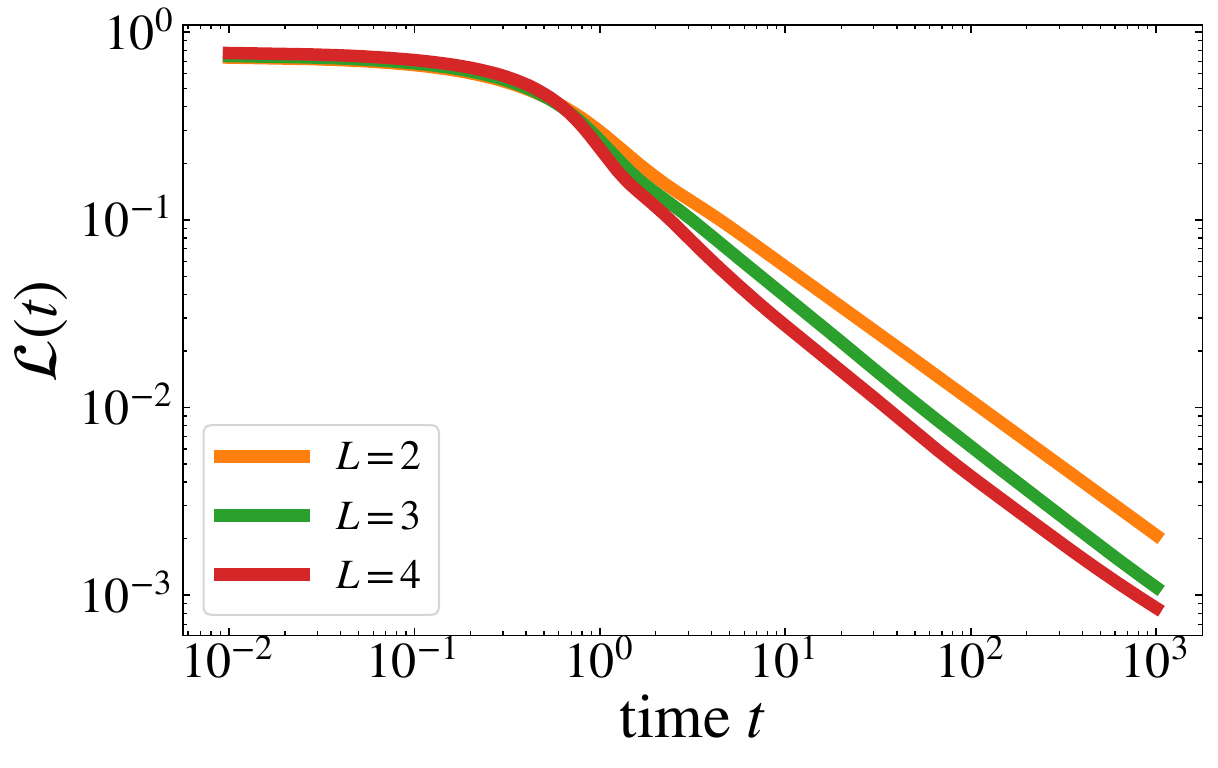}
    \caption{$p=0.6$}
  \end{subfigure}\hfill
  \begin{subfigure}[t]{0.49\textwidth}
    \centering
    \includegraphics[width=0.71\linewidth]{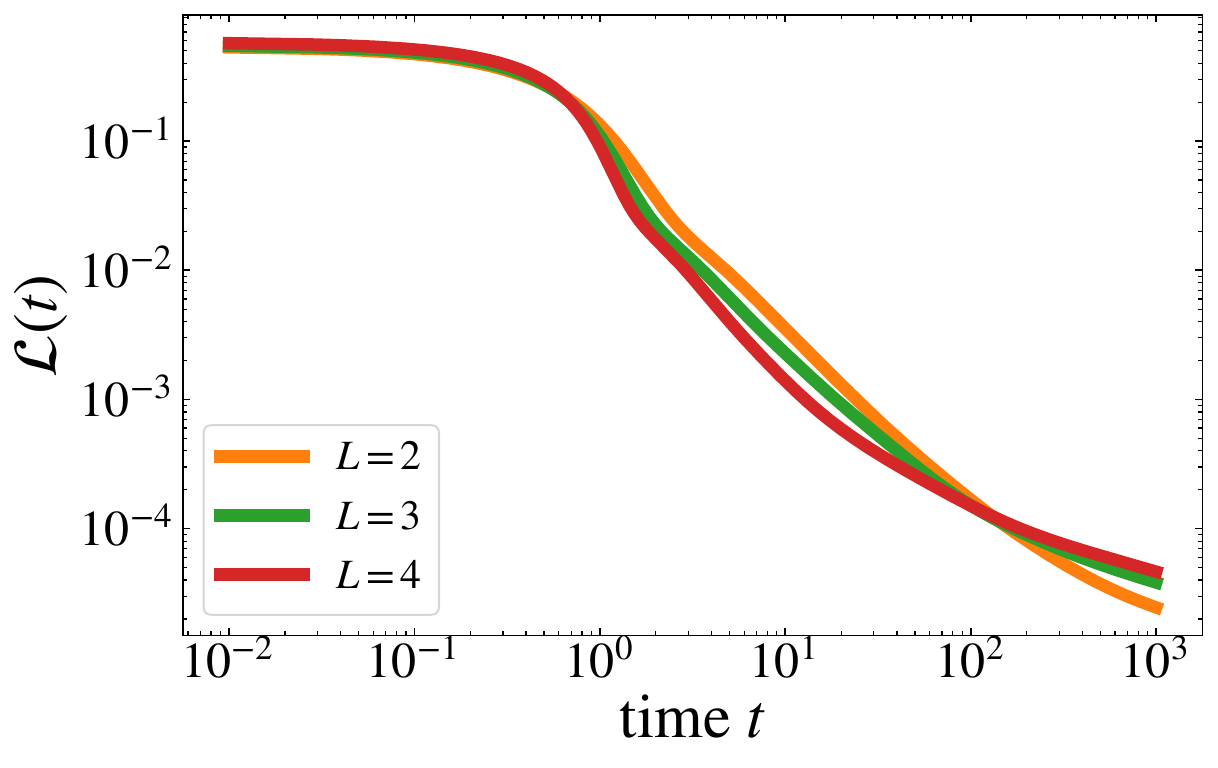}
    \caption{$p=1.4$}
  \end{subfigure}
  \caption{
    The task-dependent effect of depth persists qualitatively in nonlinear ReLU networks.
    (a) For the difficult task with $p=0.6$, deeper networks achieve lower loss at late training times.
    (b) For the easy task with $p=1.4$, the loss trajectories converge and cross, with no consistent late-time advantage from increasing depth.
  }
  \label{fig:relu-mlp-loss}
\end{figure}

\section{Conclusion}

In this work, we showed for the first time that task difficulty explains the previously unresolved depth dependence of convergence in the most fundamental setting of deep linear networks trained with MSE loss. That is, increasing network depth improves the asymptotic training-time scaling exponent for spectrally difficult tasks. A time reparameterization defined through the predictor norm reveals that the loss follows a depth-independent spectral scaling law in reparameterized time, while depth changes its relation to training time. This also explains why depth does not improve the power-law exponent for easy tasks.

Our analysis has several limitations. The theory assumes a linear teacher and student and population gradient flow, while our results for standard initialization are limited to difficult tasks. It remains to understand how the mechanisms identified here extend to more general task structures, nonlinear networks, and finite-sample training with discrete-time stochastic optimization.

Even in deep linear networks, explicitly accounting for task structure can substantially complicate a sharp characterization of the training dynamics. Our results show that this structure can qualitatively change how network depth affects training-time scaling. These results provide a basis for studying such architectural effects in broader classes of neural networks.

\subsection*{AI use statement}

In this work, we used generative AI tools to help refine and formalize aspects of the theoretical framework and presentation, to refine the formulation of some mathematical statements and technical assumptions, and to assist with writing and formalizing proofs. The central research hypotheses, core mathematical ideas, and main theoretical mechanisms were developed by the authors. We also used generative AI tools to generate and edit code implementing numerical experiments, fitting procedures, and plotting routines, and for translation. The experimental settings and comparisons were specified by the authors.
Additionally, we used generative AI tools to suggest some experimental parameters, to draft and edit parts of the manuscript, and to identify relevant literature and examine specific mathematical arguments in prior work.
The authors reviewed and verified all AI-assisted text, mathematical claims and proofs, code, experimental results, analyses, and citations, and take responsibility for the final content.

\subsection*{Reproducibility statement}

The assumptions, technical derivations, and complete proofs for the theoretical results are provided in \cref{sec:gradient-flow-preconditioner-initialization,phased:sec:escape-from-origin,sec:spectral-estimates-auxiliary-loss-bounds,phased:sec:radius-law,sec:common-spectral-law-for-loss,phased:sec:main-scaling-proofs}.
Details for reproducing the numerical experiments are provided in \cref{sec:numerical-experiments-details}, including experimental setups, data processing, optimization settings, and fitting procedures.

\subsubsection*{Acknowledgments}
T.W. was supported by JST BOOST (Grant Number JPMJBS2407). R.K. was supported by JST FOREST (Grant Number JPMJFR226Q), JST BOOST (Grant Number JPMJBY25B0), JSPS KAKENHI (Grant Numbers JP23K16965 and JP22H05116), and FRONTia, a Japanese national program led by METI and NEDO. J.T. was supported by JSPS KAKENHI (Grant Number JP24K15104).

\bibliography{iclr2027_conference}
\bibliographystyle{iclr2027_conference}

\appendix

\newpage

\tableofcontents

Throughout the appendices, $c,C>0$ denote generic constants that may change from line to line. Unless otherwise specified, they do not depend on $d$ or on the variables quantified in the corresponding estimate.

\section{Gradient-Flow Dynamics, Preconditioner Geometry, and Initialization Bounds}
\label{sec:gradient-flow-preconditioner-initialization}

This section establishes the gradient-flow identities, preconditioner estimates, and Gaussian initialization bounds used throughout the analysis.

\subsection{Exact Gradient-Flow Dynamics}

We write $\boldsymbol g:=\boldsymbol\Sigma(\boldsymbol\beta-\boldsymbol\theta)$ and $r:=\|\boldsymbol\theta\|=\sigma^L$. On any interval on which $r>0$, write $\widehat{\boldsymbol\theta}:=\boldsymbol\theta/r$.

The preconditioner induced by the factorization is
\begin{equation}
  \boldsymbol K_L
  :=
  \sum_{k=1}^L
  \|\boldsymbol W_{L:k+1}\|^2
  \boldsymbol W_{k-1:1}^\top\boldsymbol W_{k-1:1}.
  \label{commonwork:eq:tangent-kernel}
\end{equation}
For every $1\le k\le L$, using $\nabla_{\boldsymbol\theta}\mathcal L=-\boldsymbol g$ and applying the chain rule to the factorization $\boldsymbol\theta=(\boldsymbol W_{L:k+1}\boldsymbol W_k \boldsymbol W_{k-1:1})^\top$, we obtain
\begin{equation}
  \frac{d}{dt}\boldsymbol W_k
  =\boldsymbol W_{L:k+1}^\top\boldsymbol g^\top
  \boldsymbol W_{k-1:1}^\top.
  \label{commonwork:eq:exact-weight-dynamics}
\end{equation}
Differentiating the layer product and using \cref{commonwork:eq:exact-weight-dynamics} yields
\begin{equation}
  \frac{d}{dt}\boldsymbol\theta
  =\sum_{k=1}^L
  \|\boldsymbol W_{L:k+1}\|^2
  \boldsymbol W_{k-1:1}^\top\boldsymbol W_{k-1:1}\boldsymbol g
  =\boldsymbol K_L\boldsymbol g.
  \label{commonwork:eq:predictor-dynamics}
\end{equation}
Together with $\nabla_{\boldsymbol\theta}\mathcal L=-\boldsymbol g$, this gives
\begin{equation}
  \frac{d}{dt}\mathcal L
  =-\boldsymbol g^\top\boldsymbol K_L\boldsymbol g
  =-\sum_{k=1}^L
  \left\|\frac{d}{dt}\boldsymbol W_k\right\|_{\mathrm F}^2.
  \label{commonwork:eq:loss-dissipation-expanded}
\end{equation}
The last equality follows by taking the squared Frobenius norm of both sides of \cref{commonwork:eq:exact-weight-dynamics} and summing over the layers.

On such an interval, define the normalized preconditioner by
\begin{equation}
  \overline{\boldsymbol K}_L
  :=\sigma^{-2(L-1)}\boldsymbol K_L.
  \label{commonwork:eq:normalized-tangent-kernel}
\end{equation}
Since $d\tau/dt=\sigma^{2(L-1)}>0$ on this interval, reparameterizing the predictor and loss identities in terms of $\tau$ gives
\begin{equation}
  \frac{d}{d\tau}\boldsymbol\theta
  =\overline{\boldsymbol K}_L\boldsymbol g,
  \label{commonwork:eq:tau-predictor-dynamics}
\end{equation}
\begin{equation}
  \frac{d}{d\tau}\mathcal L
  =-\boldsymbol g^\top\overline{\boldsymbol K}_L\boldsymbol g.
  \label{commonwork:eq:tau-loss-dynamics}
\end{equation}

\begin{lemma}
  \label{commonwork:lem:global-existence}
  For every finite $d$ and any initialization, the gradient-flow system in \cref{commonwork:eq:exact-weight-dynamics} admits a unique solution for all $t\ge0$.
\end{lemma}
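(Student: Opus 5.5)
The argument is a standard ODE-theoretic one: local existence and uniqueness from smoothness, followed by an a priori bound that rules out finite-time blow-up. The right-hand side of \cref{commonwork:eq:exact-weight-dynamics} is a polynomial in the entries of $(\boldsymbol W_1,\dots,\boldsymbol W_L)$, since $\boldsymbol g=\boldsymbol\Sigma(\boldsymbol\beta-\boldsymbol\theta)$ and $\boldsymbol\theta$ is a product of the layers. It is therefore locally Lipschitz on the finite-dimensional parameter space. Picard--Lindelöf then gives a unique maximal solution on some interval $[0,T^*)$. Moreover, if $T^*<\infty$, the parameter norm must diverge as $t\uparrow T^*$. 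It therefore suffices to show that $\sum_k\|\boldsymbol W_k(t)\|_{\mathrm F}^2$ stays bounded on bounded time intervals.

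The first step is to control the gradient by the loss. From \cref{commonwork:eq:loss-dissipation-expanded}, $\mathcal L(t)\le\mathcal L(0)$ for all $t<T^*$. Hence
\[
\|\boldsymbol g\|\le\|\boldsymbol\Sigma^{1/2}\|_{\mathrm{op}}\,\|\boldsymbol\Sigma^{1/2}(\boldsymbol\beta-\boldsymbol\theta)\|\le \sqrt{2c_\lambda\mathcal L(0)}=:G,
\]
which is uniform in time.

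The second step is to bound the growth of $N(t):=\sum_k\|\boldsymbol W_k\|_{\mathrm F}^2$. Naively, $\|d\boldsymbol W_k/dt\|_{\mathrm F}\le G\prod_{j\ne k}\|\boldsymbol W_j\|_{\mathrm F}$. This yields a superlinear differential inequality, which alone does not exclude blow-up when $L\ge3$.

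The third step, which I expect to be the main obstacle, removes this superlinearity. I would use the dissipation identity together with Cauchy--Schwarz in time. For $t<T^*$,
\[
\|\boldsymbol W_k(t)-\boldsymbol W_k(0)\|_{\mathrm F}\le\int_0^t\Big\|\frac{d\boldsymbol W_k}{dv}\Big\|_{\mathrm F}dv\le \sqrt{t}\Big(\int_0^t\sum_j\Big\|\frac{d\boldsymbol W_j}{dv}\Big\|_{\mathrm F}^2dv\Big)^{1/2}=\sqrt{t\,(\mathcal L(0)-\mathcal L(t))}\le\sqrt{t\,\mathcal L(0)}.
\]
So every layer stays within distance $\sqrt{T\mathcal L(0)}$ of its initialization on $[0,T]$, for any finite $T$. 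This contradicts divergence at a finite $T^*$, so $T^*=\infty$.

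If the Cauchy--Schwarz step seemed delicate, a fallback is to use the conserved Gram differences to compare all layer norms to $\|\boldsymbol W_1\|$, but the dissipation bound above already suffices. Uniqueness on $[0,\infty)$ follows from local uniqueness on each compact interval.
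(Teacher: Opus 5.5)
Your proposal is correct and matches the paper's proof. Both use local well-posedness from the polynomial (hence locally Lipschitz) vector field, then the dissipation identity with Cauchy--Schwarz in time to get $\|\boldsymbol W_k(T)-\boldsymbol W_k(0)\|_{\mathrm F}\le\sqrt{T\mathcal L(0)}$, which rules out finite-time blow-up via the continuation criterion (your uniform gradient bound and the naive superlinear estimate are not needed).
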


\begin{proof}
  The vector field is polynomial in the weights and hence locally Lipschitz. Therefore, the system admits a unique local solution. Let $[0,T_{\max})$ be the maximal interval of existence of this solution. For every $T<T_{\max}$, the Cauchy--Schwarz inequality and \cref{commonwork:eq:loss-dissipation-expanded} give
  \begin{equation}
    \begin{split}
      \left(\sum_{k=1}^L
      \|\boldsymbol W_k(T)-\boldsymbol W_k(0)\|_{\mathrm F}^2\right)^{1/2}
      & \le \int_0^T\left(\sum_{k=1}^L
      \left\|\frac{d}{dt}\boldsymbol W_k\right\|_{\mathrm F}^2
      \right)^{1/2}dt                                \\
      & \le \sqrt{T}
      \left(\mathcal L(0)-\mathcal L(T)\right)^{1/2} \\
      & \le \sqrt{T\mathcal L(0)}.
    \end{split}
    \label{commonwork:eq:finite-time-displacement}
  \end{equation}
  If $T_{\max}<\infty$, then \cref{commonwork:eq:finite-time-displacement} shows that every weight matrix remains bounded on $[0,T_{\max})$, contradicting the standard continuation criterion for ODEs. Therefore $T_{\max}=\infty$.
\end{proof}

\subsection{Layer Balance and Preconditioner Geometry}

To quantify the deviation from layer balance at initialization, define
\begin{equation}
  \varepsilon
  :=\max_{1\le k<L}
  \left\|
  \boldsymbol W_k(0)\boldsymbol W_k(0)^\top
  -\boldsymbol W_{k+1}(0)^\top\boldsymbol W_{k+1}(0)
  \right\|_{\mathrm{op}}.
  \label{commonwork:eq:initial-gram-defect}
\end{equation}

\begin{lemma}
  \label{commonwork:lem:conserved-gram}
  For every $1\le k<L$ and every $t\ge0$,
  \begin{equation}
    \boldsymbol W_k(t)\boldsymbol W_k(t)^\top
    -\boldsymbol W_{k+1}(t)^\top\boldsymbol W_{k+1}(t)
    =
    \boldsymbol W_k(0)\boldsymbol W_k(0)^\top
    -\boldsymbol W_{k+1}(0)^\top\boldsymbol W_{k+1}(0).
    \label{commonwork:eq:conserved-gram}
  \end{equation}
  Consequently, at every $t\ge0$,
  \begin{equation}
    \max_{1\le k<L}
    \left\|
    \boldsymbol W_k(t)\boldsymbol W_k(t)^\top
    -\boldsymbol W_{k+1}(t)^\top\boldsymbol W_{k+1}(t)
    \right\|_{\mathrm{op}}
    =\varepsilon.
    \label{commonwork:eq:conserved-gram-defect}
  \end{equation}
\end{lemma}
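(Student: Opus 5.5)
We differentiate the Gram difference directly along the gradient flow and show that its time derivative vanishes identically. Fix $1\le k<L$ and set $\boldsymbol g=\boldsymbol\Sigma(\boldsymbol\beta-\boldsymbol\theta)$. By \cref{commonwork:lem:global-existence}, the solution exists and is $C^1$ on all of $[0,\infty)$, so it suffices to compute
\begin{equation*}
  \frac{d}{dt}\left(\boldsymbol W_k\boldsymbol W_k^\top-\boldsymbol W_{k+1}^\top\boldsymbol W_{k+1}\right)
\end{equation*}
using \cref{commonwork:eq:exact-weight-dynamics}.

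For the first term, \cref{commonwork:eq:exact-weight-dynamics} gives $\dot{\boldsymbol W}_k\boldsymbol W_k^\top=\boldsymbol W_{L:k+1}^\top\boldsymbol g^\top\boldsymbol W_{k-1:1}^\top\boldsymbol W_k^\top=\boldsymbol W_{L:k+1}^\top\boldsymbol g^\top\boldsymbol W_{k:1}^\top$. Hence
\begin{equation*}
  \frac{d}{dt}(\boldsymbol W_k\boldsymbol W_k^\top)=\boldsymbol W_{L:k+1}^\top\boldsymbol g^\top\boldsymbol W_{k:1}^\top+\boldsymbol W_{k:1}\boldsymbol g\,\boldsymbol W_{L:k+1}.
\end{equation*}
For the second term, applying \cref{commonwork:eq:exact-weight-dynamics} with index $k+1$ gives $\boldsymbol W_{k+1}^\top\dot{\boldsymbol W}_{k+1}=\boldsymbol W_{k+1}^\top\boldsymbol W_{L:k+2}^\top\boldsymbol g^\top\boldsymbol W_{k:1}^\top=\boldsymbol W_{L:k+1}^\top\boldsymbol g^\top\boldsymbol W_{k:1}^\top$. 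Hence
\begin{equation*}
  \frac{d}{dt}(\boldsymbol W_{k+1}^\top\boldsymbol W_{k+1})=\boldsymbol W_{L:k+1}^\top\boldsymbol g^\top\boldsymbol W_{k:1}^\top+\boldsymbol W_{k:1}\boldsymbol g\,\boldsymbol W_{L:k+1}.
\end{equation*}

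The two derivatives coincide, so the difference has zero derivative and is constant in $t$. This proves \cref{commonwork:eq:conserved-gram}. Taking operator norms of both sides and then the maximum over $k$ gives \cref{commonwork:eq:conserved-gram-defect}, by the definition \cref{commonwork:eq:initial-gram-defect} of $\varepsilon$.

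The only delicate point is the bookkeeping of the product conventions at the boundary indices. At $k=1$ we use $\boldsymbol W_{0:1}=\boldsymbol I$, and at $k+1=L$ we use $\boldsymbol W_{L:L+1}=1$. We also need $\boldsymbol W_{k+1}^\top\boldsymbol W_{L:k+2}^\top=\boldsymbol W_{L:k+1}^\top$ and $\boldsymbol W_{k-1:1}^\top\boldsymbol W_k^\top=\boldsymbol W_{k:1}^\top$. Both follow from the definition $\boldsymbol W_{j:k}=\boldsymbol W_j\cdots\boldsymbol W_k$, and in these edge cases the identities remain valid with the stated conventions. No further obstacle arises.
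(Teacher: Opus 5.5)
Your proposal is correct and matches the paper's proof. Both compute $\frac{d}{dt}(\boldsymbol W_k\boldsymbol W_k^\top)$ and $\frac{d}{dt}(\boldsymbol W_{k+1}^\top\boldsymbol W_{k+1})$ from the exact weight dynamics using $\boldsymbol W_{k:1}=\boldsymbol W_k\boldsymbol W_{k-1:1}$ and $\boldsymbol W_{L:k+1}=\boldsymbol W_{L:k+2}\boldsymbol W_{k+1}$, observe that both equal $\boldsymbol W_{L:k+1}^\top\boldsymbol g^\top\boldsymbol W_{k:1}^\top+\boldsymbol W_{k:1}\boldsymbol g\boldsymbol W_{L:k+1}$, and then obtain the second claim from the definition of $\varepsilon$ at $t=0$.
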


\begin{proof}
  Using \cref{commonwork:eq:exact-weight-dynamics} together with
  $\boldsymbol W_{k:1}=\boldsymbol W_k\boldsymbol W_{k-1:1}$ and
  $\boldsymbol W_{L:k+1}=\boldsymbol W_{L:k+2}\boldsymbol W_{k+1}$ gives
  \begin{equation}
    \frac{d}{dt}
    (\boldsymbol W_k\boldsymbol W_k^\top)
    =\boldsymbol W_{L:k+1}^\top\boldsymbol g^\top
    \boldsymbol W_{k:1}^\top
    +\boldsymbol W_{k:1}\boldsymbol g\boldsymbol W_{L:k+1}
    =\frac{d}{dt}
    (\boldsymbol W_{k+1}^\top\boldsymbol W_{k+1}).
  \end{equation}
  Hence the difference in \cref{commonwork:eq:conserved-gram} has zero derivative. Evaluating it at $t=0$ yields both claims.
\end{proof}

Define
\begin{equation}
  \Delta\overline{\boldsymbol K}_L
  :=\overline{\boldsymbol K}_L
  -\boldsymbol I-(L-1)\boldsymbol P_{\boldsymbol\theta}.
  \label{commonwork:eq:normalized-kernel-defect}
\end{equation}

Suppose that $r>0$. Set $\boldsymbol\rho_L=1$ and, for $k=L,L-1,\ldots,1$, recursively define
\begin{equation}
  \sigma_k:=\|\boldsymbol W_k^\top\boldsymbol\rho_k\|,\quad
  \boldsymbol\rho_{k-1}
  :=\frac{\boldsymbol W_k^\top\boldsymbol\rho_k}{\sigma_k},\quad
  \boldsymbol W_k^\perp
  :=(\boldsymbol I-\boldsymbol\rho_k\boldsymbol\rho_k^\top)
  \boldsymbol W_k.
\end{equation}
A backward induction shows simultaneously that this recursion is well defined and that
\begin{equation}
  \boldsymbol W_{L:k+1}
  =\left(\prod_{j=k+1}^L\sigma_j\right)\boldsymbol\rho_k^\top,
  \label{commonwork:eq:backward-suffix-product}
\end{equation}
for $0\le k\le L$.
Indeed, the identity holds at $k=L$ by the convention $\boldsymbol W_{L:L+1}=1$. If it holds at $k$ and $\sigma_k=0$, then $\boldsymbol W_{L:k}=0$ and hence $\boldsymbol W_{L:1}=0$, contradicting $r>0$. Thus $\sigma_k>0$, and the defining relation $\boldsymbol W_k^\top\boldsymbol\rho_k =\sigma_k\boldsymbol\rho_{k-1}$ yields the identity at $k-1$.

By construction, each $\boldsymbol\rho_k$ is a unit vector and
\begin{equation}
  \boldsymbol{W}_k
  = \sigma_k \boldsymbol{\rho}_k \boldsymbol{\rho}_{k-1}^\top
  + \boldsymbol{W}_k^\perp,
  \qquad
  \boldsymbol{\rho}_k^\top \boldsymbol{W}_k^\perp = 0.
  \label{commonwork:eq:backward-layer-decomposition}
\end{equation}
Taking $k=0$ in \cref{commonwork:eq:backward-suffix-product} and comparing with $\boldsymbol W_{L:1}=\boldsymbol\theta^\top =r\widehat{\boldsymbol\theta}^\top$ gives
\begin{equation}
  \prod_{k=1}^L\sigma_k=r=\sigma^L,
  \qquad
  \boldsymbol\rho_0=\widehat{\boldsymbol\theta}.
  \label{commonwork:eq:backward-product-direction}
\end{equation}

\begin{lemma}
  \label{commonwork:lem:balanced-layer-amplitudes}
  There exists a constant $C>0$, depending only on $L$, such that at every time for which $r>0$, the decomposition in \cref{commonwork:eq:backward-layer-decomposition} satisfies
  \begin{equation}
    \max_{1\le k\le L}\|\boldsymbol W_k^\perp\|_{\mathrm{op}}^2
    \le C\varepsilon,
    \qquad
    \max_{1\le k\le L}|\sigma_k^2-\sigma^2|
    \le C\varepsilon.
  \end{equation}
  Consequently, for any fixed $M>0$, if $\varepsilon\le M\sigma^2$, then
  \begin{equation}
    \sigma_k\asymp\sigma,
  \end{equation}
  for $1\le k\le L$, where the comparison constants depend only on $L$ and $M$.
\end{lemma}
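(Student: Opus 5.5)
The plan is to extract everything from the conserved Gram defects of \cref{commonwork:lem:conserved-gram}, written as $\boldsymbol D_k:=\boldsymbol W_k\boldsymbol W_k^\top-\boldsymbol W_{k+1}^\top\boldsymbol W_{k+1}$ with $\|\boldsymbol D_k\|_{\mathrm{op}}\le\varepsilon$, together with the backward decomposition \cref{commonwork:eq:backward-layer-decomposition}. The anchor of the argument is the last layer. Since $n_L=1$ and $\boldsymbol\rho_L=1$, we have $\boldsymbol W_L^\perp=0$, and $\boldsymbol W_L^\top\boldsymbol W_L$ is rank one with top eigenvalue $\sigma_L^2$. I then propagate spectral information down the chain $k=L-1,\dots,1$, losing at most $\varepsilon$ per step.

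\textbf{Step 1: top eigenvalues and the amplitudes $\sigma_k$.} By Weyl's inequality applied to $\boldsymbol W_k\boldsymbol W_k^\top=\boldsymbol W_{k+1}^\top\boldsymbol W_{k+1}+\boldsymbol D_k$, and because $\boldsymbol W\boldsymbol W^\top$ and $\boldsymbol W^\top\boldsymbol W$ share their nonzero spectrum, the first two eigenvalues satisfy $\lambda_1(\boldsymbol W_k^\top\boldsymbol W_k)\le\sigma_L^2+(L-k)\varepsilon$ and $\lambda_2(\boldsymbol W_k^\top\boldsymbol W_k)\le(L-k)\varepsilon$. For the lower bound, I sandwich $\boldsymbol D_k$ with $\boldsymbol\rho_k$. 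Using $\boldsymbol W_k^\top\boldsymbol\rho_k=\sigma_k\boldsymbol\rho_{k-1}$ and $\boldsymbol W_{k+1}\boldsymbol\rho_k=\sigma_{k+1}\boldsymbol\rho_{k+1}+\boldsymbol W_{k+1}^\perp\boldsymbol\rho_k$, whose two pieces are orthogonal, I get
\[
\sigma_k^2-\sigma_{k+1}^2-\|\boldsymbol W_{k+1}^\perp\boldsymbol\rho_k\|^2=\boldsymbol\rho_k^\top\boldsymbol D_k\boldsymbol\rho_k\ge-\varepsilon .
\]
Hence $\sigma_k^2\ge\sigma_L^2-(L-k)\varepsilon$. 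Since also $\sigma_k\le\|\boldsymbol W_k\|_{\mathrm{op}}$, all $\sigma_k^2$ lie in an interval of length at most $2L\varepsilon$. The quantity $\sigma^2=(\prod_k\sigma_k^2)^{1/L}$ is their geometric mean and so lies in the same interval, which gives $|\sigma_k^2-\sigma^2|\le 2L\varepsilon$.

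\textbf{Step 2: the orthogonal parts.} Because $\boldsymbol\rho_k^\top\boldsymbol W_k^\perp=0$, the cross terms vanish and
\[
\boldsymbol W_k^\top\boldsymbol W_k=\sigma_k^2\boldsymbol\rho_{k-1}\boldsymbol\rho_{k-1}^\top+\boldsymbol M_k,\qquad \boldsymbol M_k:=\boldsymbol W_k^{\perp\top}\boldsymbol W_k^\perp\succeq0 .
\]
First, $\sigma_k^2+\boldsymbol\rho_{k-1}^\top\boldsymbol M_k\boldsymbol\rho_{k-1}\le\lambda_1\le\sigma_k^2+3L\varepsilon$ by Step 1. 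Next, for any unit $\boldsymbol v\perp\boldsymbol\rho_{k-1}$, the trace on $\operatorname{span}\{\boldsymbol\rho_{k-1},\boldsymbol v\}$ is at most $\lambda_1+\lambda_2$. This gives $\boldsymbol v^\top\boldsymbol M_k\boldsymbol v\le\lambda_1+\lambda_2-\sigma_k^2\le 4L\varepsilon$. Finally, write a general unit vector as $\alpha\boldsymbol\rho_{k-1}+\beta\boldsymbol v$ and use $\boldsymbol M_k\succeq0$ to get $w^\top\boldsymbol M_kw\le2(\alpha^2\boldsymbol\rho^\top\boldsymbol M_k\boldsymbol\rho+\beta^2\boldsymbol v^\top\boldsymbol M_k\boldsymbol v)$. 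Together these give $\|\boldsymbol W_k^\perp\|_{\mathrm{op}}^2\le C\varepsilon$.

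\textbf{Step 3: the consequence under $\varepsilon\le M\sigma^2$.} The upper bound $\sigma_k^2\le\sigma^2+2L\varepsilon\le(1+2LM)\sigma^2$ is immediate. The lower bound is the one delicate point, because the additive estimate $\sigma_k^2\ge\sigma^2-2L\varepsilon$ can be vacuous when $M$ is not small. I will handle it multiplicatively through the product identity \cref{commonwork:eq:backward-product-direction}. Letting $\sigma_{\min}$ be the smallest amplitude and bounding each of the other $L-1$ factors by $(1+2LM)\sigma^2$, I get
\[
\sigma^{2L}=\prod_j\sigma_j^2\le\sigma_{\min}^2\,\bigl((1+2LM)\sigma^2\bigr)^{L-1},
\]
which yields $\sigma_{\min}^2\ge(1+2LM)^{-(L-1)}\sigma^2$. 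This gives $\sigma_k\asymp\sigma$ with constants depending only on $L$ and $M$. The main obstacle is Step 2, namely controlling the full operator norm of $\boldsymbol W_k^\perp$ rather than just its action on $\boldsymbol\rho_k$. The two-eigenvalue (rank-two trace) argument above is the route I would try first.
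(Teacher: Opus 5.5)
Your proof is correct, but it controls the transverse parts by a different mechanism than the paper.

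\textbf{The paper's route.} The paper works on the $n_k$-dimensional side of each layer. It compresses the conserved defect $\boldsymbol D_k$ by $\boldsymbol Q_k=\boldsymbol I-\boldsymbol\rho_k\boldsymbol\rho_k^\top$. This annihilates the rank-one part $\sigma_{k+1}^2\boldsymbol\rho_k\boldsymbol\rho_k^\top$ of $\boldsymbol W_{k+1}^\top\boldsymbol W_{k+1}$, and it turns $\boldsymbol Q_k\boldsymbol W_k\boldsymbol W_k^\top\boldsymbol Q_k$ into exactly $\boldsymbol W_k^\perp(\boldsymbol W_k^\perp)^\top$. Starting from $\boldsymbol W_L^\perp=0$, this gives the one-line recursion $\|\boldsymbol W_k^\perp\|_{\mathrm{op}}^2\le\|\boldsymbol W_{k+1}^\perp\|_{\mathrm{op}}^2+\varepsilon$. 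Only afterwards does the paper bound $|\sigma_k^2-\sigma_{k+1}^2|\le\|\boldsymbol W_{k+1}^\perp\|_{\mathrm{op}}^2+\varepsilon$, using the same $\boldsymbol\rho_k$ quadratic form you use. It then concludes with the geometric mean and the product identity, exactly as in your Step 3.

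\textbf{Your route.} You reverse the order. You first pin down all the $\sigma_k^2$ without any control of $\boldsymbol W^\perp$. The upper bound comes from Weyl propagation of $\lambda_1$. The lower bound comes from dropping the nonnegative term $\|\boldsymbol W_{k+1}^\perp\boldsymbol\rho_k\|^2$. You then recover $\|\boldsymbol W_k^\perp\|_{\mathrm{op}}$ on the $n_{k-1}$-dimensional side, using the two-eigenvalue bound together with a Ky Fan trace argument on $\operatorname{span}\{\boldsymbol\rho_{k-1},\boldsymbol v\}$.

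\textbf{Comparison.} The paper's route is shorter, needs no eigenvalue inequalities, and gives the cleaner constant $(L-k)\varepsilon$. In particular, the step you flagged as the main obstacle disappears once you compress $\boldsymbol D_k$ on the complement of $\boldsymbol\rho_k$ rather than studying $\boldsymbol W_k^\top\boldsymbol W_k$. Your route costs Weyl, Ky Fan, and a factor-two splitting. In exchange, it yields an amplitude comparison that does not depend on the transverse bounds. It also gives a spectral statement: each $\boldsymbol W_k^\top\boldsymbol W_k$ has top eigenvalue within $O(\varepsilon)$ of $\sigma_L^2$, and all its other eigenvalues are $O(\varepsilon)$.
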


\begin{proof}
  The orthogonality in the decomposition gives
  \begin{equation}
    \boldsymbol W_{k+1}^\top\boldsymbol W_{k+1}
    =\sigma_{k+1}^2\boldsymbol\rho_k\boldsymbol\rho_k^\top
    +(\boldsymbol W_{k+1}^\perp)^\top
    \boldsymbol W_{k+1}^\perp.
  \end{equation}
  Taking the quadratic form of the adjacent Gram difference along $\boldsymbol\rho_k$ and using \cref{commonwork:eq:conserved-gram-defect}, we obtain
  \begin{equation}
    |\sigma_k^2-\sigma_{k+1}^2|
    \le \|\boldsymbol W_{k+1}^\perp\|_{\mathrm{op}}^2+\varepsilon.
  \end{equation}
  Projecting the same Gram difference onto the orthogonal complement of $\boldsymbol\rho_k$ gives
  \begin{equation}
    \|\boldsymbol W_k^\perp\|_{\mathrm{op}}^2
    \le \|\boldsymbol W_{k+1}^\perp\|_{\mathrm{op}}^2+\varepsilon.
  \end{equation}
  Since $\boldsymbol W_L^\perp=0$, backward induction shows that
  \begin{equation}
    \|\boldsymbol W_k^\perp\|_{\mathrm{op}}^2
    \le C\varepsilon,
  \end{equation}
  for $1\le k\le L$.
  Combining $|\sigma_k^2-\sigma_{k+1}^2|\le \|\boldsymbol W_{k+1}^\perp\|_{\mathrm{op}}^2+\varepsilon$ with the bounds on $\|\boldsymbol W_k^\perp\|_{\mathrm{op}}^2$ above shows that the values $\sigma_1^2,\ldots,\sigma_L^2$ differ pairwise by at most $C\varepsilon$. Their geometric mean is $\sigma^2$ by \cref{commonwork:eq:backward-product-direction}, so
  \begin{equation}
    |\sigma_k^2-\sigma^2|
    \le C\varepsilon,
  \end{equation}
  for $1\le k\le L$.

  If $\varepsilon\le M\sigma^2$, this bound gives
  \begin{equation}
    \sigma_k^2
    \le \sigma^2+C\varepsilon
    \le (1+CM)\sigma^2.
  \end{equation}
  Using \cref{commonwork:eq:backward-product-direction} and applying this upper bound to all factors other than $\sigma_k^2$ gives
  \begin{equation}
    \sigma_k^2
    =\frac{\sigma^{2L}}{\prod_{j\ne k}\sigma_j^2}
    \ge (1+CM)^{-(L-1)}\sigma^2.
  \end{equation}
  These two bounds prove the claimed comparison.
\end{proof}

\begin{proposition}
  \label{commonwork:prop:perturbative-kernel-expansion}
  Fix $M>0$, and suppose that $r>0$ and $\varepsilon\le M\sigma^2$. Then there exist constants $c,C>0$, depending only on $L$ and $M$, such that
  \begin{equation}
    \|\Delta\overline{\boldsymbol K}_L\|_{\mathrm{op}}
    \le C\frac{\varepsilon}{\sigma^2}
    \label{commonwork:eq:perturbative-kernel-normal-form}
  \end{equation}
  and
  \begin{equation}
    c\boldsymbol I\preceq\overline{\boldsymbol K}_L\preceq C\boldsymbol I.
    \label{commonwork:eq:perturbative-kernel-sandwich}
  \end{equation}
\end{proposition}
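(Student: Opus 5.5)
The plan is to substitute the backward decomposition \cref{commonwork:eq:backward-layer-decomposition} into each summand of $\boldsymbol K_L$ and compare it with the balanced model, in which every $\boldsymbol W_k^\perp=0$ and every $\sigma_k=\sigma$.

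\emph{Suffix factors.} By \cref{commonwork:eq:backward-suffix-product}, $\|\boldsymbol W_{L:k+1}\|^2=\prod_{j>k}\sigma_j^2$. Lemma~\ref{commonwork:lem:balanced-layer-amplitudes} gives $|\sigma_j^2-\sigma^2|\le C\varepsilon$ and $\sigma_j\asymp\sigma$. A telescoping product bound then yields
\begin{equation}
  \Bigl|\prod_{j>k}\sigma_j^2-\sigma^{2(L-k)}\Bigr|\le C\varepsilon\,\sigma^{2(L-k-1)} .
\end{equation}

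\emph{Prefix Grams.} Next we write $\boldsymbol W_{k-1:1}^\top\boldsymbol W_{k-1:1}$ in terms of the layer Grams. The conservation law of Lemma~\ref{commonwork:lem:conserved-gram} gives $\boldsymbol W_{j}^\top\boldsymbol W_{j}=\boldsymbol W_{j-1}\boldsymbol W_{j-1}^\top+\boldsymbol E_j$ with $\|\boldsymbol E_j\|_{\mathrm{op}}\le\varepsilon$. Peeling from the outside in, we get
\begin{align}
  \boldsymbol W_{k-1:1}^\top\boldsymbol W_{k-1:1}
  &=\boldsymbol W_{k-2:1}^\top(\boldsymbol W_{k-2}\boldsymbol W_{k-2}^\top+\boldsymbol E_{k-1})\boldsymbol W_{k-2:1}\\
  &=\cdots=(\boldsymbol W_1^\top\boldsymbol W_1)^{k-1}+(\text{error}).
\end{align}
Each error term is sandwiched between products of at most $k-2$ layers. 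Since every layer has operator norm $\lesssim\sigma$ (because $\|\boldsymbol W_k\|_{\mathrm{op}}^2\le\sigma_k^2+\|\boldsymbol W_k^\perp\|^2_{\mathrm{op}}\lesssim\sigma^2$ when $\varepsilon\le M\sigma^2$), the total error is $O(\varepsilon\sigma^{2(k-2)})$.

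\emph{First-layer Gram.} Now $\boldsymbol W_1^\top\boldsymbol W_1=\sigma_1^2\widehat{\boldsymbol\theta}\widehat{\boldsymbol\theta}^\top+(\boldsymbol W_1^\perp)^\top\boldsymbol W_1^\perp+\text{cross}$, where $\boldsymbol\rho_0=\widehat{\boldsymbol\theta}$. The cross terms vanish because $\boldsymbol\rho_1^\top\boldsymbol W_1^\perp=0$. Hence $\boldsymbol W_1^\top\boldsymbol W_1=\sigma_1^2\boldsymbol P_{\boldsymbol\theta}+\boldsymbol F$ with $\|\boldsymbol F\|_{\mathrm{op}}\le C\varepsilon$.

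The power $(\sigma_1^2\boldsymbol P_{\boldsymbol\theta}+\boldsymbol F)^{k-1}$ equals $\sigma^{2(k-1)}\boldsymbol P_{\boldsymbol\theta}+O(\varepsilon\sigma^{2(k-2)})$ for $k\ge2$, and equals $\boldsymbol I$ for $k=1$. This requires $\sigma\gtrsim$ the perturbation scale, which holds since $\varepsilon\le M\sigma^2$.

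\emph{Assembly.} Summing, the $k=1$ term gives $\sigma^{2(L-1)}\boldsymbol I+O(\varepsilon\sigma^{2(L-2)})$. Each $k\ge2$ term gives $\sigma^{2(L-1)}\boldsymbol P_{\boldsymbol\theta}+O(\varepsilon\sigma^{2(L-2)})$. Dividing by $\sigma^{2(L-1)}$ yields $\overline{\boldsymbol K}_L=\boldsymbol I+(L-1)\boldsymbol P_{\boldsymbol\theta}+O(\varepsilon/\sigma^2)$, which is \cref{commonwork:eq:perturbative-kernel-normal-form}.

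\emph{Sandwich bound.} The upper bound in \cref{commonwork:eq:perturbative-kernel-sandwich} follows directly from the operator-norm bounds: each summand is $\lesssim\sigma^{2(L-1)}$. The lower bound cannot come from the perturbative expansion, since $M$ may be large and the error may then exceed $1$. Instead, we use positivity. Every summand of $\boldsymbol K_L$ is positive semidefinite, so $\boldsymbol K_L\succeq\|\boldsymbol W_{L:2}\|^2\boldsymbol I=\prod_{j\ge2}\sigma_j^2\,\boldsymbol I\succeq c\,\sigma^{2(L-1)}\boldsymbol I$ by $\sigma_j\asymp\sigma$. This gives $c\boldsymbol I\preceq\overline{\boldsymbol K}_L$ with $c$ depending only on $L$ and $M$.

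\emph{Main obstacle.} The delicate step is the bookkeeping of the Gram-chain error: each error term must carry only one factor of $\varepsilon$ with the right power of $\sigma$, uniformly in $k$. I expect to handle this cleanly by expanding the product of the $k-1$ Gram replacements as a telescoping sum. I will bound every term with a single Cauchy--Schwarz estimate, using $\|\boldsymbol W_j\|_{\mathrm{op}}\le C\sigma$, and absorb higher powers of $\varepsilon/\sigma^2\le M$ into the constant.
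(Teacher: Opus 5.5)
Your argument is correct, but it treats the prefix Grams differently from the paper.

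\emph{Your route.} You use the conservation law directly to peel $\boldsymbol W_{k-1:1}^\top\boldsymbol W_{k-1:1}$ down to $(\boldsymbol W_1^\top\boldsymbol W_1)^{k-1}$, as in the classical balanced analysis. You then apply the backward decomposition only to $\boldsymbol W_1$.

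\emph{The paper's route.} The paper never forms this Gram chain. It applies the backward decomposition at layer $j$, together with $\boldsymbol\rho_{j-1}^\top\boldsymbol W_{j-1:1}=(\prod_{k<j}\sigma_k)\boldsymbol\rho_0^\top$, to obtain the exact identity
\[
\boldsymbol W_{j:1}^\top\boldsymbol W_{j:1}=\Bigl(\prod_{k\le j}\sigma_k^2\Bigr)\boldsymbol P_{\boldsymbol\theta}+\boldsymbol W_{j-1:1}^\top(\boldsymbol W_j^\perp)^\top\boldsymbol W_j^\perp\boldsymbol W_{j-1:1}.
\]
The remainder here is a single positive semidefinite term of norm at most $C\sigma^{2(j-1)}\varepsilon$. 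Dividing by $\sigma^{2(L-1)}=\sigma^{-2}\prod_k\sigma_k^2$ gives exactly $\overline{\boldsymbol K}_L=\frac{\sigma^2}{\sigma_1^2}\boldsymbol I+\sum_{k\ge2}\frac{\sigma^2}{\sigma_k^2}\boldsymbol P_{\boldsymbol\theta}+(\text{PSD remainders})$. So the paper needs no telescoping of suffix products and no multi-term expansion with repeated $\varepsilon$ factors. The normal form and both sandwich bounds are read off this one formula.

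\emph{What each buys.} Your route pays for the bookkeeping you flag yourself: terms with several perturbation factors must be absorbed using $\varepsilon/\sigma^2\le M$. That step is plain submultiplicativity of the operator norm, not Cauchy--Schwarz. For the lower bound you use positivity of the original summands $\|\boldsymbol W_{L:k+1}\|^2\boldsymbol W_{k-1:1}^\top\boldsymbol W_{k-1:1}$, which is equally valid. In exchange, your route makes the link to the balanced identity $\boldsymbol W_{k-1:1}^\top\boldsymbol W_{k-1:1}=(\boldsymbol W_1^\top\boldsymbol W_1)^{k-1}$ explicit. The paper's proof does not need that link and is shorter.
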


\begin{proof}
  Fix a time satisfying the assumptions of the proposition. By \cref{commonwork:lem:balanced-layer-amplitudes}, for every $1\le k\le L$, the layer amplitudes are comparable to $\sigma$, while $|\sigma_k^2-\sigma^2|\le C\varepsilon$ and $\|\boldsymbol W_k^\perp\|_{\mathrm{op}}^2\le C\varepsilon$.
  Together with the decomposition in \cref{commonwork:eq:backward-layer-decomposition}, these bounds imply
  \begin{equation}
    \|\boldsymbol{W}_k\|_{\mathrm{op}}^2\le C\sigma^2,
    \label{commonwork:eq:perturbative-layer-upper}
  \end{equation}
  for $1\le k\le L$.

  For $1\le j<L$, define
  \begin{equation}
    \boldsymbol{E}_j
    :=\boldsymbol{W}_{j-1:1}^\top(\boldsymbol{W}_j^\perp)^\top
    \boldsymbol{W}_j^\perp\boldsymbol{W}_{j-1:1}
    \succeq0.
    \label{commonwork:eq:perturbative-prefix-remainder}
  \end{equation}
  By \cref{commonwork:lem:balanced-layer-amplitudes} and \cref{commonwork:eq:perturbative-layer-upper},
  \begin{equation}
    \|\boldsymbol{E}_j\|_{\mathrm{op}}
    \le \|\boldsymbol{W}_{j-1:1}\|_{\mathrm{op}}^2
    \|\boldsymbol{W}_j^\perp\|_{\mathrm{op}}^2
    \le C\sigma^{2(j-1)}\varepsilon.
    \label{commonwork:eq:perturbative-prefix-remainder-bound}
  \end{equation}
  The backward recursion gives
  \begin{equation}
    \boldsymbol{W}_{j-1:1}^\top\boldsymbol{\rho}_{j-1}
    =\left(\prod_{k=1}^{j-1}\sigma_k\right)\boldsymbol{\rho}_0.
    \label{commonwork:eq:perturbative-prefix-direction}
  \end{equation}
  Combining this identity with the orthogonality of the transverse component yields
  \begin{equation}
    \boldsymbol{W}_{j:1}^\top\boldsymbol{W}_{j:1}
    =\left(\prod_{k=1}^j\sigma_k^2\right)
    \boldsymbol{\rho}_0\boldsymbol{\rho}_0^\top+\boldsymbol{E}_j.
    \label{commonwork:eq:perturbative-prefix-gram}
  \end{equation}
  Substituting \cref{commonwork:eq:backward-suffix-product} and \cref{commonwork:eq:perturbative-prefix-gram} into \cref{commonwork:eq:tangent-kernel}, and then using \cref{commonwork:eq:backward-product-direction}, yields
  \begin{equation}
    \overline{\boldsymbol K}_L
    =\frac{\sigma^2}{\sigma_1^2}\boldsymbol{I}_d
    +\sum_{k=2}^L\frac{\sigma^2}{\sigma_k^2}
    \boldsymbol P_{\boldsymbol\theta}+
    \sum_{k=2}^L
    \sigma^{-2(L-1)}
    \left(\prod_{j=k+1}^L\sigma_j^2\right)
    \boldsymbol{E}_{k-1}.
    \label{commonwork:eq:perturbative-kernel-expansion}
  \end{equation}
  For every $1\le k\le L$, the amplitude bounds imply
  \begin{equation}
    \left|\frac{\sigma^2}{\sigma_k^2}-1\right|
    =\frac{|\sigma^2-\sigma_k^2|}{\sigma_k^2}
    \le C\frac{\varepsilon}{\sigma^2}.
    \label{commonwork:eq:perturbative-kernel-coefficient-error}
  \end{equation}
  For every $2\le k\le L$, \cref{commonwork:lem:balanced-layer-amplitudes,commonwork:eq:perturbative-prefix-remainder-bound} give
  \begin{equation}
    \left\|
    \sigma^{-2(L-1)}
    \left(\prod_{j=k+1}^L\sigma_j^2\right)
    \boldsymbol{E}_{k-1}
    \right\|_{\mathrm{op}}
    \le C\frac{\varepsilon}{\sigma^2}.
    \label{commonwork:eq:perturbative-kernel-remainder-summand}
  \end{equation}
  Subtracting $\boldsymbol I_d+(L-1)\boldsymbol P_{\boldsymbol\theta}$ from \cref{commonwork:eq:perturbative-kernel-expansion} and bounding each resulting term by \cref{commonwork:eq:perturbative-kernel-coefficient-error,commonwork:eq:perturbative-kernel-remainder-summand}, we obtain
  \begin{equation}
    \|\Delta\overline{\boldsymbol K}_L\|_{\mathrm{op}}
    \le C\frac{\varepsilon}{\sigma^2}.
  \end{equation}
  This establishes \cref{commonwork:eq:perturbative-kernel-normal-form}.

  Finally, all terms on the right-hand side of \cref{commonwork:eq:perturbative-kernel-expansion} are positive semidefinite. Thus the amplitude comparison in \cref{commonwork:lem:balanced-layer-amplitudes}, together with \cref{commonwork:eq:perturbative-kernel-expansion}, gives $\overline{\boldsymbol K}_L\succeq c\boldsymbol I$. The same comparison, \cref{commonwork:eq:perturbative-kernel-expansion,commonwork:eq:perturbative-kernel-remainder-summand}, and $\varepsilon/\sigma^2\le M$ give $\overline{\boldsymbol K}_L\preceq C\boldsymbol I$.
\end{proof}

\subsection{Rescaled Predictor Dynamics and Time Reparameterization}
\label{commonwork:ss:rescaled-predictor-dynamics}

We prove \cref{phased:lem:rescaled-predictor-dynamics,phased:cor:rescaled-predictor-tau-dynamics} and record the resulting variation of constants formula.
Throughout this subsection, we work on an interval on which $r>0$.

\begin{proof}[Proof of \Cref{phased:lem:rescaled-predictor-dynamics}]
  Recall that $r=\sigma^L$,
  \begin{equation}
    \boldsymbol z = \sigma^{-(L-1)}\boldsymbol\theta,
    \label{phased:eq:rescaled-predictor}
  \end{equation}
  and
  \begin{equation}
    \boldsymbol J_L =
    \sigma^{L-1}\frac{\partial\boldsymbol z}{\partial\boldsymbol\theta}
    = \boldsymbol I - \frac{L-1}{L}
    \boldsymbol P_{\boldsymbol\theta}.
    \label{phased:eq:rescaling-operator}
  \end{equation}
  By \cref{commonwork:eq:normalized-kernel-defect}, $\overline{\boldsymbol K}_L =\boldsymbol I+(L-1)\boldsymbol P_{\boldsymbol\theta}+\Delta\overline{\boldsymbol K}_L$.
  Since $\boldsymbol P_{\boldsymbol\theta}^2=\boldsymbol P_{\boldsymbol\theta}$, we have $\boldsymbol J_L(\boldsymbol I+(L-1)\boldsymbol P_{\boldsymbol\theta})=\boldsymbol I$.
  Moreover, $\partial\boldsymbol z/\partial\boldsymbol\theta =\sigma^{-(L-1)}\boldsymbol J_L$.
  By \cref{commonwork:eq:predictor-dynamics,commonwork:eq:normalized-tangent-kernel}, we have
  \begin{equation}
    \frac{d\boldsymbol\theta}{dt}
    =\boldsymbol K_L\boldsymbol g
    =\sigma^{2(L-1)}\overline{\boldsymbol K}_L\boldsymbol g.
  \end{equation}
  We therefore obtain
  \begin{equation}
    \begin{split}
      \frac{d\boldsymbol z}{dt}
      &=\sigma^{-(L-1)}\boldsymbol J_L
      \frac{d\boldsymbol\theta}{dt} \\
      &=\sigma^{L-1}\boldsymbol J_L
      \bigl(\boldsymbol I+(L-1)\boldsymbol P_{\boldsymbol\theta}
      +\Delta\overline{\boldsymbol K}_L\bigr)\boldsymbol g \\
      &=\sigma^{L-1}
      (\boldsymbol I+\boldsymbol J_L\Delta\overline{\boldsymbol K}_L)
      \boldsymbol g \\
      &=(\boldsymbol I+\boldsymbol J_L\Delta\overline{\boldsymbol K}_L)
      \boldsymbol\Sigma
      \bigl(\sigma^{L-1}\boldsymbol\beta
      -\sigma^{2(L-1)}\boldsymbol z\bigr).
    \end{split}
  \end{equation}
\end{proof}

\begin{proof}[Proof of \Cref{phased:cor:rescaled-predictor-tau-dynamics}]
  Differentiating \cref{phased:eq:time-reparametrization} gives $d\tau/dt=\sigma^{2(L-1)}$. Combining this identity with \cref{phased:lem:rescaled-predictor-dynamics}, we obtain
  \begin{equation}
    \frac{d\boldsymbol z}{d\tau}
    =(\boldsymbol I+\boldsymbol J_L\Delta\overline{\boldsymbol K}_L)
    \boldsymbol\Sigma
    \bigl(\sigma^{-(L-1)}\boldsymbol\beta-\boldsymbol z\bigr).
  \end{equation}
  Hence,
  \begin{equation}
    \frac{d\boldsymbol z}{d\tau}
    =-\boldsymbol\Sigma\boldsymbol z
    +\sigma^{-(L-1)}\boldsymbol\Sigma\boldsymbol\beta
    +\boldsymbol f_K,
  \end{equation}
  where the forcing term in the corollary statement is given by
  \begin{equation}
    \boldsymbol f_K = \boldsymbol J_L
    \Delta\overline{\boldsymbol K}_L\boldsymbol\Sigma
    \left(\sigma^{-(L-1)}\boldsymbol\beta-\boldsymbol z\right).
    \label{phased:eq:kernel-forcing}
  \end{equation}
\end{proof}

For any $0\le \tau_0<\tau$, the corresponding variation of constants formula is
\begin{equation}
  \boldsymbol z(\tau)
  =
  e^{-(\tau-\tau_0)\boldsymbol\Sigma}\boldsymbol z(\tau_0)
  +
  \int_{\tau_0}^\tau
  e^{-(\tau-v)\boldsymbol\Sigma}
  \sigma(v)^{-(L-1)}
  \boldsymbol\Sigma\boldsymbol\beta\,dv
  +
  \int_{\tau_0}^\tau
  e^{-(\tau-v)\boldsymbol\Sigma}
  \boldsymbol f_K(v)\,dv.
  \label{phased:eq:variation-constants-formula}
\end{equation}
The perturbation term also satisfies the following pointwise bound. For any fixed $M>0$, at every time at which $\varepsilon\le M\sigma^2$, \cref{commonwork:prop:perturbative-kernel-expansion} and $\|\boldsymbol J_L\|_{\mathrm{op}}=1$ imply
\begin{equation}
  \|\boldsymbol f_K\|
  \le
  \sigma^{-(L-1)}
  \|\boldsymbol J_L\|_{\mathrm{op}}
  \|\Delta\overline{\boldsymbol K}_L\|_{\mathrm{op}}\|\boldsymbol g\|
  \le
  C\varepsilon\sigma^{-(L+1)}\|\boldsymbol g\|.
  \label{phased:eq:forcing-bound}
\end{equation}

\subsection{Gaussian Initialization}

Recall that the initial weight matrices are independent across layers, with i.i.d. entries within each layer distributed as $(\boldsymbol W_k(0))_{ij}\sim\mathcal N(0,s^2/n_{k-1})$ for $1\le k\le L$.

\begin{lemma}
  \label{commonwork:lem:fixed-s-isotropy}
  Fix $L$, and suppose that $n_k\asymp d$ for $1\le k<L$ and that $s>0$. There exist constants $c,C>0$, independent of $d$, $s$, and $\eta_0$, such that, for every fixed $\eta_0>0$ and all sufficiently large $d$, the following bounds hold with probability at least
  $1-C\exp(-c\min\{1,\eta_0^2\}d)$:
  \begin{equation}
    \sigma(0)^2\asymp s^2,
    \label{commonwork:eq:init-radius}
  \end{equation}
  \begin{equation}
    \varepsilon\lesssim s^2,
    \label{commonwork:eq:init-defect}
  \end{equation}
  \begin{equation}
    \max_{1\le k\le L}\|\boldsymbol W_k(0)\|_{\mathrm{op}}
    \lesssim s,
    \qquad
    \|\boldsymbol W_{L:2}(0)\|\asymp s^{L-1},
    \label{commonwork:eq:init-layer-scales}
  \end{equation}
  and
  \begin{equation}
    \left|
    \mathcal L(0)
    -\frac12\|\boldsymbol\Sigma^{1/2}\boldsymbol\beta\|^2
    \right|
    \le
    C(\eta_0r(0)+\eta_0^2r(0)^2).
    \label{commonwork:eq:initial-loss-closeness}
  \end{equation}
  Moreover, on the same event, if $s=1$ and $\eta_0$ is sufficiently small, then
  \begin{equation}
    \|\boldsymbol g(0)\|\ge c.
    \label{commonwork:eq:initial-gradient-lower}
  \end{equation}
\end{lemma}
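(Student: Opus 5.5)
The plan is to rescale by $s$ and reduce every claim to standard concentration for Gaussian matrices and products of independent Gaussian matrices. Write $\boldsymbol W_k(0)=s\,\boldsymbol G_k/\sqrt{n_{k-1}}$ with $\boldsymbol G_k$ having i.i.d.\ standard Gaussian entries, independent across $k$. Then $\boldsymbol\theta(0)=s^L\boldsymbol\theta_1$ and $\varepsilon=s^2\varepsilon_1$, where $\boldsymbol\theta_1,\varepsilon_1$ are the corresponding quantities at $s=1$. So \cref{commonwork:eq:init-radius,commonwork:eq:init-defect,commonwork:eq:init-layer-scales} reduce to $s$-free statements: $\|\boldsymbol\theta_1\|\asymp1$, $\varepsilon_1\lesssim1$, $\|\boldsymbol G_k\|_{\mathrm{op}}/\sqrt{n_{k-1}}\lesssim1$, and $\|\boldsymbol G_L\cdots\boldsymbol G_2\|/\prod_{k\ge 2}\sqrt{n_{k-1}}\asymp1$.

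\textbf{Operator norms and Gram defect.} Since $n_k\asymp d$, the Davidson--Szarek (Gordon) bound plus Gaussian concentration give $\|\boldsymbol G_k\|_{\mathrm{op}}\le C\sqrt d$ with probability $1-e^{-cd}$. This yields the first half of \cref{commonwork:eq:init-layer-scales}. The defect bound \cref{commonwork:eq:init-defect} then follows from the triangle inequality: $\|\boldsymbol W_k\boldsymbol W_k^\top\|_{\mathrm{op}}+\|\boldsymbol W_{k+1}^\top\boldsymbol W_{k+1}\|_{\mathrm{op}}\lesssim s^2$.

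\textbf{Norms of vector products.} For the norm of a product applied to a vector, I would peel off layers one at a time using rotational invariance. If $\boldsymbol u$ is a fixed unit vector independent of $\boldsymbol G$, then $\|\boldsymbol G\boldsymbol u\|^2\sim\chi^2_{m}$. Hence $\|\boldsymbol G\boldsymbol u\|^2/m\in[1/2,2]$ with probability $1-e^{-cm}$ when $m\asymp d$. The exception is the last layer, where $m=n_L=1$.

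\begin{itemize}
\item For $\boldsymbol W_{L:2}$: the row vector $\boldsymbol G_L\in\mathbb R^{1\times n_{L-1}}$ has norm$^2\sim\chi^2_{n_{L-1}}$, which concentrates. Conditioning on it and multiplying successively by $\boldsymbol G_{L-1}^\top,\dots,\boldsymbol G_2^\top$, each step concentrates with dimension $n_{k-1}\asymp d$.
\item For $\boldsymbol\theta_1=(\boldsymbol G_L\cdots\boldsymbol G_1)^\top$: I would apply the same backward peeling, ending with $\boldsymbol G_1^\top$, which has $n_0=d$ rows. This gives $\|\boldsymbol\theta(0)\|\asymp s^L$, i.e.\ $\sigma(0)^2\asymp s^2$.
\end{itemize}

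A union bound over $L$ steps keeps the failure probability at $Ce^{-cd}$.

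\textbf{Loss closeness \cref{commonwork:eq:initial-loss-closeness}.} Expand
\[
\mathcal L(0)=\tfrac12\|\boldsymbol\Sigma^{1/2}\boldsymbol\beta\|^2-\boldsymbol\beta^\top\boldsymbol\Sigma\boldsymbol\theta(0)+\tfrac12\|\boldsymbol\Sigma^{1/2}\boldsymbol\theta(0)\|^2 .
\]
The quadratic term is at most $\lambda_1 r(0)^2/2$, which is not yet of the form $\eta_0^2r^2$; this is where $\eta_0$ must enter, and it is the main obstacle. The key observation is that, conditionally on $\boldsymbol G_{L:2}$, $\widehat{\boldsymbol\theta}(0)$ is uniform on the sphere $S^{d-1}$ by rotational invariance of $\boldsymbol G_1$. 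Fixed vectors therefore see only $O(1/\sqrt d)$ projections. For the cross term, $\boldsymbol v=\boldsymbol\Sigma\boldsymbol\beta$ has bounded norm (its squared norm is $\sum\lambda_i^2\beta_i^2$, bounded since $p>0$). Lévy concentration on the sphere gives $|\boldsymbol v^\top\widehat{\boldsymbol\theta}|\le\eta_0$ with probability $1-2e^{-c\eta_0^2 d}$. For the quadratic term, $\widehat{\boldsymbol\theta}^\top\boldsymbol\Sigma\widehat{\boldsymbol\theta}$ has mean $\operatorname{tr}\boldsymbol\Sigma/d$, which is $o(1)$ since $\lambda_i\to0$. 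It is Lipschitz on the sphere with constant $\lesssim\lambda_1$, so it is $\le C\eta_0^2$ with probability $1-e^{-c\min\{1,\eta_0^2\}d}$ for large $d$. I would tune the deviation parameter to $\eta_0^2$ when $\eta_0<1$; this produces exactly the stated $\min\{1,\eta_0^2\}$ rate, possibly after redefining $\eta_0$ by a constant. Combining these gives $C(\eta_0r+\eta_0^2r^2)$.

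\textbf{Gradient lower bound \cref{commonwork:eq:initial-gradient-lower}.} With $s=1$ we have $r(0)\asymp1$. By the triangle inequality, $\|\boldsymbol g(0)\|\ge\|\boldsymbol\Sigma\boldsymbol\beta\|-\|\boldsymbol\Sigma\boldsymbol\theta(0)\|$. Here $\|\boldsymbol\Sigma\boldsymbol\beta\|\ge\lambda_1\beta_1=c_\lambda c_\beta>0$. Also $\|\boldsymbol\Sigma\boldsymbol\theta(0)\|^2\le\lambda_1 r^2\,\widehat{\boldsymbol\theta}^\top\boldsymbol\Sigma\widehat{\boldsymbol\theta}\le C\eta_0^2$ on the event above. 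Choosing $\eta_0$ small makes this at most half the first term.
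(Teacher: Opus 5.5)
Your route is the paper's almost step for step. You rescale by $s$, use Gaussian operator-norm bounds to get $\max_k\|\boldsymbol W_k(0)\|_{\mathrm{op}}\lesssim s$ and hence $\varepsilon\lesssim s^2$, peel backward with $\chi^2$ steps for $\|\boldsymbol W_{L:k}(0)\|$, and use rotational invariance of $\boldsymbol W_1(0)$ to make $\widehat{\boldsymbol\theta}(0)$ uniform on $\mathbb S^{d-1}$, independently of $\boldsymbol W_{L:2}(0)$. Your gradient bound through $\|\boldsymbol\Sigma\boldsymbol\theta(0)\|^2\le\lambda_1r(0)^2\,\widehat{\boldsymbol\theta}(0)^\top\boldsymbol\Sigma\widehat{\boldsymbol\theta}(0)$ is a valid variant of the paper's first-coordinate argument, which needs the extra event $|\widehat\theta_1(0)|\le\eta_0$.

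One step fails as written: the quadratic term. The map $\boldsymbol x\mapsto\boldsymbol x^\top\boldsymbol\Sigma\boldsymbol x$ is $2\lambda_1$-Lipschitz on the sphere. L\'evy concentration at deviation $t\asymp\eta_0^2$ therefore gives failure probability $\exp(-c\,d\,t^2/\lambda_1^2)=\exp(-c\eta_0^4d)$, not $\exp(-c\eta_0^2d)$. Rescaling $\eta_0$ by a constant cannot turn the exponent $\eta_0^4$ into $\eta_0^2$. The lemma asserts $c,C$ independent of $\eta_0$ with rate $\min\{1,\eta_0^2\}$, so your argument only yields $1-C\exp(-c\min\{1,\eta_0^4\}d)$. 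That weaker bound would be harmless downstream, where $\eta_0$ is fixed, but it is not the stated claim.

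The repair is to concentrate the norm rather than its square. The map $\boldsymbol x\mapsto\|\boldsymbol\Sigma^{1/2}\boldsymbol x\|$ is $\sqrt{\lambda_1}$-Lipschitz, and by Jensen its mean is at most $\sqrt{\operatorname{tr}\boldsymbol\Sigma/d}=o(1)$. At deviation $\eta_0$ this gives $\|\boldsymbol\Sigma^{1/2}\widehat{\boldsymbol\theta}(0)\|\le2\eta_0$ with probability $1-2e^{-c\eta_0^2d}$ for large $d$. Squaring then gives $\widehat{\boldsymbol\theta}(0)^\top\boldsymbol\Sigma\widehat{\boldsymbol\theta}(0)\le4\eta_0^2$. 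This is what the paper does in Gaussian form: it bounds $\|\boldsymbol\Sigma^{1/2}\boldsymbol h\|\le\sqrt{\operatorname{tr}\boldsymbol\Sigma}+\eta_0\sqrt d$ and divides by $\|\boldsymbol h\|^2\ge d/2$.

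Alternatively, a Bernstein/Hanson--Wright bound for $\boldsymbol h^\top\boldsymbol\Sigma\boldsymbol h$ at $t=\eta_0^2d$ also gives the $\eta_0^2$ rate. There the sub-exponential term $t/\|\boldsymbol\Sigma\|_{\mathrm{op}}$ is the binding one, because $\|\boldsymbol\Sigma\|_{\mathrm F}^2/\|\boldsymbol\Sigma\|_{\mathrm{op}}\le\operatorname{tr}\boldsymbol\Sigma=o(d)$.
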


\begin{proof}
  Write $\boldsymbol W_k(0)=s\boldsymbol G_k$, where the entries of $\boldsymbol G_k$ are i.i.d. $\mathcal N(0,1/n_{k-1})$. Since $L$ is fixed and $n_k\asymp d$ for $k<L$, the standard Gaussian operator-norm bound and a union bound over the layers give
  \begin{equation}
    \max_{1\le k\le L}\|\boldsymbol W_k(0)\|_{\mathrm{op}}
    \le Cs
    \label{commonwork:eq:init-opnorm-proof}
  \end{equation}
  with probability at least $1-Ce^{-cd}$. Consequently,
  \begin{equation}
    \varepsilon
    \le
    \max_{1\le k<L}
    \left(
    \|\boldsymbol W_k(0)\|_{\mathrm{op}}^2
    +\|\boldsymbol W_{k+1}(0)\|_{\mathrm{op}}^2
    \right)
    \le Cs^2,
  \end{equation}
  proving \cref{commonwork:eq:init-defect} and the first bound in \cref{commonwork:eq:init-layer-scales}.

  For $1\le k\le L$, set $q_k:=\|\boldsymbol W_{L:k}(0)\|$, with $q_{L+1}:=1$. Conditioned on $\boldsymbol W_{L:k+1}(0)$, the coordinates of $\boldsymbol W_{L:k+1}(0)\boldsymbol W_k(0)$ are independent centered Gaussians with variance $s^2q_{k+1}^2/n_{k-1}$. Hence, conditionally,
  \begin{equation}
    q_k^2
    \stackrel{\rm d}{=}
    s^2q_{k+1}^2\frac{\chi_{n_{k-1}}^2}{n_{k-1}}.
  \end{equation}
  Applying standard chi-square concentration successively for $k=L,L-1,\ldots,1$, together with a union bound over the layers, yields
  \begin{equation}
    q_k\asymp s^{L-k+1},
    \label{commonwork:eq:init-product-scales-proof}
  \end{equation}
  with probability at least $1-Ce^{-cd}$. In particular, $\|\boldsymbol W_{L:2}(0)\|=q_2\asymp s^{L-1}$ and $r(0)=q_1\asymp s^L$. Since $\sigma(0)^2=r(0)^{2/L}$, this proves \cref{commonwork:eq:init-radius} and the remaining bound in \cref{commonwork:eq:init-layer-scales}.

  It remains to control the loss. Conditioned on $\boldsymbol W_{L:2}(0)$, the end-to-end predictor is a centered isotropic Gaussian vector:
  \begin{equation}
    \boldsymbol\theta(0)
    \stackrel{\rm d}{=}
    \frac{s\|\boldsymbol W_{L:2}(0)\|}{\sqrt d}\boldsymbol h,
    \qquad
    \boldsymbol h\sim\mathcal N(\boldsymbol 0,\boldsymbol I_d),
  \end{equation}
  where $\boldsymbol h$ is independent of $\boldsymbol W_{L:2}(0)$.
  Thus, $\widehat{\boldsymbol\theta}(0)$ is uniformly distributed on $\mathbb S^{d-1}$. Since $a>0$ and $p>0$, we have $\|\boldsymbol\Sigma\|_{\mathrm{op}}=c_\lambda$, $\|\boldsymbol\Sigma\boldsymbol\beta\|\lesssim1$, and $\operatorname{tr}\boldsymbol\Sigma=o(d)$.
  Recall that $\widehat{\boldsymbol\theta}(0)=\boldsymbol h/\|\boldsymbol h\|$, Gaussian norm concentration gives $\|\boldsymbol h\|^2\ge d/2$ except with probability $Ce^{-cd}$. The map $\boldsymbol h\mapsto\|\boldsymbol\Sigma^{1/2}\boldsymbol h\|$ is $\sqrt{c_\lambda}$-Lipschitz, and $\mathbb E[\|\boldsymbol\Sigma^{1/2}\boldsymbol h\|]\le\sqrt{\operatorname{tr}\boldsymbol\Sigma}$. Gaussian concentration therefore gives
  \begin{equation}
    \Pr\left(
    \|\boldsymbol\Sigma^{1/2}\boldsymbol h\|
    >\sqrt{\operatorname{tr}\boldsymbol\Sigma}+\eta_0\sqrt d
    \right)
    \le\exp\left(-\frac{\eta_0^2d}{2c_\lambda}\right).
  \end{equation}
  For every fixed $\eta_0>0$, we have $\operatorname{tr}\boldsymbol\Sigma\le\eta_0^2d$ for all sufficiently large $d$. Combining these estimates with Gaussian concentration for $\langle\boldsymbol\Sigma\boldsymbol\beta,\boldsymbol h\rangle$ and $h_1$, and taking a union bound, gives
  \begin{equation}
    |\langle\boldsymbol\Sigma\boldsymbol\beta,\boldsymbol h\rangle|
    \le C\eta_0\sqrt d,
    \qquad
    \boldsymbol h^\top\boldsymbol\Sigma\boldsymbol h
    \le 4\eta_0^2d,
    \qquad
    |h_1|\le\eta_0\sqrt{d/2}
  \end{equation}
  with failure probability at most $C\exp(-c\min\{1,\eta_0^2\}d)$, for every fixed $\eta_0>0$ and all sufficiently large $d$. Dividing the first and third inequalities by $\|\boldsymbol h\|$ and the second by $\|\boldsymbol h\|^2$ therefore yields
  \begin{equation}
    |\langle\boldsymbol\Sigma\boldsymbol\beta,\widehat{\boldsymbol\theta}(0)\rangle|
    \le C\eta_0,
    \qquad
    \widehat{\boldsymbol\theta}(0)^\top
    \boldsymbol\Sigma\widehat{\boldsymbol\theta}(0)
    \le C\eta_0^2,
    \qquad
    |\widehat\theta_1(0)|\le\eta_0,
    \label{commonwork:eq:init-spherical-concentration-proof}
  \end{equation}
  with probability at least $1-C\exp(-c\min\{1,\eta_0^2\}d)$. It follows that
  \begin{equation}
    \begin{split}
      \left|
      \mathcal L(0)
      -\frac12\|\boldsymbol\Sigma^{1/2}\boldsymbol\beta\|^2
      \right|
      &\le
      r(0)|\langle\boldsymbol\Sigma\boldsymbol\beta,
      \widehat{\boldsymbol\theta}(0)\rangle|
      +\frac12r(0)^2
      \widehat{\boldsymbol\theta}(0)^\top
      \boldsymbol\Sigma\widehat{\boldsymbol\theta}(0) \\
      &\le C\bigl(\eta_0r(0)+\eta_0^2r(0)^2\bigr),
    \end{split}
  \end{equation}
  which proves \cref{commonwork:eq:initial-loss-closeness}.

  Finally, suppose that $s=1$. On the intersection of the high-probability events used to establish \cref{commonwork:eq:init-product-scales-proof,commonwork:eq:init-spherical-concentration-proof}, we have $r(0)\le C$ and $|\widehat\theta_1(0)|\le\eta_0$. Since $\lambda_1=c_\lambda$ and $\beta_1=c_\beta>0$,
  \begin{equation}
    |g_1(0)|
    =\lambda_1|\beta_1-r(0)\widehat\theta_1(0)|
    \ge c_\lambda(c_\beta-C\eta_0).
  \end{equation}
  Choosing $\eta_0>0$ sufficiently small gives $\|\boldsymbol g(0)\|\ge|g_1(0)|\ge c$, proving \cref{commonwork:eq:initial-gradient-lower}.
\end{proof}

\section{From Initialization to the Onset of the Power-Law Scaling Phase}
\label{phased:sec:escape-from-origin}

Throughout this section, fix $\eta_0>0$ sufficiently small for all arguments
below and work on the high-probability event in
\cref{commonwork:lem:fixed-s-isotropy}.

\subsection{Escape from the Origin under Small Initialization}

We now state the precise small-initialization assumption used in the main text.

\begin{assumption}[Small initialization]
  \label{phased:ass:small-initialization}
  Let $0<\delta_0\le1$ be sufficiently small. For any fixed $0<\delta\le\delta_0$, assume that $d$ and the initialization scale $s>0$ satisfy
  \begin{equation}
    s^2\le
    \begin{cases}
      \delta,                    & p < 1, \\
      \delta(\log d)^{-(L-1)/L}, & p = 1, \\
      \delta d^{-a(p-1)},        & p > 1.
    \end{cases}
    \label{phased:eq:regime-smallness}
  \end{equation}
\end{assumption}

The escape estimates below require only a dimension-independent upper bound on $s$. The full condition in \cref{phased:ass:small-initialization} is first used in the proof of \cref{cor:small-init-predictor-norm-scaling} to control initialization effects uniformly up to $\tau_+=c_0d^a$.

We also define
\begin{equation}
  \Phi(\boldsymbol\theta)
  :=
  \langle\boldsymbol\Sigma\boldsymbol\beta,\boldsymbol\theta\rangle
  -\frac12\boldsymbol\theta^\top\boldsymbol\Sigma\boldsymbol\theta
  =
  \frac12\|\boldsymbol\Sigma^{1/2}\boldsymbol\beta\|^2
  -\mathcal L(\boldsymbol\theta).
  \label{phased:eq:zero-predictor-loss-difference}
\end{equation}

\begin{proposition}
  \label{phased:prop:early-radius-entrance-loss}
  There exist fixed constants $s_0>0$, $0<\tau_{\mathrm e}<\min\{1,c_\lambda^{-1}\}$, and $c,C>0$, independent of $d$ and $s$, such that, for all sufficiently large $d$, if $0<s\le s_0$, then for $0\le \tau\le \tau_{\mathrm e}$,
  \begin{equation}
    c(s^L+\tau)\le r(\tau)\le C(s^L+\tau),
    \label{phased:eq:early-radius}
  \end{equation}
  and
  \begin{equation}
    \mathcal L(\tau_{\mathrm e}) \le
    \frac12\|\boldsymbol\Sigma^{1/2}\boldsymbol\beta\|^2-c.
    \label{phased:eq:entrance-loss-upper-gap}
  \end{equation}
\end{proposition}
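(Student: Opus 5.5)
Working in reparameterized time $\tau$ is the natural choice, because there the predictor moves with speed of order one regardless of depth. By \cref{commonwork:eq:tau-predictor-dynamics}, $d\boldsymbol\theta/d\tau=\overline{\boldsymbol K}_L\boldsymbol g$. To use \cref{commonwork:prop:perturbative-kernel-expansion} we need $\varepsilon\le M\sigma^2$. By \cref{commonwork:lem:fixed-s-isotropy}, $\varepsilon\lesssim s^2\asymp\sigma(0)^2$. We will show that $r$ is nondecreasing, up to constants, on $[0,\tau_{\mathrm e}]$, so $\sigma^2\gtrsim s^2$ persists and the hypothesis holds with a fixed $M$. The sandwich $c\boldsymbol I\preceq\overline{\boldsymbol K}_L\preceq C\boldsymbol I$ is then available throughout.

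The argument is a bootstrap (continuity) argument on $[0,\tau_{\mathrm e}]$, with the following steps.

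\begin{enumerate}
\item \textbf{Upper bound.} Since $\mathcal L$ is nonincreasing by \cref{commonwork:eq:tau-loss-dynamics}, we have $\|\boldsymbol g\|\le\|\boldsymbol\Sigma^{1/2}\|_{\mathrm{op}}\sqrt{2\mathcal L(0)}\le C$. Using \cref{commonwork:eq:initial-loss-closeness} with $r(0)\lesssim s^L\le 1$, this gives $\|d\boldsymbol\theta/d\tau\|\le C$, hence $r(\tau)\le r(0)+C\tau\le C(s^L+\tau)$.

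\item \textbf{Lower bound.} Write $\boldsymbol g=\boldsymbol\Sigma\boldsymbol\beta-\boldsymbol\Sigma\boldsymbol\theta$, where $\|\boldsymbol\Sigma\boldsymbol\theta\|\le c_\lambda r\le C(s^L+\tau)$ is small when $s\le s_0$ and $\tau\le\tau_{\mathrm e}$ are small. Then
\[
\frac{d}{d\tau}\langle\boldsymbol\Sigma\boldsymbol\beta,\boldsymbol\theta\rangle=\boldsymbol g^\top\overline{\boldsymbol K}_L\boldsymbol\Sigma\boldsymbol\beta\ge c\|\boldsymbol\Sigma\boldsymbol\beta\|^2-C(s^L+\tau)\ge c',
\]
using $\|\boldsymbol\Sigma\boldsymbol\beta\|\ge\lambda_1\beta_1=c_\lambda c_\beta$ and the bounds $\overline{\boldsymbol K}_L\succeq c\boldsymbol I$, $\|\overline{\boldsymbol K}_L\|_{\mathrm{op}}\le C$. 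Integrating, and using $|\langle\boldsymbol\Sigma\boldsymbol\beta,\boldsymbol\theta(0)\rangle|\le C\eta_0 r(0)$ from \cref{commonwork:eq:init-spherical-concentration-proof}, gives $\langle\boldsymbol\Sigma\boldsymbol\beta,\boldsymbol\theta(\tau)\rangle\ge c'\tau-C\eta_0 s^L$. Hence $r\gtrsim\tau-C\eta_0s^L$.

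\item \textbf{Lower bound for small $\tau$.} The previous bound only helps for $\tau\gtrsim s^L$. For $\tau\le c s^L$, use instead $r(\tau)\ge r(0)-C\tau\ge cs^L$. Combining the two regimes gives $r\ge c(s^L+\tau)$ on the whole interval.

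\item \textbf{Closing the bootstrap.} The lower bound $r\gtrsim s^L$ keeps $\sigma^2\gtrsim s^2\gtrsim\varepsilon$. This closes the assumption made at the start, provided $\tau_{\mathrm e}$ and $s_0$ are chosen small, and in particular $\tau_{\mathrm e}<\min\{1,c_\lambda^{-1}\}$.
\end{enumerate}

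For the loss gap, use $\Phi$ from \cref{phased:eq:zero-predictor-loss-difference}. At $\tau_{\mathrm e}$,
\[
\Phi=\langle\boldsymbol\Sigma\boldsymbol\beta,\boldsymbol\theta\rangle-\tfrac12\boldsymbol\theta^\top\boldsymbol\Sigma\boldsymbol\theta\ge c'\tau_{\mathrm e}-C\eta_0s^L-\tfrac{c_\lambda}{2}C^2(s^L+\tau_{\mathrm e})^2 .
\]
Taking $\tau_{\mathrm e}$ small so that the quadratic term is at most $c'\tau_{\mathrm e}/4$, and then $s_0$ small, gives $\Phi\ge c'\tau_{\mathrm e}/2$. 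Since $\Phi=\tfrac12\|\boldsymbol\Sigma^{1/2}\boldsymbol\beta\|^2-\mathcal L$, this is \cref{phased:eq:entrance-loss-upper-gap}.

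The main obstacle I expect is the bootstrap consistency. Step 2 requires the $\overline{\boldsymbol K}_L$ sandwich, which in turn requires $\sigma^2\gtrsim\varepsilon$; that relies on the lower bound for $r$ near $\tau=0$, before the linear growth dominates. Step 3 handles this, but the constants must be chosen in the right order: $M$ first, then $\tau_{\mathrm e}$, then $s_0$. For $L=1$ the kernel is exactly $\boldsymbol I$ and $\varepsilon=0$, so no bootstrap is needed.
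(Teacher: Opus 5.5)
Your overall route is reasonable: you work directly in $\tau$, get the upper bound from a speed bound, get the lower bound from linear growth of $\langle\boldsymbol\Sigma\boldsymbol\beta,\boldsymbol\theta\rangle$ plus a short-time estimate, and get the loss gap from $\Phi$. However, the bootstrap as you set it up does not close, and that is the crux of this proposition.

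\textbf{The gap.} Near the origin, $\varepsilon/\sigma^2$ is only known to be $O(1)$, so everything depends on constants. Under the hypothesis $\varepsilon\le M\sigma^2$, the sandwich constants $c_M,C_M$ of \cref{commonwork:prop:perturbative-kernel-expansion} depend on $M$. Hence so do your speed bound (Step 1), the window $\tau\lesssim r(0)/C_M$ (Step 3), and the rate $c'\approx c_M\|\boldsymbol\Sigma\boldsymbol\beta\|^2$ (Step 2). At the crossover $\tau\approx r(0)/C_M$, Step 2 yields only $r\gtrsim (c_M/C_M)\,r(0)$. So what you actually derive is $r\ge\kappa_M r(0)$, with $\kappa_M$ shrinking in $M$; tracking \cref{commonwork:lem:balanced-layer-amplitudes} gives only $c_M\gtrsim M^{-1}$ and $C_M\lesssim M^{L-1}$.

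Closing the bootstrap requires $\kappa_M^{2/L}\sigma(0)^2\ge\varepsilon/M$, i.e.\ $M\kappa_M^{2/L}$ must exceed the fixed ratio $\varepsilon/\sigma(0)^2$. These estimates give $M\kappa_M^{2/L}\sim M^{-1}$. So choosing ``$M$ first'' does not break the circularity: the lower bound you obtain is weaker than the one you assumed, and Step 4 does not follow.

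\textbf{The missing idea.} You need a two-stage argument in which the later lower bound comes from a quantity whose monotonicity does not depend on the kernel constants.
\begin{enumerate}
\item First fix a concrete level, e.g.\ $\sigma\ge\sigma(0)/2$, so the kernel constants are fixed. On a window $\tau\le\kappa r(0)$ this closes by your Step 3, because the deviation of $r$ is controlled by the window length.
\item On that window $d\Phi/d\tau=\boldsymbol g^\top\overline{\boldsymbol K}_L\boldsymbol g\ge c$. With $\eta_0$ small, this gives $\Phi(\boldsymbol\theta(\kappa r(0)))\ge c_1 r(0)$ with $c_1$ fixed.
\item Because the loss is nonincreasing, $\Phi$ is nondecreasing unconditionally. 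Hence $\|\boldsymbol\Sigma\boldsymbol\beta\|\,r(\tau)\ge\Phi(\boldsymbol\theta(\tau))\ge c_1 r(0)$ for all later $\tau$, regardless of later kernel constants.
\item Only then choose the second-stage level strictly below this bound, which fixes $M$ for the rest of the interval. Finally choose $\tau_{\mathrm e}$ and $s_0$.
\end{enumerate}
With this modification, your Steps 1--2 and the loss-gap computation go through.

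This is precisely the structure of the paper's proof. It runs the argument in the auxiliary time $u=\int_0^t\sigma^{L-1}$: level $1/2$ on $[0,u_1]$ with $u_1=h\sigma(0)$, then level $\gamma_*$ on $[u_1,u_*]$ via monotonicity of $\Phi$ and $\Phi\le C\sigma^L$. It converts to $\tau$ at the end. Your direct $\tau$-time version is a legitimate alternative once the two-stage structure is in place.
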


\begin{proof}
  Let $s_0>0$ be chosen sufficiently small below, and suppose that $0<s\le s_0$.
  By \cref{commonwork:eq:init-radius,commonwork:eq:init-defect,commonwork:eq:initial-loss-closeness}, $\sigma(0)\asymp s$ and $\varepsilon\lesssim\sigma(0)^2$ for all sufficiently large $d$. After decreasing $s_0$ if necessary, we also have $\sigma(0)\le1$, so \cref{commonwork:eq:initial-loss-closeness} gives
  \begin{equation}
    \Phi(\boldsymbol\theta(0))
    \ge-C\eta_0\sigma(0)^L.
    \label{phased:eq:early-initial-loss}
  \end{equation}
  We define
  \begin{equation}
    u(t):=\int_0^t\sigma(v)^{L-1}\,dv.
    \label{phased:eq:auxiliary-u-time}
  \end{equation}
  On any interval on which $\sigma>0$, we have $du/dt=\sigma^{L-1}>0$. Hence $u$ is strictly increasing and admits an inverse $t(u)$ on its image. With a slight abuse of notation, we write $\boldsymbol\theta(u)=\boldsymbol\theta(t(u))$.
  Then
  \begin{equation}
    \frac{d\boldsymbol z}{du}
    =\boldsymbol J_L\overline{\boldsymbol K}_L\boldsymbol g,
    \label{phased:eq:u-time-dynamics}
  \end{equation}
  and
  \begin{equation}
    \frac{d}{du}\Phi(\boldsymbol\theta(u))
    =\sigma^{L-1}
    \boldsymbol g^\top\overline{\boldsymbol K}_L\boldsymbol g.
  \end{equation}

  \step{Growth in the $u$-parameterization.}

  We split the argument into two substeps. On $[0,u_1]$, we show that $\sigma$ remains comparable to $\sigma(0)$ and that $\Phi(\boldsymbol\theta(u_1))>0$. On $[u_1,u_*]$, we use this positivity to derive a lower bound for $\Phi(\boldsymbol\theta(u))^{1/L}$. We first record estimates common to both substeps.

  Since $\|\boldsymbol g-\boldsymbol\Sigma\boldsymbol\beta\|\le C r=C\sigma^L$ and $\|\boldsymbol\Sigma\boldsymbol\beta\|\asymp1$, we may fix $\sigma_*>0$ sufficiently small so that $\|\boldsymbol g\|\asymp1$ whenever $\sigma\le\sigma_*$.

  For any fixed $\gamma\in(0,1]$, as long as $\gamma\sigma(0)\le\sigma\le\sigma_*$, the bound $\varepsilon\lesssim\sigma(0)^2$ implies $\varepsilon\le C_\gamma\sigma^2$. Hence \cref{commonwork:eq:perturbative-kernel-sandwich} gives
  \begin{equation}
    \frac{d}{du}\Phi(\boldsymbol\theta(u))\ge c_\gamma\sigma^{L-1}.
    \label{phased:eq:early-phi-growth}
  \end{equation}
  Moreover, combining \cref{phased:eq:u-time-dynamics} with $\|\boldsymbol z\|=\sigma$, $\|\boldsymbol J_L\|_{\mathrm{op}}=1$, and the same kernel bound yields
  \begin{equation}
    \left|\frac{d\sigma}{du}\right|
    \le\left\|\frac{d\boldsymbol z}{du}\right\|
    \le C_\gamma.
    \label{phased:eq:early-sigma-speed}
  \end{equation}

  \substep{The interval $0\le u\le u_1$.}\label{phased:substep:early-growth-u1}

  We first apply the preceding bounds with $\gamma=1/2$. Choose a fixed $h\in(0,1]$ such that $C_{1/2}h\le1/4$, and set $u_1:=h\sigma(0)$. Then, as long as $\frac12\sigma(0)\le\sigma\le\sigma_*$,
  \begin{equation}
    |\sigma(u)-\sigma(0)|\le C_{1/2}u\le C_{1/2}h\sigma(0)
    \le \frac{1}{4}\sigma(0).
  \end{equation}
  This implies
  \begin{equation}
    \frac34\sigma(0)\le\sigma(u)\le\frac54\sigma(0).
  \end{equation}
  Using $\sigma(0)\lesssim s$, decrease $s_0$ if necessary so that $5\sigma(0)/4<\sigma_*$ for all sufficiently large $d$ whenever $0<s\le s_0$. Thus neither $\sigma=\sigma(0)/2$ nor $\sigma=\sigma_*$ can be reached before $u_1$. By continuity, the bounds hold throughout $0\le u\le u_1$.

  Integrating \cref{phased:eq:early-phi-growth} over this interval gives
  \begin{equation}
    \begin{split}
      \Phi(\boldsymbol\theta(u_1))
      & \ge \Phi(\boldsymbol\theta(0))+c_{1/2}\int_0^{u_1}\sigma(v)^{L-1}\,dv \\
      & \ge-C\eta_0\sigma(0)^L
      +c_{1/2}h\left(\frac34\right)^{L-1}\sigma(0)^L.
    \end{split}
  \end{equation}
  Since $\eta_0$ is sufficiently small,
  \begin{equation}
    \Phi(\boldsymbol\theta(u_1))\ge c\sigma(0)^L>0.
    \label{phased:eq:early-positive-phi}
  \end{equation}
  For every $\boldsymbol\theta$,
  \begin{equation}
    \Phi(\boldsymbol\theta)
    \le
    \langle\boldsymbol\Sigma\boldsymbol\beta,\boldsymbol\theta\rangle
    \le C\sigma^L.
    \label{phased:eq:early-phi-radius-upper}
  \end{equation}
  Therefore $\sigma(u_1)\ge c\sigma(0)$.
  We may thus fix $\gamma_*\in(0,1/2)$ sufficiently small so that
  \begin{equation}
    \sigma(u_1)\ge2\gamma_*\sigma(0).
    \label{phased:eq:early-sigma-at-u1}
  \end{equation}

  \substep{The interval $u_1\le u\le u_*$.}

  We now apply the preceding bounds with $\gamma=\gamma_*$.
  As long as $\gamma_*\sigma(0)\le\sigma\le\sigma_*$ after $u_1$, \cref{phased:eq:early-phi-growth} shows that $\Phi(\boldsymbol\theta(u))$ remains positive.
  Moreover, \cref{phased:eq:early-phi-growth,phased:eq:early-phi-radius-upper} give
  \begin{equation}
    \frac{d}{du}\Phi(\boldsymbol\theta(u))^{1/L}
    =
    \frac1L\Phi(\boldsymbol\theta(u))^{-(L-1)/L}
    \frac{d}{du}\Phi(\boldsymbol\theta(u))
    \ge c.
    \label{phased:eq:early-phi-root-growth}
  \end{equation}
  Integrating this inequality from $u_1$ to $u$ gives
  \begin{equation}
    \begin{split}
      \Phi(\boldsymbol\theta(u))^{1/L}
      & \ge \Phi(\boldsymbol\theta(u_1))^{1/L}+c(u-u_1) \\
      & \ge c\sigma(0)+c(u-u_1)                         \\
      & \ge c(\sigma(0)+u).
    \end{split}
    \label{phased:eq:early-phi-root-lower}
  \end{equation}
  Combining this with \cref{phased:eq:early-phi-radius-upper} gives $\sigma(u)\ge c(\sigma(0)+u)$.

  On the other hand, integrating \cref{phased:eq:early-sigma-speed} gives
  \begin{equation}
    \begin{split}
      \sigma(u)
      & \le \sigma(u_1)
      +\int_{u_1}^u
      \left|\frac{d\sigma}{dv}\right|\,dv         \\
      & \le \frac54\sigma(0)+C_{\gamma_*}(u-u_1) \\
      & \le C(\sigma(0)+u).
    \end{split}
  \end{equation}
  Choose a fixed $u_*>0$ sufficiently small that $C_{\gamma_*}u_*\le \sigma_*/4$. After decreasing $s_0$ if necessary, we have $u_1<u_*$ and $5\sigma(0)/4+C_{\gamma_*}u_*<\sigma_*$ for all sufficiently large $d$ whenever $0<s\le s_0$. Since $u\mapsto\Phi(\boldsymbol\theta(u))$ is nondecreasing, \cref{phased:eq:early-positive-phi,phased:eq:early-phi-radius-upper} and the choice of $\gamma_*$ give $\sigma(u)\ge2\gamma_*\sigma(0)$. Moreover, $\sigma(u)\le 5\sigma(0)/4+C_{\gamma_*}u_*<\sigma_*$. Hence neither boundary $\sigma=\gamma_*\sigma(0)$ nor $\sigma=\sigma_*$ can be reached before $u_*$. By continuity, the estimates hold throughout $u_1\le u\le u_*$. Combining this with \cref{phased:substep:early-growth-u1}, we obtain
  \begin{equation}
    \sigma(u)\asymp\sigma(0)+u
    \label{phased:eq:early-root-growth}
  \end{equation}
  for $0\le u\le u_*$.

  \step{Conversion to the $\tau$-parameterization.}

  Since $d\tau/du=\sigma^{L-1}>0$, the map $u\mapsto\tau(u)$ is strictly increasing on $[0,u_*]$. Combining $d\tau/du=\sigma^{L-1}$ with \cref{phased:eq:early-root-growth} and integrating from $0$ to $u$ yields
  \begin{equation}
    \tau(u)
    \asymp
    (\sigma(0)+u)^L-\sigma(0)^L.
    \label{phased:eq:early-u-tau-relation}
  \end{equation}

  Since $u_*>0$ is fixed, \cref{phased:eq:early-u-tau-relation} gives $\tau(u_*)\gtrsim u_*^L>0$. On the other hand, since $u_1=h\sigma(0)$, the same estimate gives $\tau(u_1)\lesssim\sigma(0)^L\lesssim s^L$. We may therefore fix $0<\tau_{\mathrm e}<\min\{1,c_\lambda^{-1}\}$ sufficiently small and then decrease $s_0$ if necessary so that $\tau(u_1)<\tau_{\mathrm e}<\tau(u_*)$ for all sufficiently large $d$ whenever $0<s\le s_0$.

  For such $d$, the monotonicity of $\tau(u)$ implies that, for every $0\le\tau\le\tau_{\mathrm e}$, there exists a unique $u\in[0,u_*]$ such that $\tau=\tau(u)$. Combining \cref{phased:eq:early-root-growth,phased:eq:early-u-tau-relation}, we obtain
  \begin{equation}
    r(\tau)
    =\sigma(u)^L
    \asymp(\sigma(0)+u)^L
    \asymp\sigma(0)^L+\tau
    \asymp s^L+\tau,
  \end{equation}
  which proves \cref{phased:eq:early-radius}.

  Moreover, the unique $u$ corresponding to $\tau_{\mathrm e}$ satisfies $u>u_1$. Applying \cref{phased:eq:early-phi-root-lower,phased:eq:early-u-tau-relation} at this value of $u$ gives
  \begin{equation}
    \Phi(\boldsymbol\theta(\tau_{\mathrm e}))
    \ge c(\sigma(0)^L+\tau_{\mathrm e})
    \ge c.
  \end{equation}
  The definition of $\Phi$ now proves \cref{phased:eq:entrance-loss-upper-gap}.
\end{proof}

Let $s_0$ and $\tau_{\mathrm e}$ be the constants given by \cref{phased:prop:early-radius-entrance-loss}. Decreasing $s_0$ if necessary, we may assume that $s_0<1$. Let $t_{\mathrm e}$ denote the unique time at which $\tau(t_{\mathrm e})=\tau_{\mathrm e}$.

\begin{corollary}
  \label{phased:cor:post-entry-radius-kernel}
  Suppose $0<s\le s_0$. Then there exist constants $c,c',C>0$, independent of $d$, $s$, and $\tau$, such that, for all sufficiently large $d$ and every $\tau\ge\tau_{\mathrm e}$,
  \begin{equation}
    \sigma(\tau)^2\ge c',
    \label{phased:eq:post-entry-radius-lower}
  \end{equation}
  and
  \begin{equation}
    c\boldsymbol I \preceq\overline{\boldsymbol K}_L(\tau)\preceq C\boldsymbol I.
    \label{phased:eq:post-entry-kernel-bounds}
  \end{equation}
\end{corollary}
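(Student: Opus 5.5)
The plan is to get a uniform lower bound on $\sigma$ after $\tau_{\mathrm e}$ from the monotonicity of the loss. The kernel bounds will then follow from \cref{commonwork:prop:perturbative-kernel-expansion} once we check $\varepsilon\lesssim\sigma^2$.

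\textbf{Step 1: the loss gap persists.} Gradient flow does not increase the loss, by \cref{commonwork:eq:loss-dissipation-expanded}, and $\tau$ is a monotone reparameterization of $t$. Hence for every $\tau\ge\tau_{\mathrm e}$,
\begin{equation*}
\mathcal L(\tau)\le\mathcal L(\tau_{\mathrm e})\le\tfrac12\|\boldsymbol\Sigma^{1/2}\boldsymbol\beta\|^2-c,
\end{equation*}
where the second inequality is \cref{phased:eq:entrance-loss-upper-gap}. In terms of \cref{phased:eq:zero-predictor-loss-difference}, this reads $\Phi(\boldsymbol\theta(\tau))\ge c$.

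\textbf{Step 2: $\sigma$ is bounded below.} By \cref{phased:eq:early-phi-radius-upper}, $\Phi(\boldsymbol\theta)\le\langle\boldsymbol\Sigma\boldsymbol\beta,\boldsymbol\theta\rangle\le\|\boldsymbol\Sigma\boldsymbol\beta\|\,r$. We have $\|\boldsymbol\Sigma\boldsymbol\beta\|\lesssim1$ uniformly in $d$: indeed $\|\boldsymbol\Sigma\boldsymbol\beta\|^2\le\|\boldsymbol\Sigma\|_{\mathrm{op}}\,\boldsymbol\beta^\top\boldsymbol\Sigma\boldsymbol\beta$, and $p>0$ keeps $\boldsymbol\beta^\top\boldsymbol\Sigma\boldsymbol\beta$ bounded. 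It follows that $r(\tau)\ge c$, so $\sigma(\tau)^2=r^{2/L}\ge c'$, which proves \cref{phased:eq:post-entry-radius-lower}. This also gives $r>0$ throughout, so $\tau$ stays a valid strictly increasing reparameterization for all later times.

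\textbf{Step 3: the kernel sandwich.} By \cref{commonwork:lem:conserved-gram} the defect $\varepsilon$ is conserved, and on the initialization event of \cref{commonwork:lem:fixed-s-isotropy} we have $\varepsilon\lesssim s^2\le s_0^2<1$. Therefore $\varepsilon\le M\sigma(\tau)^2$ for every $\tau\ge\tau_{\mathrm e}$, with the fixed constant $M=C/c'$. Applying \cref{commonwork:prop:perturbative-kernel-expansion} with this $M$ gives \cref{phased:eq:post-entry-kernel-bounds}, with constants depending only on $L$ and $M$ and hence independent of $d$, $s$ and $\tau$.

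I expect no real obstacle here. The only point needing care is that the constants in Step 2 are uniform in $d$, which reduces to the uniform bound on $\|\boldsymbol\Sigma\boldsymbol\beta\|$, and that $M$ in Step 3 does not depend on $s$, which holds because $s\le s_0<1$.
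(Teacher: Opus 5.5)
Your proof is correct and follows the paper's argument. Loss monotonicity preserves the entrance gap $\Phi\ge c$, the bound $\Phi\le\langle\boldsymbol\Sigma\boldsymbol\beta,\boldsymbol\theta\rangle\le C\sigma^L$ gives $\sigma^2\ge c'$, and $\varepsilon\lesssim s_0^2\le M\sigma^2$ then lets you invoke \cref{commonwork:prop:perturbative-kernel-expansion}. The one detail the paper adds is that $\sigma^2\ge c'$ gives $d\tau/dt\ge(c')^{L-1}$, so $\tau(t)\to\infty$ and $t\mapsto\tau$ maps $[t_{\mathrm e},\infty)$ bijectively onto $[\tau_{\mathrm e},\infty)$. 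Quantifying over every $\tau\ge\tau_{\mathrm e}$ needs this surjectivity, not just the strict monotonicity you note, so it is cleaner to run your steps in $t$ and transfer to $\tau$ at the end.
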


\begin{proof}
  For every $t\ge t_{\mathrm e}$, monotonicity of the loss and
  \cref{phased:eq:entrance-loss-upper-gap} give
  \begin{equation}
    c'\le \Phi(\boldsymbol\theta(t_{\mathrm e}))
    \le\Phi(\boldsymbol\theta(t))
    \le\langle\boldsymbol\Sigma\boldsymbol\beta,\boldsymbol\theta(t)\rangle
    \le C\sigma(t)^L.
  \end{equation}
  Hence $\sigma(t)^2\ge c'$ for every $t\ge t_{\mathrm e}$.

  Since $\varepsilon\lesssim s^2\le s_0^2$ and $\sigma(t)^2\ge c'$, there exists a fixed $M>0$, independent of $d$, $s$, and $t$, such that $\varepsilon\le M\sigma(t)^2$. Applying \cref{commonwork:prop:perturbative-kernel-expansion} gives $c\boldsymbol I \preceq \overline{\boldsymbol K}_L(t) \preceq C\boldsymbol I$.

  To return to the $\tau$-parameterization, note that $d\tau/dt\ge(c')^{L-1}$, so $\tau(t)\to\infty$.
  Since $\tau$ is strictly increasing and $\tau(t_{\mathrm e})=\tau_{\mathrm e}$, it defines a bijection from $[t_{\mathrm e},\infty)$ onto $[\tau_{\mathrm e},\infty)$.
  Reparameterizing the preceding bounds by $\tau$ proves \cref{phased:eq:post-entry-radius-lower,phased:eq:post-entry-kernel-bounds}.
\end{proof}

\begin{corollary}
  \label{phased:cor:time-entrance-delay}
  Suppose $0<s\le s_0$. Then, for all sufficiently large $d$,
  \begin{equation}
    t_{\mathrm e}
    \asymp
    \begin{cases}
      1,          & L=1, \\
      \log(1/s),  & L=2, \\
      s^{-(L-2)}, & L>2.
    \end{cases}
    \label{phased:eq:time-entrance-delay}
  \end{equation}
\end{corollary}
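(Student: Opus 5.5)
We will convert the estimates of \cref{phased:prop:early-radius-entrance-loss}, which are stated in the auxiliary parameterization $u(t)=\int_0^t\sigma^{L-1}\,dv$, back into training time. The proof of that proposition gives $\sigma(u)\asymp\sigma(0)+u$ for $0\le u\le u_*$, with $\sigma(0)\asymp s$. It also shows that the entrance value $u_{\mathrm e}$, defined by $\tau(u_{\mathrm e})=\tau_{\mathrm e}$, lies in $(u_1,u_*]$. From $(\sigma(0)+u_{\mathrm e})^L-\sigma(0)^L\asymp\tau_{\mathrm e}\asymp1$ and $\sigma(0)\le Cs_0$ small, we get $u_{\mathrm e}\asymp1$. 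Since $dt/du=\sigma^{-(L-1)}$, we have the exact identity
\begin{equation*}
  t_{\mathrm e}=\int_0^{u_{\mathrm e}}\sigma(u)^{-(L-1)}\,du .
\end{equation*}
Combining this with $\sigma(u)\asymp\sigma(0)+u$ gives
\begin{equation*}
  t_{\mathrm e}\asymp\int_0^{u_{\mathrm e}}(\sigma(0)+u)^{-(L-1)}\,du .
\end{equation*}

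We then evaluate this integral case by case, using $u_{\mathrm e}\asymp1$ and $\sigma(0)\asymp s\le Cs_0<1$.
\begin{itemize}
  \item For $L=1$ the integrand is $1$, so $t_{\mathrm e}=u_{\mathrm e}\asymp1$. Equivalently, $\tau=t$ when $L=1$.
  \item For $L=2$ the integral equals $\log\bigl((\sigma(0)+u_{\mathrm e})/\sigma(0)\bigr)$. The numerator is $\asymp1$, so this is $\log(1/\sigma(0))+O(1)=\log(1/s)+O(1)$. To absorb the additive $O(1)$ into a multiplicative constant we need $\log(1/s)\gtrsim1$. This holds after decreasing $s_0$ if necessary, for example requiring $s_0\le e^{-1}$ times the comparison constants.
  \item For $L\ge3$ the integral equals $\frac{1}{L-2}\bigl(\sigma(0)^{-(L-2)}-(\sigma(0)+u_{\mathrm e})^{-(L-2)}\bigr)$. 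The first term dominates once $\sigma(0)\le c\,u_{\mathrm e}$, which again follows from a smaller $s_0$. This gives $t_{\mathrm e}\asymp\sigma(0)^{-(L-2)}\asymp s^{-(L-2)}$.
\end{itemize}

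The step needing the most care is making sure the two-sided comparison $\sigma(u)\asymp\sigma(0)+u$ is available uniformly on all of $[0,u_{\mathrm e}]$ with constants independent of $d$ and $s$. The proposition's proof provides this on $[0,u_*]$, with the lower bound on $[0,u_1]$ coming from $\sigma\ge\frac34\sigma(0)$ and on $[u_1,u_*]$ from the $\Phi^{1/L}$ growth bound. We also need $u_{\mathrm e}\le u_*$, which holds because $\tau_{\mathrm e}<\tau(u_*)$ and $\tau$ is monotone in $u$.

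The other point to check is the lower bound $u_{\mathrm e}\gtrsim1$. It follows from $\tau_{\mathrm e}\lesssim(\sigma(0)+u_{\mathrm e})^L$ together with $\sigma(0)\ll\tau_{\mathrm e}^{1/L}$ for small $s_0$. This lower bound is what makes the upper endpoint contributions harmless, and it is the only place where the smallness of $s_0$ is genuinely used.
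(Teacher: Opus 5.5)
Your proof is correct and is essentially the paper's argument: both integrate $dt$ against the two-sided comparison for $\sigma$ over the escape phase and then evaluate an elementary integral case by case. The only difference is the integration variable. The paper integrates $dt/d\tau=\sigma^{-2(L-1)}\asymp(s^L+\tau)^{-2(L-1)/L}$ using the proposition's stated bound $r(\tau)\asymp s^L+\tau$, so it needs nothing from inside that proof. You instead integrate $dt/du=\sigma^{-(L-1)}$, using the intermediate estimate $\sigma(u)\asymp\sigma(0)+u$ and the fact $u_{\mathrm e}\in(u_1,u_*)$ from the proof; your extra shrinking of $s_0$ is harmless, though not strictly necessary.
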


\begin{proof}
  For $0 \le \tau \le \tau_{\mathrm e}$,
  \begin{equation}
    \frac{dt}{d\tau} = \sigma(\tau)^{-2(L-1)} = r(\tau)^{-2(L-1)/L} \asymp (s^L + \tau)^{-2(L-1)/L} \asymp (s^{2L} + \tau^2)^{-(L-1)/L}.
  \end{equation}

  Integrating from $0$ to $\tau_{\mathrm e}$ and changing variables via $\tau=s^Lx$, we obtain
  \begin{equation}
    t_{\mathrm e}
    \asymp
    \int_0^{\tau_{\mathrm e}}
    (s^{2L}+\tau^2)^{-(L-1)/L}\,d\tau
    =
    s^{2-L}
    \int_0^{\tau_{\mathrm e}s^{-L}}
    (1+x^2)^{-(L-1)/L}\,dx.
  \end{equation}
  For $L=1$, the time reparameterization satisfies $\tau=t$, so $t_{\mathrm e}=\tau_{\mathrm e}\asymp1$. If $L=2$, then $(L-1)/L=1/2$, and the integral is comparable to $\log(1/s)$. If $L>2$, then $(L-1)/L>1/2$, so the integral is bounded above and below by positive constants for all sufficiently small $s$. This proves \cref{phased:eq:time-entrance-delay}.
\end{proof}

To isolate the contribution accumulated up to $\tau_{\mathrm e}$, define
\begin{equation}
  \boldsymbol\xi_{\mathrm e}
  :=
  \boldsymbol z(\tau_{\mathrm e})
  -
  \int_0^{\tau_{\mathrm e}}
  e^{-(\tau_{\mathrm e}-v)\boldsymbol\Sigma}
  \sigma(v)^{-(L-1)}
  \boldsymbol\Sigma\boldsymbol\beta\,dv.
  \label{phased:eq:entrance-remainder-definition}
\end{equation}

\begin{corollary}
  \label{phased:cor:entrance-remainder}
  Suppose $0<s\le s_0$. Then, for all sufficiently large $d$,
  \begin{equation}
    \|\boldsymbol\xi_{\mathrm e}\|\le Cs.
    \label{phased:eq:entrance-remainder-bound}
  \end{equation}
\end{corollary}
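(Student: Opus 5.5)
The plan is to apply the variation of constants formula \cref{phased:eq:variation-constants-formula} with $\tau_0=0$ and $\tau=\tau_{\mathrm e}$. This gives the decomposition
\begin{equation}
  \boldsymbol\xi_{\mathrm e}
  =
  e^{-\tau_{\mathrm e}\boldsymbol\Sigma}\boldsymbol z(0)
  +\int_0^{\tau_{\mathrm e}}e^{-(\tau_{\mathrm e}-v)\boldsymbol\Sigma}\boldsymbol f_K(v)\,dv ,
\end{equation}
and the two terms will be bounded separately. Since $\boldsymbol\Sigma\succeq0$, every factor $e^{-(\cdot)\boldsymbol\Sigma}$ is a contraction in operator norm. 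For the first term, $\|\boldsymbol z(0)\|=\sigma(0)^{-(L-1)}r(0)=\sigma(0)\asymp s$ by \cref{commonwork:eq:init-radius}, so it contributes at most $Cs$. The only formal issue is that the formula was derived on intervals where $r>0$. This holds on $[0,\tau_{\mathrm e}]$ by the lower bound in \cref{phased:eq:early-radius}, so the formula applies.

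For the forcing term I would use the pointwise bound \cref{phased:eq:forcing-bound}, $\|\boldsymbol f_K\|\le C\varepsilon\sigma^{-(L+1)}\|\boldsymbol g\|$. Its hypothesis $\varepsilon\le M\sigma^2$ holds on $[0,\tau_{\mathrm e}]$ for the following reasons. By \cref{phased:eq:early-radius}, $\sigma(\tau)^L\asymp s^L+\tau\gtrsim s^L$, so $\sigma\gtrsim s$, and $\varepsilon\lesssim s^2$ by \cref{commonwork:eq:init-defect}. In addition, $\sigma\lesssim1$ there because $\tau_{\mathrm e}<1$ and $s\le s_0<1$, so $\|\boldsymbol g\|\le\|\boldsymbol\Sigma\boldsymbol\beta\|+c_\lambda r\lesssim1$. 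Combining these gives
\begin{equation}
  \Bigl\|\int_0^{\tau_{\mathrm e}}e^{-(\tau_{\mathrm e}-v)\boldsymbol\Sigma}\boldsymbol f_K(v)\,dv\Bigr\|
  \le Cs^2\int_0^{\tau_{\mathrm e}}\sigma(v)^{-(L+1)}\,dv
  \le Cs^2\int_0^{\tau_{\mathrm e}}(s^L+v)^{-(L+1)/L}\,dv .
\end{equation}

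The main step, and the only place where a naive bound would lose, is evaluating this last integral. Since $(L+1)/L>1$, the integral converges at infinity. Substituting $v=s^Lx$ gives
\begin{equation}
  s^{L}\,s^{-(L+1)}\int_0^\infty(1+x)^{-(L+1)/L}\,dx = L\,s^{-1},
\end{equation}
uniformly in $d$. The forcing term is therefore bounded by $Cs^2\cdot s^{-1}=Cs$. Adding the two contributions yields $\|\boldsymbol\xi_{\mathrm e}\|\le Cs$ for all sufficiently large $d$, on the high-probability event of \cref{commonwork:lem:fixed-s-isotropy}. The point to check carefully is that the early-time singularity $\sigma^{-(L+1)}\sim s^{-(L+1)}$ is integrated only over a $\tau$-window of length about $s^L$. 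This is exactly what \cref{phased:eq:early-radius} provides, and it leaves a net loss of only one power of $s$ against the $\varepsilon\lesssim s^2$ prefactor.
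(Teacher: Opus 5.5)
Your proposal is correct and follows essentially the same route as the paper. Both apply the variation of constants formula from $\tau_0=0$ to write $\boldsymbol\xi_{\mathrm e}=e^{-\tau_{\mathrm e}\boldsymbol\Sigma}\boldsymbol z(0)+\int_0^{\tau_{\mathrm e}}e^{-(\tau_{\mathrm e}-v)\boldsymbol\Sigma}\boldsymbol f_K(v)\,dv$, and both bound the forcing part by $C\varepsilon\int_0^{\tau_{\mathrm e}}(s^L+\tau)^{-(L+1)/L}\,d\tau\le CL\varepsilon/s\le Cs$ via \cref{phased:eq:early-radius} and \cref{phased:eq:forcing-bound}. The only cosmetic difference is that the paper gets $\|\boldsymbol g\|\lesssim1$ from monotonicity of the loss ($\|\boldsymbol g\|^2\le2\|\boldsymbol\Sigma\|_{\mathrm{op}}\mathcal L$), whereas you use $r\lesssim1$ on $[0,\tau_{\mathrm e}]$; either works.
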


\begin{proof}
  By \cref{phased:eq:early-radius} and $\varepsilon\lesssim s^2$, we have $\varepsilon\le C\sigma(\tau)^2$ for $0\le\tau\le\tau_{\mathrm e}$, so \cref{phased:eq:forcing-bound} applies throughout this interval.
  Moreover, by the monotonicity of the loss, $\|\boldsymbol g\|^2\le2\|\boldsymbol\Sigma\|_{\mathrm{op}}\mathcal L\le C$.
  Therefore,
  \begin{equation}
    \int_0^{\tau_{\mathrm e}}\|\boldsymbol f_K(\tau)\|\,d\tau
    \le C\varepsilon\int_0^{\tau_{\mathrm e}}(s^L+\tau)^{-(L+1)/L}\,d\tau
    \le CL\frac{\varepsilon}{s}
    \le Cs.
    \label{phased:eq:kernel-forcing-integral}
  \end{equation}
  Set $\tau_0=0$ and $\tau=\tau_{\mathrm e}$ in \cref{phased:eq:variation-constants-formula}, and substitute the resulting expression into \cref{phased:eq:entrance-remainder-definition}. This yields
  \begin{equation}
    \boldsymbol\xi_{\mathrm e}
    =
    e^{-\tau_{\mathrm e}\boldsymbol\Sigma}\boldsymbol z(0)
    +
    \int_0^{\tau_{\mathrm e}}
    e^{-(\tau_{\mathrm e}-v)\boldsymbol\Sigma}
    \boldsymbol f_K(v)\,dv.
  \end{equation}
  Using $\|\boldsymbol z(0)\|\le \sigma(0)\le Cs$ and \cref{phased:eq:kernel-forcing-integral}, we conclude that $\|\boldsymbol\xi_{\mathrm e}\|\le Cs$.
\end{proof}

\begin{corollary}
  \label{phased:cor:post-entry-representation}
  Suppose $0<s\le s_0$, and let $d$ be sufficiently large. For each $\tau\ge\tau_{\mathrm e}$, define $\boldsymbol R(\tau)$ by
  \begin{equation}
    \boldsymbol R(\tau) := \sigma(\tau)^{L-1} \left[ e^{-(\tau-\tau_{\mathrm e})\boldsymbol\Sigma}\boldsymbol\xi_{\mathrm e} + \int_{\tau_{\mathrm e}}^{\tau} e^{-(\tau-v)\boldsymbol\Sigma}\boldsymbol f_K(v)\,dv \right].
    \label{phased:eq:post-entry-remainder}
  \end{equation}
  Then
  \begin{equation}
    \boldsymbol\theta(\tau) = \sigma(\tau)^{L-1} \int_0^\tau e^{-(\tau-v)\boldsymbol\Sigma} \boldsymbol\Sigma\boldsymbol\beta\, \sigma(v)^{-(L-1)}\,dv + \boldsymbol R(\tau).
    \label{phased:eq:post-entry-theta}
  \end{equation}
\end{corollary}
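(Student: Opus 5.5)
The identity follows from the variation of constants formula \cref{phased:eq:variation-constants-formula} with starting point $\tau_{\mathrm e}$, the definition \cref{phased:eq:entrance-remainder-definition} of $\boldsymbol\xi_{\mathrm e}$, and multiplication by $\sigma(\tau)^{L-1}$. First I check that everything is well defined on $[\tau_{\mathrm e},\infty)$. By \cref{phased:cor:post-entry-radius-kernel}, $\sigma(\tau)^2\ge c'>0$ for every $\tau\ge\tau_{\mathrm e}$. Hence $r>0$, so $\boldsymbol P_{\boldsymbol\theta}$, $\boldsymbol J_L$, $\Delta\overline{\boldsymbol K}_L$ and $\boldsymbol f_K$ exist there. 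On $[0,\tau_{\mathrm e}]$ we have $r>0$ by \cref{phased:eq:early-radius}. Thus \cref{phased:cor:rescaled-predictor-tau-dynamics} holds on all of $[0,\infty)$, and $\tau\mapsto t$ is a bijection, as shown in the proof of \cref{phased:cor:post-entry-radius-kernel}.

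Next, fix $\tau\ge\tau_{\mathrm e}$ and apply \cref{phased:eq:variation-constants-formula} with $\tau_0=\tau_{\mathrm e}$:
\[
\boldsymbol z(\tau)=e^{-(\tau-\tau_{\mathrm e})\boldsymbol\Sigma}\boldsymbol z(\tau_{\mathrm e})+\int_{\tau_{\mathrm e}}^\tau e^{-(\tau-v)\boldsymbol\Sigma}\sigma(v)^{-(L-1)}\boldsymbol\Sigma\boldsymbol\beta\,dv+\int_{\tau_{\mathrm e}}^\tau e^{-(\tau-v)\boldsymbol\Sigma}\boldsymbol f_K(v)\,dv .
\]
This formula is justified because the equation in \cref{phased:cor:rescaled-predictor-tau-dynamics} is linear in $\boldsymbol z$ with continuous forcing $\sigma^{-(L-1)}\boldsymbol\Sigma\boldsymbol\beta+\boldsymbol f_K$. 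Integrability holds since $\sigma$ is bounded below and the weights are continuous (\cref{commonwork:lem:global-existence}). By \cref{phased:eq:entrance-remainder-definition}, $\boldsymbol z(\tau_{\mathrm e})=\boldsymbol\xi_{\mathrm e}+\int_0^{\tau_{\mathrm e}}e^{-(\tau_{\mathrm e}-v)\boldsymbol\Sigma}\sigma(v)^{-(L-1)}\boldsymbol\Sigma\boldsymbol\beta\,dv$. Substitute this into the first term. The semigroup property $e^{-(\tau-\tau_{\mathrm e})\boldsymbol\Sigma}e^{-(\tau_{\mathrm e}-v)\boldsymbol\Sigma}=e^{-(\tau-v)\boldsymbol\Sigma}$, valid because all these matrices commute, merges the two forcing integrals into $\int_0^\tau$.

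Finally, multiply by $\sigma(\tau)^{L-1}$ and use $\boldsymbol\theta=\sigma^{L-1}\boldsymbol z$. This yields \cref{phased:eq:post-entry-theta} with $\boldsymbol R(\tau)$ exactly as in \cref{phased:eq:post-entry-remainder}.

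The only point needing care is the validity of the forcing integral near $v=0$, where $\sigma(v)\asymp s$ is small but positive. There $\sigma(v)^{-(L-1)}$ is bounded on $[0,\tau_{\mathrm e}]$ for fixed $s$, so the integral is finite. No further obstacle is expected.
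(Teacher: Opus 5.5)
Your proposal is correct and follows the paper's proof essentially verbatim: apply the variation of constants formula from $\tau_{\mathrm e}$, substitute the definition of $\boldsymbol\xi_{\mathrm e}$, merge the two forcing integrals via $e^{-(\tau-\tau_{\mathrm e})\boldsymbol\Sigma}e^{-(\tau_{\mathrm e}-v)\boldsymbol\Sigma}=e^{-(\tau-v)\boldsymbol\Sigma}$, and multiply by $\sigma(\tau)^{L-1}$. Your extra well-definedness checks are accurate and only make explicit what the paper leaves implicit, namely that $r>0$ on $[0,\tau_{\mathrm e}]$ by the early-radius bound and that $\sigma$ stays bounded below after $\tau_{\mathrm e}$.
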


\begin{proof}
  Applying \cref{phased:eq:variation-constants-formula} with $\tau_0 = \tau_{\mathrm e}$, substituting the definition of $\boldsymbol\xi_{\mathrm e}$ from \cref{phased:eq:entrance-remainder-definition}, and using $e^{-(\tau-\tau_{\mathrm e})\boldsymbol\Sigma}e^{-(\tau_{\mathrm e}-v)\boldsymbol\Sigma}=e^{-(\tau-v)\boldsymbol\Sigma}$, we obtain
  \begin{equation}
    \boldsymbol z(\tau)
    =
    e^{-(\tau-\tau_{\mathrm e})\boldsymbol\Sigma}
    \boldsymbol\xi_{\mathrm e}
    +
    \int_0^\tau
    e^{-(\tau-v)\boldsymbol\Sigma}
    \sigma(v)^{-(L-1)}
    \boldsymbol\Sigma\boldsymbol\beta\,dv
    +
    \int_{\tau_{\mathrm e}}^\tau
    e^{-(\tau-v)\boldsymbol\Sigma}
    \boldsymbol f_K(v)\,dv.
    \label{phased:eq:post-entry-z-representation}
  \end{equation}
  Multiplying \cref{phased:eq:post-entry-z-representation} by $\sigma(\tau)^{L-1}$ and using \cref{phased:eq:rescaled-predictor} together with the definition of $\boldsymbol R$ in \cref{phased:eq:post-entry-remainder} gives \cref{phased:eq:post-entry-theta}.
\end{proof}

The first term in $\boldsymbol R(\tau)$ propagates $\boldsymbol\xi_{\mathrm e}$, which incorporates the effects of initialization and the kernel perturbation accumulated over $[0,\tau_{\mathrm e}]$. The second term accounts for the kernel perturbation accumulated over $[\tau_{\mathrm e},\tau]$.

\subsection{Standard Initialization}

For $s=1$, the initialization estimates give $\sigma(0)^2\asymp1$. The next lemma shows that $\sigma(\tau)^2$ remains bounded below by a positive constant and that $\overline{\boldsymbol K}_L(\tau)$ remains uniformly comparable to the identity for all $\tau\ge0$.

\begin{lemma}
  \label{phased:lem:fixed-scale-radius-kernel}
  Suppose that $s=1$. Then there exist constants $c,C>0$, independent of $d$ and $\tau$, such that the following bounds hold for all sufficiently large $d$ and all $\tau\ge0$:
  \begin{equation}
    \sigma(\tau)^2\ge c,
    \label{phased:eq:fixed-scale-radius-lower}
  \end{equation}
  and
  \begin{equation}
    c\boldsymbol I \preceq\overline{\boldsymbol K}_L(\tau)\preceq C\boldsymbol I.
    \label{phased:eq:fixed-scale-kernel-bounds}
  \end{equation}
\end{lemma}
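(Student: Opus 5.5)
The plan mirrors the proof of \cref{phased:cor:post-entry-radius-kernel}: first obtain a uniform lower bound on $\sigma$ from the monotonicity of the loss, then convert it into kernel bounds via \cref{commonwork:prop:perturbative-kernel-expansion}. The difference is that with $s=1$ there is no escape phase, so the loss gap must come from the initial time. It cannot come from \cref{phased:eq:entrance-loss-upper-gap}, because $\Phi(\boldsymbol\theta(0))$ may be slightly negative.

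\textbf{Step 1: a short initial phase.} On the event of \cref{commonwork:lem:fixed-s-isotropy} with $s=1$, we have $\sigma(0)^2\asymp1$, $\varepsilon\lesssim1$, and $\|\boldsymbol g(0)\|\ge c$. Since $\varepsilon\le M\sigma(0)^2$ for a fixed $M$, \cref{commonwork:prop:perturbative-kernel-expansion} gives $\overline{\boldsymbol K}_L\succeq c\boldsymbol I$ at $t=0$. The weight velocities are bounded, since $\|\boldsymbol g\|\le C$ by loss monotonicity and $\|\boldsymbol W_k\|_{\mathrm{op}}\lesssim1$ initially. Hence, by continuity, $\sigma(t)\ge\sigma(0)/2$, $\|\boldsymbol g(t)\|\ge c/2$, and the kernel lower bound all persist on a fixed interval $[0,t_1]$ with $t_1$ independent of $d$. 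Integrating \cref{commonwork:eq:loss-dissipation-expanded} over this interval gives a strict loss drop
\begin{equation}
  \mathcal L(t_1)\le\mathcal L(0)-c\,t_1.
\end{equation}
Combining this with \cref{commonwork:eq:initial-loss-closeness} and taking $\eta_0$ small yields
\begin{equation}
  \Phi(\boldsymbol\theta(t_1))\ge c-C(\eta_0 r(0)+\eta_0^2r(0)^2)\ge c/2.
\end{equation}

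\textbf{Step 2: the lower bound on $\sigma$.} For $t\ge t_1$, monotonicity of $\mathcal L$ gives
\begin{equation}
  c/2\le\Phi(\boldsymbol\theta(t))\le\langle\boldsymbol\Sigma\boldsymbol\beta,\boldsymbol\theta\rangle\le C\sigma^L,
\end{equation}
so $\sigma(t)^2\ge c$. For $t\le t_1$, Step 1 already gives $\sigma\ge\sigma(0)/2$. This proves \cref{phased:eq:fixed-scale-radius-lower} for all $t\ge 0$.

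\textbf{Step 3: the kernel bounds.} Since $\varepsilon\lesssim1\le M\sigma(t)^2$ uniformly in $t$, \cref{commonwork:prop:perturbative-kernel-expansion} gives \cref{phased:eq:fixed-scale-kernel-bounds}. Finally, reparameterize by $\tau$. This is legitimate because $d\tau/dt\ge c^{L-1}$, so $\tau$ is a bijection from $[0,\infty)$ onto itself.

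\textbf{Main obstacle.} The delicate part is Step 1: the bounds on the weight velocities and on $\|\boldsymbol g\|$ must make $t_1$ genuinely $d$-independent. A cleaner alternative, which I would try first, is to apply the $u$-parameterization argument of \cref{phased:prop:early-radius-entrance-loss} with $\gamma=1/2$. There, $\frac{d}{du}\Phi\ge c\sigma^{L-1}$ holds while $\|\boldsymbol g\|\gtrsim1$ and $\sigma\asymp\sigma(0)$, and $|d\sigma/du|\le C$ keeps $\sigma$ within a factor of $\sigma(0)$ on a short $u$-interval. If $\|\boldsymbol g\|$ were to degrade there, note that $g_1=\lambda_1(\beta_1-\theta_1)$, so it suffices to control $\theta_1$, which moves at bounded speed from $|\theta_1(0)|\le C\eta_0$.
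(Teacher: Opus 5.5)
Your proof is correct and follows the paper's argument. On a $d$-independent initial window, $\sigma$, $\|\boldsymbol g\|$ and the kernel stay bounded below, which forces a strict loss drop that beats the $C\eta_0$ slack in the initial loss. After that, loss monotonicity with $\Phi(\boldsymbol\theta)\le C\sigma^L$ gives the radius lower bound, and the perturbative kernel expansion gives the kernel bounds.

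The differences are only cosmetic. The paper controls the weights on the initial window directly through the Frobenius displacement bound $\sqrt{t\mathcal L(0)}$ from the dissipation identity, so no velocity bootstrap is needed. It also gets the loss drop from the $k=1$ term $\|\boldsymbol W_{L:2}\|^2\boldsymbol I$ of $\boldsymbol K_L$ rather than from the normalized kernel lower bound.
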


\begin{proof}
  By \cref{commonwork:eq:init-layer-scales}, $\max_k\|\boldsymbol W_k(0)\|_{\mathrm{op}}\le C$ and $\|\boldsymbol W_{L:2}(0)\|\ge c$. Moreover, \cref{commonwork:eq:init-radius,commonwork:eq:initial-gradient-lower} give $r(0)\ge c$ and $\|\boldsymbol g(0)\|\ge c$. Furthermore, \cref{commonwork:eq:init-radius,commonwork:eq:initial-loss-closeness} imply $\mathcal L(0)\le\frac12\|\boldsymbol\Sigma^{1/2}\boldsymbol\beta\|^2+C\eta_0$.
  By the Cauchy--Schwarz inequality and \cref{commonwork:eq:loss-dissipation-expanded},
  \begin{equation}
    \left(
    \sum_{k=1}^L
    \|\boldsymbol W_k(t)-\boldsymbol W_k(0)\|_{\mathrm F}^2
    \right)^{1/2}
    \le \sqrt{t\mathcal L(0)}
    \le C\sqrt t
  \end{equation}
  for all $t\ge 0$. In particular, $\|\boldsymbol W_k(t)\|_{\mathrm{op}}\le C$ for all $1\le k\le L$ and $0\le t\le1$. Thus
  \begin{equation}
    \begin{split}
      & |r(t)-r(0)|
      +\left|\|\boldsymbol W_{L:2}(t)\|
      -\|\boldsymbol W_{L:2}(0)\|\right|
      +\|\boldsymbol g(t)-\boldsymbol g(0)\| \\
      & \qquad\le
      C\sum_{k=1}^L
      \|\boldsymbol W_k(t)-\boldsymbol W_k(0)\|_{\mathrm{op}}
      \le C\sqrt t.
    \end{split}
  \end{equation}
  Choose a sufficiently small fixed $t_s>0$. Then
  \begin{equation}
    \min\{r(t),\|\boldsymbol W_{L:2}(t)\|,\|\boldsymbol g(t)\|\}\ge c
    \label{phased:eq:init-min-lower-bound}
  \end{equation}
  for all $0\le t\le t_s$. By \cref{commonwork:eq:loss-dissipation-expanded,phased:eq:init-min-lower-bound},
  \begin{equation}
    \begin{split}
      \frac{d\mathcal L}{dt}
      & =-\boldsymbol g(t)^\top\boldsymbol K_L(t)\boldsymbol g(t) \\
      & \le
      -\|\boldsymbol W_{L:2}(t)\|^2\|\boldsymbol g(t)\|^2          \\
      & \le-c.
    \end{split}
  \end{equation}
  Since $\eta_0$ is sufficiently small, integrating the preceding inequality over $[0,t_s]$ gives
  \begin{equation}
    \begin{split}
      \mathcal L(t_s)
      & \le \mathcal L(0)-ct_s \\
      & \le
      \frac12\|\boldsymbol\Sigma^{1/2}\boldsymbol\beta\|^2
      +C\eta_0-ct_s             \\
      & \le
      \frac12\|\boldsymbol\Sigma^{1/2}\boldsymbol\beta\|^2-c.
    \end{split}
  \end{equation}
  Since the loss is nonincreasing, for all $t \ge t_s$, we have
  \begin{equation}
    c
    \le
    \frac12\|\boldsymbol\Sigma^{1/2}\boldsymbol\beta\|^2-\mathcal L(t)
    \le
    \langle\boldsymbol\Sigma\boldsymbol\beta,\boldsymbol\theta(t)\rangle
    \le Cr(t).
    \label{phased:eq:loss-lower-gap}
  \end{equation}
  \Cref{phased:eq:init-min-lower-bound} covers $0\le t\le t_s$, whereas \cref{phased:eq:loss-lower-gap} applies for $t\ge t_s$. Hence $\sigma(t)^2 \ge c$ for all $t \ge 0$.

  Finally, \cref{commonwork:eq:init-defect} and the preceding lower bound give $\varepsilon/\sigma(t)^2 \le C$.
  Applying \cref{commonwork:prop:perturbative-kernel-expansion}, we obtain $c\boldsymbol I \preceq\overline{\boldsymbol K}_L(t)\preceq C\boldsymbol I$.
  Moreover, $\frac{d\tau}{dt}=\sigma(t)^{2(L-1)}\ge c$, so $\tau(t)\to\infty$. Hence these estimates give the stated bounds in the $\tau$-parameterization.
\end{proof}

\section{Spectral Estimates and Auxiliary Loss Bounds}
\label{sec:spectral-estimates-auxiliary-loss-bounds}

We first characterize the scaling of $S_d(\tau)$ and estimate the components of the target vector $\boldsymbol\beta$ above and below a spectral cutoff. We then use these estimates to bound the loss $\mathcal L(\tau)$.

\subsection{Spectral Scaling and Tail Estimates}

Fix a constant $c_0>0$ sufficiently small that $(c_\lambda c_0)^{1/a}<1/2$, and set
\begin{equation}
  \tau_+:=c_0d^a.
  \label{phased:eq:time-horizon}
\end{equation}
Define
\begin{equation}
  S_d(\tau)
  := \sum_{i=1}^d \beta_i^2 \min\{\tau\lambda_i, 1\}^2.
  \label{phased:eq:active-mass-definition}
\end{equation}
The function $S_d$ is continuous and nondecreasing.
\begin{lemma}
  \label{phased:lem:active-mass-profile}
  For all sufficiently large $d$, we have
  \begin{equation}
    S_d(\tau) \asymp \tau^2
    \label{phased:eq:active-mass-profile-low}
  \end{equation}
  for $0 \le \tau \le c_\lambda^{-1}$, and
  \begin{equation}
    S_d(\tau) \asymp
    \begin{cases}
      (1+\tau)^{1-p},   & p < 1, \\
      1 + \log(1+\tau), & p = 1, \\
      1,                & p > 1
    \end{cases}
    \label{phased:eq:active-mass-profile}
  \end{equation}
  for $c_\lambda^{-1} \le \tau \le \tau_+$.
\end{lemma}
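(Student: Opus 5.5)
We prove \cref{phased:lem:active-mass-profile} by splitting the sum defining $S_d(\tau)$ at the spectral cutoff $i_*(\tau):=(c_\lambda\tau)^{1/a}$. This is the index at which $\tau\lambda_i=1$: for $i\le i_*$ we have $\min\{\tau\lambda_i,1\}=1$, and for $i>i_*$ we have $\min\{\tau\lambda_i,1\}=\tau\lambda_i$. Hence
\begin{equation*}
  S_d(\tau)=c_\beta^2\sum_{i\le \min\{i_*,d\}} i^{-2b}
  +c_\beta^2c_\lambda^2\tau^2\sum_{i_*<i\le d} i^{-2b-2a}.
\end{equation*}
We compare each sum with the corresponding integral. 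Everything is then governed by the two exponents $2b=a(p-1)+1$ and $2a+2b=a(p+1)+1$. Note that $2a+2b>1$ always holds because $p>0$. We will also use the choice of $c_0$, which ensures $i_*(\tau_+)\le d/2$, so the cutoff always lies inside the index range with room to spare.

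\emph{Low range $0\le\tau\le c_\lambda^{-1}$.} Here $i_*\le 1$, so the first sum contains at most the term $i=1$, and in that case $\tau\lambda_1=1$ exactly. Thus $S_d(\tau)=c_\beta^2c_\lambda^2\tau^2\sum_{i=1}^d i^{-2a-2b}$ up to the boundary term. The series converges because $2a+2b>1$, and its partial sums are bounded below by the $i=1$ term, uniformly in $d$. This gives $S_d\asymp\tau^2$.

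\emph{High range $c_\lambda^{-1}\le\tau\le\tau_+$.} Here $1\le i_*\le d/2$.
\begin{itemize}
\item The tail sum: since $2a+2b>1$, we have $\sum_{i_*<i\le d} i^{-2a-2b}\asymp i_*^{1-2a-2b}$. The upper bound comes from the integral tail. The lower bound uses the indices in $(i_*,2i_*]\subset(i_*,d]$, and also covers the case $i_*<2$ via $i=\lceil i_*\rceil$ or $i=2$. Multiplying by $\tau^2\asymp i_*^{2a}$ gives a contribution $\asymp i_*^{1-2b}=i_*^{-a(p-1)}\asymp\tau^{1-p}$.
\item The head sum $\sum_{i\le i_*}i^{-2b}$: this is $\asymp i_*^{1-2b}\asymp\tau^{1-p}$ when $p<1$ (i.e.\ $2b<1$), $\asymp 1+\log i_*\asymp 1+\log(1+\tau)$ when $p=1$, and $\asymp 1$ when $p>1$. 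In every case it is bounded below by the $i=1$ term $c_\beta^2$.
\end{itemize}
Adding the two pieces: for $p<1$ both are $\asymp\tau^{1-p}\asymp(1+\tau)^{1-p}$. For $p=1$ the tail is $\asymp 1$ and is absorbed into $1+\log(1+\tau)$. For $p>1$ the tail is $\asymp\tau^{1-p}\le C$, so the total is $\asymp1$.

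The main point requiring care is uniformity of the constants in $d$ and $\tau$, especially near $\tau\approx c_\lambda^{-1}$, where $i_*$ is $O(1)$ and the integral comparisons are crude. We handle this by keeping the explicit $i=1$ term as a lower bound and bounding each sum above by $C$ times the integral plus the first term. When $2b<0$, i.e.\ $i^{-2b}$ is increasing, the sum is still comparable to $\int_1^{i_*+1}x^{-2b}\,dx\asymp i_*^{1-2b}$ for $i_*\ge1$. With these bounds all constants depend only on $a$, $b$, $c_\lambda$, $c_\beta$, and $c_0$.
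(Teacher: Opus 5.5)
Your proposal is correct and follows essentially the same route as the paper. Both split $S_d(\tau)$ at the cutoff index $\asymp(c_\lambda\tau)^{1/a}$, which stays below $d/2$ by the choice of $c_0$, and compare the head sum $\sum i^{-2b}$ and the tail sum $\tau^2\sum i^{-(2a+2b)}$ with integrals. The only difference is near $\tau\approx c_\lambda^{-1}$: the paper restricts the integral comparison to $\tau\ge 2^a c_\lambda^{-1}$ (so $k=\lfloor(c_\lambda\tau)^{1/a}\rfloor\ge2$) and covers the fixed interval $[c_\lambda^{-1},2^a c_\lambda^{-1}]$ by monotonicity of $S_d$, whereas you handle $i_*\in[1,2)$ directly through the explicit $i=1$ and $i=2$ terms.
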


\begin{proof}
  By the definitions of $\beta_i$ and $\lambda_i$,
  \begin{equation}
    S_d(\tau)
    = c_\beta^2 \sum_{i=1}^d i^{-2b} \min\{c_\lambda \tau i^{-a}, 1\}^2.
  \end{equation}
  When $0 \le \tau \le c_\lambda^{-1}$, we have $c_\lambda \tau i^{-a} \le 1$ for all $i$, and since $2a+2b=1+a(1+p)>1$,
  \begin{equation}
    S_d(\tau)
    = c_\lambda^2 c_\beta^2 \tau^2
    \sum_{i=1}^d i^{-(2a+2b)}
    \asymp \tau^2.
  \end{equation}
  To prove \cref{phased:eq:active-mass-profile}, we first consider $2^ac_\lambda^{-1}\le \tau\le \tau_+$ and set $k:=\lfloor(c_\lambda \tau)^{1/a}\rfloor$.
  The remaining fixed interval $[c_\lambda^{-1},2^ac_\lambda^{-1}]$ will be handled by monotonicity.
  The lower bound on $\tau$ implies $k\ge2$, and the choice of $c_0$ ensures $k<d/2$. Moreover, $k\asymp \tau^{1/a}$.
  By the definition of $k$,
  \begin{equation}
    S_d(\tau)
    =
    c_\beta^2\sum_{i=1}^k i^{-2b}
    +
    c_\lambda^2c_\beta^2\tau^2
    \sum_{i=k+1}^d i^{-(2a+2b)}.
  \end{equation}
  Integral comparisons, together with
  $k\asymp \tau^{1/a}$ and $(1-2b)/a=1-p$, give
  \begin{equation}
    \sum_{i=1}^k i^{-2b}
    \asymp
    1+\int_1^k x^{-2b}\,dx
    \asymp
    \begin{cases}
      (1+\tau)^{1-p}, & p < 1, \\
      1+\log(1+\tau), & p = 1, \\
      1,              & p > 1,
    \end{cases}
  \end{equation}
  and
  \begin{equation}
    \tau^2\sum_{i=k+1}^d i^{-(2a+2b)}
    \asymp
    \tau^2\int_k^d x^{-(2a+2b)}\,dx
    \asymp
    \tau^2k^{1-(2a+2b)}
    \asymp \tau^{1-p}.
  \end{equation}
  Thus, for $2^ac_\lambda^{-1}\le \tau\le \tau_+$,
  \begin{equation}
    S_d(\tau)
    \asymp
    \begin{cases}
      (1+\tau)^{1-p}, & p < 1, \\
      1+\log(1+\tau), & p = 1, \\
      1,              & p > 1.
    \end{cases}
  \end{equation}
  It remains to consider $c_\lambda^{-1}\le\tau\le2^ac_\lambda^{-1}$.
  By \cref{phased:eq:active-mass-profile-low}, $S_d(c_\lambda^{-1})\asymp1$, while the preceding calculation gives $S_d(2^ac_\lambda^{-1})\asymp1$. Since $S_d$ is nondecreasing, it follows that $S_d(\tau)\asymp1$ throughout this fixed interval. For $\tau$ in this interval, each case on the right-hand side of \cref{phased:eq:active-mass-profile} is also comparable to one, which completes the proof.
\end{proof}

For $\mu>0$, define
\begin{equation}
  \boldsymbol P_{\le\mu}
  :=\operatorname{diag}
  \bigl(\mathbf 1_{\{\lambda_i\le\mu\}}\bigr)_{i=1}^d,
  \qquad
  \boldsymbol P_{>\mu}
  :=\boldsymbol I_d-\boldsymbol P_{\le\mu}.
  \label{phased:eq:spectral-projectors}
\end{equation}
Fix constants $c_\mu>2^a c_\lambda$ and $0<\mu_\star\le 2^{-a}c_\lambda$.

\begin{lemma}
  \label{phased:lem:powerlaw-estimates}
  For all sufficiently large $d$ and every $\mu>0$, we have
  \begin{equation}
    \bigl\|
    \boldsymbol P_{\le\mu}
    \boldsymbol\Sigma^{1/2}\boldsymbol\beta
    \bigr\|^2
    \lesssim \mu^p.
    \label{phased:eq:teacher-tail-upper}
  \end{equation}
  If $c_\mu d^{-a}\le\mu\le\mu_\star$, we also have
  \begin{equation}
    \bigl\|
    \boldsymbol P_{\le\mu}
    \boldsymbol\Sigma^{1/2}\boldsymbol\beta
    \bigr\|^2
    \asymp \mu^p,
    \label{phased:eq:teacher-tail-two-sided}
  \end{equation}
  and
  \begin{equation}
    \bigl\|
    \boldsymbol P_{>\mu}\boldsymbol\beta
    \bigr\|^2
    \asymp
    \begin{cases}
      \mu^{-(1-p)},          & p < 1, \\
      1+\log(c_\lambda/\mu), & p = 1, \\
      1,                     & p > 1.
    \end{cases}
    \label{phased:eq:active-teacher-mass-estimate}
  \end{equation}
\end{lemma}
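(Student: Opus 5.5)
The plan is to convert the spectral projections into index sums and estimate them by integral comparison, as in the proof of \cref{phased:lem:active-mass-profile}. Since $\lambda_i=c_\lambda i^{-a}$ is decreasing, $\{i:\lambda_i\le\mu\}=\{i\ge k_\mu\}$ with $k_\mu:=\lceil(c_\lambda/\mu)^{1/a}\rceil$. We therefore have
\begin{equation}
  \bigl\|\boldsymbol P_{\le\mu}\boldsymbol\Sigma^{1/2}\boldsymbol\beta\bigr\|^2
  = c_\lambda c_\beta^2\sum_{i=k_\mu}^d i^{-ap-1},
  \qquad
  \bigl\|\boldsymbol P_{>\mu}\boldsymbol\beta\bigr\|^2
  = c_\beta^2\sum_{i=1}^{\min\{k_\mu-1,d\}} i^{-2b}.
\end{equation}
Here we use $a+2b=ap+1$ and $2b=1-a(1-p)$.

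For \cref{phased:eq:teacher-tail-upper}, there are two cases. If $\mu\ge c_\lambda$, then $\mu^p\ge c_\lambda^p$, and the full sum $\sum_i i^{-ap-1}$ is bounded because $ap>0$; the bound follows. If $\mu<c_\lambda$, then $k_\mu\ge2$ and $k_\mu\asymp(c_\lambda/\mu)^{1/a}$. Comparing the tail with $\int_{k_\mu-1}^\infty x^{-ap-1}\,dx\lesssim k_\mu^{-ap}\asymp\mu^p$ gives the bound; the empty case $k_\mu>d$ is trivial.

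For the two-sided bounds under $c_\mu d^{-a}\le\mu\le\mu_\star$, the constants are chosen so that the cutoff sits strictly inside $[1,d]$:
\begin{itemize}
  \item $\mu\le\mu_\star\le2^{-a}c_\lambda$ gives $(c_\lambda/\mu)^{1/a}\ge2$, so $k_\mu\ge2$.
  \item $\mu\ge c_\mu d^{-a}$ with $c_\mu>2^ac_\lambda$ gives $(c_\lambda/\mu)^{1/a}\le(c_\lambda/c_\mu)^{1/a}d<d/2$. Hence $k_\mu\le d/2+1$, and the range $[k_\mu,d]$ contains at least a fixed fraction of $[k_\mu,2k_\mu]$ for large $d$.
\end{itemize}
The lower bound for the tail then comes from $\sum_{i=k_\mu}^{d}i^{-ap-1}\ge\sum_{i=k_\mu}^{\min(2k_\mu,d)}i^{-ap-1}\gtrsim k_\mu^{-ap}$, using $2k_\mu\le d+2$ and a harmless adjustment. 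For the head, $\sum_{i=1}^{k_\mu-1}i^{-2b}\asymp1+\int_1^{k_\mu}x^{-2b}\,dx$, with $k_\mu\asymp\mu^{-1/a}$. The three cases follow as in \cref{phased:lem:active-mass-profile}:
\begin{itemize}
  \item $p<1$: here $2b<1$, and the head is $\asymp k_\mu^{1-2b}=k_\mu^{a(1-p)}\asymp\mu^{-(1-p)}$. This dominates $1$ because $\mu\le\mu_\star$ is bounded.
  \item $p=1$: here $2b=1$, and the head is $\asymp1+\log k_\mu\asymp1+\log(c_\lambda/\mu)$.
  \item $p>1$: here $2b>1$, so the sum converges and is bounded below by its first term.
\end{itemize}

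The main obstacle is bookkeeping rather than any genuine difficulty. One must keep the comparison constants independent of $d$ and $\mu$, and handle the discreteness of $k_\mu$. In particular, $k_\mu-1\ge1$ is guaranteed, and $\lceil x\rceil\asymp x$ holds for $x\ge2$. Everything else is a standard integral comparison.
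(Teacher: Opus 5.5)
Your proposal is correct and follows essentially the same route as the paper. The paper's cutoff $N_\mu+1=\lceil (c_\lambda/\mu)^{1/a}\rceil$ is your $k_\mu$, and the paper uses the same three ingredients: an integral comparison (plus the trivial regimes $\mu<\lambda_d$ and $\mu\ge c_\lambda$) for the upper bound, the block $[N_\mu+1,\,2(N_\mu+1)]$ for the lower bound, and a three-case analysis of the head sum $\sum_{i\le N_\mu} i^{-2b}$. The only cosmetic difference is that the paper takes $d$ large enough that $2(N_\mu+1)\le d$ outright, whereas you truncate at $\min\{2k_\mu,d\}$.
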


\begin{proof}
  For $\mu>0$, define
  $N_\mu:=|\{i:\lambda_i>\mu\}|$. If
  $\lambda_d\le\mu<c_\lambda$, then
  $N_\mu+1=\lceil(c_\lambda/\mu)^{1/a}\rceil
    \asymp\mu^{-1/a}$. Since $a+2b=1+ap>1$,
  \begin{equation}
    \sum_{\lambda_i \le \mu} \lambda_i \beta_i^2
    = c_\lambda c_\beta^2
    \sum_{i=N_\mu+1}^d i^{-(a+2b)}
    \lesssim (N_\mu+1)^{-(a+2b-1)}
    \lesssim \mu^p.
  \end{equation}
  If $\mu < \lambda_d$, the left-hand side is zero. If $\mu \ge c_\lambda$, the sum is bounded above by a convergent series. Thus, this upper bound holds for all $\mu > 0$, establishing \cref{phased:eq:teacher-tail-upper}.

  Next, suppose $c_\mu d^{-a}\le\mu\le\mu_\star$. For sufficiently large $d$, we have $2\le N_\mu+1\le d/2$ and $N_\mu+1\asymp\mu^{-1/a}$. In particular, $2(N_\mu+1)\le d$, so
  \begin{equation}
    \sum_{i=N_\mu+1}^d i^{-(a+2b)}
    \ge
    \sum_{i=N_\mu+1}^{2(N_\mu+1)} i^{-(a+2b)}
    \ge
    (N_\mu+1)\{2(N_\mu+1)\}^{-(a+2b)}
    \asymp (N_\mu+1)^{-(a+2b-1)}.
  \end{equation}
  Therefore, by $N_\mu+1\asymp\mu^{-1/a}$ and
  $(a+2b-1)/a=p$,
  \begin{equation}
    \sum_{\lambda_i \le \mu} \lambda_i \beta_i^2
    =c_\lambda c_\beta^2
    \sum_{i=N_\mu+1}^d i^{-(a+2b)}
    \asymp \mu^p.
  \end{equation}
  This proves \cref{phased:eq:teacher-tail-two-sided}.

  Finally, under the same bounds on $\mu$, we have $N_\mu\asymp\mu^{-1/a}$ and $1\le N_\mu\le d/2$. Therefore, since $(1-2b)/a=1-p$,
  \begin{equation}
    \|\boldsymbol P_{>\mu} \boldsymbol\beta\|^2
    = c_\beta^2 \sum_{i=1}^{N_\mu} i^{-2b}
    \asymp
    \begin{cases}
      \mu^{-(1-p)},            & p < 1, \\
      1 + \log(c_\lambda/\mu), & p = 1, \\
      1,                       & p > 1.
    \end{cases}
  \end{equation}
\end{proof}

\subsection{Loss Bounds via Spectral Cutoffs}

We now combine the preceding spectral bound for $\boldsymbol\beta$ with a bound on $r(\tau)$ to control the low-eigenvalue contribution to the loss. For $\mu>0$, define
\begin{equation}
  \mathcal L_{\le\mu}(\tau)
  :=
  \frac12
  \|\boldsymbol P_{\le\mu}
  \boldsymbol\Sigma^{1/2}
  (\boldsymbol\beta-\boldsymbol\theta(\tau))\|^2.
\end{equation}
The triangle inequality, the bound $\lambda_i\le\mu$ on the range of $\boldsymbol P_{\le\mu}$, and $r(\tau)=\|\boldsymbol\theta(\tau)\|$ give
\begin{equation}
  \begin{split}
    \mathcal L_{\le\mu}(\tau)
    & \le
    \|\boldsymbol P_{\le\mu}
    \boldsymbol\Sigma^{1/2}\boldsymbol\beta\|^2
    +\|\boldsymbol P_{\le\mu}
    \boldsymbol\Sigma^{1/2}\boldsymbol\theta(\tau)\|^2 \\
    & \le
    \|\boldsymbol P_{\le\mu}
    \boldsymbol\Sigma^{1/2}\boldsymbol\beta\|^2
    +\mu r(\tau)^2.
  \end{split}
  \label{phased:eq:coarse-low-loss}
\end{equation}

\begin{lemma}
  \label{phased:lem:loss-propagation}
  Let $c_K>0$ and $0\le\tau_0\le\tau_1$. Assume that $\overline{\boldsymbol K}_L(\tau)\succeq c_K\boldsymbol I$ throughout $[\tau_0,\tau_1]$. Then, for any positive continuous function $\mu\colon[\tau_0,\tau_1]\to(0,\infty)$ and every $\tau\in[\tau_0,\tau_1]$,
  \begin{equation}
    \begin{split}
      \mathcal L(\tau)
      & \le
      \exp\!\left(-2c_K \int_{\tau_0}^\tau \mu(w)\,dw \right)
      \mathcal L(\tau_0)
      +
      2c_K \int_{\tau_0}^\tau
      \exp\!\left( -2c_K \int_v^\tau \mu(w)\,dw \right)
      \mu(v) \mathcal L_{\le\mu(v)}(v)\,dv.
    \end{split}
    \label{phased:eq:loss-propagation}
  \end{equation}
  In particular, for any fixed $\alpha > 0$, setting $\mu(v)=\alpha/\{2c_K(1+v)\}$ gives
  \begin{equation}
    \begin{split}
      \mathcal L(\tau)
      & \le
      \left(\frac{1+\tau_0}{1+\tau}\right)^\alpha \mathcal L(\tau_0)
      +
      \alpha (1+\tau)^{-\alpha}
      \int_{\tau_0}^\tau
      (1+v)^{\alpha-1}\mathcal L_{\le\mu(v)}(v)\,dv.
    \end{split}
    \label{phased:eq:polynomial-loss-propagation}
  \end{equation}
\end{lemma}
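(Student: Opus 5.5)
The plan is a Grönwall-type argument: derive a differential inequality for $\mathcal L$ in $\tau$ and integrate it. Split the loss by spectral cutoff. By \cref{commonwork:eq:tau-loss-dynamics}, $\frac{d}{d\tau}\mathcal L=-\boldsymbol g^\top\overline{\boldsymbol K}_L\boldsymbol g\le -c_K\|\boldsymbol g\|^2$ on $[\tau_0,\tau_1]$. Write $\boldsymbol e=\boldsymbol\Sigma^{1/2}(\boldsymbol\beta-\boldsymbol\theta)$, so that $\boldsymbol g=\boldsymbol\Sigma^{1/2}\boldsymbol e$ and $\mathcal L=\frac12\|\boldsymbol e\|^2$. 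Fix $v$ and $\mu=\mu(v)$. On the range of $\boldsymbol P_{>\mu}$ every eigenvalue exceeds $\mu$, hence
\begin{equation}
  \|\boldsymbol g\|^2\ge\|\boldsymbol P_{>\mu}\boldsymbol\Sigma^{1/2}\boldsymbol e\|^2\ge\mu\|\boldsymbol P_{>\mu}\boldsymbol e\|^2=2\mu\bigl(\mathcal L-\mathcal L_{\le\mu}\bigr).
\end{equation}
Here we use that $\boldsymbol P_{\le\mu}$ and $\boldsymbol P_{>\mu}$ are complementary diagonal projectors commuting with $\boldsymbol\Sigma$, so that $\mathcal L=\mathcal L_{\le\mu}+\frac12\|\boldsymbol P_{>\mu}\boldsymbol e\|^2$. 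This yields the linear differential inequality
\begin{equation}
  \frac{d}{d\tau}\mathcal L(\tau)\le-2c_K\mu(\tau)\mathcal L(\tau)+2c_K\mu(\tau)\mathcal L_{\le\mu(\tau)}(\tau).
\end{equation}

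Next, integrate with the factor $E(\tau)=\exp\bigl(2c_K\int_{\tau_0}^\tau\mu(w)\,dw\bigr)$, which is $C^1$ because $\mu$ is continuous. Then $\frac{d}{d\tau}(E\mathcal L)\le 2c_K\mu E\,\mathcal L_{\le\mu}$. Integrating from $\tau_0$ to $\tau$ and dividing by $E(\tau)$ gives \cref{phased:eq:loss-propagation} directly. One technical point needs checking: $\tau\mapsto\mathcal L_{\le\mu(\tau)}(\tau)$ may jump when $\mu(\tau)$ crosses an eigenvalue. However, it is piecewise continuous with finitely many jumps because $d$ is finite, and it is bounded. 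It is therefore integrable, and the comparison argument goes through, since the inequality holds pointwise and $E\mathcal L$ is absolutely continuous.

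For the special case, set $\mu(v)=\alpha/\{2c_K(1+v)\}$. Then $2c_K\int_v^\tau\mu=\alpha\log\frac{1+\tau}{1+v}$, so the exponentials become $\bigl(\frac{1+v}{1+\tau}\bigr)^\alpha$. The prefactor $2c_K\mu(v)$ equals $\alpha/(1+v)$. Substituting these gives \cref{phased:eq:polynomial-loss-propagation}.

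The only step requiring care is the spectral splitting inequality, which uses $\|\boldsymbol P_{>\mu}\boldsymbol\Sigma^{1/2}\boldsymbol e\|^2\ge\mu\|\boldsymbol P_{>\mu}\boldsymbol e\|^2$. Everything else is routine Grönwall.
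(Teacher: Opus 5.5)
Your proposal is correct and follows the paper's proof step for step. It uses the same spectral-cutoff bound $\|\boldsymbol g\|^2\ge 2\mu(\mathcal L-\mathcal L_{\le\mu})$, the same linear differential inequality, the same integrating factor, and the same substitution for the polynomial case. Your integrability remark addresses a point the paper leaves implicit, but for a general continuous $\mu$ the number of eigenvalue crossings need not be finite. The conclusion still holds, because $v\mapsto\mathcal L_{\le\mu(v)}(v)$ is bounded and Borel measurable (each set $\{v:\lambda_i\le\mu(v)\}$ is closed), which is all the integration step requires.
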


\begin{proof}
  For $\tau_0 \le v \le \tau_1$, the identity $\boldsymbol g(v)=\boldsymbol\Sigma(\boldsymbol\beta-\boldsymbol\theta(v))$ gives
  \begin{equation}
    \begin{split}
      \|\boldsymbol g(v)\|^2
      & =
      \|\boldsymbol\Sigma
      (\boldsymbol\beta-\boldsymbol\theta(v))\|^2 \\
      & \ge
      \mu(v)\|\boldsymbol P_{>\mu(v)}
      \boldsymbol\Sigma^{1/2}
      (\boldsymbol\beta-\boldsymbol\theta(v))\|^2 \\
      & =
      2\mu(v)\left(
      \mathcal L(v)-\mathcal L_{\le\mu(v)}(v)
      \right).
    \end{split}
  \end{equation}
  Therefore,
  \begin{equation}
    \begin{split}
      \frac{d}{dv}\mathcal L(v)
      & = -\boldsymbol g(v)^\top
      \overline{\boldsymbol K}_L(v) \boldsymbol g(v) \\
      & \le -c_K \|\boldsymbol g(v)\|^2            \\
      & \le -2c_K \mu(v)
      \left(\mathcal L(v) - \mathcal L_{\le\mu(v)}(v)\right).
    \end{split}
  \end{equation}
  Multiplying the resulting differential inequality by the integrating factor $\exp\left(2c_K\int_{\tau_0}^v \mu(w)\,dw\right)$ and integrating from $\tau_0$ to $\tau$ yields \cref{phased:eq:loss-propagation}.
  The polynomial form \cref{phased:eq:polynomial-loss-propagation} follows by substituting the specified $\mu(v)$.
\end{proof}

If $\overline{\boldsymbol K}_L$ remains uniformly positive definite, the smallest eigenvalue $\lambda_d$ of $\boldsymbol{\Sigma}$ gives the following exponential bound beyond $\tau_+$.

\begin{corollary}
  \label{phased:cor:exponential-loss-decay}
  Suppose that there exist constants $\tau_0\ge0$ and $c_K>0$, independent of $d$, such that, for all sufficiently large $d$, $\overline{\boldsymbol K}_L(\tau)\succeq c_K\boldsymbol I$ for every $\tau\ge\tau_0$.
  Then, for all sufficiently large $d$ and every $\tau\ge\tau_+$,
  \begin{equation}
    \mathcal L(\tau)\le
    \mathcal L(\tau_+)
    \exp\!\left(-2c_Kc_\lambda d^{-a}(\tau-\tau_+)\right).
  \end{equation}
\end{corollary}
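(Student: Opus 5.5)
The bound follows from the general propagation inequality \cref{phased:eq:loss-propagation} of \cref{phased:lem:loss-propagation} with a \emph{constant} cutoff placed just below the smallest eigenvalue, so that the low-eigenvalue piece $\mathcal L_{\le\mu}$ vanishes identically. Concretely, apply the lemma on $[\tau_+,\tau]$ (the hypothesis $\overline{\boldsymbol K}_L\succeq c_K\boldsymbol I$ holds there, since $\tau_+=c_0d^a\ge\tau_0$ for all sufficiently large $d$), taking $\tau_0$ in the lemma to be $\tau_+$, and choose $\mu(v)\equiv\mu$ with $0<\mu<\lambda_d=c_\lambda d^{-a}$.

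With this choice, $\boldsymbol P_{\le\mu}=0$ by \cref{phased:eq:spectral-projectors}, since every $\lambda_i\ge\lambda_d>\mu$. Hence $\mathcal L_{\le\mu}(v)=0$ for every $v$, and the integral term in \cref{phased:eq:loss-propagation} drops out. What remains is
\[
\mathcal L(\tau)\le \exp\bigl(-2c_K\mu(\tau-\tau_+)\bigr)\mathcal L(\tau_+).
\]
Letting $\mu\uparrow\lambda_d$ gives the stated bound with exponent $2c_Kc_\lambda d^{-a}(\tau-\tau_+)$.

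Alternatively, one can argue directly. The inequality $\|\boldsymbol g\|^2=\|\boldsymbol\Sigma(\boldsymbol\beta-\boldsymbol\theta)\|^2\ge\lambda_d\|\boldsymbol\Sigma^{1/2}(\boldsymbol\beta-\boldsymbol\theta)\|^2=2\lambda_d\mathcal L$ combined with \cref{commonwork:eq:tau-loss-dynamics} gives $d\mathcal L/d\tau\le-2c_K\lambda_d\mathcal L$, and Grönwall's inequality finishes. This variant even avoids the limit in $\mu$.

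There is no real obstacle; the only point to check is that the kernel lower bound holds on the whole ray $[\tau_+,\infty)$. This is exactly the hypothesis of the corollary, and it is supplied in applications by \cref{phased:cor:post-entry-radius-kernel} or \cref{phased:lem:fixed-scale-radius-kernel}.
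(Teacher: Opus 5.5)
Your proposal is correct. The direct variant, which plugs $\|\boldsymbol g\|^2\ge 2\lambda_d\mathcal L$ into the $\tau$-loss dissipation identity, checks that $\tau_+\ge\tau_0$ for large $d$, and applies Gr\"onwall on $[\tau_+,\tau]$, is exactly the paper's proof. Your primary route through the loss-propagation lemma, with a constant cutoff $\mu<\lambda_d$ so that $\boldsymbol P_{\le\mu}=0$ and then $\mu\uparrow\lambda_d$, is the same computation packaged through that lemma and is also valid.
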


\begin{proof}
  By \cref{commonwork:eq:tau-loss-dynamics} and the assumed lower bound on $\overline{\boldsymbol K}_L$, for every $v\ge\tau_0$,
  \begin{equation}
    \begin{split}
      \frac{d}{dv}\mathcal L(v)
      &=-\boldsymbol g(v)^\top
      \overline{\boldsymbol K}_L(v)\boldsymbol g(v) \\
      &\le -c_K\|\boldsymbol g(v)\|^2 \\
      &\le -c_K\lambda_d
      \|\boldsymbol\Sigma^{1/2}
      (\boldsymbol\beta-\boldsymbol\theta(v))\|^2 \\
      &=-2c_K\lambda_d\mathcal L(v).
    \end{split}
  \end{equation}
  For all sufficiently large $d$, we have $\tau_+\ge\tau_0$. Applying Grönwall's inequality on $[\tau_+,\tau]$ and using $\lambda_d=c_\lambda d^{-a}$ proves the claim.
\end{proof}

For sufficiently small initialization, \cref{phased:cor:post-entry-radius-kernel} shows that the preceding corollary applies with $\tau_0=\tau_{\mathrm e}$. Hence the loss decays exponentially beyond $\tau_+=c_0d^a$.
For $s=1$, \cref{phased:lem:fixed-scale-radius-kernel} shows that it applies with $\tau_0=0$, so the same conclusion holds.

\section{Predictor-Norm Scaling in Reparameterized Time}
\label{phased:sec:radius-law}

Fix a constant $\tau_\circ>0$, independent of $d$ and $s$. For all sufficiently large $d$, we have $\tau_\circ\le\tau_+$. Since $\tau_\circ$ is fixed, \cref{phased:lem:active-mass-profile} gives $S_d(\tau_\circ)\asymp1$.
To treat small and standard initialization in a unified way, we establish predictor-norm and loss bounds under the following conditions.
Suppose that the initial predictor scale satisfies
\begin{equation}
  \sigma(\tau_\circ)
  \lesssim S_d(\tau_\circ)^{1/(2L)}.
  \label{phased:eq:common-initial-radius-upper}
\end{equation}
Assume also that the initial loss satisfies
\begin{equation}
  \mathcal L(\tau_\circ)
  \lesssim \frac{S_d(\tau_\circ)}{1+\tau_\circ}.
  \label{phased:eq:common-initial-loss-upper}
\end{equation}
For $\tau_\circ\le\tau\le\tau_+$, we also assume the uniform lower bounds
\begin{equation}
  \sigma(\tau)\gtrsim 1
  \label{phased:eq:common-radius-lower}
\end{equation}
and
\begin{equation}
  \overline{\boldsymbol K}_L(\tau)\succeq c_K\boldsymbol I.
  \label{phased:eq:common-kernel-lower}
\end{equation}

These conditions are verified in the proofs of \cref{cor:small-init-predictor-norm-scaling,cor:standard-init-scaling}.
The following lemma derives a loss bound from an upper bound on $\sigma$.

\begin{lemma}
  \label{phased:lem:loss-under-radius-bound}
  Let $C_+\ge1$ and $\tau_1\in[\tau_\circ,\tau_+]$.
  Suppose that \cref{phased:eq:common-initial-loss-upper} holds and that, for $\tau_\circ\le\tau\le\tau_1$, $\overline{\boldsymbol K}_L(\tau)\succeq c_K\boldsymbol I$ and
  \begin{equation}
    \sigma(\tau)\le C_+S_d(\tau)^{1/(2L)}.
    \label{phased:eq:stopped-radius-upper}
  \end{equation}
  Then, for $\tau_\circ\le\tau\le\tau_1$,
  \begin{equation}
    \mathcal L(\tau)
    \le
    C(1+C_+^{2L})\frac{S_d(\tau)}{1+\tau}.
    \label{phased:eq:coarse-stopped-loss}
  \end{equation}
  Here $C$ is independent of $C_+$ and $\tau_1$.
\end{lemma}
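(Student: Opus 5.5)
We will apply the polynomial propagation bound \cref{phased:eq:polynomial-loss-propagation} of \cref{phased:lem:loss-propagation} on $[\tau_\circ,\tau]$, with cutoff $\mu(v)=\alpha/\{2c_K(1+v)\}$. The exponent $\alpha$ is a fixed constant chosen large, say $\alpha>2$; the precise requirement is fixed below. This bound has two terms, and each will be shown to be $\lesssim S_d(\tau)/(1+\tau)$.

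\emph{Initial term.} By \cref{phased:eq:common-initial-loss-upper} and $S_d(\tau_\circ)\asymp1$, we have $\mathcal L(\tau_\circ)\lesssim 1$, hence
\[
\bigl((1+\tau_\circ)/(1+\tau)\bigr)^\alpha\mathcal L(\tau_\circ)\lesssim(1+\tau)^{-\alpha}.
\]
Since $S_d$ is nondecreasing and $S_d(\tau)\ge S_d(\tau_\circ)\gtrsim1$, this is at most $C\,S_d(\tau)/(1+\tau)$ as soon as $\alpha\ge1$.

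\emph{Low-spectrum term.} We control $\mathcal L_{\le\mu(v)}(v)$ through the coarse bound \cref{phased:eq:coarse-low-loss}. The first piece satisfies $\|\boldsymbol P_{\le\mu}\boldsymbol\Sigma^{1/2}\boldsymbol\beta\|^2\lesssim\mu^p$ by \cref{phased:eq:teacher-tail-upper}, which holds for every $\mu>0$. For the second piece, $\mu(v)r(v)^2=\mu(v)\sigma(v)^{2L}\le C_+^{2L}\mu(v)S_d(v)$ by \cref{phased:eq:stopped-radius-upper}. Since $\mu(v)\asymp(1+v)^{-1}$ with constants depending only on $\alpha,c_K$, this yields
\[
\mathcal L_{\le\mu(v)}(v)\lesssim (1+v)^{-p}+C_+^{2L}\frac{S_d(v)}{1+v}.
\]
The profile in \cref{phased:lem:active-mass-profile} gives $S_d(v)/(1+v)\gtrsim(1+v)^{-p}$ in every regime: for $p<1$ it is $\asymp(1+v)^{-p}$, for $p=1$ it is $\gtrsim(1+v)^{-1}$, and for $p>1$ it is $\asymp(1+v)^{-1}\ge(1+v)^{-p}$. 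On $[\tau_\circ,c_\lambda^{-1}]$ both quantities are $\asymp1$, because $\tau_\circ$ is fixed and $S_d\asymp1$ there. Hence
\[
\mathcal L_{\le\mu(v)}(v)\lesssim(1+C_+^{2L})\frac{S_d(v)}{1+v}.
\]

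\emph{Integration.} It remains to show
\[
\alpha(1+\tau)^{-\alpha}\int_{\tau_\circ}^\tau(1+v)^{\alpha-2}S_d(v)\,dv\lesssim \frac{S_d(\tau)}{1+\tau}.
\]
Bounding $S_d(v)\le S_d(\tau)$ by monotonicity leaves $\int(1+v)^{\alpha-2}\,dv\le(1+\tau)^{\alpha-1}/(\alpha-1)$, which requires only $\alpha>1$. This integration step is essentially trivial; the only care needed is that all implicit constants depend on $\alpha,c_K,L$ and the spectral constants, but not on $C_+$, $\tau_1$, $d$ or $s$. That independence holds because $C_+$ enters only linearly through the factor $C_+^{2L}$ and $\tau_1$ never enters. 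Combining the two terms gives \cref{phased:eq:coarse-stopped-loss}.

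The step I expect to be the main obstacle is the uniform comparison $(1+v)^{-p}\lesssim S_d(v)/(1+v)$ across all three regimes and across the junction at $\tau=c_\lambda^{-1}$. I would handle it by splitting the range at $c_\lambda^{-1}$ and using monotonicity of $S_d$ on the fixed initial interval.
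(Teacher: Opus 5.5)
Your proposal is correct and follows the paper's proof essentially step for step. The paper fixes $\alpha=p+1$ (which is $>1$ because $p>0$) in the polynomial propagation bound, controls $\mathcal L_{\le\mu(v)}(v)$ through the same coarse low-spectrum bound combined with $(1+v)^{-p}\lesssim S_d(v)/(1+v)$, and handles both the initial term and the integral by monotonicity of $S_d$; your choice of an arbitrary fixed $\alpha>1$, and your slightly different handling of the initial term, work equally well.
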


\begin{proof}
  For $\tau_\circ\le v\le\tau_1$, set
  \begin{equation}
    \mu(v):=\frac{p+1}{2c_K(1+v)}.
  \end{equation}
  Since $\tau_\circ>0$ is fixed, the monotonicity of $S_d$ and \cref{phased:eq:active-mass-profile-low,phased:eq:active-mass-profile} give, for $\tau_\circ\le v\le\tau_1$,
  \begin{equation}
    (1+v)^{-p}\lesssim\frac{S_d(v)}{1+v}.
  \end{equation}
  Using the upper bound in \cref{phased:lem:powerlaw-estimates} together with \cref{phased:eq:coarse-low-loss,phased:eq:stopped-radius-upper}, we obtain
  \begin{equation}
    \mathcal L_{\le\mu(v)}(v)
    \le C(1+v)^{-p}
    +C C_+^{2L}\frac{S_d(v)}{1+v}
    \le C(1+C_+^{2L})\frac{S_d(v)}{1+v}.
  \end{equation}
  Using \cref{phased:eq:polynomial-loss-propagation} with
  $\tau_0=\tau_\circ$ and $\alpha=p+1$ gives
  \begin{equation}
    \mathcal L(\tau)
    \le
    \left(\frac{1+\tau_\circ}{1+\tau}\right)^{p+1}
    \mathcal L(\tau_\circ)
    +
    C(1+C_+^{2L})(1+\tau)^{-(p+1)}
    \int_{\tau_\circ}^\tau(1+v)^{p-1}S_d(v)\,dv.
  \end{equation}
  By \cref{phased:eq:common-initial-loss-upper} and the monotonicity of
  $S_d$,
  \begin{equation}
    \left(\frac{1+\tau_\circ}{1+\tau}\right)^{p+1}
    \mathcal L(\tau_\circ)
    \lesssim
    \left(\frac{1+\tau_\circ}{1+\tau}\right)^p
    \frac{S_d(\tau_\circ)}{1+\tau}
    \le
    \frac{S_d(\tau)}{1+\tau}.
  \end{equation}
  The same monotonicity and $p>0$ give
  \begin{equation}
    (1+\tau)^{-(p+1)}
    \int_{\tau_\circ}^\tau (1+v)^{p-1} S_d(v)\,dv
    \le
    S_d(\tau)(1+\tau)^{-(p+1)}
    \int_{\tau_\circ}^\tau (1+v)^{p-1}\,dv
    \lesssim \frac{S_d(\tau)}{1+\tau}.
  \end{equation}
  Combining these bounds proves \cref{phased:eq:coarse-stopped-loss}.
\end{proof}

Write $\psi(t):=1+\log(1+t)$ and define
\begin{equation}
  \psi_p(t)
  :=
  \begin{cases}
    1,                 & p < 1, \\
    \psi(t)^{(L-1)/L}, & p = 1, \\
    \psi(t),           & p > 1.
  \end{cases}
  \label{phased:eq:radius-regime-factor}
\end{equation}

\begin{proposition}
  \label{phased:prop:common-radius-law}
  Suppose that
  \cref{phased:eq:common-initial-radius-upper,phased:eq:common-initial-loss-upper,phased:eq:common-radius-lower,phased:eq:common-kernel-lower}
  hold. Assume in addition that, for some fixed $\bar\varepsilon>0$,
  \begin{equation}
    \varepsilon\psi_p(\tau_+)\le\bar\varepsilon.
    \label{phased:eq:radius-bootstrap-defect-bound}
  \end{equation}
  Then, for $\tau_\circ\le\tau\le\tau_+$,
  \begin{equation}
    \sigma(\tau)\asymp S_d(\tau)^{1/(2L)}.
    \label{phased:eq:common-radius-law}
  \end{equation}
  Moreover, throughout the same interval,
  \begin{equation}
    \mathcal L(\tau)\lesssim\frac{S_d(\tau)}{1+\tau}.
    \label{phased:eq:common-loss-upper}
  \end{equation}
  In particular,
  \begin{equation}
    \mathcal L(\tau)
    \lesssim
    \begin{cases}
      (1+\tau)^{-p},           & p < 1, \\
      \psi(\tau)(1+\tau)^{-1}, & p = 1, \\
      (1+\tau)^{-1},           & p > 1.
    \end{cases}
    \label{phased:eq:coarse-loss-regime-split}
  \end{equation}
  The implicit constants may depend on $\bar\varepsilon$, $\tau_\circ$, and the constants in \cref{phased:eq:common-initial-radius-upper,phased:eq:common-initial-loss-upper,phased:eq:common-radius-lower,phased:eq:common-kernel-lower}, but are independent of $d$, $s$, and $\tau$.
\end{proposition}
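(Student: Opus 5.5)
The argument is a continuity (bootstrap) argument on $[\tau_\circ,\tau_+]$. We control $\sigma$ from above, which by \cref{phased:lem:loss-under-radius-bound} gives the loss bound, and then feed the loss bound into the representation in \cref{phased:cor:post-entry-representation} to get two-sided control of $\sigma$. Fix a large constant $C_+$ to be chosen. Let $\tau_1$ be the supremum of times $\tau\le\tau_+$ such that $\sigma\le C_+S_d^{1/(2L)}$ holds on $[\tau_\circ,\tau]$. By \cref{phased:eq:common-initial-radius-upper}, $\tau_1>\tau_\circ$ once $C_+$ exceeds the constant there. On $[\tau_\circ,\tau_1]$, \cref{phased:lem:loss-under-radius-bound} gives $\mathcal L\le C(1+C_+^{2L})S_d/(1+\tau)$. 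We will show that on this interval actually $\sigma\le C'S_d^{1/(2L)}$, with $C'$ independent of $C_+$ except through lower-order terms whose size is controlled by $\bar\varepsilon$. Choosing $C_+>2C'$ then gives $\tau_1=\tau_+$. The lower bound comes out of the same computation.

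For the two-sided estimate, we would not work with the exact identity for $\boldsymbol\theta$ directly, since the factor $\sigma(\tau)^{L-1}\int\sigma(v)^{-(L-1)}$ is circular. Instead we use a loss-based decomposition mode by mode. For modes with $\tau\lambda_i\gtrsim1$, i.e.\ $\lambda_i>\mu\asymp1/\tau$, we write $\theta_i=\beta_i-(\beta_i-\theta_i)$ and bound $\|\boldsymbol P_{>\mu}(\boldsymbol\beta-\boldsymbol\theta)\|^2\le\mu^{-1}\cdot2\mathcal L\lesssim(1+C_+^{2L})S_d(\tau)$. This alone is too crude for the upper bound, so for the upper bound we instead use the variation-of-constants formula \cref{phased:eq:variation-constants-formula} for $\boldsymbol z$ started at $\tau_\circ$. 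The forcing term is handled with $\sigma(v)\gtrsim1$ from \cref{phased:eq:common-radius-lower}, which gives $\sigma(v)^{-(L-1)}\lesssim1$. The component in mode $i$ is then at most $C|\beta_i|\min\{\tau\lambda_i,1\}$, independent of $C_+$. Hence the forcing contribution to $\|\boldsymbol z\|^2$ is $\lesssim S_d(\tau)$. Since $\|\boldsymbol z\|=\sigma$, this yields $\sigma^2\lesssim S_d(\tau)+\text{(remainders)}$, and thus $\sigma^{2L}\lesssim S_d$ provided $\sigma\gtrsim1$ and $S_d\gtrsim1$. The key point here is that $\|\boldsymbol z\|=\sigma$ while $\|\boldsymbol\theta\|=\sigma^L$, and $\sigma^2\lesssim S_d$ implies $\sigma^{2L}\lesssim S_d^L$. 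That is not $\lesssim S_d$ when $L\ge2$ and $p<1$, so the self-consistent scaling requires the sharper identity. We therefore use the exact representation \cref{phased:eq:post-entry-theta} with the ratio $\sigma(\tau)^{L-1}\sigma(v)^{-(L-1)}$. Monotonic growth of the bootstrap envelope $S_d$ lets us bound this ratio by $C_+^{L-1}(S_d(\tau)/S_d(v))^{(L-1)/(2L)}$ times constants. We then evaluate the resulting spectral sums via \cref{phased:lem:active-mass-profile}. For $v$ in the bulk of $[0,\tau]$, $S_d(v)\asymp S_d(\tau)$, so the integral is $\asymp\beta_i\min\{\tau\lambda_i,1\}$ up to constants, giving $\|\boldsymbol\theta\|^2\asymp S_d$ at leading order.

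The lower bound uses the loss. Since $\mathcal L\lesssim S_d/(1+\tau)$, the learned modes $\lambda_i>\mu$ satisfy $\|\boldsymbol P_{>\mu}\boldsymbol\theta\|\ge\|\boldsymbol P_{>\mu}\boldsymbol\beta\|-\|\boldsymbol P_{>\mu}(\boldsymbol\beta-\boldsymbol\theta)\|$. Choosing $\mu=A/\tau$ with $A$ large, \cref{phased:lem:powerlaw-estimates} gives $\|\boldsymbol P_{>\mu}\boldsymbol\beta\|^2\asymp S_d(\tau)$, while the error is $\lesssim A^{-1/2}$ times $\sqrt{S_d}$. This yields $r^2\gtrsim S_d$ for $\tau$ beyond a constant. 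Near $\tau_\circ$, we instead use $\sigma\gtrsim1$ and $S_d\asymp1$.

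The main obstacle is the remainder $\boldsymbol R(\tau)$, specifically the $\boldsymbol f_K$ integral. By \cref{phased:eq:forcing-bound}, $\|\boldsymbol f_K\|\lesssim\varepsilon\sigma^{-(L+1)}\|\boldsymbol g\|$. Using $\|\boldsymbol g\|^2\lesssim\mathcal L$ would lose too much, so we would bound mode-wise in the form $\int e^{-(\tau-v)\boldsymbol\Sigma}\boldsymbol f_K$ against $\|\boldsymbol g(v)\|$ with $\|\boldsymbol g\|^2\le\|\boldsymbol\Sigma\|_{\mathrm{op}}\cdot2\mathcal L\lesssim S_d(v)/(1+v)$. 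After multiplying by $\sigma(\tau)^{L-1}$, the total is of order $\varepsilon\int_{\tau_\circ}^{\tau}(1+v)^{-1/2}\dots$. We expect the integrated size, relative to $S_d(\tau)^{1/2}$, to be exactly $\varepsilon\psi_p(\tau)$: a logarithm appears for $p>1$ and a fractional power of $\psi$ for $p=1$. This is where \cref{phased:eq:radius-bootstrap-defect-bound} enters and makes the remainder a small fraction of the main term once $\bar\varepsilon$ is small. The $\boldsymbol\xi$ part, from $\boldsymbol z(\tau_\circ)$, is bounded by \cref{phased:eq:common-initial-radius-upper} and decays or stays $O(1)$, which is dominated by $S_d^{1/2}\gtrsim1$. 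Finally, \cref{phased:eq:coarse-loss-regime-split} follows by inserting \cref{phased:lem:active-mass-profile} into \cref{phased:eq:common-loss-upper}.
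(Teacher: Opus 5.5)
Your lower-bound step (learned modes above a cutoff $\mu\asymp 1/(1+\tau)$, with the residual there controlled by $\mathcal L\lesssim S_d/(1+\tau)$) matches the paper. The upper bound on $\sigma$, which is the core of the proposition, does not close as you set it up.

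\textbf{First gap: the upper-bound bootstrap.} To bound $\sigma(\tau)^{L-1}\int e^{-(\tau-v)\boldsymbol\Sigma}\boldsymbol\Sigma\boldsymbol\beta\,\sigma(v)^{-(L-1)}\,dv$ you need $\sigma(v)\gtrsim S_d(v)^{1/(2L)}$. Inside your bootstrap the only sources you give are $\sigma\gtrsim1$, which is too weak, or your loss-based argument, which runs on $\mathcal L\le CC_+^{2L}S_d/(1+v)$.

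Take $p<1$. To beat the error $(C_+^{2L}/B)^{1/2}S_d^{1/2}$ against $\|\boldsymbol P_{>\mu}\boldsymbol\beta\|\asymp B^{-(1-p)/2}S_d^{1/2}$, you must take $B\gtrsim C_+^{2L/p}$. You then only get $\sigma(v)\gtrsim C_+^{-(1-p)/p}S_d(v)^{1/(2L)}$. Combined with $\sigma(\tau)^{L-1}\le C_+^{L-1}S_d(\tau)^{(L-1)/(2L)}$, the representation returns $\sigma(\tau)\lesssim C_+^{(L-1)/(pL)}S_d(\tau)^{1/(2L)}$. This is no improvement on $C_+$ once $p\le 1-1/L$, i.e.\ exactly in the difficult regime. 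The factor $C_+^{L-1}$ you write for the ratio already contradicts your claim that $C'$ is independent of $C_+$. The exponent $(L-1)/L<1$ alone would still close, but only given a $C_+$-uniform lower bound on $\sigma(v)$, which you do not have.

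The missing idea is the paper's two-threshold exit argument in the $\boldsymbol z$ variable.
Let $\tau^\dagger$ be the first time $\sigma=2AS_d^{1/(2L)}$, and $\tau_-$ the last earlier time with $\sigma=AS_d^{1/(2L)}$. Apply the variation-of-constants formula for $\boldsymbol z$ (not $\boldsymbol\theta$) starting at $\tau_-$. The three terms are then bounded as follows:
\begin{itemize}
\item Since $\|\boldsymbol z\|=\sigma$ and $e^{-u\boldsymbol\Sigma}$ is a contraction, the homogeneous term is at most $AS_d(\tau^\dagger)^{1/(2L)}$.
\item On $[\tau_-,\tau^\dagger]$ the lower bound $\sigma\ge AS_d^{1/(2L)}$ holds \emph{by construction}, so the drive $\sigma^{-(L-1)}\boldsymbol\Sigma\boldsymbol\beta$ contributes only $CA^{-(L-1)}S_d(\tau^\dagger)^{1/(2L)}$.
\item The $\boldsymbol f_K$ term contributes $C\bar\varepsilon A^{-1}S_d(\tau^\dagger)^{1/(2L)}$.
\end{itemize}
For large $A$ the total is below $2AS_d(\tau^\dagger)^{1/(2L)}$, a contradiction. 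A large $\sigma$ suppresses its own forcing, no prefactor $\sigma(\tau)^{L-1}$ appears, and no loss-based lower bound is needed.

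\textbf{Second gap: the $\boldsymbol f_K$ estimate.} You bound $\|\boldsymbol g\|$ pointwise by $\|\boldsymbol g\|^2\le2\|\boldsymbol\Sigma\|_{\mathrm{op}}\mathcal L\lesssim S_d(v)/(1+v)$. This is the same bound you say loses too much, and it does. With $\sigma\asymp S_d^{1/(2L)}$ it gives $\int\|\boldsymbol f_K\|\,dv\lesssim\varepsilon\int S_d(v)^{-1/(2L)}(1+v)^{-1/2}\,dv$. For $p>1$ this is of order $\varepsilon\sqrt{\tau}$, not $\varepsilon\log\tau$. For $p<1$ and $L\ge2$ it outgrows $S_d(\tau)^{1/(2L)}$ by a factor $\tau^{1/2-(1-p)/L}$.

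The factor $\psi_p$ comes only from the dissipation identity $d\mathcal L/d\tau=-\boldsymbol g^\top\overline{\boldsymbol K}_L\boldsymbol g\le-c_K\|\boldsymbol g\|^2$, combined with Cauchy--Schwarz on dyadic blocks $[x_j,x_{j+1}]$ of length $1+x_j$. This gives $\int_{x_j}^{x_{j+1}}\|\boldsymbol g\|\,dv\le\{(1+x_j)c_K^{-1}\mathcal L(x_j)\}^{1/2}\lesssim S_d(x_j)^{1/2}$. Summing $\varepsilon S_d(x_j)^{-1/(2L)}$ over the $O(\psi(\tau))$ blocks then yields $\varepsilon\psi_p(\tau)S_d(\tau)^{1/(2L)}$. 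This is where the hypothesis $\varepsilon\psi_p(\tau_+)\le\bar\varepsilon$ actually enters.
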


\begin{proof}

  \step{Upper bound for $\sigma(\tau)$.}

  By \cref{phased:eq:common-initial-radius-upper}, we may choose $A\ge1$ large
  enough that $\sigma(\tau_\circ)<A S_d(\tau_\circ)^{1/(2L)}$.
  We will impose further lower bounds on $A$ below.
  Suppose for contradiction that $\sigma(\tau)\ge2A S_d(\tau)^{1/(2L)}$ for some $\tau\in[\tau_\circ,\tau_+]$, and let
  \begin{equation}
    \tau^\dagger
    :=
    \min\left\{
    \tau\in[\tau_\circ,\tau_+]:
    \sigma(\tau)\ge
    2A S_d(\tau)^{1/(2L)}
    \right\}.
  \end{equation}
  Then $\tau^\dagger>\tau_\circ$ and, by continuity, $\sigma(\tau^\dagger)=2A S_d(\tau^\dagger)^{1/(2L)}$.
  Define
  \begin{equation}
    \tau_-
    :=
    \max\left\{
    \tau\in[\tau_\circ,\tau^\dagger]:
    \sigma(\tau)\le A S_d(\tau)^{1/(2L)}
    \right\}.
  \end{equation}
  The defining set is nonempty and closed.
  By maximality and continuity, $\sigma(\tau_-)=A S_d(\tau_-)^{1/(2L)}$.
  The definitions of $\tau_-$ and $\tau^\dagger$ imply that, for $\tau_-\le\tau\le\tau^\dagger$,
  \begin{equation}
    A S_d(\tau)^{1/(2L)}
    \le\sigma(\tau)\le
    2A S_d(\tau)^{1/(2L)}.
    \label{phased:eq:intermediate-radius-bounds}
  \end{equation}

  By \cref{phased:eq:radius-bootstrap-defect-bound}, $\varepsilon\le\bar\varepsilon$. Moreover, the monotonicity of $S_d$ and \cref{phased:lem:active-mass-profile} give $S_d(\tau)\ge S_d(\tau_\circ)\gtrsim1$ on this interval. Thus, after increasing $A$ if necessary, depending only on $\bar\varepsilon$ and this fixed lower bound, we have, throughout the same interval,
  \begin{equation}
    \frac{\varepsilon}{\sigma(\tau)^2}
    \le
    \frac{\varepsilon}{A^2S_d(\tau)^{1/L}}
    \le1.
    \label{phased:eq:intermediate-defect-ratio}
  \end{equation}
  Hence \cref{commonwork:prop:perturbative-kernel-expansion} applies on this interval with $M=1$, and the resulting constants are independent of $A$.

  By the definition of $\tau^\dagger$, \cref{phased:eq:stopped-radius-upper} holds with $C_+=2A$ up to this time.
  Therefore, \cref{phased:lem:loss-under-radius-bound} gives, for $\tau_\circ\le\tau\le\tau^\dagger$,
  \begin{equation}
    \mathcal L(\tau)
    \le C A^{2L}\frac{S_d(\tau)}{1+\tau}.
    \label{phased:eq:stopped-loss-bound}
  \end{equation}

  Applying \cref{phased:eq:variation-constants-formula} from $\tau_-$, we obtain
  \begin{equation}
    \sigma(\tau^\dagger)
    \le
    \left\|
    e^{-(\tau^\dagger-\tau_-)\boldsymbol\Sigma}\boldsymbol z(\tau_-)
    \right\|
    +
    \left\|
    \int_{\tau_-}^{\tau^\dagger}
    e^{-(\tau^\dagger-v)\boldsymbol\Sigma}
    \sigma(v)^{-(L-1)}
    \boldsymbol\Sigma\boldsymbol\beta\,dv
    \right\|
    +
    \left\|
    \int_{\tau_-}^{\tau^\dagger}
    e^{-(\tau^\dagger-v)\boldsymbol\Sigma}
    \boldsymbol f_K(v)\,dv
    \right\|.
    \label{phased:eq:radius-integral-bound}
  \end{equation}
  We now estimate each term separately.

  \substep{The term involving $\boldsymbol z(\tau_-)$.}

  \begin{equation}
    \|e^{-(\tau^\dagger-\tau_-)\boldsymbol\Sigma}\boldsymbol z(\tau_-)\|
    \le \sigma(\tau_-)
    \le A S_d(\tau^\dagger)^{1/(2L)}.
    \label{phased:eq:homogeneous-term-bound}
  \end{equation}

  \substep{The term involving $\boldsymbol\beta$.}

  From \cref{phased:eq:intermediate-radius-bounds}, for
  $\tau_-\le\tau\le\tau^\dagger$ we have
  \begin{equation}
    \sigma(\tau)^{-(L-1)}
    \le A^{-(L-1)}S_d(\tau)^{-(L-1)/(2L)}.
  \end{equation}
  If $\tau^\dagger\le4$, then $S_d(\tau)\gtrsim1$ for
  $\tau_-\le\tau\le\tau^\dagger$.
  Therefore, since $\|e^{-u\boldsymbol\Sigma}\|_{\mathrm{op}}\le1$ for $u\ge0$ and $\|\boldsymbol\Sigma\boldsymbol\beta\|\lesssim1$, we obtain
  \begin{equation}
    \left\|
    \int_{\tau_-}^{\tau^\dagger}
    e^{-(\tau^\dagger-v)\boldsymbol\Sigma}
    \sigma(v)^{-(L-1)}
    \boldsymbol\Sigma \boldsymbol\beta\, dv
    \right\|
    \le C A^{-(L-1)}S_d(\tau^\dagger)^{1/(2L)}.
  \end{equation}

  Next, we consider the case $\tau^\dagger > 4$.

  First, consider $v \in [\max\{\tau_-, \tau^\dagger/2\}, \tau^\dagger]$. In this interval, $v \asymp \tau^\dagger$, so by \cref{phased:eq:active-mass-profile}, $S_d(v) \asymp S_d(\tau^\dagger)$. Therefore,
  \begin{equation}
    \begin{split}
      \left\|
      \int_{\max\{\tau_-,\tau^\dagger/2\}}^{\tau^\dagger}
      e^{-(\tau^\dagger-v)\boldsymbol\Sigma}
      \sigma(v)^{-(L-1)}
      \boldsymbol\Sigma\boldsymbol\beta\,dv
      \right\|^2
      & \le C A^{-2(L-1)}S_d(\tau^\dagger)^{-(L-1)/L} \sum_{i=1}^d\beta_i^2\min\{\tau^\dagger\lambda_i,1\}^2 \\
      & \le C A^{-2(L-1)}S_d(\tau^\dagger)^{1/L}.
    \end{split}
    \label{phased:eq:recent-teacher-integral-bound}
  \end{equation}

  Next, suppose $\tau_- \le \tau^\dagger/2$ and consider $v\in[\tau_-,\tau^\dagger/2]$. In this range, $e^{-(\tau^\dagger-v)\lambda_i}\le e^{-\tau^\dagger\lambda_i/2}$.

  By \cref{phased:eq:active-mass-profile}, for $p < 1$,
  \begin{equation}
    \int_{\tau_\circ}^{\tau^\dagger/2} S_d(v)^{-(L-1)/(2L)}\, dv
    \lesssim
    \int_{\tau_\circ}^{\tau^\dagger/2}
    (1+v)^{-(L-1)(1-p)/(2L)}\, dv
    \lesssim
    \tau^\dagger(1+\tau^\dagger)^{-(L-1)(1-p)/(2L)}.
  \end{equation}
  For $p=1$,
  \begin{equation}
    \begin{split}
      \int_{\tau_\circ}^{\tau^\dagger/2} S_d(v)^{-(L-1)/(2L)}\, dv
      & \lesssim \int_{\tau_\circ}^{\sqrt{\tau^\dagger}}\psi(v)^{-(L-1)/(2L)}\, dv + \int_{\sqrt{\tau^\dagger}}^{\tau^\dagger/2}\psi(v)^{-(L-1)/(2L)}\, dv \\
      & \lesssim \sqrt{\tau^\dagger} + \tau^\dagger \psi(\tau^\dagger)^{-(L-1)/(2L)}                                                                       \\
      & \lesssim \tau^\dagger S_d(\tau^\dagger)^{-(L-1)/(2L)}.
    \end{split}
  \end{equation}
  For $p>1$,
  \begin{equation}
    \int_{\tau_\circ}^{\tau^\dagger/2} S_d(v)^{-(L-1)/(2L)}\, dv
    \lesssim \tau^\dagger.
  \end{equation}
  Therefore,
  \begin{equation}
    \int_{\tau_\circ}^{\tau^\dagger/2}
    S_d(v)^{-(L-1)/(2L)}\, dv
    \lesssim \tau^\dagger S_d(\tau^\dagger)^{-(L-1)/(2L)}.
  \end{equation}
  Moreover,
  \begin{equation}
    (\tau^\dagger)^2\sum_{i=1}^d
    \beta_i^2\lambda_i^2 e^{-\tau^\dagger\lambda_i}
    \lesssim \sum_{i=1}^d \beta_i^2 \min\{\tau^\dagger\lambda_i, 1\}^2
    \lesssim S_d(\tau^\dagger).
  \end{equation}
  Hence,
  \begin{equation}
    \label{phased:eq:early-teacher-integral-bound}
    \begin{split}
      \left\|
      \int_{\tau_-}^{\tau^\dagger/2}
      e^{-(\tau^\dagger-v)\boldsymbol\Sigma}
      \sigma(v)^{-(L-1)}
      \boldsymbol\Sigma \boldsymbol\beta\, dv
      \right\|^2
      & \le C A^{-2(L-1)}
      \left( \tau^\dagger S_d(\tau^\dagger)^{-(L-1)/(2L)} \right)^2
      \sum_{i=1}^d \beta_i^2\lambda_i^2 e^{-\tau^\dagger\lambda_i} \\
      & \le C A^{-2(L-1)}S_d(\tau^\dagger)^{1/L}.
    \end{split}
  \end{equation}

  Combining \cref{phased:eq:recent-teacher-integral-bound,phased:eq:early-teacher-integral-bound} when $\tau_-\le\tau^\dagger/2$, and using \cref{phased:eq:recent-teacher-integral-bound} alone otherwise, we obtain
  \begin{equation}
    \left\|
    \int_{\tau_-}^{\tau^\dagger}
    e^{-(\tau^\dagger-v)\boldsymbol\Sigma}
    \sigma(v)^{-(L-1)}
    \boldsymbol\Sigma\boldsymbol\beta\,dv
    \right\|
    \le C A^{-(L-1)}S_d(\tau^\dagger)^{1/(2L)}.
    \label{phased:eq:teacher-convolution-upper}
  \end{equation}

  \substep{The term involving $\boldsymbol f_K$.}

  By \cref{phased:eq:forcing-bound,phased:eq:intermediate-defect-ratio}, for $\tau_-\le\tau\le\tau^\dagger$,
  \begin{equation}
    \|\boldsymbol f_K(\tau)\|
    \le
    C\varepsilon\sigma(\tau)^{-(L+1)}\|\boldsymbol g(\tau)\|.
    \label{phased:eq:bootstrap-forcing-bound}
  \end{equation}
  For $j\ge0$, set $x_j:=2^j(1+\tau_-)-1$, and let $J:=\max\{j\ge0:x_j\le\tau^\dagger\}$. For $0\le j\le J$, the Cauchy--Schwarz inequality and \cref{commonwork:eq:tau-loss-dynamics,phased:eq:common-kernel-lower,phased:eq:stopped-loss-bound} give
  \begin{equation}
    \begin{split}
      \int_{x_j}^{\min\{x_{j+1},\tau^\dagger\}}
      \|\boldsymbol g(v)\|\,dv
      & \le
      (1+x_j)^{1/2}
      \left\{
      \int_{x_j}^{\min\{x_{j+1},\tau^\dagger\}}
      \|\boldsymbol g(v)\|^2\,dv
      \right\}^{1/2} \\
      & \le
      c_K^{-1/2}(1+x_j)^{1/2}
      \left\{
      \mathcal L(x_j)
      -\mathcal L(\min\{x_{j+1},\tau^\dagger\})
      \right\}^{1/2} \\
      & \le
      C A^L S_d(x_j)^{1/2}.
    \end{split}
  \end{equation}
  Since $S_d$ is nondecreasing, \cref{phased:eq:intermediate-radius-bounds,phased:eq:bootstrap-forcing-bound} now imply
  \begin{equation}
    \begin{split}
      \int_{x_j}^{\min\{x_{j+1},\tau^\dagger\}}
      \|\boldsymbol f_K(v)\|\,dv
      & \le
      C\varepsilon A^{-(L+1)}S_d(x_j)^{-(L+1)/(2L)}
      \int_{x_j}^{\min\{x_{j+1},\tau^\dagger\}}
      \|\boldsymbol g(v)\|\,dv \\
      & \le
      C\frac{\varepsilon}{A}S_d(x_j)^{-1/(2L)}.
    \end{split}
    \label{phased:eq:dyadic-forcing-bound}
  \end{equation}
  These intervals cover $[\tau_-,\tau^\dagger]$. We next use \cref{phased:lem:active-mass-profile} and $1+x_j=2^j(1+\tau_-)$.

  For $p<1$,
  \begin{equation}
    \begin{split}
      \sum_{j=0}^J S_d(x_j)^{-1/(2L)}
      & \lesssim
      \sum_{j=0}^J(1+x_j)^{-(1-p)/(2L)} \\
      & \lesssim
      \sum_{j=0}^J2^{-j(1-p)/(2L)}      \\
      & \lesssim 1
      \lesssim S_d(\tau^\dagger)^{1/(2L)}.
    \end{split}
  \end{equation}

  For $p=1$, we have $S_d(x_j)\gtrsim1+j$ and $J+1\lesssim\psi(\tau^\dagger)$. Therefore,
  \begin{equation}
    \begin{split}
      \sum_{j=0}^J S_d(x_j)^{-1/(2L)}
      & \lesssim
      \sum_{j=0}^J(1+j)^{-1/(2L)}   \\
      & \lesssim
      (1+J)^{1-1/(2L)}              \\
      & \lesssim
      \psi(\tau^\dagger)^{1-1/(2L)} \\
      & \lesssim
      S_d(\tau^\dagger)^{1/(2L)}
      \psi(\tau^\dagger)^{(L-1)/L}.
    \end{split}
  \end{equation}

  For $p>1$, we have $S_d(x_j)\asymp1$ and $J+1\lesssim\psi(\tau^\dagger)$. Hence,
  \begin{equation}
    \sum_{j=0}^J S_d(x_j)^{-1/(2L)}
    \lesssim J+1
    \lesssim\psi(\tau^\dagger)
    \lesssim
    S_d(\tau^\dagger)^{1/(2L)}\psi(\tau^\dagger).
  \end{equation}

  Combining the three cases gives
  \begin{equation}
    \sum_{j=0}^J
    S_d(x_j)^{-1/(2L)}
    \le
    C S_d(\tau^\dagger)^{1/(2L)}\psi_p(\tau^\dagger).
    \label{phased:eq:forcing-regime-sum}
  \end{equation}
  Since $\psi_p$ is nondecreasing and $\tau^\dagger\le\tau_+$, \cref{phased:eq:radius-bootstrap-defect-bound} gives $\varepsilon\psi_p(\tau^\dagger)\le\bar\varepsilon$.
  Summing \cref{phased:eq:dyadic-forcing-bound} over $j$ and using \cref{phased:eq:forcing-regime-sum}, we obtain
  \begin{equation}
    \int_{\tau_-}^{\tau^\dagger}\|\boldsymbol f_K(v)\|\,dv
    \le
    C\frac{\varepsilon}{A}\psi_p(\tau^\dagger)S_d(\tau^\dagger)^{1/(2L)}\le
    C\frac{\bar\varepsilon}{A}S_d(\tau^\dagger)^{1/(2L)}.
    \label{phased:eq:normalized-forcing-regime-sums}
  \end{equation}

  \substep{Completion of the upper bound.}

  Substituting \cref{phased:eq:homogeneous-term-bound,phased:eq:teacher-convolution-upper,phased:eq:normalized-forcing-regime-sums} into \cref{phased:eq:radius-integral-bound} gives
  \begin{equation}
    \sigma(\tau^\dagger)= 2A S_d(\tau^\dagger)^{1/(2L)}
    \le
    \left(
    1+C A^{-L}+C\bar\varepsilon A^{-2}
    \right)A S_d(\tau^\dagger)^{1/(2L)}.
  \end{equation}
  Fixing $A$ sufficiently large so that $CA^{-L}+C\bar\varepsilon A^{-2}<1/2$ makes the right-hand side strictly smaller than $2A S_d(\tau^\dagger)^{1/(2L)}$, a contradiction. Thus the set defining $\tau^\dagger$ is empty. Hence, for $\tau_\circ\le\tau\le\tau_+$,
  \begin{equation}
    \sigma(\tau)\le 2A S_d(\tau)^{1/(2L)}.
    \label{phased:eq:closed-radius-upper}
  \end{equation}
  Applying \cref{phased:lem:loss-under-radius-bound} with $C_+=2A$ and $\tau_1=\tau_+$ to \cref{phased:eq:closed-radius-upper} gives
  \begin{equation}
    \mathcal L(\tau)
    \le C\frac{S_d(\tau)}{1+\tau}.
  \end{equation}
  This proves \cref{phased:eq:common-loss-upper}.

  \step{Lower bound for $\sigma(\tau)$.}

  Let $B$ be a fixed constant chosen large enough so that $B>2c_\mu c_0$, and set $\mu(\tau):=B/(1+\tau)$. Then,
  \begin{equation}
    \|\boldsymbol P_{>\mu(\tau)}
    (\boldsymbol\beta-\boldsymbol\theta(\tau))\|^2
    \le
    \frac{2}{\mu(\tau)}\mathcal L(\tau)
    = \frac{2(1+\tau)}{B} \mathcal L(\tau)
    \le \frac{C}{B} S_d(\tau).
  \end{equation}
  Define
  \begin{equation}
    \tau_B :=
    \begin{cases}
      \displaystyle
      \max\left\{
      \tau_\circ,
      c_\lambda^{-1},
      \frac{B}{\mu_\star} - 1,
      \left(\frac{B}{c_\lambda}\right)^2 - 1
      \right\}, & p=1,    \\
      \displaystyle
      \max\left\{
      \tau_\circ,
      c_\lambda^{-1},
      \frac{B}{\mu_\star} - 1
      \right\},
                & p\ne 1.
    \end{cases}
  \end{equation}
  For $\tau_B\le\tau\le\tau_+$, the definition of $\tau_B$ gives $\mu(\tau)\le\mu_\star$. Moreover, since $\tau\le\tau_+=c_0d^a$ and $B>2c_\mu c_0$, for all sufficiently large $d$ we have $\mu(\tau)\ge B/(1+\tau_+)\ge c_\mu d^{-a}$.
  Therefore, \cref{phased:lem:powerlaw-estimates} applies with $\mu(\tau) = B/(1+\tau)$ and gives
  \begin{equation}
    \|\boldsymbol P_{>\mu(\tau)}\boldsymbol\beta\|^2
    \gtrsim
    \begin{cases}
      B^{-(1-p)} (1+\tau)^{1-p},
       & p < 1, \\
      1 + \log\dfrac{c_\lambda(1+\tau)}{B},
       & p = 1, \\
      1,
       & p > 1.
    \end{cases}
  \end{equation}
  In the case $p=1$, since $\tau\ge \tau_B$, we have $1+\tau \ge (B/c_\lambda)^2$ and thus
  \begin{equation}
    1 + \log\frac{c_\lambda(1+\tau)}{B}
    \ge \frac{1}{2}\{1 + \log(1+\tau)\}.
  \end{equation}
  Therefore, by \cref{phased:eq:active-mass-profile},
  \begin{equation}
    \|\boldsymbol P_{>\mu(\tau)}\boldsymbol\beta\|^2
    \ge
    \begin{cases}
      cB^{-(1-p)}S_d(\tau), & p<1,    \\
      cS_d(\tau),           & p\ge 1.
    \end{cases}
  \end{equation}
  Thus, by choosing $B$ sufficiently large,
  \begin{equation}
    \begin{split}
      \sigma(\tau)^L
      & = \|\boldsymbol\theta(\tau)\|                              \\
      & \ge \|\boldsymbol P_{>\mu(\tau)}\boldsymbol\theta(\tau)\| \\
      & \ge
      \|\boldsymbol P_{>\mu(\tau)}\boldsymbol\beta\|
      - \|\boldsymbol P_{>\mu(\tau)}
      (\boldsymbol\beta-\boldsymbol\theta(\tau))\|                  \\
      & \ge
      \begin{cases}
        (c B^{-(1-p)/2} - C B^{-1/2})S_d(\tau)^{1/2}, & p<1,   \\
        (c - C B^{-1/2})S_d(\tau)^{1/2},              & p\ge 1
      \end{cases}
      \\
      & \ge c S_d(\tau)^{1/2}.
    \end{split}
  \end{equation}
  For $p<1$, choose $B$ so that $CB^{-p/2}\le c/2$; for $p\ge1$, choose $B$ so that $CB^{-1/2}\le c/2$. With $B$ thus fixed, the last inequality follows after adjusting $c$.
  Hence $\sigma(\tau)\ge cS_d(\tau)^{1/(2L)}$ for $\tau_B\le\tau\le\tau_+$.

  On the remaining fixed interval $\tau_\circ\le\tau\le\min\{\tau_B,\tau_+\}$, the monotonicity of $S_d$ and \cref{phased:eq:active-mass-profile} give $S_d(\tau)\lesssim1$ because $\tau_B$ is fixed, while \cref{phased:eq:common-radius-lower} gives $\sigma(\tau)\gtrsim1$.
  After decreasing $c$ if necessary, the lower bound holds throughout $[\tau_\circ,\tau_+]$. Together, this lower bound and \cref{phased:eq:closed-radius-upper} prove \cref{phased:eq:common-radius-law}.
  Finally, combining \cref{phased:eq:common-loss-upper,phased:eq:active-mass-profile} proves \cref{phased:eq:coarse-loss-regime-split}.
\end{proof}

The following corollary gives a precise version of
\cref{prop:predictor-norm-scaling}.
\begin{corollary}[Predictor-norm scaling under small initialization]
  \label{cor:small-init-predictor-norm-scaling}
  Consider Gaussian initialization satisfying \cref{phased:ass:small-initialization}. Then there exist constants $c,C,\tau_0>0$, independent of $d$, $s$, and $\tau$, such that, for all sufficiently large $d$, with probability at least $1-Ce^{-cd}$, uniformly for $\tau_0\le\tau\le\tau_+$,
  \begin{equation}
    \|\boldsymbol\theta(\tau)\|^2\asymp
    \begin{cases}
      \tau^{1-p}, & p < 1, \\
      \log\tau,   & p = 1, \\
      1,          & p > 1.
    \end{cases}
  \end{equation}
\end{corollary}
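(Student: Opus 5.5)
The plan is to apply \cref{phased:prop:common-radius-law} with $\tau_\circ:=\tau_{\mathrm e}$, the fixed entrance time from \cref{phased:prop:early-radius-entrance-loss}. Since $\|\boldsymbol\theta\|^2=\sigma^{2L}$, \cref{phased:eq:common-radius-law} gives $\|\boldsymbol\theta(\tau)\|^2\asymp S_d(\tau)$ on $[\tau_{\mathrm e},\tau_+]$. \Cref{phased:lem:active-mass-profile} then converts $S_d(\tau)$ into $\tau^{1-p}$, $\log\tau$, or $1$ once $\tau\ge\tau_0$, with $\tau_0$ a fixed constant chosen large enough (say $\tau_0\ge\max\{\tau_{\mathrm e},c_\lambda^{-1},e\}$) that $(1+\tau)^{1-p}\asymp\tau^{1-p}$ and $1+\log(1+\tau)\asymp\log\tau$. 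We work on the high-probability event of \cref{commonwork:lem:fixed-s-isotropy} with fixed small $\eta_0$, which gives the probability $1-Ce^{-cd}$. We also take $\delta_0\le s_0^2$, so that $s\le s_0$ and the escape-phase results apply.

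\textbf{Verifying the hypotheses.}
\begin{itemize}
\item \Cref{phased:eq:common-radius-lower,phased:eq:common-kernel-lower} are exactly \cref{phased:cor:post-entry-radius-kernel}.
\item For \cref{phased:eq:common-initial-radius-upper}, \cref{phased:eq:early-radius} at $\tau=\tau_{\mathrm e}$ gives $r(\tau_{\mathrm e})\lesssim s^L+\tau_{\mathrm e}\lesssim1$. Hence $\sigma(\tau_\circ)\lesssim1\asymp S_d(\tau_\circ)^{1/(2L)}$, since $\tau_\circ$ is fixed.
\item For \cref{phased:eq:common-initial-loss-upper}, the loss is nonincreasing, so $\mathcal L(\tau_{\mathrm e})\le\mathcal L(0)\le\frac12\|\boldsymbol\Sigma^{1/2}\boldsymbol\beta\|^2+C\lesssim1$ by \cref{commonwork:eq:initial-loss-closeness}. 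The right-hand side $S_d(\tau_\circ)/(1+\tau_\circ)$ is a fixed positive constant, so the bound holds.
\end{itemize}

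\textbf{Main step: the defect condition \cref{phased:eq:radius-bootstrap-defect-bound}.} This is where \cref{phased:ass:small-initialization} enters. Using $\varepsilon\lesssim s^2$ from \cref{commonwork:eq:init-defect} and $\psi(\tau_+)\asymp\log d$, we check the three regimes.
\begin{itemize}
\item For $p<1$: $\varepsilon\psi_p(\tau_+)=\varepsilon\lesssim\delta$.
\item For $p=1$: $\varepsilon\psi(\tau_+)^{(L-1)/L}\lesssim s^2(\log d)^{(L-1)/L}\le\delta$.
\item For $p>1$: we need $s^2\log d\lesssim1$, and the assumption gives $s^2\le\delta d^{-a(p-1)}$, which is far smaller.
\end{itemize}
So in all cases $\varepsilon\psi_p(\tau_+)\le C\delta=:\bar\varepsilon$, a fixed constant. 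I would double-check that the constants from \cref{phased:prop:common-radius-law} depend on $\bar\varepsilon$ only in the allowed way, so that no circularity arises in choosing $\delta_0$. Any fixed $\bar\varepsilon$ is admissible there, so this should be harmless.

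\textbf{Conclusion.} With all hypotheses verified, \cref{phased:prop:common-radius-law} yields $\sigma(\tau)\asymp S_d(\tau)^{1/(2L)}$ uniformly on $[\tau_{\mathrm e},\tau_+]$, with constants independent of $d$, $s$, and $\tau$. Restricting to $[\tau_0,\tau_+]$ and substituting the profile from \cref{phased:lem:active-mass-profile} gives the claim. The only real obstacle is bookkeeping: the failure probability and the "sufficiently large $d$" thresholds must be uniform in $s$. This holds because every earlier result used is stated with constants independent of $s$ for $s\le s_0$.
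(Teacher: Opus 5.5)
Your proposal is correct and follows the paper's proof essentially step by step. You set $\tau_\circ=\tau_{\mathrm e}$, verify the four standing hypotheses from the escape-phase estimates and loss monotonicity, and check $\varepsilon\psi_p(\tau_+)\lesssim\delta$ regime by regime using $\varepsilon\lesssim s^2$ and \cref{phased:ass:small-initialization}; you then apply \cref{phased:prop:common-radius-law} and convert $S_d$ via \cref{phased:lem:active-mass-profile}. Your explicit choice $\delta_0\le s_0^2$ and your remark that any fixed $\bar\varepsilon$ is admissible spell out what the paper leaves implicit in ``choosing $\delta_0$ sufficiently small.''
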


\begin{proof}
  Work on the event in \cref{commonwork:lem:fixed-s-isotropy}, which has probability at least $1-Ce^{-cd}$, and set $\tau_\circ=\tau_{\mathrm e}$.
  \Cref{phased:eq:active-mass-profile-low,phased:eq:early-radius} give \cref{phased:eq:common-initial-radius-upper}, while loss monotonicity and \cref{commonwork:lem:fixed-s-isotropy} give \cref{phased:eq:common-initial-loss-upper}.
  The uniform lower bounds \cref{phased:eq:common-radius-lower,phased:eq:common-kernel-lower} follow from \cref{phased:cor:post-entry-radius-kernel}.
  It remains to verify the additional condition \cref{phased:eq:radius-bootstrap-defect-bound} of \cref{phased:prop:common-radius-law}.

  By \cref{commonwork:lem:fixed-s-isotropy}, $\varepsilon\lesssim s^2$. If $p<1$, then $\psi_p(\tau)=1$, so $\varepsilon\psi_p(\tau_+)\lesssim s^2\lesssim\delta$. If $p=1$, then $\psi_p(\tau_+)\lesssim(\log d)^{(L-1)/L}$, and \cref{phased:ass:small-initialization} gives $\varepsilon\psi_p(\tau_+)\lesssim s^2(\log d)^{(L-1)/L}\lesssim\delta$. If $p>1$, then $\psi_p(\tau_+)\lesssim\log d$. Since $a(p-1)>0$, for all sufficiently large $d$, $\log d\lesssim d^{a(p-1)}$, and hence \cref{phased:ass:small-initialization} gives $\varepsilon\psi_p(\tau_+)\lesssim s^2d^{a(p-1)}\lesssim\delta$. Thus, after choosing the threshold $\delta_0$ in \cref{phased:ass:small-initialization} sufficiently small, all the conditions of \cref{phased:prop:common-radius-law} are satisfied.

  \Cref{phased:prop:common-radius-law} therefore gives $\sigma(\tau)\asymp S_d(\tau)^{1/(2L)}$. Since $\|\boldsymbol\theta(\tau)\|=\sigma(\tau)^L$,
  \begin{equation}
    \|\boldsymbol\theta(\tau)\|^2
    =\sigma(\tau)^{2L}\asymp S_d(\tau).
  \end{equation}
  Combining this relation with \cref{phased:lem:active-mass-profile}, and increasing $\tau_0$ by a fixed amount if necessary, proves the corollary.

\end{proof}

\begin{corollary}
  \label{cor:standard-init-scaling}
  Suppose $p<1$ and consider Gaussian initialization with $s=1$. Then there exist constants $c,C,\tau_0>0$, independent of $d$ and $\tau$, such that, for all sufficiently large $d$, with probability at least $1-Ce^{-cd}$, uniformly for $\tau_0\le\tau\le\tau_+$,
  \begin{equation}
    \|\boldsymbol\theta(\tau)\|^2\asymp\tau^{1-p}.
  \end{equation}
\end{corollary}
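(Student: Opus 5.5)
The plan mirrors the proof of \cref{cor:small-init-predictor-norm-scaling}: we verify the hypotheses of \cref{phased:prop:common-radius-law} with $\tau_\circ$ a fixed positive constant, now for $s=1$ and $p<1$. Throughout, work on the high-probability event of \cref{commonwork:lem:fixed-s-isotropy} with $s=1$, which has probability at least $1-Ce^{-cd}$. Fix any constant $\tau_\circ>0$ (say $\tau_\circ=1$). Then \cref{phased:prop:common-radius-law} gives $\sigma(\tau)\asymp S_d(\tau)^{1/(2L)}$ on $[\tau_\circ,\tau_+]$, and hence $\|\boldsymbol\theta(\tau)\|^2=\sigma^{2L}\asymp S_d(\tau)\asymp(1+\tau)^{1-p}\asymp\tau^{1-p}$ for $\tau\ge\tau_0:=\max\{\tau_\circ,c_\lambda^{-1}\}$, by \cref{phased:lem:active-mass-profile}.

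The lower bounds \cref{phased:eq:common-radius-lower,phased:eq:common-kernel-lower} follow directly from \cref{phased:lem:fixed-scale-radius-kernel}, which gives them for all $\tau\ge0$. The defect condition \cref{phased:eq:radius-bootstrap-defect-bound} is immediate for $p<1$: here $\psi_p\equiv1$ and $\varepsilon\lesssim s^2=1$ by \cref{commonwork:eq:init-defect}, so we may take $\bar\varepsilon$ to be this constant. This is precisely why the statement restricts to $p<1$; for $p\ge1$ the growing factor $\psi_p(\tau_+)$ would not be controlled by an $O(1)$ imbalance. The initial loss condition \cref{phased:eq:common-initial-loss-upper} is also easy. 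By loss monotonicity and \cref{commonwork:eq:initial-loss-closeness} with $r(0)\lesssim1$, we have $\mathcal L(\tau_\circ)\le\mathcal L(0)\lesssim1$. On the other side, $S_d(\tau_\circ)/(1+\tau_\circ)\asymp1$ because $\tau_\circ$ is a fixed constant and $S_d(\tau_\circ)\asymp1$.

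The main obstacle is the initial radius bound \cref{phased:eq:common-initial-radius-upper}, namely $\sigma(\tau_\circ)\lesssim1$. Under small initialization this came from the escape estimate \cref{phased:eq:early-radius}; here we must bound growth of $\sigma$ over the fixed $\tau$-interval $[0,\tau_\circ]$ directly. I would use the $\tau$-dynamics of $\boldsymbol z$ from \cref{phased:cor:rescaled-predictor-tau-dynamics}. Since $\|\boldsymbol z\|=\sigma$, $\sigma\gtrsim1$, and $\|\boldsymbol J_L\overline{\boldsymbol K}_L\|_{\mathrm{op}}\lesssim1$ (by \cref{phased:lem:fixed-scale-radius-kernel}), we get
\begin{equation}
  \left|\frac{d\sigma}{d\tau}\right|\le\left\|\frac{d\boldsymbol z}{d\tau}\right\|\le C\sigma^{-(L-1)}\|\boldsymbol g\|\le C,
\end{equation}
where $\|\boldsymbol g\|\lesssim1$ follows from loss monotonicity. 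Integrating gives $\sigma(\tau_\circ)\le\sigma(0)+C\tau_\circ\lesssim1$, using $\sigma(0)\asymp1$ from \cref{commonwork:eq:init-radius}.

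Alternatively, one can argue in $t$: because $d\tau/dt\ge c$, the time $t(\tau_\circ)$ is bounded, and the displacement estimate \cref{commonwork:eq:finite-time-displacement} keeps the weights, and hence $\sigma$, bounded. With all four conditions and the defect bound verified, \cref{phased:prop:common-radius-law} applies and yields the claim.
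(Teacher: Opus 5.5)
Your proposal is correct and follows the paper's proof essentially step for step. You verify the hypotheses of \cref{phased:prop:common-radius-law} at a fixed $\tau_\circ$ (the paper takes $\tau_\circ=c_\lambda^{-1}$), using \cref{phased:lem:fixed-scale-radius-kernel} for the uniform lower bounds, $\psi_p\equiv1$ with $\varepsilon\lesssim1$ for the defect condition, and loss monotonicity for the initial loss and $\|\boldsymbol g\|\lesssim1$; the only cosmetic difference is that you bound $\sigma(\tau_\circ)$ through $\|d\boldsymbol z/d\tau\|$, whereas the paper integrates $\|\overline{\boldsymbol K}_L\boldsymbol g\|$ to bound $r(\tau_\circ)$ directly, and both rest on the same bounded-kernel, bounded-gradient estimate.
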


\begin{proof}
  Work on the event in \cref{commonwork:lem:fixed-s-isotropy}, and set $\tau_\circ=c_\lambda^{-1}$.
  Then \cref{phased:eq:active-mass-profile-low} gives $S_d(\tau_\circ)\asymp1$.
  Loss monotonicity and \cref{commonwork:lem:fixed-s-isotropy} give $\mathcal L(v)\lesssim1$ and hence $\|\boldsymbol g(v)\|\lesssim1$ for $0\le v\le\tau_\circ$.
  Thus, \cref{commonwork:eq:tau-predictor-dynamics,phased:eq:fixed-scale-kernel-bounds} give
  \begin{equation}
    r(\tau_\circ)
    \le r(0)+\int_0^{\tau_\circ}
    \|\overline{\boldsymbol K}_L(v)\boldsymbol g(v)\|\,dv
    \lesssim1.
  \end{equation}
  Hence $\sigma(\tau_\circ)\lesssim1$.
  Since $\tau_\circ$ is fixed and $S_d(\tau_\circ)\asymp1$, this estimate verifies \cref{phased:eq:common-initial-radius-upper}, while the loss bound above verifies \cref{phased:eq:common-initial-loss-upper}.
  The uniform lower bounds \cref{phased:eq:common-radius-lower,phased:eq:common-kernel-lower} follow from \cref{phased:lem:fixed-scale-radius-kernel}.

  Since $p<1$, we have $\psi_p(\tau)=1$, while \cref{commonwork:lem:fixed-s-isotropy} gives $\varepsilon\lesssim1$.
  Hence \cref{phased:eq:radius-bootstrap-defect-bound} holds for some fixed $\bar\varepsilon>0$, independent of $d$.
  \Cref{phased:prop:common-radius-law} therefore yields
  \begin{equation}
    \|\boldsymbol\theta(\tau)\|^2
    =\sigma(\tau)^{2L}\asymp S_d(\tau).
  \end{equation}
  By \cref{phased:lem:active-mass-profile}, $S_d(\tau)\asymp(1+\tau)^{1-p}$, and the result follows after choosing a fixed $\tau_0$ sufficiently large.
\end{proof}

\section{Common Spectral Law for the Loss}
\label{sec:common-spectral-law-for-loss}

\subsection{Difficult Tasks}
\label{phased:sec:hard-loss-law}

For $p<1$, we derive the loss scaling from the predictor-norm and loss bounds established in \cref{phased:prop:common-radius-law}.

\begin{proposition}
  \label{phased:prop:hard-loss-law}
  Suppose that $p<1$ and that the conditions of
  \cref{phased:prop:common-radius-law} hold. Then
  \begin{equation}
    \mathcal L(\tau)\asymp(1+\tau)^{-p}
    \label{phased:eq:hard-conclusion}
  \end{equation}
  for $\tau_\circ\le \tau\le \tau_+$.
\end{proposition}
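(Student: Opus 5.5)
The upper bound is already available. For $p<1$, \cref{phased:prop:common-radius-law}, specifically \cref{phased:eq:coarse-loss-regime-split}, gives $\mathcal L(\tau)\lesssim(1+\tau)^{-p}$ on $[\tau_\circ,\tau_+]$. So the plan reduces to proving the matching lower bound $\mathcal L(\tau)\gtrsim(1+\tau)^{-p}$. I would obtain it by restricting the loss to the unresolved spectral tail: $\mathcal L(\tau)\ge\mathcal L_{\le\mu}(\tau)$ with $\mu=\kappa/(1+\tau)$, for a small fixed constant $\kappa>0$ to be chosen. By the triangle inequality,
\begin{equation*}
\mathcal L_{\le\mu}(\tau)^{1/2}\ge \tfrac{1}{\sqrt2}\bigl\|\boldsymbol P_{\le\mu}\boldsymbol\Sigma^{1/2}\boldsymbol\beta\bigr\| - \tfrac{1}{\sqrt2}\bigl\|\boldsymbol P_{\le\mu}\boldsymbol\Sigma^{1/2}\boldsymbol\theta(\tau)\bigr\|.
\end{equation*}
Whenever $c_\mu d^{-a}\le\mu\le\mu_\star$, the two-sided estimate \cref{phased:eq:teacher-tail-two-sided} of \cref{phased:lem:powerlaw-estimates} makes the first term $\asymp\mu^{p/2}=\kappa^{p/2}(1+\tau)^{-p/2}$. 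The whole task is to show that the $\boldsymbol\theta$-term is a strictly smaller multiple of this quantity.

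The crude bound $\|\boldsymbol P_{\le\mu}\boldsymbol\Sigma^{1/2}\boldsymbol\theta\|^2\le\mu r^2\lesssim\kappa(1+\tau)^{-1}S_d(\tau)\asymp\kappa(1+\tau)^{-p}$ only gives $\kappa(1+\tau)^{-p}$. Comparing exponents, $\kappa$ against $\kappa^{p}$, we find $\kappa\ll\kappa^p$ as $\kappa\to0$ because $p<1$. So choosing $\kappa$ small makes the teacher term $c\kappa^{p}(1+\tau)^{-p}$ dominate the $\boldsymbol\theta$-term $C\kappa(1+\tau)^{-p}$. This uses only the predictor-norm upper bound $r(\tau)^2=\sigma^{2L}\lesssim S_d(\tau)\asymp(1+\tau)^{1-p}$ from \cref{phased:prop:common-radius-law} and \cref{phased:lem:active-mass-profile}. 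It is exactly where $p<1$ enters, and it explains why this case is separated from $p\ge1$.

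Remaining bookkeeping: the condition $\mu\le\mu_\star$ holds for $\tau\ge\kappa/\mu_\star-1$. The condition $\mu\ge c_\mu d^{-a}$ holds up to $\tau_+=c_0d^a$ provided $\kappa>2c_\mu c_0$. Since $\kappa$ must be small, I would take $c_0$ small enough, or else restrict to $\tau\le\tau_+$ with a constant adjustment, noting that for $\tau$ near $\tau_+$ monotonicity of $\mathcal L$ does not help for a lower bound. This compatibility of the small $\kappa$ with the window near $\tau_+$ is the one step I expect to need care. If needed, I would instead handle $\tau\in[\tau_+/K,\tau_+]$ using the cutoff $\mu=c_\mu d^{-a}\asymp(1+\tau)^{-1}$ directly, since the tail lower bound there is still $\asymp d^{-ap}\asymp(1+\tau)^{-p}$. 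On the remaining bounded initial interval $[\tau_\circ,\tau_B]$, the lower bound follows from $\mathcal L(\tau)\ge\mathcal L_{\le\mu_\star}$-type estimates with a fixed cutoff, or from the tail $\|\boldsymbol P_{\le\mu}\boldsymbol\Sigma^{1/2}\boldsymbol\beta\|$ at a fixed $\mu$ combined with $r\lesssim1$ there. Adjusting constants then gives \cref{phased:eq:hard-conclusion}.
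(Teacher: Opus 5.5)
Your proposal is correct and is essentially the paper's proof: the upper bound is read off from \cref{phased:eq:coarse-loss-regime-split}, and the lower bound uses the cutoff $\mu(\tau)=\mu_0/(1+\tau)$ with $\mu_0$ small, the two-sided tail estimate giving $\mu_0^{p/2}(1+\tau)^{-p/2}$ for the teacher term, and $\mu^{1/2}r(\tau)\lesssim\mu_0^{1/2}(1+\tau)^{-p/2}$ for the predictor term, with $p<1$ making the latter at most half the former. The bookkeeping you flagged is handled in the paper by your first option: it takes $\mu_0\le\mu_\star(1+\tau_\circ)$, so that $\mu(\tau)\le\mu_\star$ on the whole window, and then shrinks $c_0$ so that $2c_\mu c_0<\mu_0$, so no separate treatment of an initial interval or of $\tau$ near $\tau_+$ is needed.
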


\begin{proof}
  From \cref{phased:prop:common-radius-law,phased:eq:active-mass-profile}, we have
  \begin{equation}
    S_d(\tau)\asymp (1+\tau)^{1-p}
  \end{equation}
  for $\tau_\circ\le \tau\le \tau_+$. Therefore, from \cref{phased:eq:coarse-loss-regime-split},
  \begin{equation}
    \mathcal L(\tau)\lesssim (1+\tau)^{-p}.
    \label{phased:eq:hard-loss-upper}
  \end{equation}

  Next, we consider the lower bound.
  Fix $\mu_0>0$ sufficiently small. In particular, we take $\mu_0$ so small that $\mu_0/(1+\tau_\circ)\le\mu_\star$. With $c_\mu$ and $\mu_0$ fixed, reduce $c_0$ further if necessary so that $2c_\mu c_0<\mu_0$.
  Set $\mu(\tau):= \mu_0/(1+\tau)$. For any $\tau_\circ\le \tau\le \tau_+$, we have $\mu(\tau) \le \mu_0/(1+\tau_\circ) \le \mu_\star$. Moreover, for sufficiently large $d$, $1+\tau_+ \le 2\tau_+$, so
  \begin{equation}
    \mu(\tau) \ge \frac{\mu_0}{1+\tau_+}
    \ge \frac{\mu_0}{2c_0}d^{-a}
    \ge c_\mu d^{-a}.
  \end{equation}
  Therefore, $c_\mu d^{-a}\le \mu(\tau)\le \mu_\star$. The lower bound in \cref{phased:lem:powerlaw-estimates} gives
  \begin{equation}
    \|\boldsymbol P_{\le\mu(\tau)}
    \boldsymbol\Sigma^{1/2} \boldsymbol\beta\|
    \gtrsim \mu_0^{p/2}(1+\tau)^{-p/2}.
  \end{equation}
  From \cref{phased:prop:common-radius-law} and \cref{phased:eq:active-mass-profile}, for $\tau_\circ\le \tau\le \tau_+$ we have $r(\tau)^2 = \sigma(\tau)^{2L} \lesssim (1+\tau)^{1-p}$, so
  \begin{equation}
    \|\boldsymbol P_{\le\mu(\tau)}
    \boldsymbol\Sigma^{1/2} \boldsymbol\theta(\tau)\|
    \le \mu(\tau)^{1/2} r(\tau)
    \lesssim \mu_0^{1/2} (1+\tau)^{-p/2}.
  \end{equation}
  Consequently,
  \begin{equation}
    \|\boldsymbol P_{\le\mu(\tau)}
    \boldsymbol\Sigma^{1/2}\boldsymbol\theta(\tau)\|
    \lesssim
    \mu_0^{(1-p)/2}
    \|\boldsymbol P_{\le\mu(\tau)}
    \boldsymbol\Sigma^{1/2}\boldsymbol\beta\|.
  \end{equation}
  Since $p<1$, taking $\mu_0$ sufficiently small gives
  \begin{equation}
    \|\boldsymbol P_{\le\mu(\tau)}
    \boldsymbol\Sigma^{1/2}\boldsymbol\theta(\tau)\|
    \le \frac12
    \|\boldsymbol P_{\le\mu(\tau)}
    \boldsymbol\Sigma^{1/2}\boldsymbol\beta\|.
  \end{equation}
  Therefore,
  \begin{equation}
    \begin{split}
      \|\boldsymbol P_{\le\mu(\tau)}
      \boldsymbol\Sigma^{1/2}
      (\boldsymbol\beta-\boldsymbol\theta(\tau))\|
      & \ge
      \|\boldsymbol P_{\le\mu(\tau)}
      \boldsymbol\Sigma^{1/2} \boldsymbol\beta\|
      - \|\boldsymbol P_{\le\mu(\tau)}
      \boldsymbol\Sigma^{1/2} \boldsymbol\theta(\tau)\| \\
      & \ge
      \frac{1}{2}
      \|\boldsymbol P_{\le\mu(\tau)}
      \boldsymbol\Sigma^{1/2} \boldsymbol\beta\|        \\
      & \gtrsim
      \mu_0^{p/2}(1+\tau)^{-p/2}.
    \end{split}
  \end{equation}
  Hence, for $\tau_\circ\le \tau\le \tau_+$,
  \begin{equation}
    \mathcal L(\tau)
    \ge
    \frac12\|\boldsymbol P_{\le\mu(\tau)}
    \boldsymbol\Sigma^{1/2}
    (\boldsymbol\beta-\boldsymbol\theta(\tau))\|^2
    \ge c(1+\tau)^{-p}.
  \end{equation}
\end{proof}

\subsection{Boundary and Easy Tasks}
\label{phased:sec:pgeone-loss-law}

Throughout this subsection, suppose that $p\ge1$ and that \cref{phased:ass:small-initialization} holds.
We choose $0<\delta_0\le s_0^2$ sufficiently small to satisfy the conditions imposed below, and fix an arbitrary $0<\delta\le\delta_0$.
For all sufficiently large $d$, we work on the event in \cref{commonwork:lem:fixed-s-isotropy}, which has probability at least
$1-Ce^{-cd}$.
We use \cref{phased:prop:common-radius-law} with $\tau_\circ=\tau_{\mathrm e}$, as established in the proof of \cref{cor:small-init-predictor-norm-scaling}.

\begin{lemma}
  \label{phased:lem:exact-residual-decomposition}
  Suppose $0<s\le s_0$. Then, for all sufficiently large $d$ and every $\tau_{\mathrm e}\le \tau\le \tau_+$,
  \begin{equation}
    \begin{split}
      \boldsymbol\beta-\boldsymbol\theta(\tau)
      & =
      \left(\boldsymbol I-
      \sigma(\tau)^{L-1}\boldsymbol\Sigma
      \int_0^\tau e^{-(\tau-v)\boldsymbol\Sigma}
      \sigma(v)^{-(L-1)}\,dv\right)\boldsymbol\beta \\
      & \qquad-
      \sigma(\tau)^{L-1}
      e^{-(\tau-\tau_{\mathrm e})\boldsymbol\Sigma}
      \boldsymbol\xi_{\mathrm e}
      -
      \sigma(\tau)^{L-1}
      \int_{\tau_{\mathrm e}}^\tau e^{-(\tau-v)\boldsymbol\Sigma}
      \boldsymbol f_K(v)\,dv.
    \end{split}
    \label{phased:eq:exact-residual-decomposition}
  \end{equation}
\end{lemma}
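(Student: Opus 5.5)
This identity is algebraic: it follows by subtracting the representation of \cref{phased:cor:post-entry-representation} from $\boldsymbol\beta$. We work under $0<s\le s_0$ and for $d$ large enough that \cref{phased:prop:early-radius-entrance-loss,phased:cor:post-entry-radius-kernel} apply. Then $r(\tau)>0$ for all $\tau\ge0$: on $[0,\tau_{\mathrm e}]$ this comes from \cref{phased:eq:early-radius}, and for $\tau\ge\tau_{\mathrm e}$ from \cref{phased:eq:post-entry-radius-lower}. Consequently $\sigma$, $\boldsymbol z$, and the time change $\tau(t)$ are well defined, and the variation of constants formula \cref{phased:eq:variation-constants-formula} is valid on the whole range.

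Fix $\tau\in[\tau_{\mathrm e},\tau_+]$. We apply \cref{phased:cor:post-entry-representation} with the remainder $\boldsymbol R(\tau)$ written out via \cref{phased:eq:post-entry-remainder}. This gives
\begin{equation*}
  \boldsymbol\theta(\tau)=\sigma(\tau)^{L-1}\int_0^\tau e^{-(\tau-v)\boldsymbol\Sigma}\sigma(v)^{-(L-1)}\,dv\,\boldsymbol\Sigma\boldsymbol\beta+\sigma(\tau)^{L-1}e^{-(\tau-\tau_{\mathrm e})\boldsymbol\Sigma}\boldsymbol\xi_{\mathrm e}+\sigma(\tau)^{L-1}\int_{\tau_{\mathrm e}}^\tau e^{-(\tau-v)\boldsymbol\Sigma}\boldsymbol f_K(v)\,dv.
\end{equation*}
In the first term the scalar $\sigma(v)^{-(L-1)}$ can be pulled inside, and $\boldsymbol\Sigma$ can be moved to the left of the integral. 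Both moves are justified because $e^{-(\tau-v)\boldsymbol\Sigma}$ and $\boldsymbol\Sigma$ are diagonal and therefore commute.

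Finally, we subtract this expression from $\boldsymbol\beta=\boldsymbol I\boldsymbol\beta$ and collect the $\boldsymbol\beta$-terms into a single operator acting on $\boldsymbol\beta$. This produces exactly \cref{phased:eq:exact-residual-decomposition}.

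There is no substantive obstacle. The only point requiring care is that every ingredient be valid on the interval: positivity of $\sigma$ for the rescaling and the $\tau$ reparameterization, and the definition of $\boldsymbol\xi_{\mathrm e}$ through \cref{phased:eq:entrance-remainder-definition}. Both are already provided by the cited results.
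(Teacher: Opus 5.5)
Your proposal is correct and matches the paper's proof: substitute the definition of $\boldsymbol R(\tau)$ from \cref{phased:eq:post-entry-remainder} into \cref{phased:eq:post-entry-theta}, subtract from $\boldsymbol\beta$, and rearrange. Your justification for moving $\boldsymbol\Sigma$ past $e^{-(\tau-v)\boldsymbol\Sigma}$, and your check that $\sigma>0$ on the whole range, are the only bookkeeping the identity needs.
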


\begin{proof}
  Substitute \cref{phased:eq:post-entry-remainder} into \cref{phased:eq:post-entry-theta} and rearrange.
\end{proof}

\begin{lemma}
  \label{phased:lem:low-frequency-teacher-response}
  Suppose that $p\ge1$ and that $\sigma(\tau)\asymp(s^L+\tau)^{1/L}$ for $0\le\tau\le\tau_{\mathrm e}$ and $\sigma(\tau)\asymp S_d(\tau)^{1/(2L)}$ for $\tau_{\mathrm e}\le\tau\le\tau_+$. Then, for every $\tau_{\mathrm e}\le\tau\le\tau_+$ and every eigenvalue $\lambda_i$ of $\boldsymbol\Sigma$,
  \begin{equation}
    \begin{split}
      1-C\lambda_i(1+\tau)
      \le
      1-
      \sigma(\tau)^{L-1}\lambda_i
      \int_0^\tau
      e^{-(\tau-v)\lambda_i}
      \sigma(v)^{-(L-1)}\,dv\le 1.
    \end{split}
    \label{phased:eq:low-frequency-teacher-response}
  \end{equation}
\end{lemma}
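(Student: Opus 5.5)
The upper bound $\le 1$ is immediate, because the integrand $e^{-(\tau-v)\lambda_i}\sigma(v)^{-(L-1)}$ is nonnegative and $\sigma(\tau)^{L-1}\lambda_i\ge0$. The work is in the lower bound. It suffices to show
\begin{equation*}
  I_i(\tau):=\sigma(\tau)^{L-1}\lambda_i\int_0^\tau e^{-(\tau-v)\lambda_i}\sigma(v)^{-(L-1)}\,dv\le C\lambda_i(1+\tau).
\end{equation*}
I will bound $e^{-(\tau-v)\lambda_i}\le1$ and drop the exponential. This reduces the claim to
\begin{equation*}
  \sigma(\tau)^{L-1}\int_0^\tau\sigma(v)^{-(L-1)}\,dv\lesssim 1+\tau,
\end{equation*}
uniformly for $\tau_{\mathrm e}\le\tau\le\tau_+$, with a constant independent of $i$, $d$ and $s$. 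This is the rigorous form of the self-consistency ansatz in the main text.

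I will split the integral at $\tau_{\mathrm e}$. On $[0,\tau_{\mathrm e}]$ the hypothesis gives $\sigma(v)^{-(L-1)}\asymp(s^L+v)^{-(L-1)/L}$. Its integral is at most $\int_0^{\tau_{\mathrm e}}v^{-(L-1)/L}\,dv=L\tau_{\mathrm e}^{1/L}\lesssim1$, which is integrable at $0$ and uniform in $s$. On $[\tau_{\mathrm e},\tau]$ I will use $\sigma(v)^{-(L-1)}\asymp S_d(v)^{-(L-1)/(2L)}$. Since $p\ge1$, \cref{phased:lem:active-mass-profile} together with monotonicity of $S_d$ and $S_d(\tau_{\mathrm e})\asymp\tau_{\mathrm e}^2\asymp1$ gives $S_d(v)\gtrsim1$ there. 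So this piece is $\lesssim\tau$.

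It remains to bound the prefactor $\sigma(\tau)^{L-1}\asymp S_d(\tau)^{(L-1)/(2L)}$. For $p>1$, $S_d(\tau)\asymp1$, and the proof is finished. The case $p=1$ is the main obstacle: $S_d(\tau)\asymp\psi(\tau)$, so the crude bound produces an extra $\psi(\tau)^{(L-1)/(2L)}$ factor. I will instead pair the prefactor with the integrand. Monotonicity of $S_d$ gives $S_d(\tau)^{(L-1)/(2L)}S_d(v)^{-(L-1)/(2L)}$, and I split $[\tau_{\mathrm e},\tau]$ at $\sqrt{\tau}$, as in the $p=1$ computation in the proof of \cref{phased:prop:common-radius-law}.

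For $v\ge\sqrt\tau$ (and $\tau$ large), $\psi(v)\ge\tfrac12\psi(\tau)$, so the ratio is bounded and this piece contributes $\lesssim\tau$. For $v<\sqrt\tau$ the ratio is at most $\psi(\tau)^{(L-1)/(2L)}$, so this piece contributes $\lesssim\sqrt\tau\,\psi(\tau)^{(L-1)/(2L)}\lesssim1+\tau$. The early interval $[0,\tau_{\mathrm e}]$ contributes $\psi(\tau)^{(L-1)/(2L)}\lesssim1+\tau$ as well. Small $\tau$, where $\sqrt\tau<\tau_{\mathrm e}$ or $\tau$ lies in a fixed compact range, is handled by boundedness of $\psi$ there.

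Collecting the pieces gives $I_i(\tau)\le C\lambda_i(1+\tau)$, which is the lower bound in \cref{phased:eq:low-frequency-teacher-response}.
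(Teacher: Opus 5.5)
Your proof is correct and follows essentially the same route as the paper. Both arguments drop the exponential, bound the $[0,\tau_{\mathrm e}]$ piece uniformly in $s$, and for $p=1$ split $[\tau_{\mathrm e},\tau]$ near $\sqrt{\tau}$ (the paper uses $\sqrt{1+\tau}-1$), so that $\psi(v)\ge\frac12\psi(\tau)$ on the upper piece while the lower piece costs only $\sqrt{\tau}\,\psi(\tau)^{(L-1)/(2L)}\lesssim 1+\tau$. Your caveats about large $\tau$ and about $\sqrt\tau<\tau_{\mathrm e}$ are unnecessary: $(1+\sqrt\tau)^2\ge 1+\tau$ gives $\psi(v)\ge\frac12\psi(\tau)$ for every $v\ge\sqrt\tau$, and $\tau_{\mathrm e}<1$ forces $\sqrt\tau>\tau_{\mathrm e}$.
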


\begin{proof}
  Note that $e^{-(\tau-v)\lambda_i}\le1$ for all $0\le v\le\tau$.
  Since $\sigma(v)\gtrsim(s^L+v)^{1/L}$ for $0\le v\le\tau_{\mathrm e}$,
  \begin{equation}
    \int_0^{\tau_{\mathrm e}}\sigma(v)^{-(L-1)}\,dv
    \le
    C\int_0^{\tau_{\mathrm e}}
    (s^L+v)^{-(L-1)/L}\,dv
    \le C.
  \end{equation}
  For $p=1$, combining $\sigma(\tau)\asymp S_d(\tau)^{1/(2L)}$ with \cref{phased:lem:active-mass-profile} gives $\sigma(\tau)^{L-1} \asymp \psi(\tau)^{(L-1)/(2L)}$ for $\tau_{\mathrm e}\le\tau\le\tau_+$.

  Since $\psi(v) \ge 1$ and
  \begin{equation}
    \psi(v)
    = 1 + \log(1+v)
    \ge 1 + \frac{1}{2}\log(1+\tau)
    \ge \frac{1}{2} \psi(\tau)
  \end{equation}
  holds for $v \ge \sqrt{1+\tau} - 1$, we obtain
  \begin{equation}
    \begin{split}
      \int_{\tau_{\mathrm e}}^\tau\psi(v)^{-(L-1)/(2L)}\,dv
      & \le
      \int_0^{\sqrt{1+\tau}-1}\psi(v)^{-(L-1)/(2L)}\,dv
      +\int_{\sqrt{1+\tau}-1}^\tau\psi(v)^{-(L-1)/(2L)}\,dv \\
      & \le
      \sqrt{1+\tau}
      +C(1+\tau)\psi(\tau)^{-(L-1)/(2L)}                    \\
      & \le C(1+\tau)\psi(\tau)^{-(L-1)/(2L)}.
    \end{split}
    \label{phased:eq:logarithmic-integral-estimate}
  \end{equation}
  Therefore,
  \begin{equation}
    \begin{split}
      \sigma(\tau)^{L-1}
      \int_{\tau_{\mathrm e}}^\tau
      \sigma(v)^{-(L-1)}\,dv
      & \le
      C\psi(\tau)^{(L-1)/(2L)}
      \int_{\tau_{\mathrm e}}^\tau
      \psi(v)^{-(L-1)/(2L)}\,dv \\
      & \le
      C(1+\tau).
    \end{split}
  \end{equation}

  The contribution from $[0,\tau_{\mathrm e}]$ is at most $C\sigma(\tau)^{L-1}\le C(1+\tau)$, so the same bound holds over the full interval $[0,\tau]$.

  For $p > 1$, using the same argument and the fact that $\sigma(\tau)\asymp 1$, we get
  \begin{equation}
    \sigma(\tau)^{L-1}
    \int_0^\tau\sigma(v)^{-(L-1)}\,dv
    \le C(1+\tau).
  \end{equation}
  Therefore,
  \begin{equation}
    0 \le
    \sigma(\tau)^{L-1}\lambda_i
    \int_0^\tau e^{-(\tau-v)\lambda_i}\sigma(v)^{-(L-1)}\,dv
    \le C\lambda_i(1+\tau).
  \end{equation}
  This proves \cref{phased:eq:low-frequency-teacher-response}.
\end{proof}

Define
\begin{equation}
  \overline{\mathcal L}_p
  :=
  \sup_{\tau_{\mathrm e}\le \tau\le \tau_+}
  (1+\tau)^p\mathcal L(\tau).
  \label{phased:eq:pgeone-bootstrap}
\end{equation}

\begin{lemma}
  \label{phased:lem:pgeone-residual-upper}
  Under the assumptions stated at the beginning of this subsection, for
  $\tau_{\mathrm e} \le \tau \le \tau_+$ and $\mu > 0$, we have
  \begin{equation}
    \begin{split}
      &\left\|\boldsymbol P_{\le\mu}\boldsymbol\Sigma^{1/2}\sigma(\tau)^{L-1}\left(e^{-(\tau-\tau_{\mathrm e})\boldsymbol\Sigma}\boldsymbol\xi_{\mathrm e}+\int_{\tau_{\mathrm e}}^\tau e^{-(\tau-v)\boldsymbol\Sigma}\boldsymbol f_K(v)\,dv\right)\right\| \\
      &\qquad\qquad\qquad\le C\mu^{1/2}\left(sS_d(\tau_+)^{(L-1)/(2L)}+\varepsilon \overline{\mathcal L}_p^{1/2}S_d(\tau_+)^{(L-1)/L}\right).
    \end{split}
    \label{phased:eq:pgeone-perturbation-upper}
  \end{equation}
\end{lemma}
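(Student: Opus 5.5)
The plan is to split the quantity inside the norm into the entrance term $\sigma(\tau)^{L-1}e^{-(\tau-\tau_{\mathrm e})\boldsymbol\Sigma}\boldsymbol\xi_{\mathrm e}$ and the forcing term $\sigma(\tau)^{L-1}\int_{\tau_{\mathrm e}}^\tau e^{-(\tau-v)\boldsymbol\Sigma}\boldsymbol f_K(v)\,dv$, and to bound each by the triangle inequality. The factor $\mu^{1/2}$ comes out immediately, because $\|\boldsymbol P_{\le\mu}\boldsymbol\Sigma^{1/2}\|_{\mathrm{op}}\le\mu^{1/2}$ and $\|e^{-u\boldsymbol\Sigma}\|_{\mathrm{op}}\le1$ for $u\ge0$. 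It then suffices to show
\[
\sigma(\tau)^{L-1}\|\boldsymbol\xi_{\mathrm e}\|\lesssim sS_d(\tau_+)^{(L-1)/(2L)}
\]
and
\[
\sigma(\tau)^{L-1}\int_{\tau_{\mathrm e}}^\tau\|\boldsymbol f_K(v)\|\,dv\lesssim\varepsilon\overline{\mathcal L}_p^{1/2}S_d(\tau_+)^{(L-1)/L}.
\]
Throughout I would use the conclusion of \cref{phased:prop:common-radius-law} with $\tau_\circ=\tau_{\mathrm e}$, which was verified in the proof of \cref{cor:small-init-predictor-norm-scaling}. It gives $\sigma(v)\asymp S_d(v)^{1/(2L)}$ on $[\tau_{\mathrm e},\tau_+]$, so $\sigma(\tau)^{L-1}\lesssim S_d(\tau_+)^{(L-1)/(2L)}$ by monotonicity of $S_d$.

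\textbf{Entrance term.} This is immediate from \cref{phased:cor:entrance-remainder}, which gives $\|\boldsymbol\xi_{\mathrm e}\|\le Cs$; multiplying by the bound on $\sigma(\tau)^{L-1}$ yields the first contribution.

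\textbf{Forcing term.} On $[\tau_{\mathrm e},\tau_+]$ we have $\sigma\gtrsim1$ by \cref{phased:cor:post-entry-radius-kernel}, and $\varepsilon\lesssim s^2\le\delta_0$, so \cref{phased:eq:forcing-bound} applies. Since $S_d\gtrsim1$ there, it gives $\|\boldsymbol f_K(v)\|\le C\varepsilon\sigma(v)^{-(L+1)}\|\boldsymbol g(v)\|\le C\varepsilon\|\boldsymbol g(v)\|$. It remains to show $\int_{\tau_{\mathrm e}}^\tau\|\boldsymbol g\|\,dv\lesssim\overline{\mathcal L}_p^{1/2}S_d(\tau_+)^{(L-1)/(2L)}$, which combined with the prefactor gives the power $(L-1)/L$ in the statement. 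I would use the dyadic decomposition from the proof of \cref{phased:prop:common-radius-law}: set $x_j=2^j(1+\tau_{\mathrm e})-1$. On each block, Cauchy--Schwarz together with \cref{commonwork:eq:tau-loss-dynamics} and $\overline{\boldsymbol K}_L\succeq c_K\boldsymbol I$ gives
\[
\int_{x_j}^{x_{j+1}}\|\boldsymbol g\|\,dv\le C(1+x_j)^{1/2}\mathcal L(x_j)^{1/2}\le C\overline{\mathcal L}_p^{1/2}(1+x_j)^{(1-p)/2},
\]
where the last step uses the definition \cref{phased:eq:pgeone-bootstrap}.

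\textbf{Main obstacle: summing the blocks.} For $p>1$ the block bounds decay geometrically, so the sum is $\lesssim\overline{\mathcal L}_p^{1/2}$, which is enough. For $p=1$ each block contributes $\overline{\mathcal L}_p^{1/2}$ and there are about $\psi(\tau)$ blocks, so the naive sum carries a factor $\log\tau$. The bound $S_d(\tau_+)^{(L-1)/(2L)}\asymp\psi(\tau_+)^{(L-1)/(2L)}$ does not absorb a full $\psi$, so the crude estimate $\sigma(v)^{-(L+1)}\lesssim1$ must be refined. I would keep the weight $\sigma(v)^{-(L+1)}\asymp\psi(v)^{-(L+1)/(2L)}$ inside each block, which gives a total of
\[
\varepsilon\overline{\mathcal L}_p^{1/2}\sum_j(1+j)^{-(L+1)/(2L)}\lesssim\varepsilon\overline{\mathcal L}_p^{1/2}\psi(\tau)^{(L-1)/(2L)}.
\]
Multiplying by $\sigma(\tau)^{L-1}\asymp\psi(\tau)^{(L-1)/(2L)}$ produces exactly $\psi^{(L-1)/L}\asymp S_d(\tau_+)^{(L-1)/L}$. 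Getting this exponent bookkeeping right in the $p=1$ case is where I expect the care to be needed.
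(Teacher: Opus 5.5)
Your proposal is correct and follows the paper's proof essentially step for step. The shared steps are the split using $\|\boldsymbol P_{\le\mu}\boldsymbol\Sigma^{1/2}\|_{\mathrm{op}}\le\mu^{1/2}$, the bound $\|\boldsymbol\xi_{\mathrm e}\|\le Cs$ combined with $\sigma(\tau)^{L-1}\lesssim S_d(\tau_+)^{(L-1)/(2L)}$, and a dyadic Cauchy--Schwarz estimate of $\int\|\boldsymbol f_K\|$ that keeps the weight $\sigma(v)^{-(L+1)}\lesssim S_d(x_j)^{-(L+1)/(2L)}$ on each block; the paper uses $x_j=2^j\tau_{\mathrm e}$ instead of your shifted points, which is immaterial.

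The one thing to clean up is your stated reduction to bounding $\int\|\boldsymbol g\|\,dv$ alone by $\overline{\mathcal L}_p^{1/2}S_d(\tau_+)^{(L-1)/(2L)}$. For $p=1$ the unweighted blockwise bounds only give order $\overline{\mathcal L}_p^{1/2}\psi(\tau)$, so, as you already observed, you should write the weighted block sum directly rather than that intermediate reduction.
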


\begin{proof}
  From \cref{phased:prop:common-radius-law}, for $\tau_{\mathrm e}\le \tau\le \tau_+$,
  \begin{equation}
    \sigma(\tau)^2\asymp S_d(\tau)^{1/L}\asymp
    \begin{cases}
      \psi(\tau)^{1/L}, & p = 1, \\
      1,                & p > 1.
    \end{cases}
  \end{equation}

  \step{Upper bound for $\int_{\tau_{\mathrm e}}^{\tau_+}\|\boldsymbol f_K(v)\|\,dv$.}

  If $L=1$, then $\varepsilon=0$ and hence $\boldsymbol f_K\equiv\boldsymbol0$, so \cref{phased:eq:pgeone-forcing-sum} holds immediately. For the remainder of this step, assume $L\ge2$.

  Define $x_j := 2^j \tau_{\mathrm e}$ for $j=0,1,\ldots$ and let $J := \max\{j\ge 0: x_j \le \tau_+\}$. Then, we can decompose $[\tau_{\mathrm e},\tau_+]$ as $\bigcup_{j=0}^{J-1} [x_j,2x_j] \cup [x_J,\tau_+] = \bigcup_{j=0}^J [x_j,\min\{2x_j,\tau_+\}]$.

  The Cauchy--Schwarz inequality and \cref{commonwork:eq:tau-loss-dynamics,phased:eq:common-kernel-lower} give
  \begin{equation}
    \begin{split}
      \int_{x_j}^{\min\{2x_j,\tau_+\}}\|\boldsymbol g(v)\|\,dv
      & \le
      x_j^{1/2}
      \left\{\int_{x_j}^{\min\{2x_j,\tau_+\}}
      \|\boldsymbol g(v)\|^2\,dv\right\}^{1/2} \\
      & \le
      c_K^{-1/2}x_j^{1/2}
      \{\mathcal L(x_j)-
      \mathcal L(\min\{2x_j,\tau_+\})\}^{1/2}  \\
      & \le
      C \overline{\mathcal L}_p^{1/2}(1+x_j)^{(1-p)/2}.
    \end{split}
  \end{equation}

  From \cref{phased:prop:common-radius-law} and the monotonicity of $S_d$, for $x_j\le v\le\min\{2x_j,\tau_+\}$ we have
  \begin{equation}
    \sigma(v)^{-(L+1)}
    \le C S_d(v)^{-(L+1)/(2L)}
    \le C S_d(x_j)^{-(L+1)/(2L)}.
  \end{equation}

  Therefore,
  \begin{equation}
    \begin{split}
      \int_{x_j}^{\min\{2x_j,\tau_+\}}\|\boldsymbol f_K(v)\|\,dv
      & \le
      C\varepsilon
      \sup_{x_j\le v\le\min\{2x_j,\tau_+\}}\sigma(v)^{-(L+1)}
      \int_{x_j}^{\min\{2x_j,\tau_+\}}\|\boldsymbol g(v)\|\,dv \\
      & \le
      C\varepsilon  \overline{\mathcal L}_p^{1/2}
      (1+x_j)^{(1-p)/2}S_d(x_j)^{-(L+1)/(2L)}.
    \end{split}
  \end{equation}

  For $p=1$, we have $(1+x_j)^{(1-p)/2}=1$. Using $S_d(x_j)\asymp \psi(x_j) \asymp 1+j$ and $J+1=1+\left\lfloor\log_2\frac{\tau_+}{\tau_{\mathrm e}}\right\rfloor\le C\psi(\tau_+)$, we get
  \begin{equation}
    \begin{split}
      \sum_{j=0}^J (1+j)^{-(L+1)/(2L)}
      & \le
      1+\int_0^J (1+x)^{-(L+1)/(2L)}\,dx \\
      & \le C (1+J)^{(L-1)/(2L)}        \\
      & \le C S_d(\tau_+)^{(L-1)/(2L)}.
    \end{split}
  \end{equation}

  For $p>1$, because $S_d(x_j)\asymp 1$, it follows that
  \begin{equation}
    \sum_{j=0}^J
    (1+x_j)^{(1-p)/2}S_d(x_j)^{-(L+1)/(2L)}
    \le C\sum_{j\ge0}2^{j(1-p)/2}
    \le C.
  \end{equation}

  Therefore,
  \begin{equation}
    \int_{\tau_{\mathrm e}}^{\tau_+}\|\boldsymbol f_K(v)\|\,dv
    \le
    C\varepsilon  \overline{\mathcal L}_p^{1/2}S_d(\tau_+)^{(L-1)/(2L)}.
    \label{phased:eq:pgeone-forcing-sum}
  \end{equation}

  \step{Upper bound for the two terms involving $\boldsymbol\xi_{\mathrm e}$ and $\boldsymbol f_K$.}

  From \cref{phased:eq:entrance-remainder-bound} and $\sigma(\tau)^{L-1}\le CS_d(\tau_+)^{(L-1)/(2L)}$, we get
  \begin{equation}
    \left\|
    \boldsymbol P_{\le\mu}\boldsymbol\Sigma^{1/2}
    \sigma(\tau)^{L-1}
    e^{-(\tau-\tau_{\mathrm e})\boldsymbol\Sigma}
    \boldsymbol\xi_{\mathrm e}
    \right\|
    \le
    \mu^{1/2}\sigma(\tau)^{L-1}
    \|\boldsymbol\xi_{\mathrm e}\|
    \le
    C\mu^{1/2}sS_d(\tau_+)^{(L-1)/(2L)}.
  \end{equation}
  From \cref{phased:eq:pgeone-forcing-sum}, we get
  \begin{equation}
    \begin{split}
      \left\|
      \boldsymbol P_{\le\mu}\boldsymbol\Sigma^{1/2}
      \sigma(\tau)^{L-1}
      \int_{\tau_{\mathrm e}}^\tau
      e^{-(\tau-v)\boldsymbol\Sigma}\boldsymbol f_K(v)\,dv
      \right\|
      & \le
      \mu^{1/2}\sigma(\tau)^{L-1}
      \int_{\tau_{\mathrm e}}^\tau\|\boldsymbol f_K(v)\|\,dv \\
      & \le
      C\mu^{1/2}\varepsilon \overline{\mathcal L}_p^{1/2}S_d(\tau_+)^{(L-1)/L}.
    \end{split}
  \end{equation}
\end{proof}

\begin{proposition}
  \label{phased:prop:pgeone-loss-law}
  Suppose $p \ge 1$ and consider Gaussian initialization satisfying \cref{phased:ass:small-initialization}. Then there exist constants $c,C>0$ such that, for all sufficiently large $d$, with probability at least $1-Ce^{-cd}$, uniformly for $\tau_{\mathrm e} \le \tau \le \tau_+$,
  \begin{equation}
    \mathcal{L}(\tau) \asymp (1 + \tau)^{-p}.
    \label{phased:eq:pgeone-loss-law}
  \end{equation}
  The implicit constants are uniform over $0<\delta\le\delta_0$.
\end{proposition}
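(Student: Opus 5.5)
The plan is to prove the upper and lower bounds separately. Both rest on the exact residual decomposition of \cref{phased:lem:exact-residual-decomposition}, restricted to the low-frequency block $\boldsymbol P_{\le\mu}$ with $\mu\asymp 1/(1+\tau)$. I work on the event of \cref{commonwork:lem:fixed-s-isotropy} and take $\tau_\circ=\tau_{\mathrm e}$. As in the proof of \cref{cor:small-init-predictor-norm-scaling}, \cref{phased:prop:common-radius-law} then gives $\sigma(\tau)\asymp S_d(\tau)^{1/(2L)}$ and $\overline{\boldsymbol K}_L\succeq c_K\boldsymbol I$ on $[\tau_{\mathrm e},\tau_+]$. \Cref{phased:prop:early-radius-entrance-loss} gives $\sigma\asymp(s^L+\tau)^{1/L}$ before $\tau_{\mathrm e}$. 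Hence the hypotheses of \cref{phased:lem:low-frequency-teacher-response} hold, so the teacher factor on the $i$-th mode lies in $[1-C\lambda_i(1+\tau),1]$.

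\textbf{Upper bound (bootstrap on $\overline{\mathcal L}_p$).} Since $\mathcal L$ is continuous on a compact interval, $\overline{\mathcal L}_p<\infty$. I apply \cref{phased:lem:loss-propagation} in the form \cref{phased:eq:polynomial-loss-propagation} with $\tau_0=\tau_{\mathrm e}$ and $\alpha=p+1$, so that $\mu(v)=\alpha/\{2c_K(1+v)\}$. I bound $\mathcal L_{\le\mu(v)}(v)$ by the three pieces of the decomposition.
\begin{itemize}
\item The teacher piece has modulus at most $\|\boldsymbol P_{\le\mu}\boldsymbol\Sigma^{1/2}\boldsymbol\beta\|$, because the factor lies in $[1-C\lambda_i(1+v),1]$ and $\lambda_i(1+v)\lesssim1$ there. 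By \cref{phased:eq:teacher-tail-upper} its square is $\lesssim(1+v)^{-p}$.
\item The perturbation pieces are controlled by \cref{phased:lem:pgeone-residual-upper}, using $\varepsilon\lesssim s^2$. For $p=1$, $S_d(\tau_+)\lesssim\log d$ and \cref{phased:ass:small-initialization} give $s^2S_d(\tau_+)^{(L-1)/L}\lesssim\delta$ and $\varepsilon^2S_d(\tau_+)^{2(L-1)/L}\lesssim\delta^2$. For $p>1$, $S_d\asymp1$ and $s^2\le\delta d^{-a(p-1)}\lesssim\delta(1+v)^{-(p-1)}$ for $v\le\tau_+$.
\end{itemize}
In both cases this yields
\[
\mathcal L_{\le\mu(v)}(v)\le C(1+\delta+\delta^2\overline{\mathcal L}_p)(1+v)^{-p}.
\]
For the initial term, $\mathcal L(\tau_{\mathrm e})\lesssim1$ by loss monotonicity, and $(1+\tau_{\mathrm e})/(1+\tau)$ raised to the power $p+1$ is $\lesssim(1+\tau)^{-p}$. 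Inserting the bound into \cref{phased:eq:polynomial-loss-propagation} and integrating $(1+v)^{\alpha-1-p}=1$ gives
\[
(1+\tau)^p\mathcal L(\tau)\le C+C\delta^2\overline{\mathcal L}_p .
\]
Taking the supremum and choosing $\delta_0$ small so that $C\delta_0^2\le1/2$ closes the bootstrap: $\overline{\mathcal L}_p\le 2C$, uniformly in $\delta\le\delta_0$.

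\textbf{Lower bound.} Set $\mu(\tau)=\mu_0/(1+\tau)$ with $\mu_0$ small. As in \cref{phased:prop:hard-loss-law}, I shrink $c_0$ so that $c_\mu d^{-a}\le\mu(\tau)\le\mu_\star$ on the whole interval. Then:
\begin{itemize}
\item On $\boldsymbol P_{\le\mu}$ the teacher factor is at least $1-C\mu_0\ge1/2$. By \cref{phased:eq:teacher-tail-two-sided} the teacher piece therefore has norm $\gtrsim\mu_0^{p/2}(1+\tau)^{-p/2}$.
\item By \cref{phased:lem:pgeone-residual-upper}, now with $\overline{\mathcal L}_p\lesssim1$, the perturbation pieces have norm $\lesssim\mu_0^{1/2}(\delta^{1/2}+\delta)(1+\tau)^{-p/2}$, using the same smallness bookkeeping as above.
\end{itemize}
The main obstacle is exactly here, because for $p\ge1$ the crude bound $\mu^{1/2}r(\tau)$ used in the $p<1$ case does not lose to the teacher tail. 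One must instead use the finer decomposition, and for $p=1$ both sides scale like $\mu_0^{1/2}$. So the gain must come from $\delta$ rather than from $\mu_0$: I fix $\mu_0$ first and then shrink $\delta_0$ until the perturbation is at most half the teacher piece. The triangle inequality then gives $\mathcal L(\tau)\ge\mathcal L_{\le\mu(\tau)}(\tau)\gtrsim(1+\tau)^{-p}$, with constants depending only on $\mu_0$ and $\delta_0$, hence uniform over $0<\delta\le\delta_0$.
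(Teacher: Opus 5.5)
Your proposal is correct and follows essentially the same route as the paper: an upper bound by bootstrapping $\overline{\mathcal L}_p$ through the loss-propagation lemma, with $\mathcal L_{\le\mu(v)}$ split into the teacher factor and the perturbation bound of \cref{phased:lem:pgeone-residual-upper}, then a lower bound on $\boldsymbol P_{\le\mu_0/(1+\tau)}$ where $\mu_0$ is fixed first and $\delta_0$ is shrunk afterwards. The differences are cosmetic. You start the propagation at $\tau_{\mathrm e}$ with $\alpha=p+1$ rather than at a large $\tau_0$, and you absorb $d^{-a(p-1)}\lesssim(1+v)^{-(p-1)}$ pointwise rather than through $(1+\tau_+)^{p-1}$. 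Also, your teacher-piece bound should read ``at most a constant times $\|\boldsymbol P_{\le\mu}\boldsymbol\Sigma^{1/2}\boldsymbol\beta\|$'', since the factor $1-C\lambda_i(1+v)$ may be negative there.
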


\begin{proof}
  Fix $\alpha > p$. Let $\tau_0 \ge \max\{\tau_{\mathrm e}, 2\}$ be sufficiently large that $\alpha/(2c_K(1+\tau_0)) \le \mu_\star$. Define $\mu(v):=\alpha/(2c_K(1+v))$ for $v\ge \tau_0$.

  \step{Upper bound for $\mathcal{L}(\tau)$.}\label{phased:step:pgeone-loss-upper}

  For $\tau_0\le v\le\tau_+$, using
  \cref{phased:eq:low-frequency-teacher-response} and the upper bound in
  \cref{phased:lem:powerlaw-estimates}, we obtain
  \begin{equation}
    \left\|
    \boldsymbol P_{\le\mu(v)}\boldsymbol\Sigma^{1/2}
    \left[\boldsymbol I-
      \sigma(v)^{L-1}\boldsymbol\Sigma
      \int_0^v e^{-(v-w)\boldsymbol\Sigma}
      \sigma(w)^{-(L-1)}\,dw\right]
    \boldsymbol\beta
    \right\|
    \le C\mu(v)^{p/2}.
  \end{equation}
  Therefore, combining this with \cref{phased:eq:pgeone-perturbation-upper}, we have
  \begin{equation}
    \begin{split}
      \mathcal L_{\le\mu(v)}(v)
      & =
      \frac12
      \|\boldsymbol P_{\le\mu(v)}
      \boldsymbol\Sigma^{1/2}
      (\boldsymbol\beta-\boldsymbol\theta(v))\|^2 \\
      & \le C\left(
      \mu(v)^p
      +\mu(v)s^2S_d(\tau_+)^{(L-1)/L}
      +\mu(v)\varepsilon ^2\overline{\mathcal L}_pS_d(\tau_+)^{2(L-1)/L}
      \right)                                     \\
      & \le C\left[
        (1+v)^{-p}
        +(1+v)^{-1}
        \left(
        s^2S_d(\tau_+)^{(L-1)/L}
        +\varepsilon ^2\overline{\mathcal L}_pS_d(\tau_+)^{2(L-1)/L}
        \right)
        \right].
    \end{split}
  \end{equation}
  Applying \cref{phased:eq:polynomial-loss-propagation} of \cref{phased:lem:loss-propagation} with initial time $\tau_0$, we obtain, for $\tau_0\le \tau\le \tau_+$,
  \begin{equation}
    \begin{split}
      \mathcal L(\tau)
      & \le
      \left(\frac{1+\tau_0}{1+\tau}\right)^\alpha\mathcal L(\tau_0) \\
      & \quad+
      C(1+\tau)^{-\alpha}
      \int_{\tau_0}^\tau
      (1+v)^{\alpha-p-1}\,dv \\
      & \quad+
      C(1+\tau)^{-\alpha}
      \int_{\tau_0}^\tau
      (1+v)^{\alpha-2}
      \left[
        s^2S_d(\tau_+)^{(L-1)/L}
        +\varepsilon ^2\overline{\mathcal L}_pS_d(\tau_+)^{2(L-1)/L}
        \right]\,dv \\
      & \le
      C(1+\tau)^{-\alpha}
      +\frac{C}{\alpha-p}(1+\tau)^{-p} \\
      & \quad
      +\frac{C}{\alpha-1}(1+\tau)^{-1}
      \left(
      s^2S_d(\tau_+)^{(L-1)/L}
      +\varepsilon ^2\overline{\mathcal L}_pS_d(\tau_+)^{2(L-1)/L}
      \right)                                                       \\
      & \le
      C\left[
        (1+\tau)^{-p}
        +(1+\tau)^{-1}
        \left(
        s^2S_d(\tau_+)^{(L-1)/L}
        +\varepsilon ^2\overline{\mathcal L}_pS_d(\tau_+)^{2(L-1)/L}
        \right)
        \right].
    \end{split}
    \label{phased:eq:pgeone-loss-upper0}
  \end{equation}
  For $\tau \in [\tau_{\mathrm e}, \tau_0]$, we have $(1+\tau)^p \mathcal L(\tau) \le C$. For sufficiently large $d$, we have $\tau_0\le \tau_+$. Thus, multiplying both sides of \cref{phased:eq:pgeone-loss-upper0} by $(1+\tau)^p$, we obtain
  \begin{equation}
    \begin{split}
      \overline{\mathcal L}_p
      \le C_0
      +C_1\left(
      s^2S_d(\tau_+)^{(L-1)/L}
      +\varepsilon ^2\overline{\mathcal L}_pS_d(\tau_+)^{2(L-1)/L}
      \right)(1+\tau_+)^{p-1}
    \end{split}
    \label{phased:eq:pgeone-bootstrap-closure}
  \end{equation}
  for some constants $C_0, C_1 > 0$.

  For $p=1$, we have $S_d(\tau_+) \asymp \log d$. Combining this with \cref{phased:eq:regime-smallness} and $\varepsilon\lesssim s^2$, we obtain
  \begin{equation}
    \begin{split}
      &s^2S_d(\tau_+)^{(L-1)/L}
      +\varepsilon ^2S_d(\tau_+)^{2(L-1)/L} \\
      &\quad\le C\left[
        s^2(\log d)^{(L-1)/L}
        +\{s^2(\log d)^{(L-1)/L}\}^2
        \right] \\
      &\quad\le C(\delta+\delta^2).
    \end{split}
    \label{phased:eq:pgeone-rate-p1}
  \end{equation}
  For $p>1$, we have $S_d(\tau_+) \asymp 1$.
  Since \cref{phased:eq:regime-smallness} gives $s^2d^{a(p-1)}\le\delta$ and $s^2\le\delta$,
  \begin{equation}
    \left(
    s^2S_d(\tau_+)^{(L-1)/L}
    +\varepsilon ^2S_d(\tau_+)^{2(L-1)/L}
    \right)(1+\tau_+)^{p-1}
    \le C(1+s^2)s^2d^{a(p-1)}
    \le C(\delta+\delta^2).
    \label{phased:eq:pgeone-rate-pg1}
  \end{equation}

  We further require $\delta_0$ to be sufficiently small that, for every $0<\delta\le\delta_0$ and all sufficiently large $d$,
  \begin{equation}
    C_1s^2S_d(\tau_+)^{(L-1)/L}(1+\tau_+)^{p-1}\le C_0,
    \qquad
    C_1\varepsilon ^2S_d(\tau_+)^{2(L-1)/L}(1+\tau_+)^{p-1}\le\frac12.
  \end{equation}
  Substituting these bounds into \cref{phased:eq:pgeone-bootstrap-closure}, we deduce
  \begin{equation}
    \overline{\mathcal L}_p \le 2C_0+\frac12\overline{\mathcal L}_p.
  \end{equation}
  Hence, $\overline{\mathcal L}_p \le 4C_0$. Therefore, by the definition of $\overline{\mathcal L}_p$ in \cref{phased:eq:pgeone-bootstrap}, for $\tau_{\mathrm e}\le \tau\le \tau_+$,
  \begin{equation}
    \mathcal L(\tau)\lesssim (1+\tau)^{-p}.
    \label{phased:eq:pgeone-loss-upper}
  \end{equation}

  \step{Lower bound for $\mathcal L(\tau)$.}

  Fix a sufficiently small constant $\mu_0>0$ such that $\mu_0\le\mu_\star$. If necessary, we decrease $c_0$ further to ensure that $4c_\mu c_0<\mu_0$. By \cref{phased:step:pgeone-loss-upper}, we have $\overline{\mathcal L}_p\le C$. For any $\tau_{\mathrm e}\le \tau\le \tau_+$, choose $\mu(\tau)\in[\mu_0/(2(1+\tau)),\,\mu_0/(1+\tau)]$. For sufficiently large $d$, we have $1+\tau_+\le 2\tau_+$. Therefore,
  \begin{equation}
    c_\mu d^{-a}
    \le\frac{\mu_0}{4c_0}d^{-a}
    \le\frac{\mu_0}{2(1+\tau_+)}
    \le\frac{\mu_0}{2(1+\tau)}
    \le\mu(\tau)
    \le\frac{\mu_0}{1+\tau}
    \le\mu_0
    \le\mu_\star.
  \end{equation}

  For every coordinate $i$ such that $\lambda_i(1+\tau)\le \mu_0$, \cref{phased:lem:low-frequency-teacher-response} shows that the coefficient multiplying $\beta_i$ in the first term on the right-hand side of \cref{phased:eq:exact-residual-decomposition} is at least $1-C\mu_0$.
  Since $\mu_0$ is sufficiently small, we may assume that $1-C\mu_0\ge 1/2$, so this coefficient is at least $1/2$.
  Combining this with the lower bound in \cref{phased:lem:powerlaw-estimates}, we obtain
  \begin{equation}
    \begin{split}
      \left\|
      \boldsymbol P_{\le\mu(\tau)}\boldsymbol\Sigma^{1/2}
      \left[
        \boldsymbol I-
        \sigma(\tau)^{L-1}\boldsymbol\Sigma
        \int_0^\tau e^{-(\tau-v)\boldsymbol\Sigma}
        \sigma(v)^{-(L-1)}\,dv\right]
      \boldsymbol\beta
      \right\|
      & \ge
      \frac12
      \|\boldsymbol P_{\le\mu(\tau)}
      \boldsymbol\Sigma^{1/2}\boldsymbol\beta\| \\
      & \ge c\mu(\tau)^{p/2}                   \\
      & \gtrsim (1+\tau)^{-p/2}.
    \end{split}
  \end{equation}
  On the other hand, by \cref{phased:lem:pgeone-residual-upper},
  \begin{equation}
    \begin{split}
      & \left\|
      \boldsymbol P_{\le\mu(\tau)}\boldsymbol\Sigma^{1/2}
      \sigma(\tau)^{L-1}
      \left(
      e^{-(\tau-\tau_{\mathrm e})\boldsymbol\Sigma}
      \boldsymbol\xi_{\mathrm e}
      +\int_{\tau_{\mathrm e}}^\tau
      e^{-(\tau-v)\boldsymbol\Sigma}\boldsymbol f_K(v)\,dv
      \right)
      \right\|    \\
      & \quad\le
      C\mu(\tau)^{1/2}
      \left(
      sS_d(\tau_+)^{(L-1)/(2L)}
      +\varepsilon S_d(\tau_+)^{(L-1)/L}
      \right)     \\
      & \quad\le
      C\mu_0^{-(p-1)/2}\mu(\tau)^{p/2}(1+\tau_+)^{(p-1)/2}
      \left(
      sS_d(\tau_+)^{(L-1)/(2L)}
      +\varepsilon S_d(\tau_+)^{(L-1)/L}
      \right)
    \end{split}
  \end{equation}
  and, as in the arguments for \cref{phased:eq:pgeone-rate-p1,phased:eq:pgeone-rate-pg1},
  \begin{equation}
    \begin{split}
      &(1+\tau_+)^{(p-1)/2}
      \left\{
      sS_d(\tau_+)^{(L-1)/(2L)}
      +\varepsilon S_d(\tau_+)^{(L-1)/L}
      \right\} \\
      &\quad\le
      \sqrt2\left[
        \left\{
        s^2S_d(\tau_+)^{(L-1)/L}
        +\varepsilon ^2S_d(\tau_+)^{2(L-1)/L}
        \right\}(1+\tau_+)^{p-1}
        \right]^{1/2} \\
      &\quad\le C(\delta+\delta^2)^{1/2}.
    \end{split}
  \end{equation}
  Finally, since the spectral cutoff $\mu_0$ has already been fixed, we require $\delta_0$ to be sufficiently small that
  \begin{equation}
    C\mu_0^{-(p-1)/2}(\delta_0+\delta_0^2)^{1/2}\le\frac c2,
  \end{equation}
  where $c$ is the constant in the lower bound for the first term above.
  Therefore, for every $0<\delta\le\delta_0$ and all sufficiently large $d$, \cref{phased:eq:exact-residual-decomposition} gives
  \begin{equation}
    \|
    \boldsymbol P_{\le\mu(\tau)}
    \boldsymbol\Sigma^{1/2}
    (\boldsymbol\beta-\boldsymbol\theta(\tau))
    \|
    \ge c\mu(\tau)^{p/2}.
    \label{phased:eq:pgeone-residual-lower}
  \end{equation}
  Therefore, for $\tau_{\mathrm e}\le \tau\le \tau_+$,
  \begin{equation}
    \begin{split}
      \mathcal L(\tau)
      & \ge \mathcal L_{\le\mu(\tau)}(\tau)         \\
      & = \frac12
      \|\boldsymbol P_{\le\mu(\tau)}
      \boldsymbol\Sigma^{1/2}
      (\boldsymbol\beta-\boldsymbol\theta(\tau))\|^2 \\
      & \ge c\mu(\tau)^p                            \\
      & \gtrsim (1+\tau)^{-p}.
    \end{split}
    \label{phased:eq:pgeone-loss-lower}
  \end{equation}

  \step{Combining the upper and lower bounds.}

  Combining \cref{phased:eq:pgeone-loss-upper,phased:eq:pgeone-loss-lower} yields the desired conclusion.

\end{proof}

\subsection{Common Spectral Law under Small Initialization}

Combining the cases $p<1$ and $p\ge1$ gives the common spectral loss law under small initialization. The following proposition gives a precise version of \cref{prop:common-spectral-law-loss}.
\begin{proposition}[Common spectral law under small initialization]
  \label{phased:prop:common-spectral-loss-law}
  Consider Gaussian initialization satisfying \cref{phased:ass:small-initialization}. Then there exist constants $c,C>0$, independent of $d$, $s$, and $\tau$, such that, for all sufficiently large $d$, with probability at least $1-Ce^{-cd}$, uniformly for $\tau_{\mathrm e}\le \tau\le\tau_+$,
  \begin{equation}
    \mathcal L(\tau)\asymp(1+\tau)^{-p}.
  \end{equation}
\end{proposition}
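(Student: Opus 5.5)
The plan is to split by regime and reduce each case to a result already established. The statement covers all $p>0$, with $\tau_\circ=\tau_{\mathrm e}$. We work throughout on the high-probability event of \cref{commonwork:lem:fixed-s-isotropy}, whose probability is at least $1-Ce^{-cd}$. All constants must be uniform over $0<\delta\le\delta_0$, since \cref{phased:ass:small-initialization} fixes $\delta_0$ once. Then the constants do not depend on $s$.

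\textbf{Case $p<1$.} First we verify that the hypotheses of \cref{phased:prop:common-radius-law} hold with $\tau_\circ=\tau_{\mathrm e}$; this was already done in the proof of \cref{cor:small-init-predictor-norm-scaling}. Concretely:
\begin{itemize}
\item The initial radius bound follows from \cref{phased:eq:early-radius} together with \cref{phased:eq:active-mass-profile-low}.
\item The initial loss bound follows from monotonicity of the loss and $\mathcal L(0)\lesssim1$. At the fixed time $\tau_{\mathrm e}$ we have $S_d(\tau_{\mathrm e})/(1+\tau_{\mathrm e})\asymp1$, so this suffices.
\item The lower bounds on $\sigma$ and $\overline{\boldsymbol K}_L$ come from \cref{phased:cor:post-entry-radius-kernel}.
\item The defect condition holds because $\varepsilon\lesssim s^2\le\delta$ and $\psi_p\equiv1$.
\end{itemize}
We then invoke \cref{phased:prop:hard-loss-law}. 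It gives $\mathcal L(\tau)\asymp(1+\tau)^{-p}$ on $[\tau_{\mathrm e},\tau_+]$. Its constants depend only on the constants in those hypotheses, which are independent of $d$, $s$, $\tau$.

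One point needs care. The proof of \cref{phased:prop:hard-loss-law} decreases $c_0$ so that $2c_\mu c_0<\mu_0$, and the proof of \cref{phased:prop:pgeone-loss-law} decreases it so that $4c_\mu c_0<\mu_0$. We take $c_0$ to be the minimum of these fixed constants. This is legitimate because $\tau_+=c_0d^a$ depends only on fixed constants, and shrinking $c_0$ keeps every earlier statement valid on the smaller window.

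\textbf{Case $p\ge1$.} Here we directly invoke \cref{phased:prop:pgeone-loss-law}. It already gives the two-sided bound uniformly on $[\tau_{\mathrm e},\tau_+]$, with constants uniform in $\delta\in(0,\delta_0]$ and hence in $s$.

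\textbf{Expected obstacle.} The main difficulty is bookkeeping rather than new analysis. We must confirm that the constants produced in the two cases, and the choice of $\delta_0$ (including $\delta_0\le s_0^2$, which gives $s\le s_0$), are compatible and depend on nothing beyond $a,b,c_\lambda,c_\beta,L$. We must also check that the high-probability events coincide, so the claimed probability bound holds without a further union bound. Finally, on the compact initial segment $[\tau_{\mathrm e},\tau_0]$, both $\mathcal L$ and $(1+\tau)^{-p}$ are bounded above and below by fixed constants. The lower bound there uses $\mathcal L(\tau)\ge\mathcal L(\tau_+)\gtrsim d^{-ap}$; that is not uniform, so we instead rely on the lower bound in each proposition, which holds on the full interval.
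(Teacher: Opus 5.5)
Your proposal is correct and matches the paper's proof. For $p<1$ you verify the hypotheses of \cref{phased:prop:common-radius-law} with $\tau_\circ=\tau_{\mathrm e}$, as in the proof of \cref{cor:small-init-predictor-norm-scaling}, and then apply \cref{phased:prop:hard-loss-law}; for $p\ge1$ you cite \cref{phased:prop:pgeone-loss-law} directly. Your extra bookkeeping is sound and only makes explicit what the paper leaves implicit: taking the smaller of the two reductions of $c_0$, using $\delta_0\le s_0^2$, noting that a given $p$ falls in only one case so no union bound is needed, and relying on the full-interval bounds of the cited propositions.
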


\begin{proof}
  If $p<1$, the proof of \cref{cor:small-init-predictor-norm-scaling} verifies that, under \cref{phased:ass:small-initialization}, the conditions of \cref{phased:prop:common-radius-law} hold with $\tau_\circ=\tau_{\mathrm e}$. \Cref{phased:prop:hard-loss-law} therefore gives $\mathcal L(\tau)\asymp(1+\tau)^{-p}$ for $\tau_{\mathrm e}\le\tau\le\tau_+$. If $p\ge1$, the same conclusion follows directly from \cref{phased:prop:pgeone-loss-law}. Combining the two cases proves the claim.
\end{proof}

\section{Training-Time Loss Scaling}
\label{phased:sec:main-scaling-proofs}

Using the predictor-norm estimates in \cref{phased:sec:radius-law}, we relate reparameterized time $\tau$ to training time $t$. We then express the spectral loss laws in \cref{phased:prop:hard-loss-law,phased:prop:common-spectral-loss-law} in terms of training time.

The following theorem gives a precise version of
\cref{thm:training-time-loss-scaling}.
\begin{theorem}[Training-time loss scaling under small initialization]
  \label{phased:thm:training-time-loss-scaling}
  Consider Gaussian initialization satisfying \cref{phased:ass:small-initialization}. Let $t_{\mathrm e}$ be defined by $\tau(t_{\mathrm e})=\tau_{\mathrm e}$, set $\bar t=t-t_{\mathrm e}$, and let $t_+$ be defined by $\tau(t_+)=\tau_+$, with $\bar t_+:=t_+-t_{\mathrm e}$. Then there exist constants $c,C,\bar t_0>0$, independent of $d$, $s$, and $\bar t$, such that, for all sufficiently large $d$, with probability at least $1-Ce^{-cd}$, uniformly for $\bar t_0\le\bar t\le\bar t_+$, we have
  \begin{equation}
    \mathcal L(t_{\mathrm e}+\bar t)\asymp
    \begin{cases}
      \bar t^{-pL/[1+p(L-1)]},            & p < 1, \\
      \bar t^{-1}(\log\bar t)^{-(L-1)/L}, & p = 1, \\
      \bar t^{-p},                        & p > 1.
    \end{cases}
  \end{equation}
\end{theorem}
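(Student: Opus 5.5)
We will derive the training-time law by converting the reparameterized-time results into training time through $d\bar t/d\tau=\sigma(\tau)^{-2(L-1)}$. We work on the high-probability event of \cref{commonwork:lem:fixed-s-isotropy}, on which \cref{cor:small-init-predictor-norm-scaling} (via \cref{phased:prop:common-radius-law} with $\tau_\circ=\tau_{\mathrm e}$) gives $\sigma(\tau)^{2L}\asymp S_d(\tau)$ for $\tau_{\mathrm e}\le\tau\le\tau_+$. On the same event, \cref{phased:prop:common-spectral-loss-law} gives $\mathcal L(\tau)\asymp(1+\tau)^{-p}$. All constants in these results are independent of $d$ and $s$. Since $\bar t(\tau)=\int_{\tau_{\mathrm e}}^{\tau}\sigma(v)^{-2(L-1)}\,dv$, we obtain
\begin{equation}
  \bar t(\tau)\asymp\int_{\tau_{\mathrm e}}^{\tau}S_d(v)^{-(L-1)/L}\,dv .
\end{equation}

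The first step is to evaluate this integral using \cref{phased:lem:active-mass-profile}. On the fixed interval $[\tau_{\mathrm e},c_\lambda^{-1}]$, \cref{phased:eq:active-mass-profile-low} gives $S_d\asymp v^2$ with $v\ge\tau_{\mathrm e}$ fixed, so $S_d\asymp1$ there and this piece contributes only $O(1)$. For larger $v$ the three regimes separate.
\begin{itemize}
  \item If $p<1$, the integrand is $\asymp(1+v)^{-(1-p)(L-1)/L}$. The exponent is less than $1$, so the integral is $\asymp\tau^{\kappa}$ with $\kappa:=[1+p(L-1)]/L$ for $\tau$ beyond a fixed threshold.
  \item If $p=1$, the integrand is $\psi(v)^{-(L-1)/L}$. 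Splitting at $\sqrt{1+\tau}$, exactly as in \cref{phased:eq:logarithmic-integral-estimate}, gives an upper bound $\lesssim\tau\,\psi(\tau)^{-(L-1)/L}$. Restricting to $[\tau/2,\tau]$, where $\psi\le\psi(\tau)$, gives the matching lower bound. Hence $\bar t\asymp\tau(\log\tau)^{-(L-1)/L}$ for large $\tau$.
  \item If $p>1$, the integrand is $\asymp1$, so $\bar t\asymp\tau$.
\end{itemize}
In each case the $O(1)$ offset is absorbed once $\tau\ge\tau_1$ for a fixed $\tau_1$.

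The second step is to invert these relations. The map $\tau\mapsto\bar t$ is strictly increasing and continuous, maps $\tau_{\mathrm e}$ to $0$ and $\tau_+$ to $\bar t_+$. Set $\bar t_0:=\bar t(\tau_1)$. By the two-sided bounds above, $\bar t_0$ is bounded above and below by constants independent of $d$ and $s$, and for large $d$ we have $\tau_1\le\tau_+$. For $\bar t_0\le\bar t\le\bar t_+$ the inversions are as follows.
\begin{itemize}
  \item If $p<1$, $\tau\asymp\bar t^{1/\kappa}$, so $\mathcal L\asymp\tau^{-p}\asymp\bar t^{-pL/[1+p(L-1)]}$.
  \item If $p>1$, $\tau\asymp\bar t$, so $\mathcal L\asymp\bar t^{-p}$.
  \item If $p=1$, we must check that $\log\tau\asymp\log\bar t$. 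The relation $\bar t\asymp\tau(\log\tau)^{-(L-1)/L}$ gives $\tau^{1/2}\lesssim\bar t\lesssim\tau$ for large $\tau$, hence $\log\bar t\asymp\log\tau$ once $\bar t_0$ is large enough that $\log\bar t$ is bounded away from $0$. It follows that $\tau\asymp\bar t(\log\bar t)^{(L-1)/L}$ and $\mathcal L\asymp\tau^{-1}\asymp\bar t^{-1}(\log\bar t)^{-(L-1)/L}$.
\end{itemize}
To ensure $\log\bar t_0>0$ uniformly, we may enlarge $\tau_1$.

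The main obstacle, though mild, is uniformity. The inversion constants must not depend on $d$ or $s$, and the threshold $\bar t_0$ must be chosen independently of $s$, even though $t_{\mathrm e}$ itself depends on $s$ through \cref{phased:cor:time-entrance-delay}. This works because the shifted time starts at $\tau_{\mathrm e}$, which is a fixed constant, and because all bounds used hold uniformly for $\tau\in[\tau_{\mathrm e},\tau_+]$. The early-time behavior of $\sigma$ on $[0,\tau_{\mathrm e}]$ therefore never enters the conversion.
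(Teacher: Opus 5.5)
Your proposal is correct and follows essentially the same route as the paper. You write $\bar t=\int_{\tau_{\mathrm e}}^{\tau}\sigma(v)^{-2(L-1)}\,dv$, evaluate this integral in the three regimes via the predictor-norm law, absorb the fixed initial segment into an $O(1)$ term, invert, and substitute into $\mathcal L(\tau)\asymp\tau^{-p}$ from \cref{phased:prop:common-spectral-loss-law}. Your explicit lower bound on $[\tau/2,\tau]$ and your check that $\log\bar t\asymp\log\tau$ at $p=1$ fill in steps the paper only sketches.

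One cosmetic fix: take $\bar t_0$ to be the deterministic upper bound on $\bar t(\tau_1)$, not the trajectory-dependent value $\bar t(\tau_1)$ itself, so that $\bar t_0$ is a genuine constant independent of $d$ and $s$.
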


\begin{proof}
  By \cref{phased:eq:time-reparametrization}, $d\tau/dt=\sigma^{2(L-1)}$. Since $\bar t=t-t_{\mathrm e}$ and $\|\boldsymbol\theta\|=\sigma^L$, we obtain
  \begin{equation}
    \frac{d\bar t}{d\tau} = \sigma^{-2(L-1)} = \|\boldsymbol\theta(\tau)\|^{-2(L-1)/L}.
  \end{equation}
  Equivalently,
  \begin{equation}
    \bar t = \int_{\tau_{\mathrm e}}^{\tau} \|\boldsymbol\theta(v)\|^{-2(L-1)/L}\,dv.
    \label{phased:eq:reparam-bar-t}
  \end{equation}
  Choose a sufficiently large fixed constant $\tau_0>\tau_{\mathrm e}$ such that the bounds in \cref{cor:small-init-predictor-norm-scaling} hold for all $\tau_0\le\tau\le\tau_+$. By \cref{phased:cor:post-entry-radius-kernel}, we get $\sigma(\tau)^2\ge c>0$ for $\tau\ge\tau_{\mathrm e}$. Therefore,
  \begin{equation}
    \int_{\tau_{\mathrm e}}^{\tau_0} \|\boldsymbol\theta(v)\|^{-2(L-1)/L}\,dv = \int_{\tau_{\mathrm e}}^{\tau_0} \sigma(v)^{-2(L-1)}\,dv=O(1).
    \label{phased:eq:short-initial-phase}
  \end{equation}
  Consequently,
  \begin{equation}
    \bar t = O(1)+ \int_{\tau_0}^{\tau} \|\boldsymbol\theta(v)\|^{-2(L-1)/L}\,dv.
    \label{phased:eq:bar-t-long}
  \end{equation}
  Since the $O(1)$ term is uniform in $d$ and $s$, we may choose the fixed constant $\bar t_0>0$ sufficiently large so that, for $\bar t\ge \bar t_0$, this term can be absorbed into the comparisons below.

  Combining \cref{cor:small-init-predictor-norm-scaling} with
  \cref{phased:eq:bar-t-long}, we obtain, for
  $\bar t_0\le\bar t\le\bar t_+$,
  \begin{equation}
    \bar t \asymp
    \begin{cases}
      \displaystyle \int_{\tau_0}^{\tau} v^{-(1-p)(L-1)/L}\,dv,   & p < 1, \\
      \displaystyle \int_{\tau_0}^{\tau} (\log v)^{-(L-1)/L}\,dv, & p = 1, \\
      \displaystyle \int_{\tau_0}^{\tau}dv,                       & p > 1.
    \end{cases}
  \end{equation}
  Evaluating these integrals gives
  \begin{equation}
    \bar t \asymp
    \begin{cases}
      \tau^{[1+p(L-1)]/L},       & p < 1, \\
      \tau(\log\tau)^{-(L-1)/L}, & p = 1, \\
      \tau,                      & p > 1.
    \end{cases}
    \label{phased:eq:bar-t-tau-scaling}
  \end{equation}
  For $p=1$, the estimate $\int_{\tau_0}^{\tau}(\log v)^{-(L-1)/L}\,dv \asymp \tau(\log\tau)^{-(L-1)/L}$ follows from the same splitting argument used in \cref{phased:eq:logarithmic-integral-estimate}.
  The relations in \cref{phased:eq:bar-t-tau-scaling} can be inverted as
  \begin{equation}
    \tau \asymp
    \begin{cases}
      \bar t^{L/[1+p(L-1)]},        & p < 1, \\
      \bar t(\log\bar t)^{(L-1)/L}, & p = 1, \\
      \bar t,                       & p > 1.
    \end{cases}
    \label{phased:eq:tau-asymp-bar-t}
  \end{equation}
  In the case $p=1$, we used the fact that $\bar t\asymp\tau(\log\tau)^{-(L-1)/L}$ implies $\log\bar t\asymp\log\tau$.
  Finally, \cref{phased:prop:common-spectral-loss-law} gives $\mathcal L(\tau)\asymp\tau^{-p}$ on this interval. Substituting \cref{phased:eq:tau-asymp-bar-t} yields
  \begin{equation}
    \mathcal L(t_{\mathrm e}+\bar t) \asymp
    \begin{cases}
      \bar t^{-pL/[1+p(L-1)]},            & p < 1, \\
      \bar t^{-1}(\log\bar t)^{-(L-1)/L}, & p = 1, \\
      \bar t^{-p},                        & p > 1.
    \end{cases}
  \end{equation}
\end{proof}

The following corollary gives a precise version of \cref{cor:standard-init-training-time-loss-scaling}.
\begin{corollary}[Training-time loss scaling under standard initialization for difficult tasks]
  \label{phased:cor:standard-init-training-time-loss-scaling}
  Suppose $p<1$ and consider Gaussian initialization with $s=1$. Let $t_+$ be defined by $\tau(t_+)=\tau_+$. Then there exist constants $c,C,t_0>0$, independent of $d$ and $t$, such that, for all sufficiently large $d$, with probability at least $1-Ce^{-cd}$, uniformly for $t_0\le t\le t_+$, we have $\mathcal L(t) \asymp t^{-pL/[1+p(L-1)]}$.
\end{corollary}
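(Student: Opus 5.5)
The argument mirrors the proof of \cref{phased:thm:training-time-loss-scaling}, with the escape phase removed. For $s=1$ the time change starts at $\tau=0$, and $\sigma$ is bounded below from the outset by \cref{phased:lem:fixed-scale-radius-kernel}. So we work on the high-probability event of \cref{commonwork:lem:fixed-s-isotropy} and first collect the two $\tau$-laws.

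\Cref{cor:standard-init-scaling} gives $\|\boldsymbol\theta(\tau)\|^2\asymp\tau^{1-p}$ for $\tau_0\le\tau\le\tau_+$. For the loss, the proof of \cref{cor:standard-init-scaling} already verified the hypotheses of \cref{phased:prop:common-radius-law} with $\tau_\circ=c_\lambda^{-1}$. Since $p<1$, \cref{phased:prop:hard-loss-law} then yields $\mathcal L(\tau)\asymp(1+\tau)^{-p}$ on $[\tau_\circ,\tau_+]$. The lower bound there only uses the radius upper bound from \cref{phased:prop:common-radius-law} and \cref{phased:lem:powerlaw-estimates}, so it applies verbatim. We enlarge $\tau_0$ so that $\tau_0\ge\tau_\circ$.

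Next we convert to training time. Since $dt/d\tau=\sigma^{-2(L-1)}=\|\boldsymbol\theta\|^{-2(L-1)/L}$, we split
\begin{equation*}
t=\int_0^{\tau_0}\sigma(v)^{-2(L-1)}\,dv+\int_{\tau_0}^{\tau}\|\boldsymbol\theta(v)\|^{-2(L-1)/L}\,dv .
\end{equation*}
The first integral is $O(1)$ uniformly in $d$, because $\sigma^2\ge c$ for all $\tau\ge0$ by \cref{phased:lem:fixed-scale-radius-kernel}. The second integral is comparable to $\int_{\tau_0}^\tau v^{-(1-p)(L-1)/L}\,dv$. Its exponent is $(1-p)(L-1)/L<1$, so the integral is $\asymp\tau^{[1+p(L-1)]/L}-\tau_0^{[1+p(L-1)]/L}$.

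Choosing $t_0$ large absorbs both the $O(1)$ term and the subtracted constant. This gives $t\asymp\tau^{[1+p(L-1)]/L}$, hence $\tau\asymp t^{L/[1+p(L-1)]}$, for $t_0\le t\le t_+$. Substituting into $\mathcal L\asymp\tau^{-p}$ gives the claimed exponent $pL/[1+p(L-1)]$.

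The main point needing care is the absorption step, which must hold uniformly in $d$. We need every $t\in[t_0,t_+]$ to correspond to $\tau\ge\tau_0$. This follows because $\tau(t)$ is increasing and, on $[0,\tau_0]$, $t(\tau)\le\tau_0\sup\sigma^{-2(L-1)}\le C$. Taking $t_0>C$ therefore forces $\tau(t)>\tau_0$. The same margin makes the upper and lower comparisons for $t$ differ only by fixed constant factors. No smallness of $\varepsilon$ is needed, since $\psi_p\equiv1$ for $p<1$, and this was already used in \cref{cor:standard-init-scaling}.
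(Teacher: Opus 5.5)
Your proposal is correct and follows essentially the same route as the paper. Like the paper, you combine \cref{cor:standard-init-scaling} and \cref{phased:prop:hard-loss-law} in reparameterized time, bound the training time elapsed over $[0,\tau_0]$ uniformly in $d$ via \cref{phased:lem:fixed-scale-radius-kernel}, and invert $t\asymp\tau^{[1+p(L-1)]/L}$. Your explicit justification that $t\ge t_0$ forces $\tau>\tau_0$ and that the constants can be absorbed is a detail the paper leaves implicit.
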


\begin{proof}
  Let $\tau_0>0$ be the fixed constant in \cref{cor:standard-init-scaling}. Then $\|\boldsymbol\theta(\tau)\|^2\asymp\tau^{1-p}$ uniformly for $\tau_0\le\tau\le\tau_+$, while \cref{phased:prop:hard-loss-law} gives $\mathcal L(\tau)\asymp\tau^{-p}$. By \cref{phased:eq:time-reparametrization}, $d\tau/dt=\sigma^{2(L-1)}$, and $\|\boldsymbol\theta\|=\sigma^L$.

  By \cref{phased:lem:fixed-scale-radius-kernel}, the training time elapsed over $0\le\tau\le\tau_0$ is bounded uniformly in $d$. Choosing $t_0$ sufficiently large, we therefore obtain, uniformly for $t_0\le t\le t_+$,
  \begin{equation}
    t = O(1)+ \int_{\tau_0}^{\tau} \|\boldsymbol\theta(v)\|^{-2(L-1)/L}\,dv \asymp \int_{\tau_0}^{\tau} v^{-(1-p)(L-1)/L}\,dv \asymp \tau^{[1+p(L-1)]/L}.
  \end{equation}
  Therefore,
  \begin{equation}
    \tau\asymp t^{L/[1+p(L-1)]}
  \end{equation}
  and substituting this into $\mathcal L(\tau)\asymp\tau^{-p}$ gives
  \begin{equation}
    \mathcal L(t) \asymp t^{-pL/[1+p(L-1)]}.
  \end{equation}
\end{proof}

\section{Experimental Details}
\label{sec:numerical-experiments-details}

\subsection{Synthetic Power-Law Experiments}

\paragraph{\Cref{fig:vanishing-loss-shifted-time,fig:vanishing-predictor-norm-tau,fig:reparameterized-time-loss}.}
These three figures use the same trajectories. We set $a=2$ and $c_\lambda=c_\beta=1$, take $d=2048$, and use $p\in\{0.6,1.0,1.4\}$, or equivalently $b\in\{0.1,0.5,0.9\}$. We consider $L\in\{1,2,3,4\}$, with every hidden layer having width $d$, and initialize each layer with scale $s=d^{-1/2}$. We minimize the exact population loss by gradient descent with a learning rate of $0.01$ and accumulate $\tau$ by the trapezoidal rule. Each curve is from a single random initialization. We set $\tau_{\mathrm e}=0.1$ and use the corresponding interpolated value of $t_{\mathrm e}$ in the shifted time $\bar t=t-t_{\mathrm e}$. The corresponding loss curves without a time shift are shown in \cref{fig:vanishing-loss-unshifted-time}.

\begin{figure}[h]
  \centering
  \begin{subfigure}[t]{0.32\textwidth}
    \centering
    \includegraphics[width=\linewidth]
    {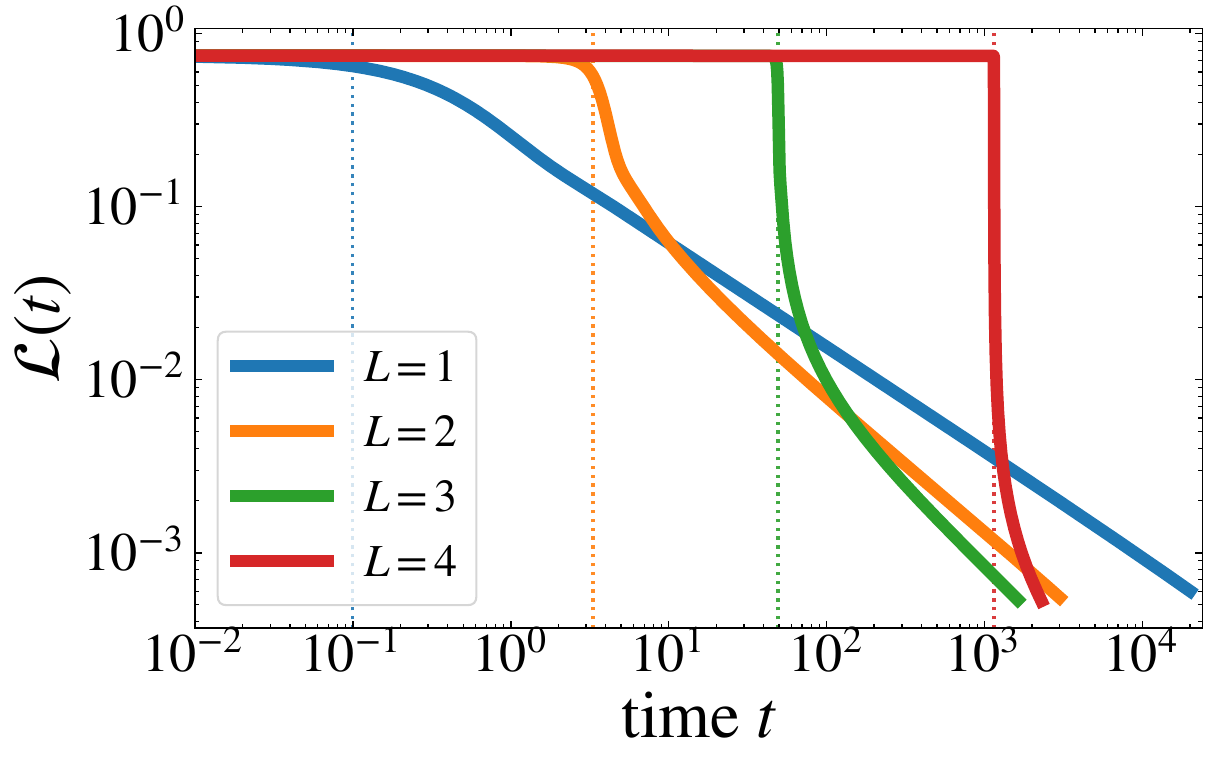}
    \caption{$p=0.6$}
    \label{fig:vanishing-loss-unshifted-p06}
  \end{subfigure}
  \hfill
  \begin{subfigure}[t]{0.32\textwidth}
    \centering
    \includegraphics[width=\linewidth]
    {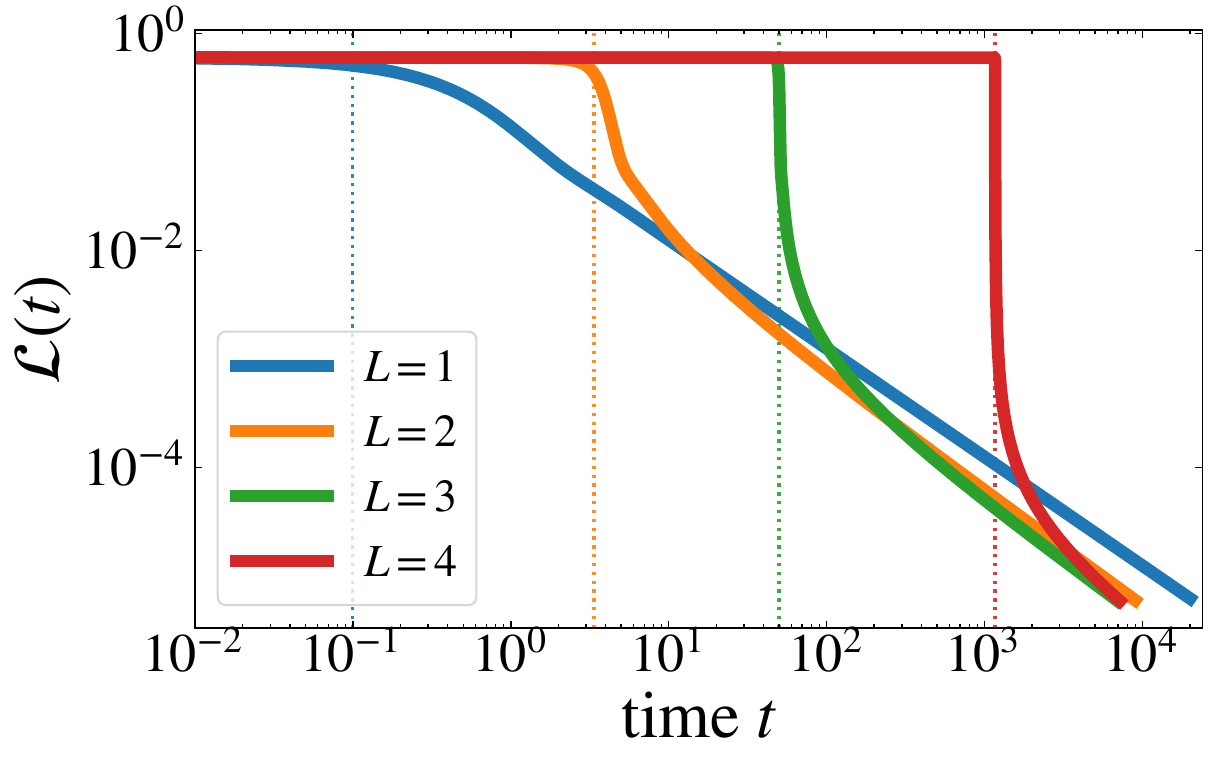}
    \caption{$p=1.0$}
    \label{fig:vanishing-loss-unshifted-p10}
  \end{subfigure}
  \hfill
  \begin{subfigure}[t]{0.32\textwidth}
    \centering
    \includegraphics[width=\linewidth]
    {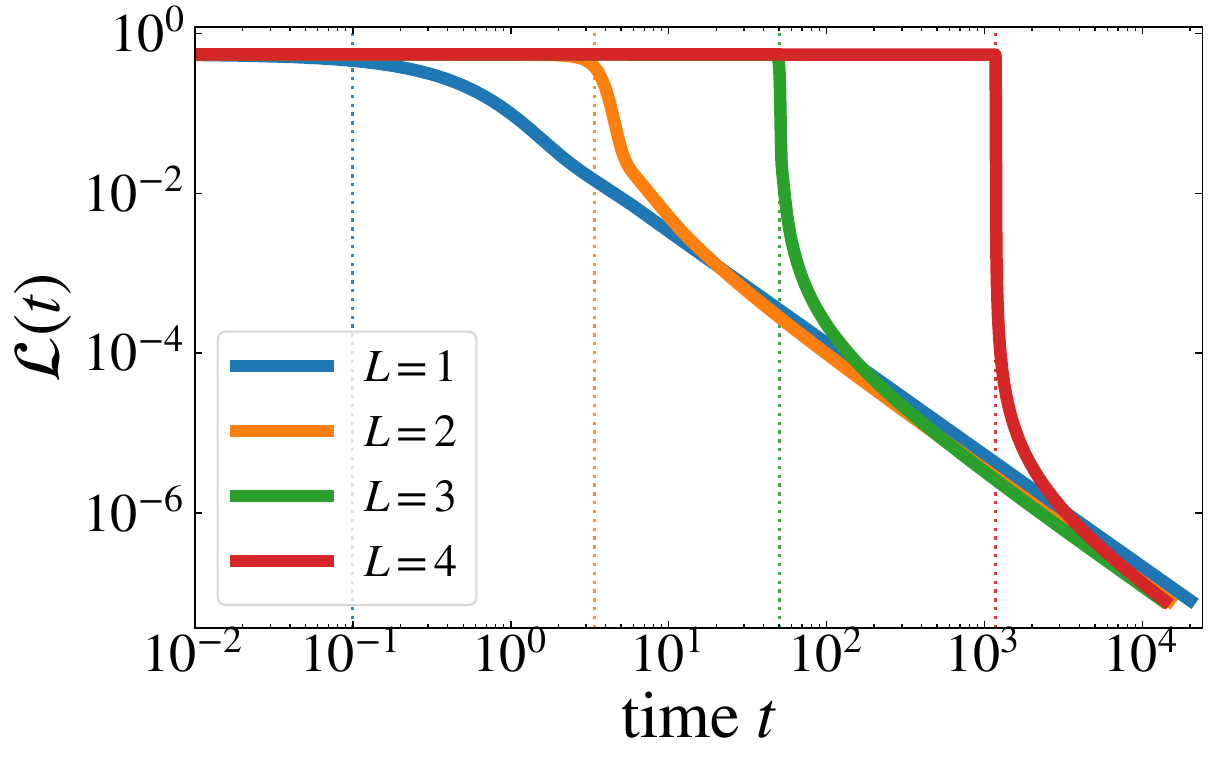}
    \caption{$p=1.4$}
    \label{fig:vanishing-loss-unshifted-p14}
  \end{subfigure}

  \caption{
    The same trajectories as in \cref{fig:vanishing-loss-shifted-time} are shown in training time $t$, without a time shift. Vertical dotted lines mark $t_{\mathrm e}$, where $\tau(t_{\mathrm e})=0.1$, for $L=1,2,3,4$.
  }
  \label{fig:vanishing-loss-unshifted-time}
\end{figure}

\paragraph{\Cref{fig:loss-decay-phase-diagram}.}
We use the same task, dimension, widths, initialization scale, and optimization settings as above, now with $p\in\{0.4,0.6,0.8,1.0,1.2,1.4\}$ and five random seeds for each $L\in\{1,2,3,4\}$. For $p\ne1$, over $50\le\tau\le5000$, we fit $\log\mathcal L=\log A-q\log(\bar t-B)$ by least squares over $A$, $B$, and $q$, with $B<\min\bar t$. At $p=1$, we interpolate each trajectory at logarithmically spaced values of $\tau$ over the same window and fit $\log\mathcal L(\bar t)+\frac{L-1}{L}\log\log\bar t=\log A-q\log(\bar t-B)$ by least squares over $A$, $B$, and $q$, again with $B<\min\bar t$. The markers and error bars show the mean and sample standard deviation of $q$ over the five seeds. Since even the largest standard deviation across seeds is approximately $0.004$, the error bars are smaller than the markers.

\paragraph{\Cref{fig:standard-initialization-loss}.}
We set $a=2.25$, $b=-0.175$, $c_\lambda=c_\beta=1$, so that $p=0.4$, and use $L\in\{1,2,3,4\}$ with hidden width $d$ and standard initialization $s=1$. We use $d\in\{128,256,512,1024,2048,4096,8192\}$; panel (a) displays $d=8192$, whereas panel (b) uses the full dimension sweep. For each pair $(L,d)$, we use five random seeds for initialization. All runs use gradient descent with a learning rate of $0.002$. To select a fitting window before the finite-dimensional spectral cutoff, we use the decoupled-mode proxy
\begin{equation}
  \mathcal L_{\mathrm{proxy}}(\tau)
  :=
  \frac12\sum_{i\ge1}\lambda_i\beta_i^2e^{-2\lambda_i\tau}.
\end{equation}
This is the loss obtained by approximating each residual mode as $\beta_i-\theta_i(\tau)=\beta_i e^{-\lambda_i\tau}$. Since $\lambda_i\beta_i^2=c_\lambda c_\beta^2i^{-1-ap}$, the fraction of this infinite-dimensional proxy carried by the modes $i>d$, which are absent from the dimension-$d$ experiment, is approximated by
\begin{equation}
  r_{\mathrm{tail}}(\tau;p,d)
  :=
  \frac{\sum_{i>d}\lambda_i\beta_i^2e^{-2\lambda_i\tau}}
  {\sum_{i\ge1}\lambda_i\beta_i^2e^{-2\lambda_i\tau}}
  \approx
  \frac{\int_d^\infty x^{-1-ap}e^{-2c_\lambda\tau x^{-a}}\,dx}
  {\int_0^\infty x^{-1-ap}e^{-2c_\lambda\tau x^{-a}}\,dx}
  =P\left(p,\frac{2c_\lambda\tau}{d^a}\right).
\end{equation}
Here, for $\alpha>0$ and $x\ge0$, the regularized lower incomplete gamma function is
\begin{equation}
  P(\alpha,x)
  =
  \frac{\int_0^x u^{\alpha-1}e^{-u}\,du}
  {\int_0^\infty u^{\alpha-1}e^{-u}\,du},
\end{equation}
and $P^{-1}(\alpha,\cdot)$ denotes its inverse with respect to the second argument. Setting $r_{\mathrm{tail}}=0.1$ and using $a=2.25$, $p=0.4$, and $c_\lambda=1$ gives $\tau_{\max}=d^{2.25}P^{-1}(0.4,0.1)/2$, and we set $\tau_{\min}=\max\{1,\tau_{\max}/1000\}$. Each run stops at the first iterate with $\tau\ge\tau_{\max}$.
For each seed, we linearly interpolate $t$ and $\mathcal L$ as functions of $\tau$ at logarithmically spaced points in this window and fit $\log\mathcal L=\log A-q\log t$ by ordinary least squares, without a time shift. In panel (a), the theoretical lines have fixed exponents $pL/[1+p(L-1)]$ and heights fitted in log space using the fitting points from all five seeds. Panel (b) reports the mean and sample standard deviation of the fitted exponents over the five seeds.

\subsection{Extensions beyond the Theoretical Setting}\label{sec:extensions-real-nonlinear}

\paragraph{\Cref{fig:cifar5m-real-label-dynamics}.}
We use all $N=5{,}002{,}240$ examples in the five parts of CIFAR-5M. Each $32\times32\times3$ RGB image is scaled by $1/255$, flattened, and centered by subtracting the empirical mean, giving $d=3072$. We likewise center the airplane-versus-rest target and compute the full empirical covariance $\boldsymbol\Sigma$ and the input--target cross moment $\boldsymbol e$. We rescale the inputs so that $\|\boldsymbol\Sigma\|_{\mathrm{op}}=1$ and rescale the target direction so that $\boldsymbol e^\top\boldsymbol\Sigma^\dagger\boldsymbol e=1$. We report the excess empirical squared loss relative to the best linear predictor, equivalently the squared loss against the corresponding realizable linear teacher. In the covariance eigenbasis, this teacher has coefficients $\beta_i=e_i/\lambda_i$. Motivated by \citet{wei2022more}, who study eigenvalue decay and target alignment in neural representations, we estimate the effective task exponent $p_{\mathrm{eff}}$ by fitting
\begin{equation}
  \sum_{i>k}\lambda_i\beta_i^2
  \propto \lambda_k^{p_{\mathrm{eff}}}
\end{equation}
using ordinary least squares in log--log coordinates. The fit uses the unique integer ranks obtained by rounding logarithmically spaced values from $10$ to $\operatorname{round}(0.9d)=2765$. The upper limit omits approximately the last $10\%$ of ranks from the fit to reduce finite-dimensional endpoint effects. This gives $p_{\mathrm{eff}}\approx0.478$, as shown in \cref{fig:cifar5m-task-tail}. We train networks with $L\in\{1,2,3,4\}$ and hidden width $3072$ from random initialization at scale $s=1$, using full-batch gradient descent on these empirical sufficient statistics. All runs use a learning rate of $0.002$ and continue to $\tau=10^6$. Each curve is from a single random initialization.

\begin{figure}[t]
  \centering
  \includegraphics[width=0.50\textwidth]{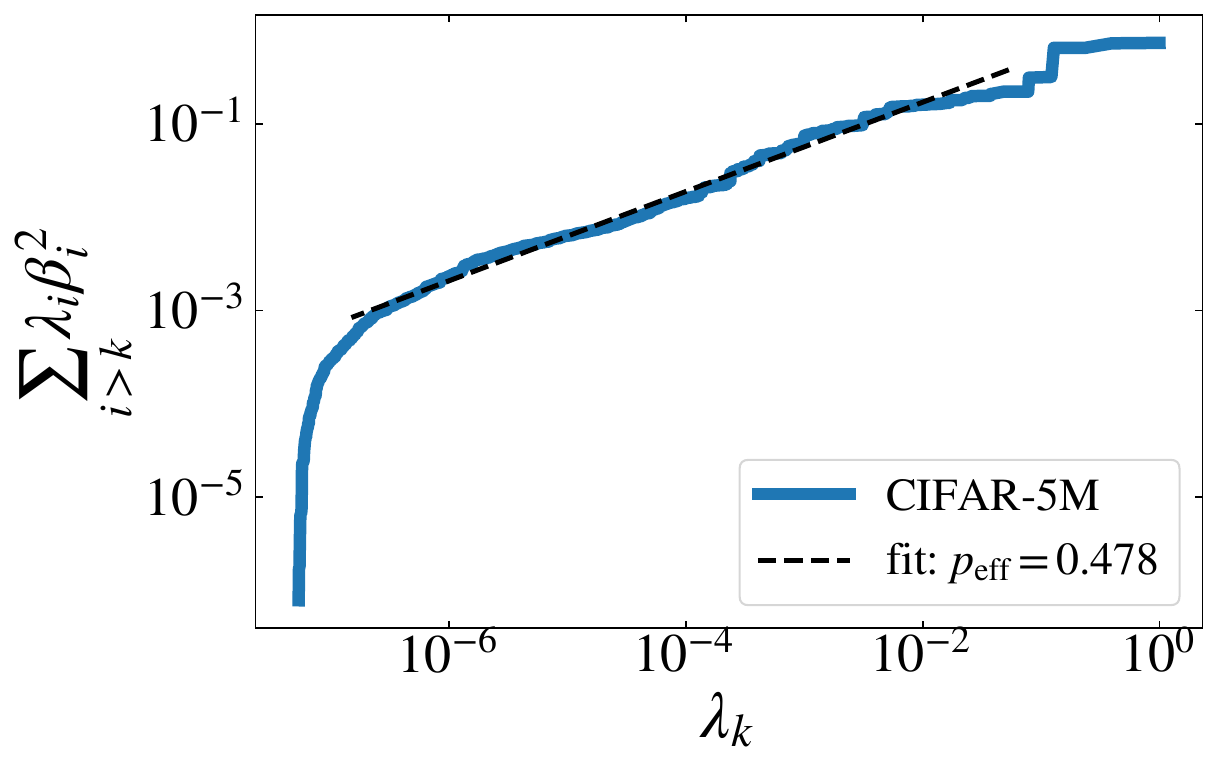}
  \caption{The tail sum $\sum_{i>k}\lambda_i\beta_i^2$ versus $\lambda_k$ for the CIFAR-5M task. The dashed line is a power-law fit with $p_{\mathrm{eff}}\approx0.478$, using
    $10\le k\le\operatorname{round}(0.9d)=2765$.}
  \label{fig:cifar5m-task-tail}
\end{figure}

\paragraph{\Cref{fig:relu-mlp-loss}.}
We draw fresh Gaussian inputs from the synthetic task with $a=2$, $c_\lambda=c_\beta=1$, and $d=2048$, and generate noiseless targets $y=\boldsymbol\beta^\top\boldsymbol x$. We use $b=0.1$ for $p=0.6$ and $b=0.9$ for $p=1.4$. The bias-free MLPs have $L\in\{2,3,4\}$ trainable linear maps, hidden width $2048$, ReLU activations after the first $L-1$ maps, and a scalar linear output. The hidden maps use standard He initialization and the output map uses standard fan-in initialization, both at scale one. Each curve is from a single random initialization. We train by online SGD for $100{,}000$ steps with fresh batches of size $4096$ and a learning rate of $0.01$, without momentum or weight decay. We evaluate the loss on a fixed set of $65{,}536$ independent samples.

\end{document}

%% file: math_commands.tex
\usepackage{amsmath,amsfonts,bm}

\def\eqref#1{equation~\ref{#1}}

\def\1{\bm{1}}

\DeclareMathAlphabet{\mathsfit}{\encodingdefault}{\sfdefault}{m}{sl}
\SetMathAlphabet{\mathsfit}{bold}{\encodingdefault}{\sfdefault}{bx}{n}

